\pdfoutput=1
\documentclass{article}
\usepackage{iclr2027_conference,times}

\usepackage{amsmath,amsfonts,bm}

\def\eqref#1{equation~\ref{#1}}
\def\1{\bm{1}}

\DeclareMathAlphabet{\mathsfit}{\encodingdefault}{\sfdefault}{m}{sl}
\SetMathAlphabet{\mathsfit}{bold}{\encodingdefault}{\sfdefault}{bx}{n}

\usepackage{hyperref}
\hypersetup{pdftitle={OmniReasoning: Pushing the Limits of Audio-Visual Joint Reasoning},pdfauthor={Junming Lin, Yuxuan Wang, Zhenxin Lei, Yuxin Liu, Ruixun Liu, Yinsong Yan, Ling Wang, Minghao Han, Yunfei Chu, Shun Lei, Xueyao Zhang, Qize Yang, Jin Xu, Yiwu Zhong}}
\usepackage{url}

\usepackage{graphicx}
\usepackage{placeins}
\usepackage{needspace}
\usepackage{booktabs}
\usepackage{tabularx}
\newcolumntype{C}[1]{>{\centering\arraybackslash}p{#1}}
\usepackage{amssymb}
\usepackage{amsthm}
\newtheorem{proposition}{Proposition}
\newtheorem{assumption}{Assumption}
\usepackage[table]{xcolor}
\definecolor{citationgreen}{HTML}{66BB6A}
\definecolor{referencered}{HTML}{66BB6A}
\hypersetup{colorlinks=true,citecolor=citationgreen,linkcolor=referencered,urlcolor=black}
\usepackage{tikz}
\usepackage{tcolorbox}
\tcbuselibrary{breakable,skins}
\usepackage{float}
\usepackage{enumitem}
\tcbset{casecard/.style={colback=white,colframe=blue!40!black,colbacktitle=blue!7,coltitle=black,boxrule=0.5pt,sharp corners,fonttitle=\small\bfseries,fontupper=\small,left=6pt,right=6pt,top=5pt,bottom=5pt}}

\definecolor{oursbg}{RGB}{230,239,252}
\definecolor{groupbg}{RGB}{238,238,241}

\newcommand{\tbours}{\rowcolor{oursbg}}

\newcommand{\tbgroupcf}[2]{\rowcolor{groupbg}[0pt][0pt]\multicolumn{#1}{@{}c@{}}{\small\textbf{#2}}}
\newcommand{\tboursf}{\rowcolor{oursbg}[0pt][0pt]}

\definecolor{barblue}{HTML}{2A78D6}
\definecolor{barorange}{HTML}{EB6834}
\definecolor{baryellow}{HTML}{EDA100}
\definecolor{bargray}{HTML}{7D8590}

\newcommand{\barlen}{0.88}
\newcommand{\cbar}[4][\idn]{
  \begin{tikzpicture}[baseline=0.40ex, x=1cm, y=1cm]
    \pgfmathsetmacro{\barw}{min(max(#3/#4,0),1)*\barlen}
    \fill[#2!10, rounded corners=0.7pt] (0,0) rectangle (\barlen,0.29);
    \shade[left color=#2!62, right color=#2!30, rounded corners=0.7pt]
      (0,0) rectangle ({\barw},0.29);
    \node at ({\barlen/2},0.145) {\fontsize{7.2}{8.2}\selectfont #1{#3}};
  \end{tikzpicture}
}
\newcommand{\bov}[2][\idn]{\cbar[#1]{barblue}{#2}{100}}
\newcommand{\bby}[2][\idn]{\cbar[#1]{barorange}{#2}{100}}

\newcommand{\ball}[2][\idn]{\cbar[#1]{bargray}{#2}{100}}

\newcommand{\logoraw}[1]{\raisebox{-0.18\height}{
  \includegraphics[height=0.9em,width=1.15em,keepaspectratio]{figures/logos/#1.png}}}
\newcommand{\logo}[1]{\makebox[1.15em][c]{\logoraw{#1}}\hspace{0.15em}}

\newcommand{\lgemini}{\logo{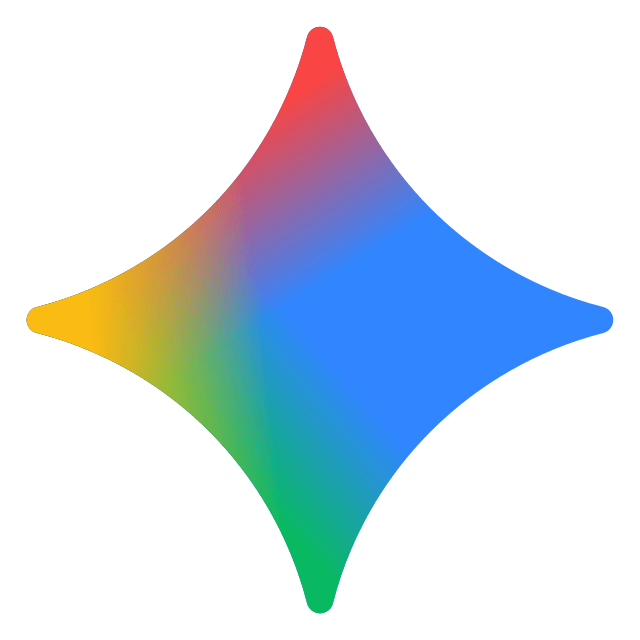}}
\newcommand{\lqwen}{\logo{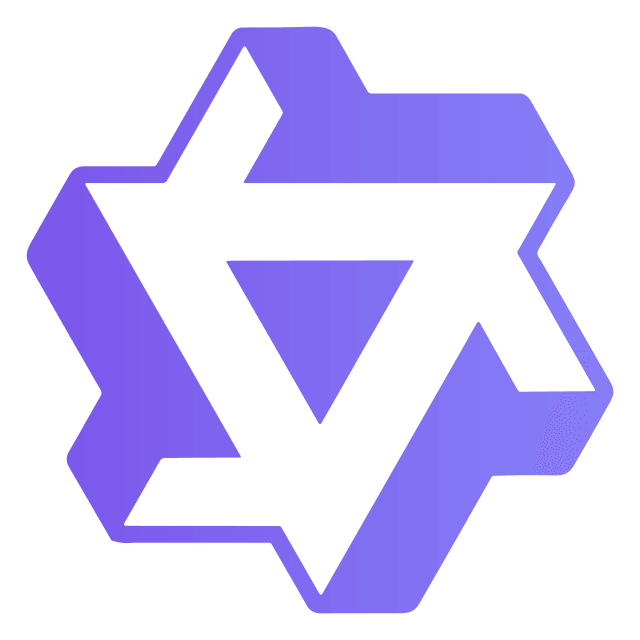}}
\newcommand{\ldoubao}{\logo{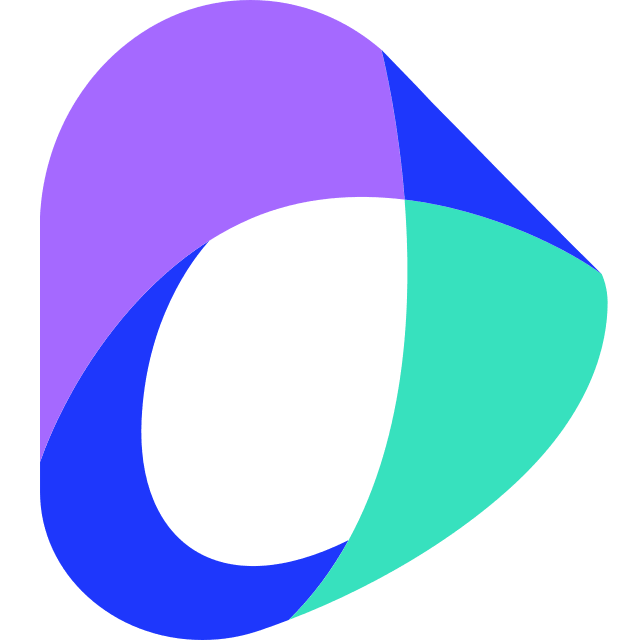}}
\newcommand{\lkimi}{\logo{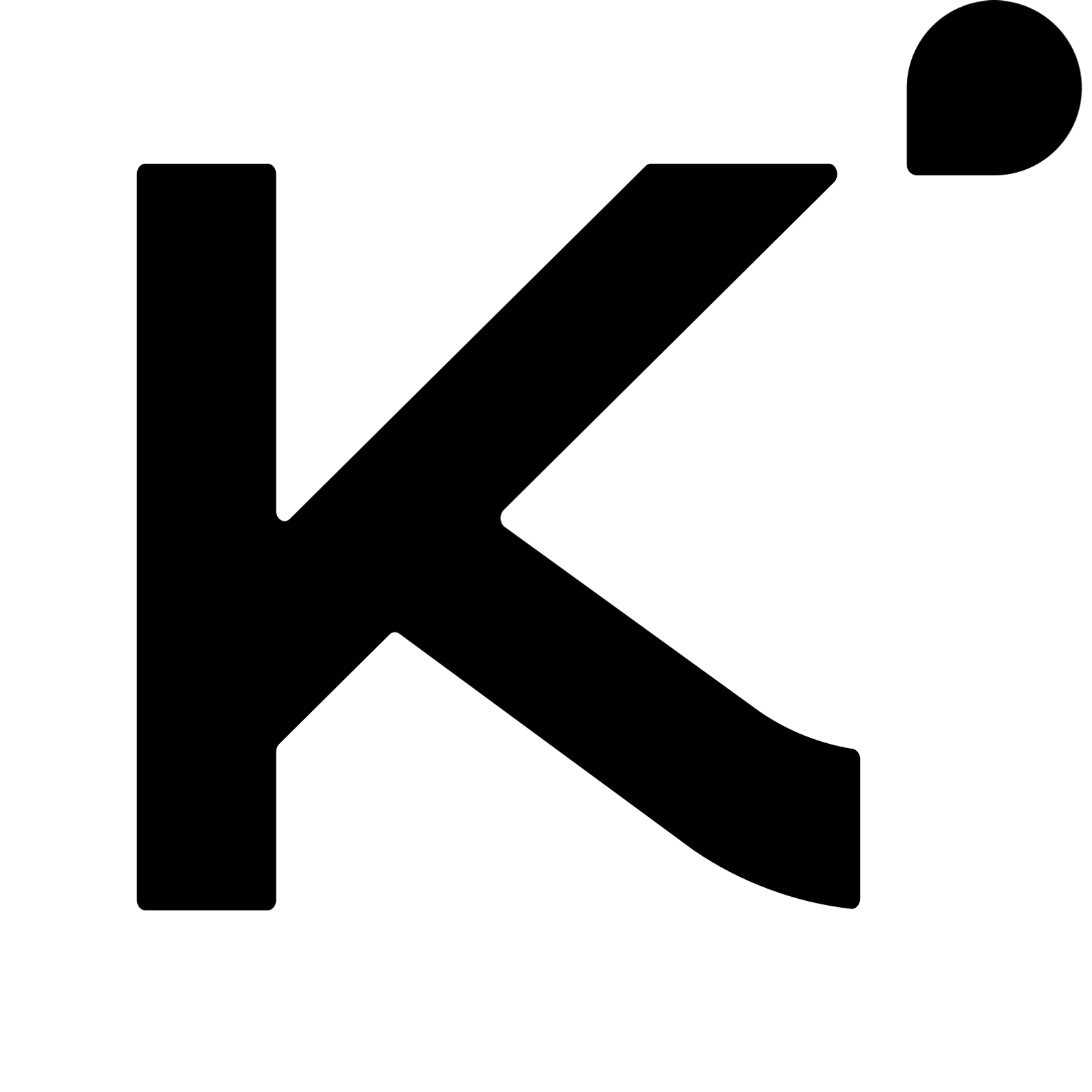}}

\newcommand{\lnvidia}{\logo{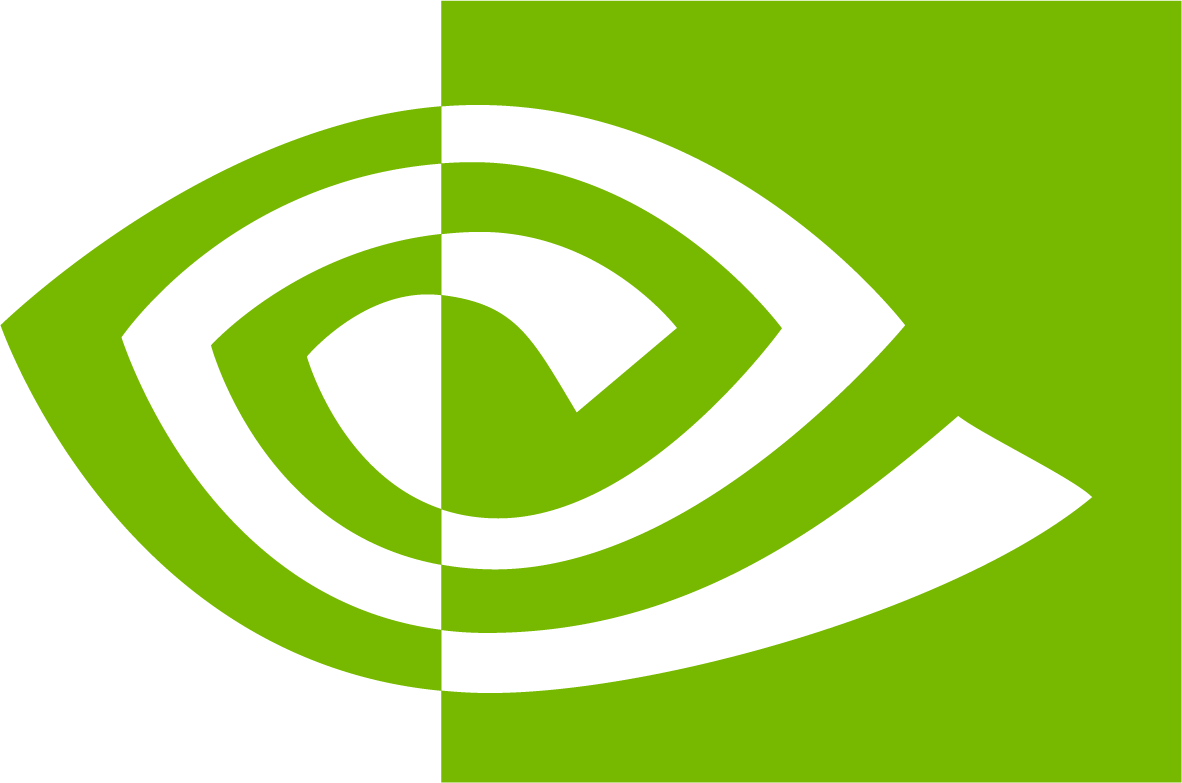}}
\newcommand{\lmeta}{\logo{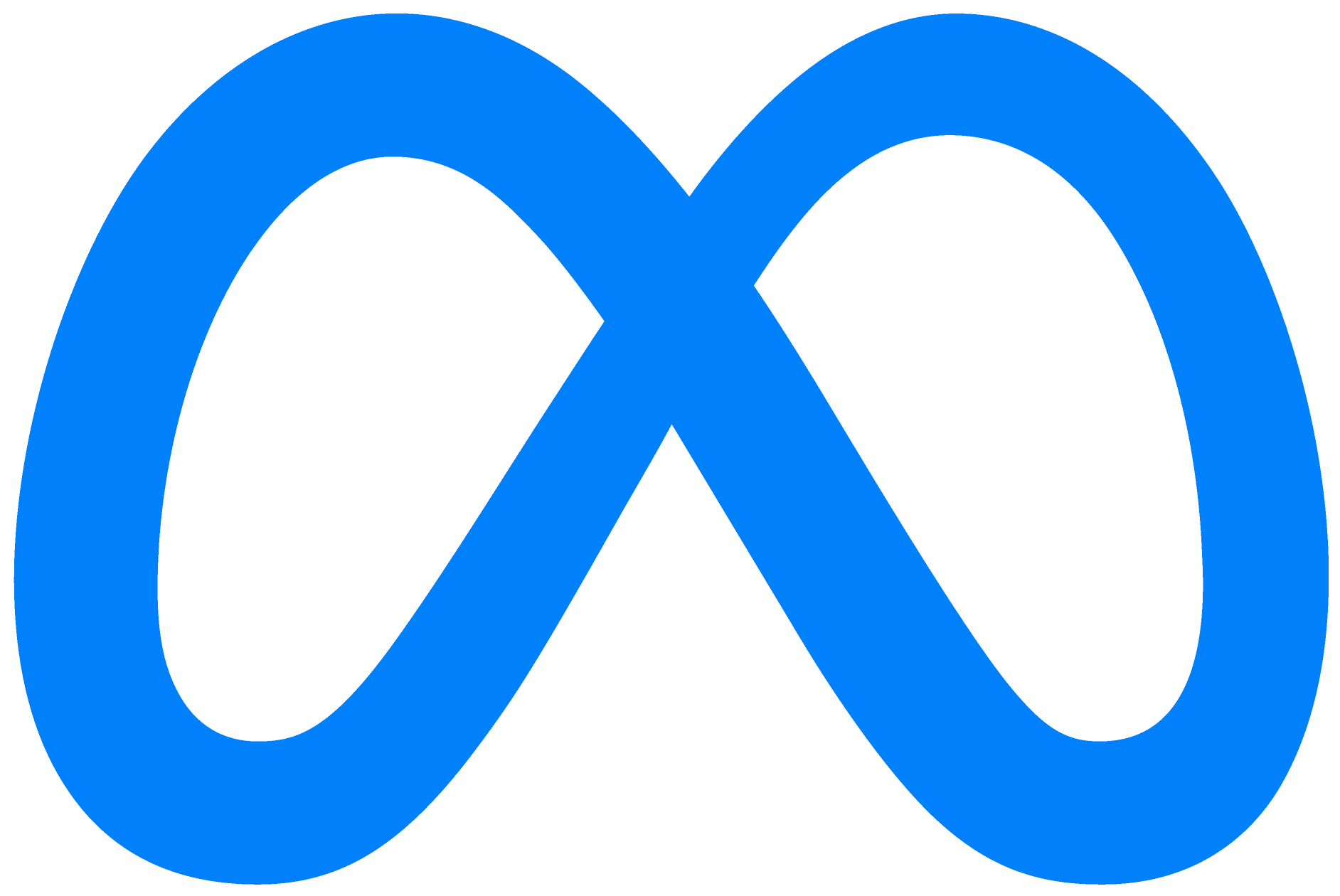}}
\newcommand{\lxiaomi}{\logo{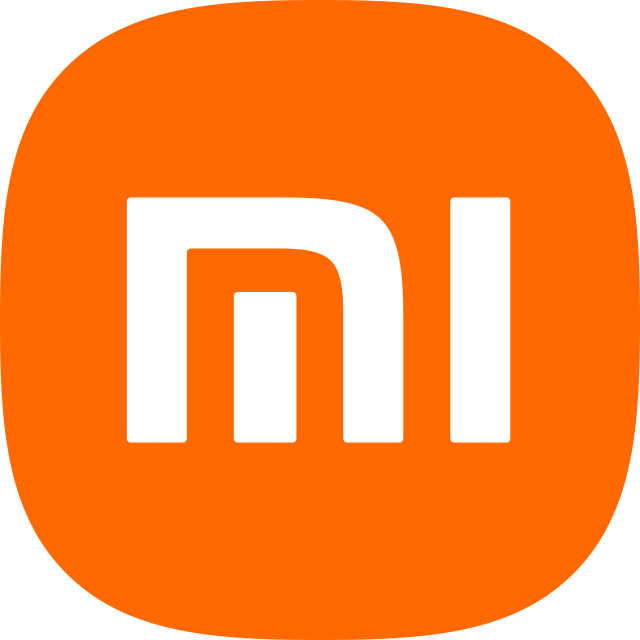}}

\newcommand{\shead}[1]{{\scriptsize\bfseries #1}}

\title{OmniReasoning: Pushing the Limits of Audio-Visual Joint Reasoning}
\author{
\rule{0pt}{2mm}\\[-5mm]
\begin{tabular}{@{}l@{}}
\bfseries
Junming Lin$^{1,2}$\textsuperscript{*} \quad
Yuxuan Wang$^{3}$ \quad
Zhenxin Lei$^{3}$\textsuperscript{*}  \quad
Yuxin Liu$^{3}$\textsuperscript{*}  \quad
Ruixun Liu$^{1,2}$\textsuperscript{*}  \quad
\\[1mm]
\bfseries
Yinsong Yan$^{3}$\textsuperscript{*}  \quad
Ling Wang$^{3}$\textsuperscript{*}  \quad
Minghao Han$^{3}$\textsuperscript{*}  \quad
Yunfei Chu$^{3}$ \quad
Shun Lei$^{3}$ \quad
\\[1mm]
\bfseries
Xueyao Zhang$^{3}$ \quad
Qize Yang$^{3}$ \quad
Jin Xu$^{3}$ \quad
Yiwu Zhong$^{1,2}$\textsuperscript{\textdagger}
\\[1.5mm]
\normalfont
$^{1}$School of Intelligence Science and Technology, Peking University, \quad \\
$^{2}$State Key Laboratory of General Artificial Intelligence, Peking University, \quad \\
$^{3}$Alibaba Token Hub, Alibaba Group\quad \\[0.9em]
\normalfont\small
\href{https://github.com/PKU-VaLuE-Lab/OmniReasoning}{%
\tikz[baseline=-0.1ex,x=1em,y=1em,line width=0.45pt]{%
\draw[rounded corners=0.8pt] (0,0) rectangle (1,0.72);
\draw (0,0.5) -- (1,0.5);
\fill (0.14,0.61) circle[radius=0.035];
\fill (0.28,0.61) circle[radius=0.035];
\fill (0.42,0.61) circle[radius=0.035];
}}\hspace{0.35em}\url{https://pku-value-lab.github.io/OmniReasoning-Homepage}
\end{tabular}
}
\iclrfinalcopy
\begin{document}
\maketitle
% Public preprint header; the conference style sets a publication header by default.
\lhead{Preprint}
% This unnumbered author note has no hyperlink target.
\begin{NoHyper}
\renewcommand{\thefootnote}{}\footnotetext{\textsuperscript{*}Work done as an intern at Qwen Team, Alibaba Token Hub.\hspace{1em}\textsuperscript{\textdagger}Corresponding author.}\renewcommand{\thefootnote}{\arabic{footnote}}
\end{NoHyper}
% ===== BEGIN inlined from sections/teaser.tex =====
\begin{figure}[H]
\centering
\textbf{}\includegraphics[width=0.99\linewidth]{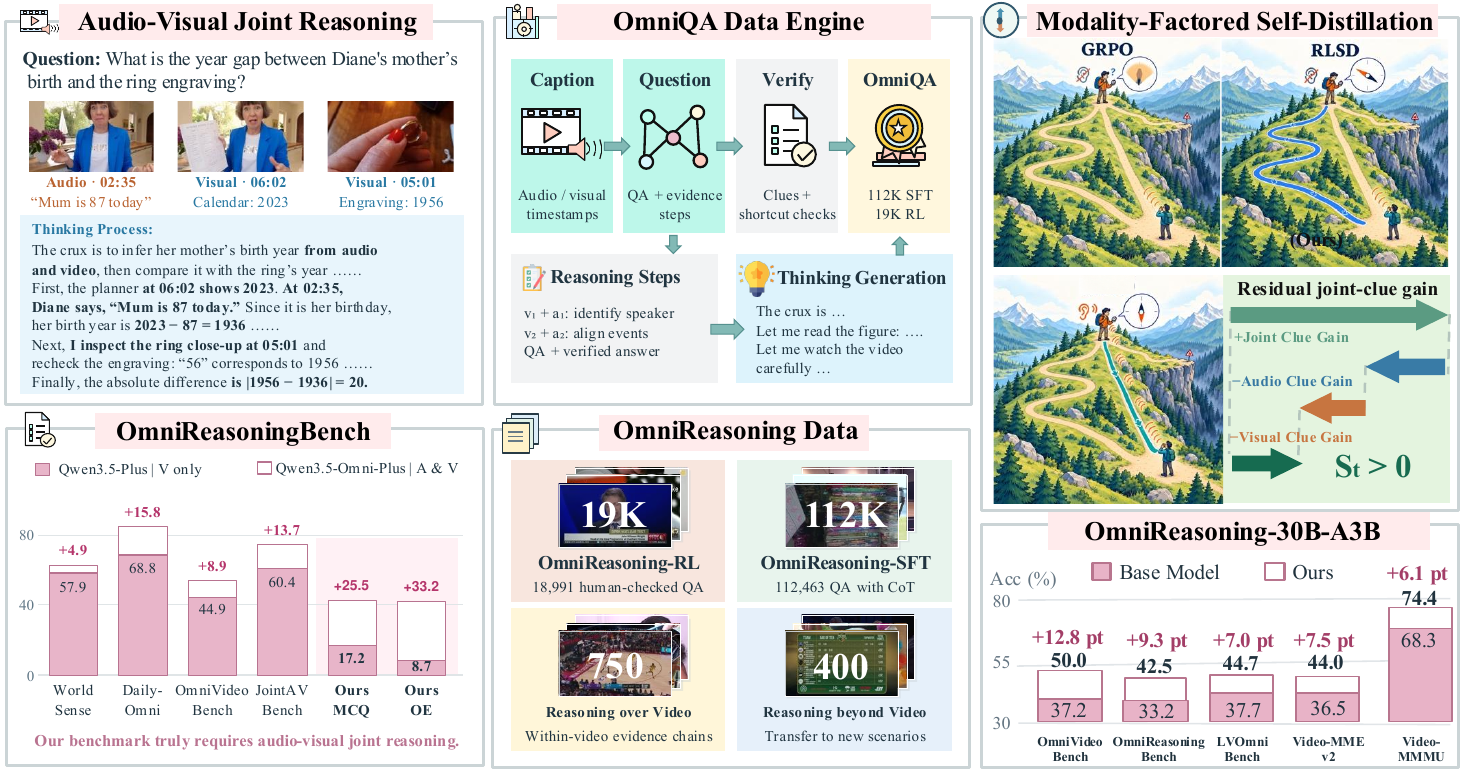}
\vspace{-6pt}
% \caption{\textbf{OmniReasoning: from data pipeline to audio-visual joint reasoning.} OmniQA generates evidence-grounded questions for SFT, RL, and OmniReasoningBench. We fine-tune Qwen3-Omni-30B-A3B on OmniReasoning-SFT-116k and then apply MFSD to obtain OmniReasoning-30B-A3B. MFSD uses privileged audio-visual clues and residual joint-clue gains to assign finer-grained token-level credit, enabling more efficient exploration along a more direct path, while GRPO offers only outcome-level guidance, and RLSD provides coarse guidance, resulting in a winding ascent. }
\caption{\textbf{OmniReasoning: a benchmark, data engine and learning method for audio-visual joint reasoning.} 
Unlike previous benchmarks, our benchmark OmniReasoningBench truly requires both audio and visual inputs for joint reasoning.
Besides this benchmark, our data engine OmniQA additionally produces large-scale training data with evidence-grounded questions. 
Further, our learning method MFSD leverages the gain from cross-modality joint clues to assign credit at token level, enabling effective exploration along audio-visual joint reasoning. In comparison, previous method GRPO offers only outcome-level guidance, and RLSD does not consider cross-modality interaction. 
With our training data and learning method, our model achieves large improvements over base model on audio-visual, long video, and general video benchmarks.}

\label{fig:teaser}
\vspace{-12pt}
\end{figure}

% ===== END inlined from sections/teaser.tex =====
% ===== BEGIN inlined from sections/abstract.tex =====
\begin{abstract}

Recent advances have enabled unified omni-modal models in understanding audio, vision, and language.
However, existing benchmarks, training data, and learning methods largely treat the modalities independently, leaving the capability of audio-visual joint reasoning poorly evaluated and insufficiently elicited.
We address this gap with a benchmark, data engine, and learning method.
First, we introduce \textbf{OmniReasoningBench}, a benchmark where both audio and visual evidence are indispensable. It comprises 1,150 multiple-choice and open-ended questions across two tasks, \emph{reasoning over video} and \emph{reasoning beyond video}. 
Second, we develop a data engine \textbf{OmniQA}. It automatically constructs evidence-grounded QA pairs that explicitly necessitate audio-visual joint reasoning, together with timestamped clue chains that guide the annotation of thinking process. Besides our benchmark, this engine produces training data OmniReasoning-SFT-112K and OmniReasoning-RL-19K. 
Finally, we propose an on-policy self-distillation method \textbf{Modality-Factored Self-Distillation (MFSD)}. It evaluates each sampled response under modality-specific clue contexts, disentangling the contributions of individual clues and their cross-modal interactions for token-level credit assignment.
With our training data and learning method, our model \textbf{OmniReasoning-30B-A3B} achieves 50.0\% on OmniVideoBench and 42.5\% on OmniReasoningBench, improving the base model Qwen3-Omni-30B-A3B-Thinking by 12.8 and 9.3 percentage points, respectively. Moreover, it delivers substantial gains on general and long-video benchmarks, including Video-MME-v2. 
We hope our work offers a solid step for facilitating future research in omni-modal joint reasoning.
\end{abstract}

% ===== END inlined from sections/abstract.tex =====
% ===== BEGIN inlined from sections/introduction.tex =====
\section{Introduction}
\label{sec:intro}
% Understanding a video often requires connecting what we hear with what we see. Consider the example in Figure~\ref{fig:teaser}: determining the gap between a mother's birth year and the year engraved inside a ring requires combining a spoken age with a visible calendar, then comparing the inferred year with the engraving. This illustrates \emph{audio-visual joint reasoning}. Recent omni-modal large language models (Omni-LLMs), including Qwen-Omni~\citep{qwen25omni,qwen3omni} and Nemotron 3 Nano Omni~\citep{nemotron3omni}, can understand audio and visual information natively, but their audio-visual joint reasoning is still limited.

Understanding real-world videos often requires reasoning across what has been heard and what has been seen. 
Consider the example in Figure~\ref{fig:teaser}: to determine the gap between a mother's birth year and the year engraved inside a ring, a model has to combine a spoken age with an observed calendar date, infer the corresponding birth year, and then compare it with the engraving. 
Neither modality alone provides sufficient evidence. The answer emerges only by connecting information across modalities. We refer to this capability as \emph{audio-visual joint reasoning}. 
Recent omni-modal large language models (Omni-LLMs), such as Qwen-Omni~\citep{qwen25omni,qwen3omni} and Nemotron 3 Nano Omni~\citep{nemotron3omni}, have made substantial progress toward unified audio-vision-language understanding. However, whether these omni models can reliably perform such cross-modal reasoning remains largely unclear.

% Existing audio-visual benchmarks ~\citep{worldsense,dailyomni,omnivideobench,jointavbench} position themselves as tests of omni joint understanding, and conduct several experiments for modality ablations. However, reporting an ablation does not by itself make joint reasoning necessary. Under our evaluated protocols, the same vision-language model, Qwen3.5-Plus, achieves 57.85\%, 68.76\%, 44.90\%, and 60.43\% accuracy on WorldSense, Daily-Omni, OmniVideoBench, and JointAVBench using \emph{visual-alone inputs}, without any audio (Table~\ref{tab:visual-only}). When a vision-only model clears 50--60\% on a benchmark that claims to be omni-modal required, the questions themselves largely fail to require audio, and such scores cannot distinguish audio-visual joint reasoning capability for Omni-LLMs.

A fundamental obstacle is that existing benchmarks do not consistently make joint reasoning necessary. 
Several recent audio-visual benchmarks~\citep{worldsense,dailyomni,omnivideobench,jointavbench} position themselves as evaluations of omni-modal understanding and conduct modality ablation experiments. 
However, ablation studies do not fully validate that a benchmark actually requires information from multiple modalities to answer the questions. 
In our experiments, Qwen3.5-Plus~\citep{qwen35blog} achieves 57.85\%, 68.76\%, 44.90\%, and 60.43\% accuracy on WorldSense, Daily-Omni, OmniVideoBench, and JointAVBench, respectively, when provided with \emph{visual-only inputs} and \emph{no audio}. 
These results indicate that a substantial fraction of the questions remain answerable without audio, limiting the extent to which such benchmarks can distinguish genuine audio-visual joint reasoning from strong single-modality understanding. 

% Motivated by this, we introduce \textbf{OmniReasoning}, a unified framework that uses modality-specific evidence to connect data construction, evaluation, and post-training. Our benchmark, \textbf{OmniReasoningBench}, comprises 1,150 questions across two settings: \emph{reasoning over video} connects observations within a single video, while \emph{reasoning beyond video} applies video-derived information to a new scenario or figure. Both settings include multiple-choice and open-ended questions. Questions are constructed around cross-event dependencies between auditory and visual evidence, targeting the residual single-modality answerability observed above: the same Qwen3.5-Plus falls to 17.2\% and 8.7\% on the multiple-choice and open-ended questions of \emph{reasoning over video}, respectively. Without audio input, most questions simply cannot be answered, so audio-visual joint reasoning is required rather than optional. Same-model modality ablations on \emph{reasoning over video} show that joint inputs substantially improve accuracy over either modality alone, and that these gains exceed the union of questions each modality answers correctly by itself. (Appendix~\ref{app:modality-ablations}).

Motivated by these findings, we introduce \textbf{OmniReasoning}, a unified framework for evaluating and eliciting omni-modal joint reasoning through explicit modality-specific evidence. 
At its core is \textbf{OmniReasoningBench}, a benchmark of 1,150 multiple-choice and open-ended questions spanning two settings: \emph{reasoning over video}, which requires connecting observations across events within a video, and \emph{reasoning beyond video}, which applies information derived from a video to a new scenario or figure. 
The questions are deliberately constructed around dependencies between audio and visual observations, thereby minimizing the possibility of solving them from a single modality alone. 
Take Figure~\ref{fig:teaser} as an example, when Qwen3.5-Plus model evaluated without audio, the accuracy drops to 17.2\% and 8.7\% on the multiple-choice and open-ended questions, respectively. 
Moreover, our ablation studies show that audio-visual joint reasoning provides gains beyond either modality alone, with improvements exceeding the gains obtained from simply combining the questions solved independently by the two modalities. %(Appendix~\ref{app}). 
Together, these results demonstrate that the benchmark is able to test the capability of integrating complementary evidence across modalities.

% These findings motivate the \textbf{OmniQA data engine}, which constructs questions that explicitly connect audio and visual evidence (Figure~\ref{fig:pipeline}). OmniQA generates questions together with timestamped clues and dependency chains linking observations to intermediate inferences. After verification, these chains guide the synthesis of thinking processes for supervised fine-tuning (\textbf{OmniReasoning-SFT-116K}), while the retained modality-specific clues support feedback on model-generated responses during reinforcement learning (\textbf{OmniReasoning-RL-20K}).

Making such questions at scale, however, presents a second challenge: training data has to preserve the same evidence dependencies rather than merely pair arbitrary audio, visual, and textual content. 
To address this, we develop a data engine \textbf{OmniQA}, which automatically constructs evidence-grounded questions that explicitly link audio and visual observations. In addition to generating question-answer pairs, OmniQA produces timestamped clues and dependency chains that connect observed evidence to intermediate inferences and the final answer (Figure~\ref{fig:pipeline}). 
After verification, these reasoning chains are used to synthesize thinking processes for supervised fine-tuning, yielding \emph{OmniReasoning-SFT-112K}. 
The modality-specific clue annotations are also retained as structured supervision for reinforcement learning, resulting in \emph{OmniReasoning-RL-19K}. 
Thus, OmniQA provides not only the benchmark and large-scale training data, but also an explicit representation of how evidence from different modalities contributes to a reasoning process.

% These clue annotations also support \textbf{Modality-Factored Self-Distillation (MFSD)}, which distinguishes overall clue support from audio-visual interaction. The same actor scores sampled tokens under differenct text-clue contexts: no clues, audio clues, visual clues, and joint clues. Subtracting the individual log-likelihood gains from the joint gain captures their non-additive interaction. MFSD combines this interaction with joint-clue support to guide finer-grained token-level credit assignment through RLSD's weighting mechanism~\citep{rlsd}.

The same evidence structure further enables a more targeted learning objective. 
Existing reasoning-oriented post-training methods~\citep{grpo,gspo,dapo} typically assign credit to a response according to its overall quality, without distinguishing whether a prediction is supported by audio evidence, visual evidence, or their interaction. 
We therefore propose \textbf{Modality-Factored Self-Distillation (MFSD)}, an on-policy self-distillation method that factors token-level credit assignment according to modality-specific evidence. 
As in Figure~\ref{fig:teaser}, for each sampled response, the model evaluates the token likelihood under four types of clue context: no clues, audio clues, visual clues, and joint audio-visual clues. 
The resulting likelihood difference quantifies both the contribution of individual modalities and the non-additive interaction between them. 
Specifically, MFSD measures the interaction by subtracting the individual audio and visual gains from the gain obtained under joint clues, and combines this interaction with the overall support provided by the joint clues to derive fine-grained token-level supervision through the weighting mechanism of RLSD~\citep{rlsd}. 
This design explicitly encourages the model to generate reasoning steps that rely on complementary cross-modal evidence rather than exploiting one modality in isolation.

% Training with OmniQA and MFSD yields \textbf{OmniReasoning-30B-A3B}, which achieves 50.0\% accuracy on OmniVideoBench and 42.5\% accuracy on OmniReasoningBench, 12.8 and 9.3 percentage points above the Qwen3-Omni-30B-A3B-Thinking baseline respectively. OmniReasoning-30B-A3B also improves over the base model on long video understanding and general video understanding benchmark such as: LVOmniBench, Video-MMMU, and Video-MME-v2.

Combining the OmniQA training data with MFSD yields our model \textbf{OmniReasoning-30B-A3B}. It achieves 50.0\% accuracy on OmniVideoBench and 42.5\% on OmniReasoningBench, improving over the baseline (Qwen3-Omni-30B-A3B-Thinking) by 12.8 and 9.3 percentage points, respectively. 
Beyond the targeted evaluation, the model also delivers substantial improvements on general and long-video understanding benchmarks, including LVOmniBench, Video-MMMU, and Video-MME-v2. 
These results suggest that explicitly constructing, annotating, and optimizing for cross-modal evidence dependencies can strengthen not only audio-visual joint reasoning, but also broader video understanding. 
We hope that \textbf{OmniReasoning} provides a unified foundation for studying and improving joint reasoning across modalities in future omni-modal models.

\Needspace{4\baselineskip}
Our contributions are summarized as follows:
\begin{enumerate}
% \item We develop \textbf{OmniQA}, a scalable data engine that produces \textbf{OmniReasoning-SFT-116K} and \textbf{OmniReasoning-RL-20K} together with generated thinking process and modality-specific clues for SFT and RL phase.
%     \item We introduce \textbf{OmniReasoningBench} comprising 750 multi-choices and 400 open-ended questions across two complementary task types: \textit{reasoning over video} and \textit{reasoning beyond video}.
%     \item We propose \textbf{MFSD}, which separates joint-clue support from audio-visual interaction and combines them for finer-grained token-level credit assignment.

% \item We introduce \textbf{OmniReasoningBench}, a benchmark that necessitates the joint reasoning across audio-visual modalities, comprising two complementary tasks, \textit{reasoning over video} and \textit{reasoning beyond video}.
% \item We develop \textbf{OmniQA}, a data engine that can produce scalable training data for supervised fine-tuning and reinforcement learning, together with annotations of thinking process and modality-specific clues.
% \item We propose \textbf{MFSD}, a reinforcement learning method which highlights cross-modality interactions and offers token-level credit assignment.

\item \textbf{A benchmark for genuine audio-visual joint reasoning}. We introduce OmniReasoningBench, a benchmark where audio and visual observations are jointly necessary, enabling a more rigorous evaluation of omni-modal reasoning.

\item \textbf{An evidence-grounded data engine for joint reasoning}. We develop OmniQA, an automated data engine that constructs audio-visual QA pairs together with timestamped clues and dependency chains, providing explicit supervision for cross-modal training.

\item \textbf{A modality-aware learning method for joint reasoning}. We propose MFSD, a reinforcement learning method that highlights cross-modality interactions and offers token-level credit assignment. Combined with OmniQA data, MFSD substantially improves both audio-visual joint reasoning and general video understanding.

\end{enumerate}

% ===== END inlined from sections/introduction.tex =====
% ===== BEGIN inlined from sections/related.tex =====
\section{Related Work}
\label{sec:related}
\vspace{-3pt}
\noindent\textbf{Omni-LLMs and audio-visual joint reasoning.}
Recent omni-modal models support native audio-visual understanding~\citep{gemini3models,doubao2,musespark,mimov25,qwen25omni,qwen3omni,qwen35omni,videosalmonn2,nemotron3omni}. WorldSense~\citep{worldsense}, Daily-Omni~\citep{dailyomni}, OmniVideoBench~\citep{omnivideobench}, and LVOmniBench~\citep{lvomnibench} benchmark this capability. JointAVBench~\citep{jointavbench} explicitly evaluates cross-modality dependence. AV-Reasoner~\citep{avreasoner} studies clue-grounded counting and Video-MMMU~\citep{videommmu} tests knowledge acquisition from instructional videos. In contrast, our benchmark necessitates joint reasoning over audio and visual observations. %, covering both \emph{reasoning over video} and \emph{reasoning beyond video}.

\vspace{-3pt}
\noindent\textbf{Audio-visual instruction data.}
OmniVideo-100K~\citep{omnivideo100k} preserves cross-segment associations and audio-visual correspondence through entity-anchored scripts and clue-guided questions. OmniVideo-R1~\citep{omnivideor1} filters LLaVA-Video and Video-Vista questions for cross-modal grounding and fusion. 
Our OmniQA leverages explicit evidence for question construction, dependency chains for thinking generation, and separate audio/visual clues for RL scoring, while extending video-derived knowledge to new inputs through \emph{reasoning beyond video}.

\vspace{-3pt}
\noindent\textbf{Reinforcement learning and self-distillation.}
Group Relative Policy Optimization (GRPO)~\citep{grpo} develops group-relative outcome rewards while Group Sequence Policy Optimization (GSPO)~\citep{gspo} considers sequence-level importance weighting. Video-R1~\citep{videor1} and OmniVideo-R1~\citep{omnivideor1} apply RL to video and audio-visual reasoning, and Video-KTR~\citep{videoktr} addresses key-token attribution. 
On-policy self-distillation~\citep{opsd,sdpo} converts additional context into token-level feedback, which RLSD~\citep{rlsd} maps to bounded, sign-preserving advantage weights. 
We propose MFSD which retains RLSD's weighting, outcome verifier, and policy objective, combining a centered interaction contrast from separate and joint clues with the joint-clue gain.

% ===== END inlined from sections/related.tex =====
\section{Benchmark, Data Engine, and Learning Method}
Our work, OmniReasoning, seeks to advance audio-visual joint reasoning with three components: an evaluation benchmark, a data engine, and a learning method. They are connected through explicit dependencies between audio and visual evidence. Specifically, the OmniQA data engine constructs both the questions of OmniReasoningBench and the training corpora, and its modality-specific clue annotations further provide the supervision signal for our learning method, MFSD.
\label{sec:framework}
% ===== BEGIN inlined from sections/benchmark.tex =====
\subsection{OmniReasoningBench: A Benchmark for Audio-Visual Joint Reasoning}
\label{sec:benchmark}
\textbf{OmniReasoningBench} evaluates whether models can connect complementary audio and visual evidence to answer a question. It contains 750 \emph{reasoning over video} and 400 \emph{reasoning beyond video} questions, each with a reference answer and an annotated evidence chain (Figure~\ref{fig:bench-demo}).

\begin{figure}[t]
\centering
\includegraphics[width=0.99\linewidth]{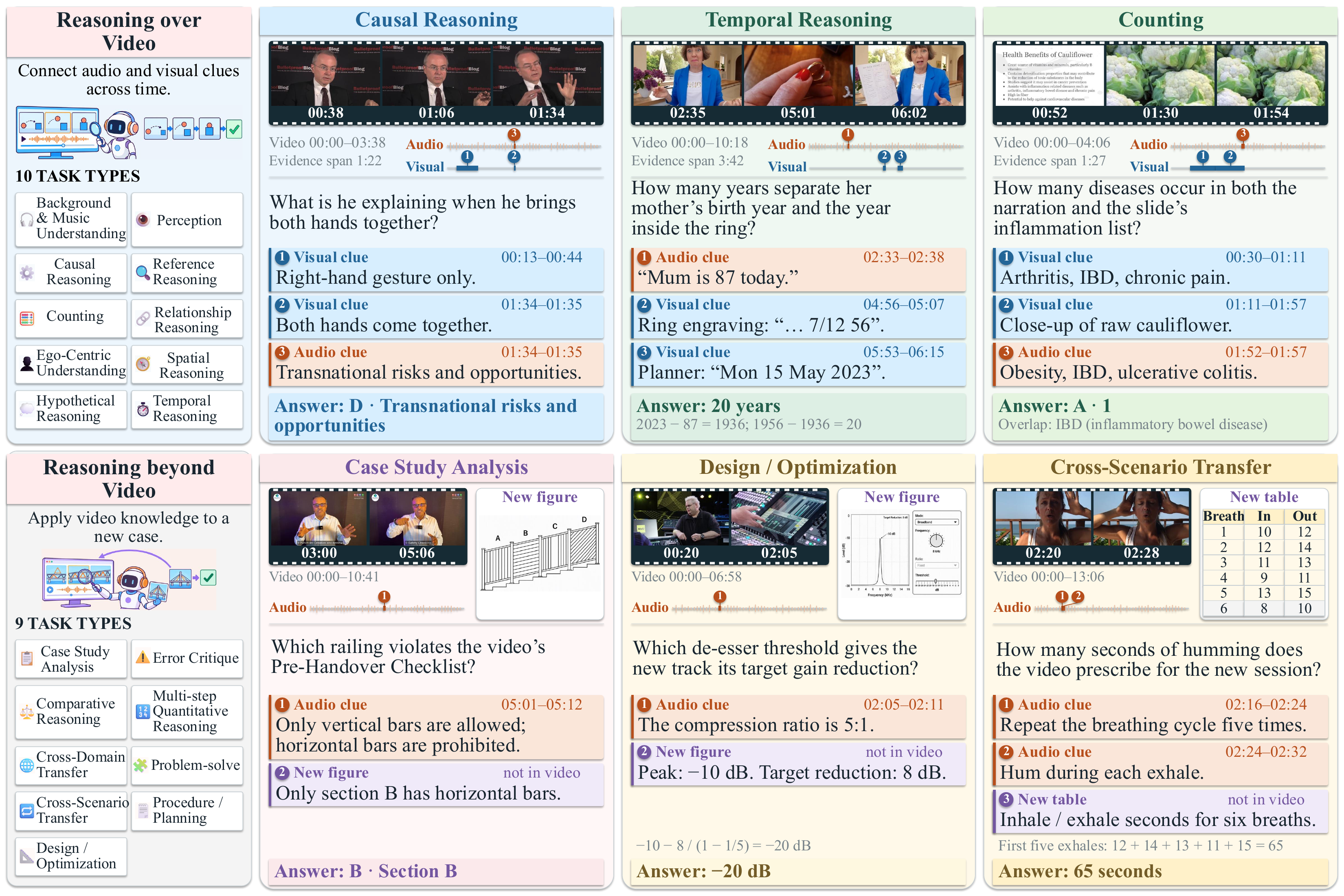}
\vspace{-6pt}
\caption{\textbf{OmniReasoningBench tasks and examples.} \emph{Reasoning over video} connects observations across events; \emph{reasoning beyond video} applies video-derived knowledge to a new scenario. Orange and blue mark audio and visual clues, with numbered markers linking evidence to timestamps.}
\label{fig:bench-demo}
\vspace{-12pt}
\end{figure}

\vspace{-3pt}
\noindent\textbf{Reasoning over video}.
These questions ask about the content within videos. Every answer can be derived by chaining multi-hop audio and visual evidence, and no single modality suffices. A spoken reference can identify which object to inspect, while a visible action can identify the relevant utterance. Correct answers therefore require joint reasoning over audio and visual evidence. This setting covers ten task types and includes 375 multiple-choice and 375 open-ended questions.

\vspace{-3pt}
\noindent\textbf{Reasoning beyond video}.
Omni reasoning can go beyond the understanding of a given video and transfer to new scenarios. The questions in this setting therefore ask models to carry knowledge acquired from a video into a new situation, figure, or numerical condition. For example, a spoken explanation and a visual demonstration establish a rule that must then be applied to a new diagram. In this case, the evidence chain still exists in the video, while the reasoning extends beyond it. There are in total 400 questions spanning nine task types, with 250 multiple-choice and 150 open-ended questions. Appendix~\ref{app:protocol} provides the input-format breakdown.

\vspace{-3pt}
\noindent\textbf{Evidence of joint reasoning.}
On \emph{reasoning over video}, joint inputs outperform a single-modality oracle that counts a question as solved whenever either the audio-only or the visual-only run answers it correctly. The gains range from 9.9 to 26.7 percentage points across three models and both question formats (Appendix~\ref{app:modality-ablations}).

% ===== END inlined from sections/benchmark.tex =====
% ===== BEGIN inlined from sections/data.tex =====
\subsection{OmniQA: A Data Engine for Audio-Visual Joint Reasoning}
\label{sec:pipeline}
To preserve these evidence dependencies in training data, \textbf{OmniQA} constructs QA pairs together with timestamped audio and visual clues. The verified evidence chains guide thinking generation for SFT and provide modality-specific context for MFSD during RL (Figure~\ref{fig:pipeline}).

\begin{figure}[t]
\centering
\includegraphics[width=0.99\linewidth]{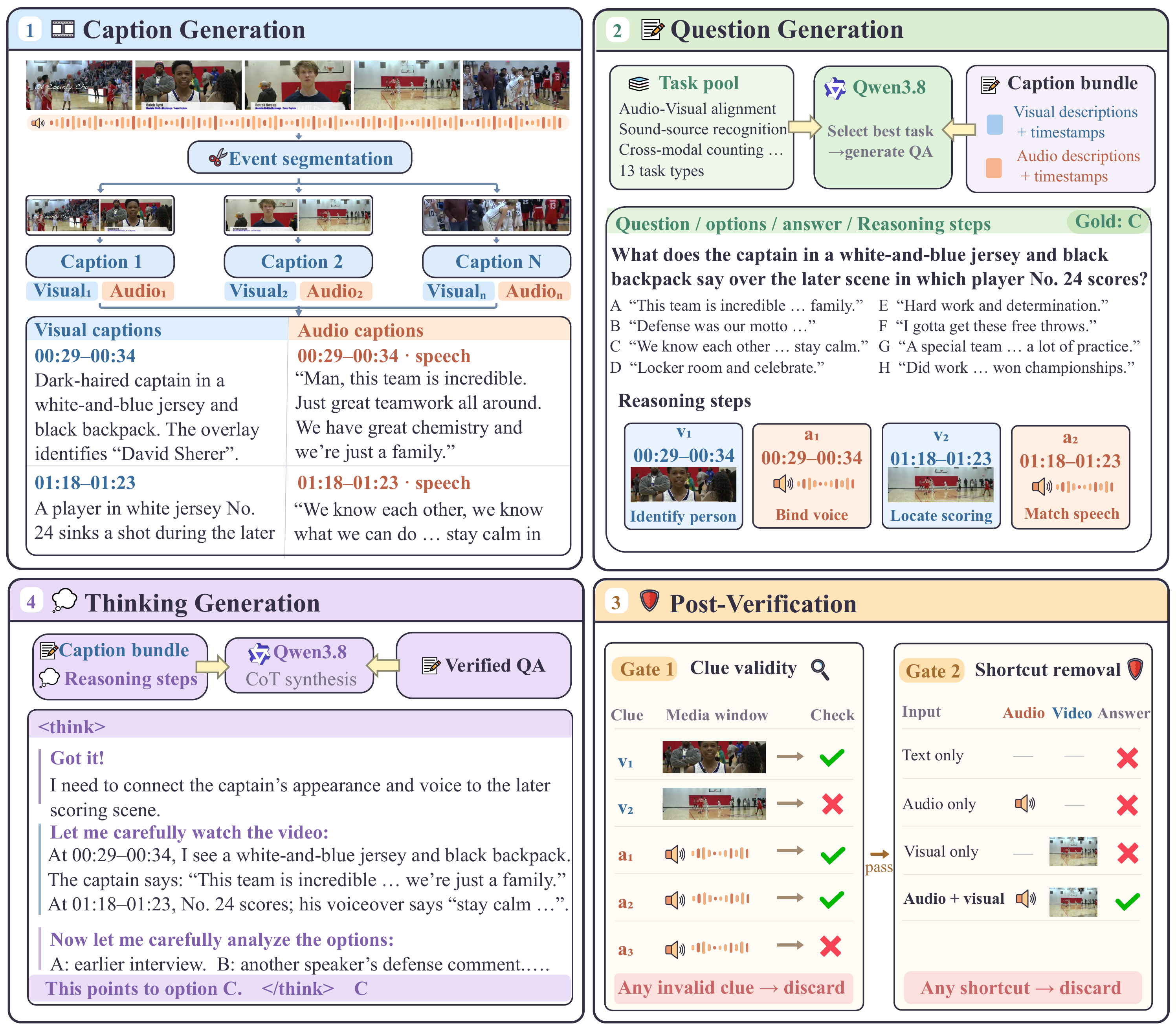}
\vspace{-6pt}
\caption{\textbf{OmniQA data engine.} Gemini-3.1-Pro annotates timestamped audio-visual descriptions. Qwen3.8 generates QA pairs and reasoning steps, followed by clue validation and shortcut screening. Timestamped captions, verified QA pairs, and evidence chains then guide thinking generation.}
\label{fig:pipeline}
\vspace{-12pt}
\end{figure}

\begin{figure}[t]
\centering
\includegraphics[width=0.99\linewidth]{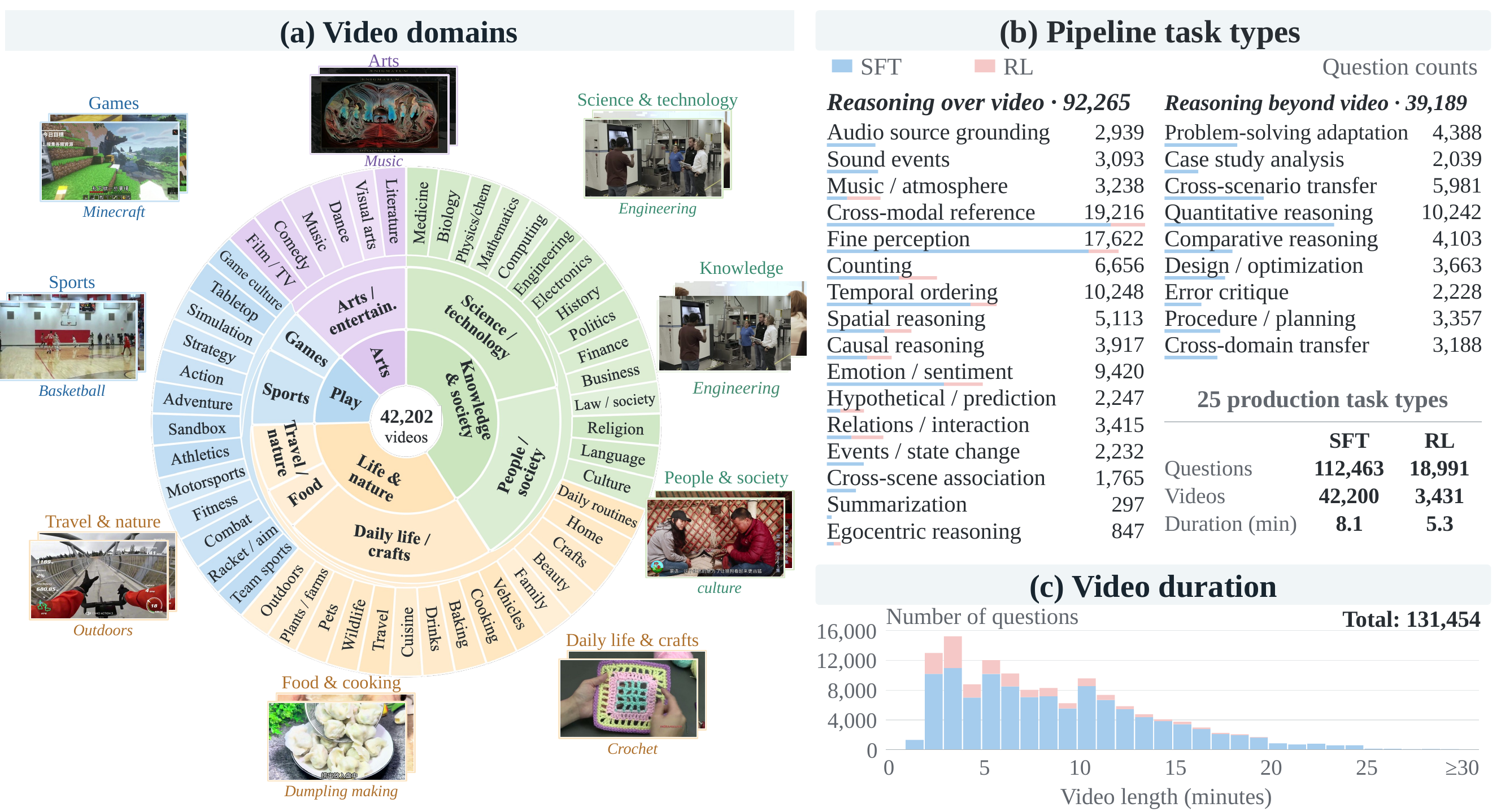}
\vspace{-6pt}
\caption{\textbf{OmniReasoning released training-data distributions.} The SFT and RL corpora span eight content domains, 25 production task types, and varied video durations.}
\label{fig:training-statistics}
\vspace{-12pt}
\end{figure}

\vspace{-3pt}
\noindent\textbf{Constructing questions from evidence.}
OmniQA segments each video into events and generates separate audio and visual descriptions with timestamps. Conditioned on these descriptions and a task specification, the question generator produces a question, a reference answer, and a dependency chain that links observations to intermediate inferences and the final answer. For multiple-choice questions, it also generates candidate options.

\vspace{-3pt}
\noindent\textbf{Verifying questions and clues.}
Structural checks validate the dependency graph, caption references, and timestamps. A caption-conditioned solver checks the reference answer, and a media verifier checks each clue against its supporting audio or visual clip. To screen for modality shortcuts, we evaluate each question under question-only, audio-only, visual-only, and joint audio-visual conditions. We retain a candidate only when its clues are valid, the joint condition yields the correct answer, and none of the restricted conditions does.

\vspace{-3pt}
\noindent\textbf{Generating thinking processes.}
The thinking generator receives timestamped audio and visual captions, the verified QA pair, the evidence chain, and any additional figure, but not the source video. It expands the chain into observations, intermediate inferences, and a final answer, comparing multiple-choice options and showing numerical calculations (Appendix~\ref{app:sft-cases}).

\vspace{-3pt}
\noindent\textbf{Training datasets.}
The released datasets comprise \textbf{OmniReasoning-SFT-112K}, with 112,463 samples containing synthesized thinking processes, and \textbf{OmniReasoning-RL-19K}, with 18,991 human-validated questions and evidence annotations. Both cover \emph{reasoning over video} and \emph{reasoning beyond video} (Figure~\ref{fig:training-statistics}). During the SFT stage, the model learns from the synthesized responses. During the RL stage, the retained audio and visual clues, $r_A$ and $r_V$, instead provide privileged context for evaluating the model's sampled responses.

% ===== END inlined from sections/data.tex =====
% ===== BEGIN inlined from sections/method.tex =====
\subsection{Modality-Factored Self-Distillation: A Learning Method}
\label{sec:method}
The modality-specific clues created by OmniQA provide not only data annotations but also a learning signal for reinforcement learning.
Our method stems from RLSD~\citep{rlsd}, a widely adopted RL approach that re-scores a sampled response under privileged clue contexts and converts the resulting likelihood gains into token-level advantage weights.
However, a likelihood gain under joint audio-visual clues does not by itself indicate joint reasoning. The gain may come from one modality alone, so a response that exploits a single modality can receive the same guidance as one that genuinely integrates both.
\textbf{Therefore, we propose MFSD, which extends RLSD with cross-modality interaction guidance.}
Besides the joint-clue support used by RLSD, MFSD disentangles the likelihood gain that emerges only when audio and visual clues are combined, and uses the interaction signal for fine-grained token-level credit assignment (Figure~\ref{fig:mfsd}).

\begin{figure}[t]
\centering
\includegraphics[width=0.99\linewidth]{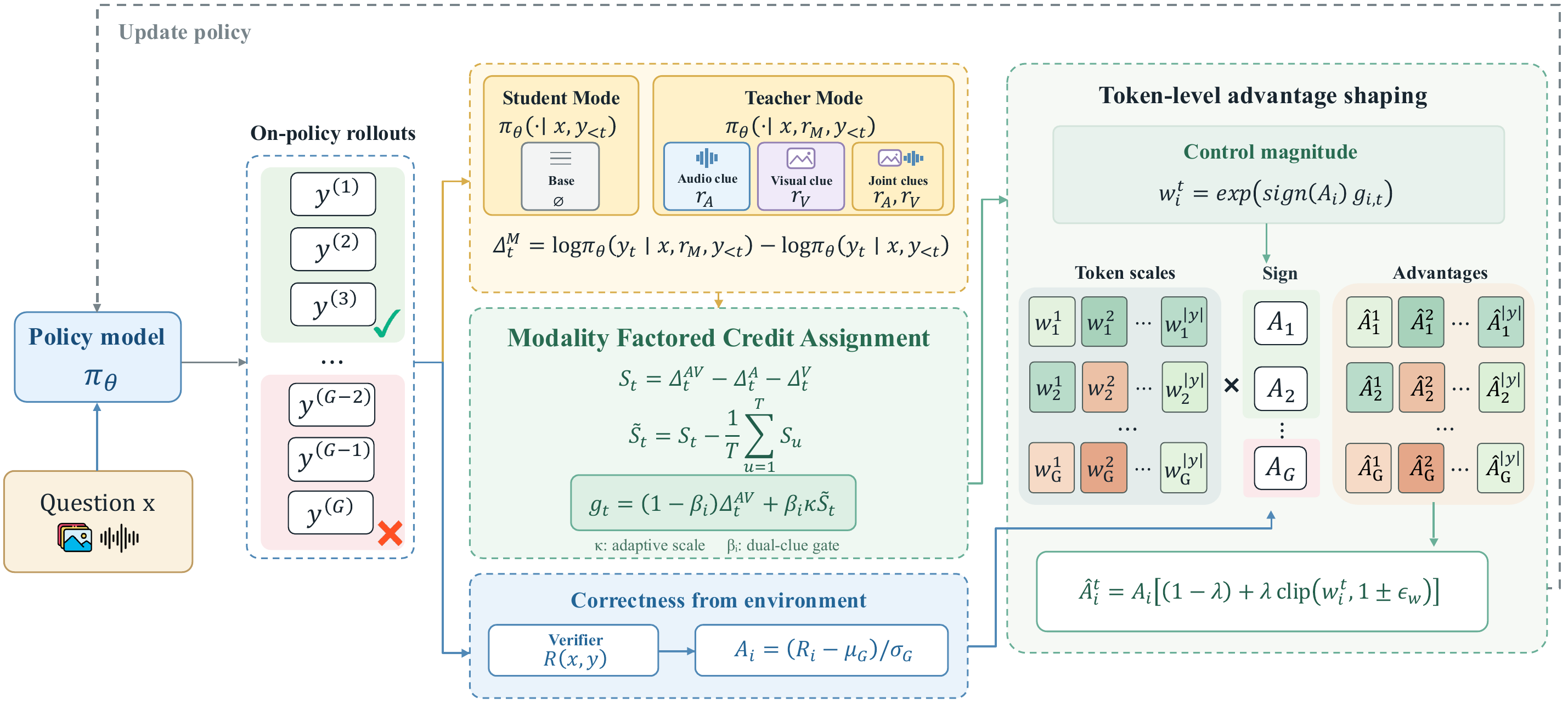}
\vspace{-6pt}
\caption{\textbf{Modality-Factored Self-Distillation.} The actor scores the same response under four clue contexts. Joint-clue support and non-additive audio-visual interaction determine bounded token-level advantage weights.}
\label{fig:mfsd}
\vspace{-12pt}
\end{figure}

\vspace{-3pt}
\noindent\textbf{MFSD.}
Given a question with its original audio-visual input, we first sample a group of responses without privileged clues, as in standard group-based reinforcement learning.
The same actor then scores each sampled response under four clue contexts: no clues, audio clues, visual clues, and joint audio-visual clues. The likelihood changes across these contexts yield two complementary evidence signals: the overall support provided by the joint clues, and the cross-modality interaction that neither modality explains individually.
Both signals are detached and converted into bounded token-level weights that modulate the outcome advantage.

\vspace{-3pt}
\noindent\textbf{Scoring the same response under different clue contexts.}
Let $x$ contain the question and its original audio-visual input. We sample $G$ responses $y^{(i)}$ from $\pi_{\mathrm{old}}(\cdot\mid x)$ without privileged clues. Outcome rewards define $A_i=(R_i-\mu_R)/(\sigma_R+\varepsilon)$, where $\mu_R$ and $\sigma_R$ are the group reward mean and standard deviation. Given OmniQA's audio and visual text clues $r_A$ and $r_V$, define $r_\varnothing=\varnothing$ and $r_{AV}=r_A\oplus r_V$. The actor scores each sampled token as
\begin{equation}
\ell^M_{i,t}=\log\pi_\theta(y^{(i)}_t\mid x,r_M,y^{(i)}_{<t}),
\qquad M\in\{\varnothing,A,V,AV\}.
\label{eq:contexts}
\end{equation}
The original media, sampled prefix, and actor weights remain fixed across views. The detached gains are $\Delta^M_{i,t}=\mathrm{sg}(\ell^M_{i,t}-\ell^\varnothing_{i,t})$ for $M\in\{A,V,AV\}$, where $\mathrm{sg}$ denotes stop-gradient.

\vspace{-3pt}
\noindent\textbf{Separating joint support from interaction.}
MFSD subtracts the individual clue gains from the joint gain:
\begin{equation}
S_{i,t}=\Delta^{AV}_{i,t}-\Delta^A_{i,t}-\Delta^V_{i,t}
=\mathrm{sg}\!\left(\ell^{AV}_{i,t}-\ell^A_{i,t}-\ell^V_{i,t}+\ell^\varnothing_{i,t}\right).
\label{eq:synergy}
\end{equation}
Intuitively, $S_{i,t}$ is positive only when the two clue sets together raise a token's likelihood by more than the sum of their individual effects. This is a non-additivity comparison in log-likelihood: a positive $S_{i,t}$ does not by itself imply that the joint clues raise the token's likelihood over the no-clue context, since the joint gain and the interaction can differ in sign. It is zero whenever the joint gain is fully explained by a single modality. For example, audio clues alone may account for the joint gain while visual clues add nothing. Under the joint-consistency assumption in Appendix~\ref{app:mfsd-proofs}, this contrast is a token-wise increment in conditional pointwise mutual information (PMI). Suppressing the rollout index,
\begin{equation}
S_t=\mathcal I_t-\mathcal I_{t-1},
\qquad
\mathcal I_t=\log\frac{P(r_A,r_V\mid x,y_{\leq t})}
{P(r_A\mid x,y_{\leq t})P(r_V\mid x,y_{\leq t})}.
\label{eq:main-pmi}
\end{equation}
where $y_{\leq0}=\varnothing$. This interpretation concerns the model's response to clue conditioning; it does not certify reasoning-step correctness.

\vspace{-3pt}
\noindent\textbf{Combining the two evidence signals.}
We center interaction scores within each response using the valid-token mask $m_{i,t}$ and $T_i=\sum_t m_{i,t}>0$:
\begin{equation}
\bar S_i=\frac{1}{T_i}\sum_t m_{i,t}S_{i,t},\qquad
\widetilde S_{i,t}=m_{i,t}(S_{i,t}-\bar S_i).
\label{eq:center}
\end{equation}
The combined token score is
\begin{equation}
g_{i,t}=(1-\beta_i)\Delta^{AV}_{i,t}+\beta_i\kappa\widetilde S_{i,t},
\qquad \beta_i=\beta\,\mathbf{1}[r_A\ne\varnothing\land r_V\ne\varnothing].
\label{eq:credit}
\end{equation}
The two signals play complementary roles: joint support retains useful evidence even when it comes from a single modality, while centered interaction emphasizes tokens with above-average cross-modal support.
Centering removes the response-level mean of the interaction score, so the interaction term redistributes credit within a response rather than rescaling the response as a whole (Appendix~\ref{app:mfsd-proofs}).
The coefficient $\beta$ balances these signals, and $\kappa$ calibrates their scales (Appendix~\ref{app:impl}).

\vspace{-3pt}
\noindent\textbf{Assigning token-level policy credit.}
Following RLSD, we convert $g_{i,t}$ into bounded advantage weights:
\begin{equation}
\begin{gathered}
w_{i,t}=\exp\!\left(\operatorname{sign}(A_i)g_{i,t}\right),\\
\widehat A_{i,t}=A_i\left[(1-\lambda)+\lambda\,
\mathrm{clip}\!\left(w_{i,t},1-\epsilon_w,1+\epsilon_w\right)\right].
\end{gathered}
\label{eq:ahat}
\end{equation}
Higher $g_{i,t}$ strengthens positive advantages and reduces negative penalties without reversing their signs. The clipped policy objective uses $\widehat A_{i,t}$ (Appendix~\ref{app:objective}). Detached scores and weights prevent gradients through clue conditioning; rollouts exclude privileged clues. The three clue-conditioned scoring views require no extra rollouts or separate teacher parameters.
Relative to RLSD, the modification is that the token score $g_{i,t}$ now carries cross-modality interaction guidance. Tokens whose likelihood increases only when audio and visual clues are combined receive larger positive advantages and smaller penalties, whereas tokens supported by a single modality alone are guided mainly by joint-clue support.

% ===== END inlined from sections/method.tex =====
% ===== BEGIN inlined from sections/experiments.tex =====
\section{Experiments}
% ===== BEGIN inlined from tables/main.tex =====
\begin{table}[!t]
\caption{\textbf{OmniReasoningBench accuracy (\%).} MCQ and OE denote multiple-choice and open-ended questions. Our model is initialized by Qwen3-Omni-30B-A3B-Thinking.}
\label{tab:main}
\centering
\footnotesize
\setlength{\tabcolsep}{0pt}
\begin{tabularx}{\linewidth}{@{}>{\raggedright\arraybackslash}X C{\dimexpr1.25cm+4pt\relax} C{\dimexpr1.4cm+4pt\relax} C{\dimexpr1.4cm+4pt\relax} C{\dimexpr1.6cm+4pt\relax} C{\dimexpr1.6cm+4pt\relax} C{\dimexpr1.02cm+4pt\relax}@{}}
\toprule
 & & \multicolumn{2}{c}{\textbf{\emph{Reasoning over video}}} &
 \multicolumn{2}{c}{\textbf{\emph{Reasoning beyond video}}} & \\
\cmidrule(lr){3-4}\cmidrule(lr){5-6}
\textbf{Model} & \textbf{Modality} & \textbf{MCQ} & \textbf{OE} & \textbf{MCQ} &
\textbf{OE} & \textbf{Overall} \\
\midrule
\tbgroupcf{7}{Proprietary models} \\
\lgemini Gemini-3.7-Flash               & Omni   & \bov{57.6}   & \bov{47.7}   & \bby{72.0}   & \bby{58.0} & \ball{57.6}  \\

\lgemini Gemini-3.8-Flash               & Omni   & \bov{55.2}   & \bov{49.6}   & \bby{69.6}   & \bby{60.0} & \ball{57.1}  \\
\lgemini Gemini-3.5-Flash               & Omni   & \bov{53.3}   & \bov{46.9}   & \bby{66.4}   & \bby{51.3} & \ball{53.8}  \\
\lgemini Gemini-3.1-Pro                 & Omni   & \bov{51.7} & \bov{46.9}   & \bby{64.0} & \bby{49.3} & \ball{52.5} \\

\ldoubao Doubao-2.0-Lite         & Omni       & \bov{54.7}     & \bov{47.2}                 & \bby{57.2}                 & \bby{40.7}                 & \ball{51.0}                            \\
\lqwen Qwen3.5-Omni-Plus                & Omni   & \bov{42.7}          & \bov{41.9} & \bby{60.4}    & \bby{40.7}          & \ball{46.0}    \\
\lmeta Muse-Spark-1.2                   & Omni   & \bov{38.1}          & \bov{36.0}          & \bby{60.0}         & \bby{58.0}        & \ball{44.8}         \\

\lmeta Muse-Spark-1.1                   & Omni   & \bov{36.4}          & \bov{35.9}          & \bby{56.0}         & \bby{46.0}        & \ball{41.8}         \\

% Doubao modality and evaluation scores await author-supplied results.

\tbgroupcf{7}{Open-source models} \\
\lxiaomi MiMo-V2.5                      & Omni   & \bov{42.7}          & \bov{40.6}          & \bby{54.0}         & \bby{44.0}        & \ball{44.6}         \\

% \lqwen Qwen3-Omni-30B-A3B-Thinking      & Omni   & \bov{31.9}          & \bov{25.9}          & \bby{46.4}          & \bby{32.0}        & \ball{33.2}         \\
\lkimi Kimi-K3                          & Visual & \bov{21.4}          & \bov{11.7}          & \bby{50.8}         & \bby{35.3}        & \ball{26.5}         \\
\lqwen Qwen3.8-Max                      & Visual & \bov{19.7}          & \bov{10.5}           & \bby{46.8}               & \bby{34.7}              & \ball{24.6}                \\

\lqwen Qwen3.5-Plus                     & Visual & \bov{17.2}          & \bov{8.7}           & \bby{46.0}               & \bby{30.0}              & \ball{22.5}                \\
\lnvidia Nemotron-3-Nano-Omni           & Omni   & \bov{25.6}          & \bov{22.9}          & \bby{36.0}               & \bby{13.3}              & \ball{25.4}                \\
\lqwen Qwen3-Omni-30B-A3B-Thinking      & Omni   & \bov{31.9}          & \bov{25.9}          & \bby{46.4}          & \bby{32.0}        & \ball{33.2}         \\

\tbgroupcf{7}{Ours} \\
\tboursf \lqwen OmniReasoning-30B-A3B    & Omni   & \bov{48.0}          & \bov{30.1}          & \bby{58.8} & \bby{32.7}        & \ball{42.5}         \\
\bottomrule
\end{tabularx}
\vspace{-10pt}
\end{table}

% ===== END inlined from tables/main.tex =====
\label{sec:experiments}
Our experiments answer three questions: (1) Can OmniQA training data improve audio-visual joint reasoning? (2) Can the benefits transfer to general video understanding? (3) Does the cross-modality interaction guidance of MFSD improve over existing reinforcement learning methods?

% ===== BEGIN inlined from tables/public.tex =====
\begin{table}[!htbp]
\caption{\textbf{Accuracy on public video benchmarks (\%)}~\citep{omnivideobench,lvomnibench,videommmu,videommev2}. Our model is initialized by Qwen3-Omni-30B-A3B-Thinking.}
\label{tab:public}
\centering
\footnotesize
\setlength{\tabcolsep}{0pt}
\begin{tabularx}{\linewidth}{@{}>{\raggedright\arraybackslash}X C{\dimexpr1.9cm+6pt\relax} C{\dimexpr1.57cm+6pt\relax} C{\dimexpr1.57cm+6pt\relax} C{\dimexpr1.65cm+6pt\relax}@{}}
\toprule
\textbf{Model} & \shead{OmniVideoBench} & \shead{LVOmniBench} &
\shead{Video-MMMU} & \shead{Video-MME-v2} \\
\midrule
\tbgroupcf{5}{Proprietary models} \\
\lgemini Gemini-3.5-Flash                 & \bov{66.5} & \bov{66.9}      & \bov{86.1}           & \bov{67.6} \\
\lgemini Gemini-3.1-Pro                   & \bov{60.7} & \bov{60.9}      & \bov{84.3}           & \bov{53.4}\\
\lqwen Qwen3.5-Omni-Plus                  & \bov{53.8} & \bov{53.2}      & \bov{84.6}            & \bov{47.9} \\
\lmeta Muse-Spark-1.1                     & \bov{55.8} & \bov{52.3}      & \bov{85.2}      & \bov{59.5} \\
\lxiaomi MiMo-V2.5 & \bov{51.6} & \bov{52.8} & \bov{79.1} & \bov{51.8} \\

\tbgroupcf{5}{Open-source models} \\
\lnvidia Nemotron-3-Nano-Omni & \bov{42.1} & \bov{32.1} & \bov{73.1} & \bov{38.4} \\
% \lqwen Qwen3-Omni-30B-A3B-Thinking        & \bov{37.2} & \bov{37.7} & \bov{68.3}      & \bov{36.5} \\
\lqwen Qwen2.5-Omni-7B                    & \bov{29.3} & \bov{32.0}        & \bov{53.9}             & \bov{25.3} \\
\lqwen Qwen3-Omni-30B-A3B-Thinking        & \bov{37.2} & \bov{37.7} & \bov{68.3}      & \bov{36.5} \\
\tbgroupcf{5}{Ours} \\
\tboursf \lqwen OmniReasoning-30B-A3B & \bov{50.0} & \bov{44.7} & \bov{74.4} & \bov{44.0} \\
\bottomrule
\end{tabularx}
\vspace{-10pt}
\end{table}

% ===== END inlined from tables/public.tex =====

\vspace{-3pt}
\noindent\textbf{Setup.}
We initialize our model OmniReasoning-30B-A3B from Qwen3-Omni-30B-A3B-Thinking~\citep{qwen3omni} and perform SFT on all 112,463 examples. GRPO, RLSD, and MFSD start from the same SFT checkpoint with the same RL budget: 150 steps, 8 responses per question, and a global batch of 256 generated responses per update. MFSD uses $\lambda=\beta=0.5$ after warm-up, and the audio and visual encoders remain frozen throughout. At evaluation time, reference evidence chains are withheld, and open-ended questions require producing the answer directly rather than selecting among options. We compare against recent proprietary omni-modal models~\citep{gemini3models,doubao2,qwen35omni,musespark} and open models~\citep{mimov25,qwen3omni,kimik3,qwen38blog,qwen35blog,nemotron3omni}, which together span omni-modal and vision-only inputs (Table~\ref{tab:main}). Appendix~\ref{app:impl} provides training and evaluation details. % Appendix~\ref{app:impl} gives the full training and evaluation settings.

\subsection{Audio-Visual Joint Reasoning and Generalization}

\vspace{-3pt}
\noindent\textbf{Results on OmniReasoningBench.}
On OmniReasoningBench (Table~\ref{tab:main}), our model reaches 42.5\% overall, 9.3 points above its base model. The gains are consistent across both settings and both question formats: 16.1 and 4.2 points on the multiple-choice and open-ended questions of \emph{reasoning over video}, and 12.4 and 0.7 points on those of \emph{reasoning beyond video}. Nevertheless, a clear gap to the strongest proprietary model (57.6\% for Gemini-3.7-Flash) remains, indicating that audio-visual joint reasoning is far from solved.

\vspace{-3pt}
\noindent\textbf{Generalization to public benchmarks.}
The improvement is not limited to OmniReasoningBench. Our model reaches 50.0\% on OmniVideoBench, 12.8 points above the base model, with gains spanning every audio-type and duration group (Table~\ref{tab:ovb}, Appendix~\ref{app:ovb-breakdown}). It also improves by 6.1--7.5 points on the other three general and long-video benchmarks (Table~\ref{tab:public}). Training on evidence-grounded joint-reasoning data thus strengthens broader video understanding rather than overfitting.

% ===== BEGIN inlined from tables/ovb.tex =====
\begin{table}[!t]
\caption{\textbf{OmniVideoBench accuracy (\%) by audio type and video duration}~\citep{omnivideobench}.}
\label{tab:ovb}
\centering
\vspace{2pt}
\small
\setlength{\tabcolsep}{0pt}
\begin{tabularx}{\linewidth}{@{}>{\raggedright\arraybackslash}X C{\dimexpr0.89cm+6pt\relax} C{\dimexpr0.89cm+6pt\relax} C{\dimexpr0.89cm+6pt\relax} C{\dimexpr0.89cm+6pt\relax} C{\dimexpr0.89cm+6pt\relax} C{\dimexpr0.89cm+6pt\relax} C{\dimexpr0.89cm+6pt\relax} C{\dimexpr0.89cm+6pt\relax}@{}}
\toprule
 & \multicolumn{3}{c}{\textbf{Audio Type}} &
 \multicolumn{4}{c}{\textbf{Video Duration} (min)} & \\
\cmidrule(lr){2-4}\cmidrule(lr){5-8}
\textbf{Model} & \shead{Music} & \shead{Sound} & \shead{Speech} &
\shead{(0,1]} & \shead{(1,5]} & \shead{(5,10]} & \shead{(10,30]} & \shead{Avg.} \\
\midrule
\tbgroupcf{9}{Proprietary models} \\

\lgemini Gemini-3.5-Flash            & \bov{56.9}& \bov{69.3} & \bov{70.1} & \bov{69.7} & \bov{68.0} & \bov{63.9} & \bov{64.8} & \bov{66.5} \\
\lgemini Gemini-3.1-Pro              & \bov{51.1} & \bov{65.8} & \bov{61.9} & \bov{67.3} & \bov{64.2} & \bov{59.5} & \bov{53.0} & \bov{60.7} \\
\lgemini Gemini-3-Pro                & \bov{56.2} & \bov{54.1} & \bov{55.7} & \bov{61.0} & \bov{56.4} & \bov{52.9} & \bov{52.5} & \bov{55.5} \\

\lqwen Qwen3.5-Omni-Plus             & \bov{42.3} & \bov{64.9} & \bov{53.3} & \bov{55.8} & \bov{55.2} & \bov{50.7} & \bov{53.4} & \bov{53.8} \\
\lxiaomi MiMo-V2.5 & \bov{46.2} & \bov{47.6} & \bov{53.0} & \bov{56.6} & \bov{57.4} & \bov{47.6} & \bov{44.3} & \bov{51.6} \\
\lgemini Gemini-2.0-Flash            & \bov{29.7} & \bov{40.3} & \bov{43.2} & \bov{49.4} & \bov{43.2} & \bov{41.1} & \bov{34.9} & \bov{41.5} \\
\tbgroupcf{9}{Open-source models} \\
\lnvidia Nemotron-3-Nano-Omni & \bov{33.0} & \bov{42.9} & \bov{43.0} & \bov{48.8} & \bov{41.7} & \bov{41.5} & \bov{38.9} & \bov{42.1} \\
% \lqwen Qwen3-Omni-30B-A3B-Thinking   & \bov{26.4} & \bov{37.2} & \bov{38.5} & \bov{46.8} & \bov{35.6} & \bov{35.5} & \bov{35.2} & \bov{37.2} \\
\lqwen Qwen2.5-Omni-7B               & \bov{23.1} & \bov{25.3} & \bov{30.7} & \bov{41.6} & \bov{27.4} & \bov{25.3} & \bov{26.7} & \bov{29.3} \\
\lqwen Qwen3-Omni-30B-A3B-Thinking   & \bov{26.4} & \bov{37.2} & \bov{38.5} & \bov{46.8} & \bov{35.6} & \bov{35.5} & \bov{35.2} & \bov{37.2} \\
\tbgroupcf{9}{Ours} \\
\tboursf \lqwen OmniReasoning-30B-A3B & \bov{39.6} & \bov{46.3} & \bov{52.0} & \bov{52.4} & \bov{49.9} & \bov{49.8} & \bov{48.9} & \bov{50.0} \\
\bottomrule
\end{tabularx}
\end{table}

% ===== END inlined from tables/ovb.tex =====

% ===== BEGIN inlined from tables/rl.tex =====
\begin{table}[t]
\caption{\textbf{Comparison of RL methods (accuracy, \%).} GRPO, RLSD, and MFSD start from the same SFT checkpoint and use the same RL training budget.}
\label{tab:rl}
\centering\footnotesize
\setlength{\tabcolsep}{2.7pt}
\begin{tabularx}{\linewidth}{l *{5}{>{\centering\arraybackslash}X}}
\toprule
Method & \shead{\shortstack{OmniReasoning\\Bench}} & \shead{\shortstack{OmniVideo\\Bench}} & \shead{LVOmniBench} & \shead{Video-MME-v2} & \shead{Video-MMMU} \\
\midrule
Qwen3-Omni-30B-A3B-Thinking & \bov{33.2} & \bov{37.2} & \bov{37.7} & \bov{36.5} & \bov{68.3} \\
+ SFT & \bov{36.1} & \bov{47.0} & \bov{40.2} & \bov{40.3} & \bov{73.7} \\ \midrule
+ GRPO & \bov{38.3} & \bov{44.8} & \bov{39.8} & \bov{39.1} & \bov{72.9} \\
+ RLSD & \bov{39.2} & \bov{45.4} & \bov{40.1} & \bov{39.8} & \bov{72.0} \\
\tbours + MFSD & \bov{42.5} & \bov{50.0} & \bov{44.7} & \bov{44.0} & \bov{74.4} \\
\bottomrule
\end{tabularx}
\vspace{-10pt}
\end{table}
% ===== END inlined from tables/rl.tex =====

\subsection{Contribution of MFSD}
\label{sec:mfsd}

Finally, we validate whether the cross-modality interaction guidance in MFSD contributes beyond existing RL methods, comparing GRPO, RLSD, and MFSD under an identical budget from the same SFT checkpoint (Table~\ref{tab:rl}).
On OmniReasoningBench, GRPO and RLSD outperform SFT by 2.2 and 3.1 points, respectively, whereas MFSD improves by 6.4 points.
% Such contrast is sharper on existing public benchmarks: GRPO and RLSD fall below the SFT checkpoint on all four of them, while MFSD improves over SFT by 0.7--4.5 points on each.
On existing benchmarks, GRPO and RLSD even fall below SFT, while MFSD consistently outperforms SFT by 0.7--4.5 points.
These results indicate that outcome rewards and coarse privileged clues alone fit the targeted pattern but hurt general capability. By factoring credit by modality-specific evidence and explicitly rewarding cross-modality interaction, MFSD improves both joint reasoning and general video understanding.

% ===== END inlined from sections/experiments.tex =====
% ===== BEGIN inlined from sections/conclusion.tex =====
\section{Conclusion}
\label{sec:conclusion}
% OmniReasoning pushes the limits of audio-visual joint reasoning by connecting evaluation, data construction, and learning through explicit audio-visual evidence dependencies. 
% Specifically, OmniReasoningBench evaluates \emph{reasoning over video} and \emph{reasoning beyond video}; OmniQA supplies evidence-grounded training data; and MFSD combines joint-clue support with cross-modal interaction for token-level credit assignment. OmniReasoning-30B-A3B improves over its base model by 9.3 points on OmniReasoningBench and 12.8 points on OmniVideoBench, and MFSD outperforms SFT, GRPO, and RLSD on all five video benchmarks.
We present \textbf{OmniReasoning}, a unified framework for evaluating and improving audio-visual joint reasoning in omni-modal large language models (Omni-LLMs). We first introduce \textbf{OmniReasoningBench}, whose questions require complementary audio and visual evidence, providing a more stringent evaluation of genuine cross-modal reasoning. We then develop the \textbf{OmniQA} data engine to generate evidence-grounded reasoning data with timestamped modality-specific clues and dependency chains, yielding \textbf{OmniReasoning-SFT-112K} and \textbf{OmniReasoning-RL-19K}. Finally, we propose \textbf{Modality-Factored Self-Distillation (MFSD)}, which exploits these structured clues to provide modality-aware, token-level credit assignment for reinforcement learning.
Training with OmniQA and MFSD yields \textbf{OmniReasoning-30B-A3B}, substantially improving audio-visual joint reasoning on existing and proposed benchmarks, while also benefiting general and long-video understanding. 
These results highlight the importance of explicitly modeling cross-modal evidence dependencies in evaluation and learning alike; we hope OmniReasoning serves as a foundation for future research on omni-modal joint reasoning.
% ===== END inlined from sections/conclusion.tex =====
\FloatBarrier
\label{endofmaintext}
\clearpage
% ===== BEGIN inlined from sections/statements.tex =====
\subsection*{AI use statement}
In this work, we used generative AI tools for generating synthetic datasets. Specifically, generative
AI assisted in constructing the OmniQA dataset. We did not use generative AI tools for the other tasks with required disclosure, including developing
the core scientific contributions, deriving mathematical proof, designing the research methodology or experiments, conducting
experimental analysis, selecting or verifying references, or making scientific claims; the remaining
required-disclosure tasks are not applicable to this work.
Additionally, we used generative AI tools to aid or polish writing, including language editing, readability improvements, and suggestions of alternative wording. We
have reviewed all AI-assisted work. All AI-assisted text was carefully edited and approved by the
authors; synthetic data were checked using the validation procedures described in the paper; and all technical content, claims, citations, and
reported results were manually verified. We take responsibility for the final content of this work, including text, claims, code, data, annotations, and other artifacts produced with the aid of generative
AI.

\subsection*{Reproducibility statement}
We provide complete training details, including data generation pipeline, model configurations, optimization objectives, hyperparameters, and implementation settings. These details, together with the evaluation protocols, are documented in the main text and appendices to enable reproduction of our results.
% ===== END inlined from sections/statements.tex =====
% ===== BEGIN inlined from iclr2027_conference.bbl (regenerated from .bib) =====

% ===== END inlined from iclr2027_conference.bbl =====
\clearpage
\appendix
% ===== BEGIN inlined from sections/appendix.tex =====
\section{Implementation and Evaluation Details}
\label{app:impl}
\paragraph{Supervised Fine-Tuning.}
SFT uses the full 112,463 SFT training corpus, with learning rate $7\times10^{-6}$, microbatch size 2, and global batch size 128. The configuration samples video at 2 fps with at most 256 frames and 200,704 pixels per frame. Visual and audio encoders are frozen.

\paragraph{RL configuration.}
The GRPO, RLSD and MFSD runs are trained for 150 RL steps using 64 GPUs, a global batch size of 256, 8 rollout trajectories per question, and learning rate $10^{-6}$. The global batch size counts generated responses per optimizer update. The binary verifier extracts the selected option and compares it to the reference letter. The KL coefficient is $10^{-3}$. MFSD uses text clue factorization, actor weights for every scoring view, sequence-wise centering, $\epsilon_w=0.2$, and EMA coefficient 0.9 for $\kappa$ with maximum 10. The coefficients $\lambda$ and $\beta$ warm up to 0.5 over 10 and 20 steps. No additional global advantage whitening is enabled. Constant-reward groups receive zero reward advantage.

\paragraph{Inference and scoring.}
Evaluation on OmniReasoningBench, OmniVideoBench, LVOmniBench, Video-MME-v2, and Video-MMMU uses 2 fps video sampling with at most 512 frames and no audio truncation.

\subsection{Policy objective}
\label{app:objective}
We substitute $\widehat A$ into the clipped policy objective, with $\rho_{i,t}=\pi_\theta(y^{(i)}_t\mid x,y^{(i)}_{<t})/\pi_{\mathrm{old}}(y^{(i)}_t\mid x,y^{(i)}_{<t})$:
\begin{equation}
\mathcal L_{\mathrm{MFSD}}=-\mathbb E\!\left[
\left\langle\min\!\left(\rho_{i,t}\widehat A_{i,t},
\mathrm{clip}(\rho_{i,t},1-\epsilon_p,1+\epsilon_p)\widehat A_{i,t}\right)\right\rangle_m\right]
+\eta\mathcal L_{\mathrm{KL}}.
\label{eq:objective}
\end{equation}
Here $\langle\cdot\rangle_m$ denotes the masked response-token reduction and $\mathcal L_{\mathrm{KL}}$ anchors the actor to the SFT reference policy. Teacher scores and advantage weights are detached. MFSD adds three scoring passes, reuses media features, and requires no extra rollouts or teacher parameters.

\subsection{MFSD implementation and component analysis}
\label{app:mfsd-details}

\paragraph{Adaptive scale calibration.}
The joint-clue gain $\Delta^{AV}$ and centered interaction $\widetilde S$ can have different magnitudes. MFSD therefore calibrates their scales before mixing them in Equation~\ref{eq:credit}. At update $b$, let $\widehat\sigma_b$ denote the standard deviation over valid response tokens in the batch; padding and non-response positions are excluded. We compute
\begin{equation}
\begin{aligned}
q_b&=\min\!\left(
\frac{\widehat\sigma_b(\Delta^{AV})}
{\widehat\sigma_b(\widetilde S)+10^{-6}},\,10\right),\\
\kappa_b&=0.9\,\kappa_{b-1}+0.1\,q_b.
\end{aligned}
\label{eq:kappa-calibration}
\end{equation}
This is the explicit update corresponding to $\mathrm{EMA}_{0.9}$ of the capped standard-deviation ratio. The $10^{-6}$ term stabilizes the denominator, the cap limits the target scale when interaction variance is small, and the EMA smooths batch-to-batch fluctuations. Calibration approximately aligns the two signal scales; clipping and smoothing need not yield exact variance matching. The same $\kappa_b$ is shared across response tokens in that update.

\paragraph{Centering and clue availability.}
Centering gives $\sum_t m_{i,t}\widetilde S_{i,t}=0$, so the interaction term redistributes credit within a response without changing the sum of its token scores. It also removes response-wide additive offsets in $S$ (Appendix~\ref{app:mfsd-proofs}). The joint-clue term retains useful evidence even when it comes from one modality: if $\Delta^{AV}_{i,t}=\Delta^A_{i,t}>0$ and $\Delta^V_{i,t}=0$, then $S_{i,t}=0$ despite positive clue support. When either clue set is absent, the gate sets $\beta_i=0$ and uses joint-clue-only weighting.

\paragraph{Mixing and bounded weighting.}
The coefficient $\beta$ balances joint support and calibrated interaction, whereas $\lambda$ controls how strongly the resulting token weights modify the outcome advantage. They warm up to $0.5$ over 20 and 10 steps, respectively. Setting $\beta=0$ removes the interaction term; setting $\lambda=0$ recovers the unweighted outcome advantage. These are algebraic limits of the weighting rule. 

\paragraph{Scoring and gradient separation.}
All four views use the same actor weights, original media, sampled response, and prefixes; only the privileged text clues vary. The three clue-conditioned views reuse the rollout and media features, with no additional response generation or separate teacher parameters. Evidence scores and advantage weights are detached, so their clue-conditioned derivatives do not enter the policy gradient.

\subsection{Benchmark evaluation protocols}
\label{app:protocol}
OmniReasoningBench contains 750 \emph{reasoning over video} and 400 \emph{reasoning beyond video} questions. \emph{Reasoning over video} has 375 multiple-choice and 375 open-numeric questions. \emph{Reasoning beyond video} has 250 multiple-choice and 150 open-numeric questions, with 288 video-plus-figure and 112 video-plus-text inputs.

At evaluation time, models receive the source video, question, and any additional figure; reference evidence chains are withheld. For \emph{reasoning beyond video} questions, the additional figure or table is part of the question input: modality ablations hold it fixed and vary only the source video's audio and visual streams, so that ablation scores isolate the contribution of the video's own modalities rather than mixing in the removal of the new figure.

\begin{figure}[!htbp]
\centering
\includegraphics[width=\linewidth]{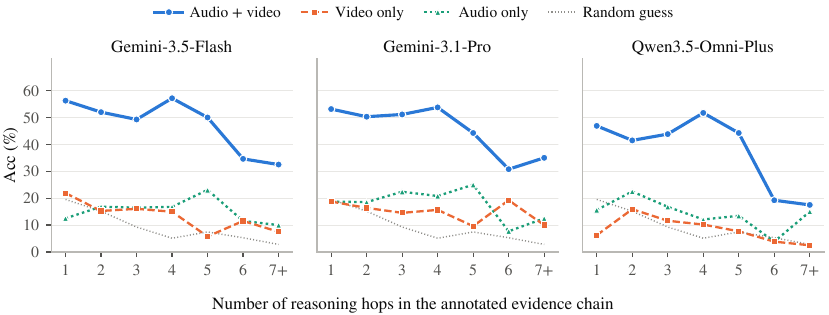}
\caption{\textbf{Modality ablations on \emph{reasoning over video}.} Audio-visual joint inputs outperform either modality alone for all three advanced Omni-LLMs in every evidence-chain-length group, supporting the importance of audio-visual evidence integration.}
\label{fig:ozb-hops}
\end{figure}

\Needspace{28\baselineskip}
\subsection{Modality ablations}
\label{app:modality-ablations}
To isolate modality effects, we evaluate each model with joint, visual-only, and audio-only inputs (Figure~\ref{fig:ozb-hops}). Across the three evaluated Omni-LLMs, accuracy on \emph{reasoning over video} falls from 42.3--50.1\% with joint inputs to 10.9--14.9\% with video alone and 16.3--20.4\% with audio alone, supporting the importance of joint evidence.

We further compare Qwen3.5-Plus with visual inputs alone against Qwen3.5-Omni-Plus with joint audio-visual inputs, on four prior audio-visual benchmarks and on our \emph{reasoning over video} questions (Table~\ref{tab:visual-only}). Although these benchmarks study cross-modal understanding and report modality ablations, joint inputs add only 4.9--15.8 points over the visual-only model, and 44.9--68.8\% of their questions remain answerable without audio; on our multiple-choice and open-ended questions, the visual-only model drops to 17.2\% and 8.7\%, while joint inputs recover 42.7\% and 41.9\%.

% ===== BEGIN inlined from tables/visual_only.tex =====
\begin{table}[!htbp]
\caption{\textbf{Visual-only versus joint-input accuracy across audio-visual benchmarks.} Qwen3.5-Plus receives visual inputs alone, while Qwen3.5-Omni-Plus receives joint audio-visual inputs.}
\label{tab:visual-only}
\centering
\vspace{2pt}
\small
\setlength{\tabcolsep}{6pt}
\begin{tabular}{@{}lcc@{}}
\toprule
\textbf{Benchmark} & \textbf{Qwen3.5-Plus} & \textbf{Qwen3.5-Omni-Plus} \\
\midrule
WorldSense & 57.9 & 62.8 \\
Daily-Omni & 68.8 & 84.6 \\
OmniVideoBench & 44.9 & 53.8 \\
JointAVBench & 60.4 & 74.1 \\
\midrule
OmniReasoningBench, \emph{reasoning over video} (MCQ) & 17.2 & 42.7 \\
OmniReasoningBench, \emph{reasoning over video} (OE) & 8.7 & 41.9 \\
\bottomrule
\end{tabular}
\end{table}
% ===== END inlined from tables/visual_only.tex =====

To test whether the gains of joint inputs merely combine questions that either modality can answer alone, we perform a paired analysis on the same \emph{reasoning over video} questions (Table~\ref{tab:paired-modality}). For each model, we compare joint audio-visual accuracy with a single-modality oracle that credits a question when either the audio-only or the visual-only run answers it correctly. Joint inputs exceed this oracle by 9.9--26.7 percentage points across the three Omni-LLMs and both question types. For Qwen3.5-Omni-Plus, 212 questions are answered correctly only with joint inputs, while 77 questions are answered by the single-modality oracle but missed with joint inputs. These results indicate that the benefit of audio-visual joint reasoning cannot be explained by the union of single-modality information gains.

% ===== BEGIN inlined from tables/paired_modality.tex =====
\begin{table}[!htbp]
\caption{\textbf{Paired modality analysis on \emph{reasoning over video}.} Each row compares joint audio-visual accuracy (AV) with audio-only (A), video-only (V), and a single-modality oracle (A$\cup$V) that credits a question when either single-modality run answers it correctly, over the same paired questions. Joint inputs exceed the oracle in every setting, indicating that the gains of joint evidence cannot be explained by the union of single-modality successes.}
\label{tab:paired-modality}
\centering
\vspace{2pt}
\small
\setlength{\tabcolsep}{5pt}
\begin{tabular}{@{}llccccc@{}}
\toprule
\textbf{Model} & \textbf{Type} & \textbf{AV} & \textbf{V} & \textbf{A} & \textbf{A$\cup$V oracle} & \textbf{AV $-$ oracle} \\
\midrule
Qwen3.5-Omni-Plus & MCQ & 42.67 & 14.93 & 22.67 & 32.27 & +10.40 \\
Qwen3.5-Omni-Plus & OE  & 41.87 & 6.93  & 9.87  & 16.27 & +25.60 \\
\midrule
Gemini-3.1-Pro    & MCQ & 51.73 & 22.40 & 26.13 & 41.87 & +9.87 \\
Gemini-3.1-Pro    & OE  & 46.93 & 7.47  & 14.67 & 20.80 & +26.13 \\
\midrule
Gemini-3.5-Flash  & MCQ & 53.33 & 19.73 & 20.53 & 37.07 & +16.27 \\
Gemini-3.5-Flash  & OE  & 46.93 & 9.07  & 12.27 & 20.27 & +26.67 \\
\bottomrule
\end{tabular}
\end{table}

% ===== END inlined from tables/paired_modality.tex =====

\subsection{OmniVideoBench breakdown}
\label{app:ovb-breakdown}
Table~\ref{tab:ovb} reports OmniVideoBench accuracy by audio type and video duration. OmniReasoning-30B-A3B improves over its base model in every audio-type and duration group.

\FloatBarrier

\FloatBarrier
\Needspace{12\baselineskip}
% Appendix section for the parent manuscript's preamble and bibliography.
\clearpage
\section{Mathematical Properties of MFSD}
\label{app:mfsd-proofs}
\label{app:mfsd-properties}

We derive the information-theoretic interpretation of MFSD and its structural guarantees, following the evidence-ratio analysis of RLSD~\citep{rlsd}. All four views retain the original audio-video input $x$ and the same sampled prefix $y_{<t}$; only the privileged text clues $r_A$, $r_V$, and $r_{AV}=(r_A,r_V)$ vary.

\subsection{Bayesian interpretation of the interaction score}
\label{app:mfsd-interaction}

\begin{assumption}[Joint consistency]
\label{asm:joint}
For fixed $x$, a single joint distribution $P(Y,R_A,R_V\mid x)$ satisfies
\begin{equation}
\pi_\theta(y_t\mid x,r_M,y_{<t})
=P(y_t\mid x,r_M,y_{<t}),
\qquad M\in\{\varnothing,A,V,AV\},
\label{eq:mfsd-app-consistency}
\end{equation}
and every conditioning event used in the derivations below has positive probability under $P(\cdot\mid x)$, so that all likelihood ratios are well defined.
\end{assumption}

\begin{proposition}[Interaction as a conditional PMI increment]
\label{prop:synergy}
Under Assumption~\ref{asm:joint}, define the conditional pointwise mutual information of the realized clue pair by
\begin{equation}
\mathcal I_t
=\log\frac{P(r_A,r_V\mid x,y_{\leq t})}
{P(r_A\mid x,y_{\leq t})P(r_V\mid x,y_{\leq t})},
\label{eq:mfsd-app-pmi}
\end{equation}
with $y_{\leq0}=\varnothing$. The interaction score in Equation~\ref{eq:synergy} satisfies
\begin{equation}
S_t=\mathcal I_t-\mathcal I_{t-1},
\qquad
\sum_{t=1}^{T}S_t=\mathcal I_T-\mathcal I_0.
\label{eq:mfsd-app-pmi-increment}
\end{equation}
\end{proposition}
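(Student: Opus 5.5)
The plan is to rewrite each log-likelihood in Equation~\ref{eq:synergy} using Bayes' rule under Assumption~\ref{asm:joint}, so that each token likelihood becomes a ratio of clue posteriors. Then we collect terms so that the token likelihood $P(y_t\mid x,y_{<t})$ cancels, and what remains is the difference of two conditional PMIs.

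\textbf{Step 1: Bayes' rule for each view.} Fix the rollout and drop its index. By Assumption~\ref{asm:joint}, for each $M\in\{A,V,AV\}$, writing $r_M$ for the realized clue event ($r_{AV}=(r_A,r_V)$),
\begin{equation*}
\ell^M_t-\ell^\varnothing_t
=\log\frac{P(y_t\mid x,r_M,y_{<t})}{P(y_t\mid x,y_{<t})}
=\log\frac{P(r_M\mid x,y_{\leq t})}{P(r_M\mid x,y_{<t})}.
\end{equation*}
This uses $P(y_t,r_M\mid x,y_{<t})=P(y_t\mid x,r_M,y_{<t})P(r_M\mid x,y_{<t})=P(r_M\mid x,y_{\leq t})P(y_t\mid x,y_{<t})$. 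Positivity of all conditioning events, which the assumption guarantees, makes every ratio well defined. The case $t=1$ is handled by the convention $y_{\leq 0}=\varnothing$, so that $P(r_M\mid x,y_{<1})=P(r_M\mid x)$.

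\textbf{Step 2: Combine.} Since $S_t=\Delta^{AV}_t-\Delta^A_t-\Delta^V_t$ (the stop-gradient does not change values), we substitute the three identities from Step 1. Grouping the numerators at time $t$ gives $\log\frac{P(r_A,r_V\mid x,y_{\leq t})}{P(r_A\mid x,y_{\leq t})P(r_V\mid x,y_{\leq t})}=\mathcal I_t$. Grouping the denominators at prefix $y_{<t}=y_{\leq t-1}$ gives the same expression, namely $\mathcal I_{t-1}$. Hence $S_t=\mathcal I_t-\mathcal I_{t-1}$.

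\textbf{Step 3: Telescoping.} Summing over $t=1,\dots,T$ telescopes to $\mathcal I_T-\mathcal I_0$, which gives the second identity in Equation~\ref{eq:mfsd-app-pmi-increment}.

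\textbf{Main obstacle.} The only delicate point is justifying Step 1. All four views must be conditionals of one joint distribution $P(Y,R_A,R_V\mid x)$, with the realized clues treated as events of positive probability; this is exactly what Assumption~\ref{asm:joint} posits. The remaining care is bookkeeping at $t=1$. Otherwise the argument is a direct algebraic identity.
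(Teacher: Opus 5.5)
Your proposal is correct and matches the paper's proof essentially step for step. Both use Bayes' rule under Assumption~\ref{asm:joint} to rewrite each gain $\Delta^M_t$ as $\log P(r_M\mid x,y_{\le t})/P(r_M\mid x,y_{<t})$, regroup the numerators and denominators into $\mathcal I_t-\mathcal I_{t-1}$, and telescope the sum; your explicit treatment of $t=1$ via $y_{\le 0}=\varnothing$ spells out a detail the paper leaves implicit.
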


\begin{proof}
For each $M\in\{A,V,AV\}$, Bayes' rule gives
\begin{equation}
P(y_t\mid x,r_M,y_{<t})
=\frac{P(r_M\mid x,y_{\leq t})P(y_t\mid x,y_{<t})}
{P(r_M\mid x,y_{<t})}.
\end{equation}
Dividing by $P(y_t\mid x,y_{<t})$, taking logarithms, and applying Assumption~\ref{asm:joint} yields
\begin{equation}
\Delta_t^M
=\log\frac{\pi_\theta(y_t\mid x,r_M,y_{<t})}{\pi_\theta(y_t\mid x,y_{<t})}
=\log\frac{P(r_M\mid x,y_{\leq t})}{P(r_M\mid x,y_{<t})}.
\label{eq:belief}
\end{equation}
Stop-gradient leaves this numerical identity unchanged. Substituting the three gains into $S_t$ and collecting terms at each prefix,
\begin{equation}
\begin{aligned}
S_t
&=\Delta_t^{AV}-\Delta_t^A-\Delta_t^V\\
&=\log\frac{P(r_A,r_V\mid x,y_{\leq t})}
{P(r_A\mid x,y_{\leq t})P(r_V\mid x,y_{\leq t})}
-\log\frac{P(r_A,r_V\mid x,y_{<t})}
{P(r_A\mid x,y_{<t})P(r_V\mid x,y_{<t})}\\
&=\mathcal I_t-\mathcal I_{t-1}.
\end{aligned}
\end{equation}
Summing over $t$ cancels all intermediate terms, giving $\sum_{t=1}^{T}S_t=\mathcal I_T-\mathcal I_0$.
\end{proof}

\paragraph{Interpretation.}
The sign of $S_t$ compares joint support with the two individual gains:
\begin{equation}
\exp(S_t)
=\frac{\pi_\theta(y_t\mid x,r_A,r_V,y_{<t})\,\pi_\theta(y_t\mid x,y_{<t})}
{\pi_\theta(y_t\mid x,r_A,y_{<t})\,\pi_\theta(y_t\mid x,r_V,y_{<t})}.
\label{eq:mfsd-app-cross-ratio}
\end{equation}
Thus $S_t>0$ indicates joint likelihood support beyond the product of the individual likelihood ratios; under Assumption~\ref{asm:joint}, it increases the clue pair's conditional PMI. We stress that this is a non-additivity comparison in log-likelihood, not a statement of positive joint support: $\Delta_t^{AV}<0$ and $S_t>0$ can hold simultaneously when both individual gains are more negative than the joint gain. Positive interaction, positive joint support, and reasoning correctness are therefore distinct notions; MFSD keeps the joint-clue gain $\Delta_t^{AV}$ and the interaction $S_t$ as separate branches in Equation~\ref{eq:credit} precisely so that each captures its own notion. In contrast, if $\Delta_t^{AV}=\Delta_t^A$ and $\Delta_t^V=0$, then $S_t=0$ despite positive audio-clue support. This motivates interaction-based credit assignment.

\subsection{Sequence centering}
\label{app:mfsd-centering}

\begin{proposition}[Zero-sum and offset-invariant interaction]
\label{prop:mfsd-centering}
Let $m_t\in\{0,1\}$, $N=\sum_t m_t>0$, and define
\begin{equation}
\bar S=\frac1N\sum_t m_tS_t,
\qquad \widetilde S_t=m_t(S_t-\bar S).
\end{equation}
Then $\sum_t\widetilde S_t=0$. Adding a constant $b$ to every $S_t$ leaves $\widetilde S_t$ unchanged, and centering preserves the ordering of active-token scores.
\end{proposition}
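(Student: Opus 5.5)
The plan is a direct algebraic verification of the three claims of Proposition~\ref{prop:mfsd-centering}, using only the definitions of $\bar S$ and $\widetilde S_t$ and the fact that $m_t\in\{0,1\}$, so $m_t^2=m_t$. No probabilistic structure, Assumption~\ref{asm:joint}, or Proposition~\ref{prop:synergy} is needed.

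\emph{Zero sum.} We expand
\[
\sum_t \widetilde S_t=\sum_t m_tS_t-\bar S\sum_t m_t .
\]
Since $\sum_t m_t=N$ and $\bar S=\frac1N\sum_t m_tS_t$, the second term equals $\sum_t m_tS_t$. The two terms therefore cancel exactly. Positivity of $N$ is used only to make $\bar S$ well defined.

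\emph{Offset invariance.} Replace $S_t$ by $S_t'=S_t+b$ for all $t$. Then
\[
\bar S'=\frac1N\sum_t m_t(S_t+b)=\bar S+\frac{b}{N}\sum_t m_t=\bar S+b .
\]
Hence $S_t'-\bar S'=S_t-\bar S$, and multiplying by $m_t$ gives $\widetilde S_t'=\widetilde S_t$ for every $t$, including masked positions, where both sides are zero.

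\emph{Order preservation.} For active tokens $s,t$ (those with $m_s=m_t=1$), we have $\widetilde S_s-\widetilde S_t=S_s-S_t$. Thus $S_s<S_t$ if and only if $\widetilde S_s<\widetilde S_t$, and equalities are preserved likewise. The map restricted to active tokens is the strictly increasing translation $u\mapsto u-\bar S$.

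None of these steps is a genuine obstacle. The only care needed is to state the ordering claim for active tokens only, since masked positions are sent to $0$ regardless of their $S_t$ and could otherwise appear to break the order. The statement already restricts to active tokens, so I would simply make that restriction explicit when writing up the proof.
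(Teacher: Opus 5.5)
Your proposal is correct and matches the paper's proof essentially line for line: the zero sum follows from $\sum_t m_tS_t-N\bar S=0$, offset invariance from $\bar S'=\bar S+b$, and order preservation from $\widetilde S_t-\widetilde S_u=S_t-S_u$ on active positions. The identity $m_t^2=m_t$ that you mention is never actually used, so you can drop it.
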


\begin{proof}
By the definition of $\bar S$,
\begin{equation}
\sum_t\widetilde S_t=\sum_t m_tS_t-N\bar S=0.
\end{equation}
For $S'_t=S_t+b$, the mean is $\bar S'=\bar S+b$, hence
$m_t(S'_t-\bar S')=m_t(S_t-\bar S)$. For active positions $t,u$,
$\widetilde S_t-\widetilde S_u=S_t-S_u$, proving order preservation.
\end{proof}

Consequently, centering removes response-wide additive offsets while retaining relative token credit. Since $\beta_i$ and $\kappa$ are constant within each response, Equation~\ref{eq:credit} gives
\begin{equation}
\sum_t m_tg_t=(1-\beta_i)\sum_t m_t\Delta_t^{AV}.
\end{equation}
The interaction therefore contributes zero to the mean log-credit signal. This zero-sum property holds for the linear score $g_t$ only: the exponentiation, clipping, and interpolation in Equation~\ref{eq:ahat} are nonlinear and do not preserve it, so centering does not conserve the total advantage mass $\sum_t m_t\widehat A_t$. Centering also removes any response-wide constant component of $S_t$: a uniformly positive interaction score is fully absorbed, and a token with $S_t<0$ still receives positive $\widetilde S_t$ when it lies above the response mean. MFSD thus optimizes relative interaction priority within each response rather than rewarding every token with positive raw interaction.

\subsection{Bounded reweighting and gradient structure}
\label{app:mfsd-bounds}

Write Equation~\ref{eq:ahat} as $\widehat A_t=A c_t$, where the detached multiplier is
\begin{equation}
c_t=(1-\lambda)+\lambda\operatorname{clip}
\!\left(e^{\operatorname{sign}(A)g_t},1-\epsilon_w,1+\epsilon_w\right).
\end{equation}

\begin{proposition}[Bounded and sign-preserving credit]
\label{prop:mfsd-bounds}
For $0\leq\lambda\leq1$ and $0\leq\epsilon_w<1$,
\begin{equation}
1-\lambda\epsilon_w\leq c_t\leq1+\lambda\epsilon_w,
\qquad
\operatorname{sign}(\widehat A_t)=\operatorname{sign}(A),
\qquad
|\widehat A_t-A|\leq\lambda\epsilon_w|A|.
\end{equation}
\end{proposition}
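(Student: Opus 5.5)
The argument is elementary and I would organize it around the clip. First I would isolate the clipped factor $u_t=\operatorname{clip}(e^{\operatorname{sign}(A)g_t},1-\epsilon_w,1+\epsilon_w)$. The interval $[1-\epsilon_w,1+\epsilon_w]$ is nonempty because $\epsilon_w\ge 0$, so by definition of clipping $1-\epsilon_w\le u_t\le 1+\epsilon_w$ for every real $g_t$. This holds whatever the sign of $A$: the case $A=0$ gives $\operatorname{sign}(A)=0$ and $u_t=1$, which already lies in the interval. Since $c_t=(1-\lambda)+\lambda u_t$ is affine in $u_t$ with nonnegative slope $\lambda\in[0,1]$, substituting the two endpoints gives $1-\lambda\epsilon_w\le c_t\le 1+\lambda\epsilon_w$. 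Equivalently, $|c_t-1|=\lambda|u_t-1|\le\lambda\epsilon_w$.

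Next I would prove sign preservation. Because $\lambda\le 1$ and $\epsilon_w<1$, we have $\lambda\epsilon_w<1$, so the lower bound gives $c_t\ge 1-\lambda\epsilon_w>0$. Since $\widehat A_t=A c_t$ with $c_t$ strictly positive, $\operatorname{sign}(\widehat A_t)=\operatorname{sign}(A)$. The case $A=0$ is consistent with this, since it gives $\widehat A_t=0$.

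Finally I would derive the deviation bound by writing $\widehat A_t-A=A(c_t-1)$, so that $|\widehat A_t-A|=|A|\,|c_t-1|\le\lambda\epsilon_w|A|$ by the first step.

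None of these steps is a real obstacle. The only points that need care are checking that the strict inequality $\epsilon_w<1$ is what forces $c_t>0$, and treating the degenerate case $A=0$ so that the sign statement is read with $\operatorname{sign}(0)=0$. I would also remark that the bound uses nothing about $g_t$. It therefore holds regardless of the interaction term, the centering, or the scale $\kappa$, which is exactly why MFSD can inherit RLSD's boundedness guarantee unchanged.
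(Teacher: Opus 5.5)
Your proposal is correct and follows the paper's own proof: the clipped exponential lies in $[1-\epsilon_w,1+\epsilon_w]$, the affine map $u\mapsto(1-\lambda)+\lambda u$ gives the bounds on $c_t$, the inequality $1-\lambda\epsilon_w>0$ yields sign preservation, and $|c_t-1|\le\lambda\epsilon_w$ gives the deviation bound. Your treatment of the degenerate case $A=0$ is a small extra detail that the paper does not spell out, but the argument is otherwise identical.
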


\begin{proof}
The clipped exponential lies in $[1-\epsilon_w,1+\epsilon_w]$. Multiplying by $\lambda$ and adding $1-\lambda$ gives the stated bounds on $c_t$. Since $1-\lambda\epsilon_w>0$, multiplication by $c_t$ preserves the sign of $A$, and $|c_t-1|\leq\lambda\epsilon_w$ gives the deviation bound.
\end{proof}

\paragraph{Policy-gradient structure.}
For the clipped surrogate in Appendix~\ref{app:objective}, define
\begin{equation}
\begin{aligned}
\rho_t&=\frac{\pi_\theta(y_t\mid x,y_{<t})}{\pi_{\mathrm{old}}(y_t\mid x,y_{<t})},\\
j_t(A)&=\min\!\left\{\rho_t A,
\operatorname{clip}(\rho_t,1-\epsilon_p,1+\epsilon_p)A\right\}.
\end{aligned}
\end{equation}
Positivity and stop-gradient imply
\begin{equation}
j_t(\widehat A_t)=c_tj_t(A),
\qquad
\nabla_\theta j_t(\widehat A_t)=c_t\nabla_\theta j_t(A),
\end{equation}
where the derivative exists. In an unclipped branch, the token contribution is
\begin{equation}
c_t A\rho_t\nabla_\theta\log\pi_\theta(y_t\mid x,y_{<t}).
\end{equation}
Thus privileged clues rescale each student-token gradient without reversing it or introducing derivatives through clue-conditioned scores.

\FloatBarrier

% ===== BEGIN inlined from sections/appendix_protocol.tex =====

\section{Reward and Evaluation Protocols}
\label{app:reward}
All RL rewards are binary over the 18,991 human-validated questions of OmniReasoning-RL-19K. Multiple-choice answers are verified by exact option-letter match (unparseable responses score 0); open-numeric answers are judged by a DeepSeek-V4-Flash model. The same judge and prompt are used for RL rewards and for all open-ended evaluation.

% ===== END inlined from sections/appendix_protocol.tex =====

% ===== BEGIN inlined from sections/appendix_comparisons.tex =====
\clearpage
\section{Qualitative Comparison of Model Responses}
\label{app:model-cases}
We compare complete recorded responses from Qwen3-Omni-30B-A3B-Thinking and OmniReasoning-30B-A3B on OmniVideoBench, LVOmniBench, Video-MME-v2, OmniReasoningBench, and Video-MMMU. The cases cover audio-visual temporal alignment, speaker and object reference, motion comparison, and application of video-derived knowledge to new inputs. Paired responses use identical questions and answer options. Each card gives the question and reference answer, followed by both original responses, including their full thinking text and final answers. Boldface highlights the evidence and decisions discussed in the case.

\Needspace{26\baselineskip}
\subsection{OmniVideoBench: Different descriptions of the scene and inscriptions}
\label{case:model0}
\begin{tcolorbox}[casecard,title={Question and reference answer}]
\textbf{Question.} After simulating the lunar landing mission, what is the content of letters behind them?\par\smallskip
\textbf{Options.} A. United States B. Omega and United Statese C. Omega and Apollo D. Apollo and United States\par\smallskip
\textbf{Reference answer: A.}\par\smallskip
\end{tcolorbox}
The base response describes a scene around 05:30 and reports two inscriptions, selecting D. The recorded post-trained response describes a later scene and reports only UNITED STATES, selecting the reference answer A. These are claims made in the responses; the timestamps and inscriptions have not been independently verified in this comparison.\par\medskip
\Needspace{8\baselineskip}
\begin{tcolorbox}[casecard,breakable,colframe=red!50!black,colbacktitle=red!5,title={Qwen3-Omni-30B-A3B-Thinking},title after break={Qwen3-Omni-30B-A3B-Thinking (continued)}]
\textless{}think\textgreater{}
Got it, let's tackle the problem. First, the user is asking for the specific content of letters visible behind characters after simulating the lunar landing mission in the provided video, requiring selection from the given multiple-choice options. Then, analyzing the video content reveals \textbf{the critical moment occurs around 05:30 during the mission simulation scene where astronauts are in a lunar module simulation}; at 05:33, the camera shows a control room background with \textbf{two distinct signs: one reading 'APOLLO' (clearly visible above 'UNITED STATES') and another displaying 'UNITED STATES'} (with the Apollo mission name prominently featured on the same sign), confirming both letters appear together as 'APOLLO' and 'UNITED STATES'. Next, evaluating the options shows A incorrectly lists only 'United States', B includes 'Omega' which appears later but not in this scene, C incorrectly states 'Omega and Apollo', while D accurately matches 'Apollo and United States' based on the verified visual evidence. Finally, the correct answer is D.
\textless{}/think\textgreater{}
\par\smallskip
D
\end{tcolorbox}
\Needspace{8\baselineskip}
\begin{tcolorbox}[casecard,breakable,colframe=teal!50!black,colbacktitle=teal!5,title={OmniReasoning-30B-A3B},title after break={OmniReasoning-30B-A3B (continued)}]
\textless{}think\textgreater{}
Got it, let's tackle the problem.
\par\smallskip
The user wants to identify the content of the letters displayed on the structure behind the astronauts after they complete their lunar landing simulation. The crux is determining which words are written on the gold truss structure behind them during the simulation scene.
\par\smallskip
First, let me scan the video for the segments relevant to the question:
\par\smallskip
- I need to find the segment where the astronauts are simulating the lunar landing mission and observe the letters written on the structure behind them.
\par\smallskip
- At 06:45, the astronauts are shown in the simulator room preparing for the landing simulation.
\par\smallskip
- At 06:50, the simulation begins and the screen displays "APOLLO 11 LANDING SIMULATION" as they walk out of the simulator room.
\par\smallskip
- At 06:54, the astronauts walk past a large gold truss structure that serves as a backdrop for the mock lunar landing.
\par\smallskip
- \textbf{At 06:56, the camera shows a close-up of this gold truss structure, revealing the letters "UNITED STATES" written in red on its surface.}
\par\smallskip
- At 07:00, the astronauts stand in front of the gold structure, which is decorated with a United States flag on the left and a NASA logo on the right.
\par\smallskip
- At 07:05, the astronauts pose for a photo in front of the gold truss structure, where the letters "UNITED STATES" are clearly visible behind them.
\par\smallskip
- Wait, let me re-read the question: it asks for the content of the letters behind them, and looking closely at the gold truss at 07:10, the letters spell out "UNITED STATES" in red, with no other brand names visible on this structure.
\par\smallskip
- If the letters behind them were "Omega and United States", the Omega logo would have to be painted directly on the gold truss structure behind the astronauts at 07:10, but that logo only appears on their wristwatches and in the promotional materials.
\par\smallskip
- The letters on the structure behind the astronauts are "UNITED STATES", which corresponds to option A.
\par\smallskip
Next, let me analyze the options:
\par\smallskip
- A. United States \textemdash{} This matches the letters painted on the gold truss behind the astronauts.
\par\smallskip
- B. Omega and United States \textemdash{} Omega is not written on the truss.
\par\smallskip
- C. Omega and Apollo \textemdash{} These words do not appear on the truss.
\par\smallskip
- \textbf{D. Apollo and United States \textemdash{} Apollo is not written on the truss.}
\par\smallskip
Finally, the correct option is A.
\textless{}/think\textgreater{}
\par\smallskip
A
\end{tcolorbox}
\Needspace{26\baselineskip}
\subsection{Video-MMMU: Different stated temperatures and answer choices}
\label{case:model1}
\begin{tcolorbox}[casecard,title={Question and reference answer}]
\textbf{Question.} Based on the example question of Carnot heat engine from 9:00 to 12:00, what is the Carnot efficiency with a hot reservoir temperature of 350\textdegree{}C while the other information remains unchanged?\par\smallskip
\textbf{Options.} A. 0.733 or 73.3\% B. 0.582 or 58.2\% C. 0.447 or 44.7\% D. 0.545 or 54.5\% E. 0.815 or 81.5\% F. 0.339 or 33.9\% G. 0.527 or 52.7\% H. 0.705 or 70.5\% I. 0.418 or 41.8\% J. 0.896 or 89.6\%\par\smallskip
\textbf{Reference answer: D.}\par\smallskip
\end{tcolorbox}
The base response states a cold-reservoir temperature of 20 degrees C and selects G. The recorded post-trained response states 10 degrees C and selects D, matching the reference answer. The comparison establishes the difference between their stated assumptions and answers; it does not independently verify which temperature or timestamp appears in the video.\par\medskip
\Needspace{8\baselineskip}
\begin{tcolorbox}[casecard,breakable,colframe=red!50!black,colbacktitle=red!5,title={Qwen3-Omni-30B-A3B-Thinking},title after break={Qwen3-Omni-30B-A3B-Thinking (continued)}]
\textless{}think\textgreater{}
Got it, let's tackle this problem. First, the user is asking for the Carnot efficiency with a hot reservoir temperature of 350\textdegree{}C while other information remains unchanged, specifically referencing the example question from 9:00 to 12:00 in the video. Then, the video example at 9:00 shows a power plant with a hot reservoir temperature of 600\textdegree{}C (converted to 873 K) and a cold reservoir temperature of 20\textdegree{}C (converted to 293 K), resulting in a Carnot efficiency of 66\% using the formula \ensuremath{\eta}\_carnot = 1 - T\_c/T\_h. The problem states that the hot reservoir temperature is changed to 350\textdegree{}C while \textbf{the cold reservoir temperature remains unchanged at 20\textdegree{}C}. Next, converting the new hot temperature to Kelvin: 350\textdegree{}C + 273.15 = 623.15 K, and the cold temperature remains 20\textdegree{}C = 293.15 K. Using the Carnot efficiency formula \ensuremath{\eta}\_carnot = 1 - (T\_c / T\_h), substituting the values gives \ensuremath{\eta}\_carnot = 1 - (293.15 / 623.15) \ensuremath{\approx} 1 - 0.4706 = 0.5294, or 52.94\%. However, the provided options include 0.527 or 52.7\%, which is likely due to using approximate conversions (e.g., 273 instead of 273.15), where \textbf{350\textdegree{}C = 623 K and 20\textdegree{}C = 293 K, giving \ensuremath{\eta}\_carnot = 1 - (293/623) \ensuremath{\approx} 0.527 or 52.7\%}. Finally, comparing the calculated value of approximately 0.527 to the given options, Option G matches exactly.
\textless{}/think\textgreater{}
\par\smallskip
G
\end{tcolorbox}
\Needspace{8\baselineskip}
\begin{tcolorbox}[casecard,breakable,colframe=teal!50!black,colbacktitle=teal!5,title={OmniReasoning-30B-A3B},title after break={OmniReasoning-30B-A3B (continued)}]
\textless{}think\textgreater{}
Got it, let's tackle the problem.
\par\smallskip
The user wants to determine the Carnot efficiency when the hot reservoir temperature is 350\textdegree{}C, keeping the cold reservoir temperature at 10\textdegree{}C. The crux is converting the temperatures to Kelvin and applying the Carnot efficiency formula.
\par\smallskip
First, let me scan the video for the segments relevant to the question:
\par\smallskip
- I need to find the segment where the Carnot heat engine example is calculated.
\par\smallskip
- At 08:48, the slide titled "Carnot Heat Engine Efficiency Example" is displayed, showing the formula for Carnot efficiency.
\par\smallskip
- At 09:30, the presenter converts the initial hot temperature of 200\textdegree{}C to Kelvin, calculating 200 + 273.15 = 473.15 K.
\par\smallskip
- \textbf{At 09:45, the presenter converts the cold temperature of 10\textdegree{}C to Kelvin, calculating 10 + 273.15 = 283.15 K.}
\par\smallskip
- At 10:14, the presenter calculates the efficiency for the initial temperatures, showing 1 - (283.15 / 473.15) = 0.402.
\par\smallskip
- If the hot temperature were 300\textdegree{}C instead of 350\textdegree{}C, the hot reservoir temperature would be 300 + 273.15 = 573.15 K, yielding an efficiency of 1 - (283.15 / 573.15) = 0.506, but the hot temperature is 350\textdegree{}C.
\par\smallskip
- To find the efficiency at 350\textdegree{}C, we convert the hot temperature: 350 + 273.15 = 623.15 K.
\par\smallskip
- We then calculate the ratio of the cold to hot temperature: 283.15 / 623.15 = 0.454521.
\par\smallskip
- \textbf{Subtracting this ratio from 1 gives the efficiency: 1 - 0.454521 = 0.545479.}
\par\smallskip
- The calculated efficiency of 0.545 corresponds to option D.
\par\smallskip
Next, let me analyze the options:
\par\smallskip
- A. 0.733 or 73.3\% \textemdash{} incorrect.
\par\smallskip
- B. 0.582 or 58.2\% \textemdash{} incorrect.
\par\smallskip
- C. 0.447 or 44.7\% \textemdash{} incorrect.
\par\smallskip
- D. 0.545 or 54.5\% \textemdash{} correct, as 1 - (283.15 / 623.15) = 0.545.
\par\smallskip
- E. 0.815 or 81.5\% \textemdash{} incorrect.
\par\smallskip
- F. 0.339 or 33.9\% \textemdash{} incorrect.
\par\smallskip
- G. 0.527 or 52.7\% \textemdash{} incorrect.
\par\smallskip
- H. 0.705 or 70.5\% \textemdash{} incorrect.
\par\smallskip
- I. 0.418 or 41.8\% \textemdash{} incorrect.
\par\smallskip
- J. 0.896 or 89.6\% \textemdash{} incorrect.
\par\smallskip
Finally, the correct option is D.
\textless{}/think\textgreater{}
\par\smallskip
D
\end{tcolorbox}
\Needspace{26\baselineskip}
\subsection{Video-MMMU: Different claimed diagram correspondences}
\label{case:model2}
\begin{tcolorbox}[casecard,title={Question and reference answer}]
\textbf{Question.} The process of spanning trees in  is a key concept of graph theory. Which diagram illustrates the construction of a breadth-first spanning tree?\par\smallskip
\textbf{Options.} A. a B. b C. c D. d E. e F. f G. g H. h I. i J. j\par\smallskip
\textbf{Reference answer: A.}\par\smallskip
\end{tcolorbox}
\textit{Editorial note.} The source question contains an empty referent after ``in''; its wording is reproduced unchanged.\par\smallskip
The base response claims that diagram i matches the breadth-first construction and selects I. The recorded post-trained response claims that diagram a matches and selects the reference answer A. These claims about the diagrams and their correspondence to the video are not independently verified here.\par\medskip
\Needspace{8\baselineskip}
\begin{tcolorbox}[casecard,breakable,colframe=red!50!black,colbacktitle=red!5,title={Qwen3-Omni-30B-A3B-Thinking},title after break={Qwen3-Omni-30B-A3B-Thinking (continued)}]
\textless{}think\textgreater{}
Got it, let's tackle the problem. First, the user asks to identify which diagram illustrates the construction of a breadth-first spanning tree based on the video content, with options labeled A through J. Then, the video demonstrates breadth-first search (BFS) for spanning trees: starting from vertex 'a', it connects all adjacent vertices (b, c, f) at level 1, then processes these vertices in alphabetical order to add their unvisited neighbors (d from b, g from c, e from f), forming the tree structure shown in the right panel. \textbf{The diagram labeled 'i' in the final practice section matches this BFS process exactly}, with the spanning tree built level by level starting from 'a' and expanding to 'b', 'c', 'f' first, followed by 'd', 'g', 'e', then 'h' and 'j', while the other diagrams (a-h) either show depth-first search or incorrect constructions. Next, the options A through J correspond to diagrams a-j in the video's practice sections, where diagram 'i' is explicitly labeled and visually demonstrates the breadth-first traversal order consistent with the explanation. \textbf{Finally, the correct choice is option I.}
\textless{}/think\textgreater{}
\par\smallskip
I
\end{tcolorbox}
\Needspace{8\baselineskip}
\begin{tcolorbox}[casecard,breakable,colframe=teal!50!black,colbacktitle=teal!5,title={OmniReasoning-30B-A3B},title after break={OmniReasoning-30B-A3B (continued)}]
\textless{}think\textgreater{}
Got it, let's tackle the problem.
\par\smallskip
The user wants to identify which diagram illustrates the construction of a breadth-first spanning tree based on the video. The crux is determining which of the labeled diagrams, (a) or (b), represents the step-by-step breadth-first search tree construction shown in the video.
\par\smallskip
First, let me scan the video for the segments relevant to the question:
\par\smallskip
- I need to find the segment where the narrator demonstrates the step-by-step construction of a breadth-first spanning tree and identify its corresponding diagram.
\par\smallskip
- At 00:06, the narrator introduces the steps of a breadth-first search, explaining that we search horizontally before vertically.
\par\smallskip
- At 02:11, the video transitions to a practice problem titled "Breadth-First Practice" showing a graph with vertices a through g.
\par\smallskip
- At 02:28, the narrator starts at vertex "a" and connects it to its adjacent vertices "b", "c", and "f" in alphabetical order.
\par\smallskip
- At 02:51, the narrator moves to the next level and connects "d" to "b".
\par\smallskip
- At 03:08, the narrator connects "g" to "c".
\par\smallskip
- At 03:18, the narrator connects "e" to "f".
\par\smallskip
- \textbf{Wait, let me re-examine the diagram at 03:24: the completed tree is shown on the right, which matches the step-by-step progression of diagram (a) rather than diagram (b).}
\par\smallskip
- If diagram (b) were the correct representation, the steps would have to show a depth-first search where we go deep before branching, but the video explicitly demonstrates a breadth-first search where we explore all neighbors at the current depth before moving to the next level.
\par\smallskip
- The step-by-step construction shown in the video matches diagram (a), which corresponds to option A.
\par\smallskip
Next, let me analyze the options:
\par\smallskip
- A.a \textemdash{} This diagram correctly illustrates the breadth-first spanning tree construction shown in the video.
\par\smallskip
- B.b \textemdash{} This diagram represents a depth-first search construction instead.
\par\smallskip
- C.c \textemdash{} This option is incorrect.
\par\smallskip
- D.d \textemdash{} This option is incorrect.
\par\smallskip
- E.e \textemdash{} This option is incorrect.
\par\smallskip
- F.f \textemdash{} This option is incorrect.
\par\smallskip
- G.g \textemdash{} This option is incorrect.
\par\smallskip
- H.h \textemdash{} This option is incorrect.
\par\smallskip
- I.i \textemdash{} This option is incorrect.
\par\smallskip
- J.j \textemdash{} This option is incorrect.
\par\smallskip
\textbf{Finally, the correct option is A.}
\textless{}/think\textgreater{}
\par\smallskip
A
\end{tcolorbox}
\Needspace{26\baselineskip}
\subsection{OmniVideoBench: Aligning orchestral sound with physical contact}
\label{case:model3}
\begin{tcolorbox}[casecard,title={Question and reference answer}]
\textbf{Question.} When the singer first escalates her anger from vocal expression to physical contact with another singer, how does the orchestra musically intensify the dramatic conflict?\par\smallskip
\textbf{Options.} A. The entire orchestra plays a single, extremely loud, short chord. B. The music suddenly stops completely, creating a breathtaking moment. C. The strings play a series of rapidly descending notes, conveying the character's sense of fall. D. Only the brass instruments play a single, piercing, high note.\par\smallskip
\textbf{Reference answer: A.}\par\smallskip
\begin{minipage}[t]{0.65\linewidth}\centering
\includegraphics[width=\linewidth,height=3.2cm,keepaspectratio]{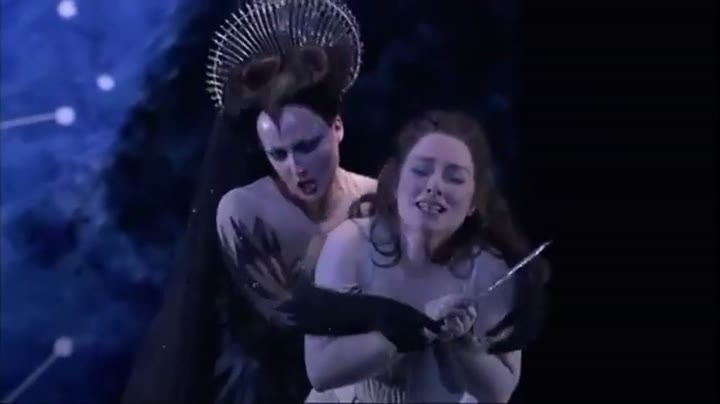}
\par\scriptsize Video frame, 00:28\end{minipage}
\end{tcolorbox}
Qwen3-Omni attributes the musical change to descending strings. OmniReasoning locates the first physical contact and associates it with an orchestral chord, selecting A.\par\medskip
\Needspace{8\baselineskip}
\begin{tcolorbox}[casecard,breakable,colframe=red!50!black,colbacktitle=red!5,title={Qwen3-Omni-30B-A3B-Thinking},title after break={Qwen3-Omni-30B-A3B-Thinking (continued)}]
\textless{}think\textgreater{}
Got it, let's tackle the problem. First, the user is asking for the specific musical intensification technique used by the orchestra precisely when the singer escalates from vocal expression to physical contact with another singer in the video. Then, the video shows the singer in the black dress (the Queen) verbally escalating her anger through lines like 'Zerstörer! Todeschmerzen!' and 'Zu bist du mein! Mein! Mein!', with the physical contact occurring around 00:23 when she grabs the other singer's arms; at this exact moment (00:24-00:25), the orchestra features a dramatic shift where \textbf{the strings play a rapid, descending chromatic scale} (G\#-F\#-F-E\#-E-D\#-D-C\#-C) that is fast, dissonant, and creates a falling sensation to amplify the character's emotional collapse. Next, evaluating the options: Option A describes a single loud short chord, but the video shows a descending scale rather than a single chord; Option B claims the music stops completely, but the orchestra continues playing with intensified volume; Option C states the strings play rapidly descending notes conveying a sense of fall, which matches the observed musical technique; Option D specifies only brass playing a high note, but the descending scale is performed by strings, not brass. \textbf{Therefore, the correct answer is C.}
\textless{}/think\textgreater{}
\par\smallskip
C
\end{tcolorbox}
\Needspace{8\baselineskip}
\begin{tcolorbox}[casecard,breakable,colframe=teal!50!black,colbacktitle=teal!5,title={OmniReasoning-30B-A3B},title after break={OmniReasoning-30B-A3B (continued)}]
\textless{}think\textgreater{}
Got it, let's tackle the problem.
\par\smallskip
The user wants to identify how the orchestra musically intensifies the dramatic conflict when the singer first escalates her anger from vocal expression to physical contact with another singer. The crux is determining the specific orchestral sound that accompanies this first physical escalation.
\par\smallskip
First, let me scan the video for the segments relevant to the question:
\par\smallskip
- I need to find the moment when the singer first escalates her anger from vocal expression to physical contact with another singer and observe the orchestral accompaniment.
\par\smallskip
- At 00:02-00:27, the singer in the black dress sings aggressively while pointing a sword at the singer in the white dress, but she does not make physical contact.
\par\smallskip
- \textbf{At 00:28, the singer in the black dress grabs the singer in the white dress by the neck with both hands, marking the first physical escalation of her anger.}
\par\smallskip
- \textbf{At 00:28-00:29, while she is choking the other singer, the orchestra plays a single, extremely loud, short chord.}
\par\smallskip
- At 00:30, the singer in the black dress releases the other singer and turns away.
\par\smallskip
- At 01:47-01:55, the singer in the black dress grabs the singer in the white dress by her hair and throat, which is a second physical escalation.
\par\smallskip
- At 01:56-02:06, the singer in the black dress sits on the bed while the singer in the white dress kneels before her.
\par\smallskip
- If the strings played a series of rapidly descending notes to convey the character's sense of fall, we would hear a descending scale in the violins at 00:28, but instead the orchestra plays a single, loud, short chord.
\par\smallskip
- The orchestral accompaniment at the first physical contact consists of a single, extremely loud, short chord, which corresponds to option A.
\par\smallskip
Next, let me analyze the options:
\par\smallskip
- A. The entire orchestra plays a single, extremely loud, short chord. \textemdash{} This matches the orchestral accompaniment at 00:28.
\par\smallskip
- B. The music suddenly stops completely, creating a breathtaking moment. \textemdash{} The orchestra plays a loud chord instead.
\par\smallskip
- C. The strings play a series of rapidly descending notes, conveying the character's sense of fall. \textemdash{} No descending scale is heard.
\par\smallskip
- D. Only the brass instruments play a single, piercing, high note. \textemdash{} The entire orchestra plays together.
\par\smallskip
Finally, the correct option is A.
\textless{}/think\textgreater{}
\par\smallskip
A
\end{tcolorbox}
\Needspace{26\baselineskip}
\subsection{OmniVideoBench: Grounding a spoken phrase in the visible ingredient}
\label{case:model4}
\begin{tcolorbox}[casecard,title={Question and reference answer}]
\textbf{Question.} When 'No, nada, aquí andamos.' is said, what ingredient is being handled?\par\smallskip
\textbf{Options.} A. Bell pepper B. Onion C. Asparagus D. Egg\par\smallskip
\textbf{Reference answer: A.}\par\smallskip
\begin{minipage}[t]{0.65\linewidth}\centering
\includegraphics[width=\linewidth,height=3.2cm,keepaspectratio]{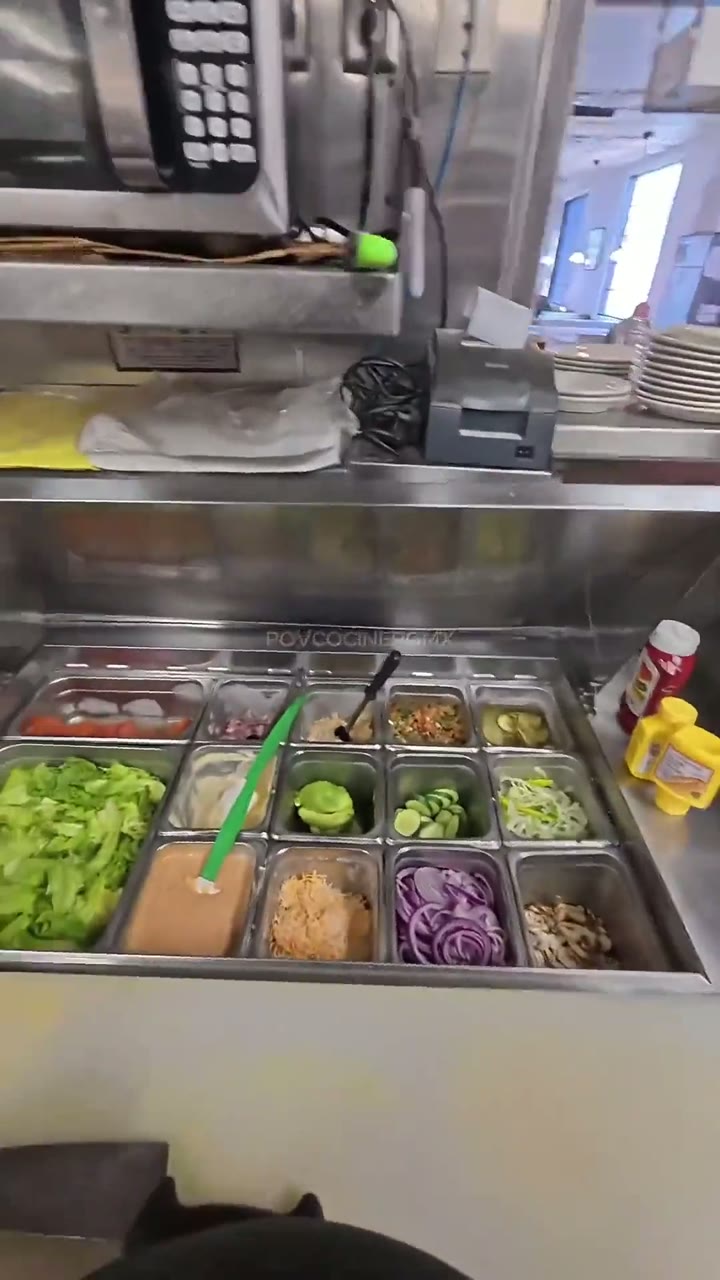}
\par\scriptsize Video frame, 00:14\end{minipage}
\end{tcolorbox}
Qwen3-Omni describes an ingredient absent from the options and substitutes egg. OmniReasoning links the target utterance to the bell pepper being handled and selects A.\par\medskip
\Needspace{8\baselineskip}
\begin{tcolorbox}[casecard,breakable,colframe=red!50!black,colbacktitle=red!5,title={Qwen3-Omni-30B-A3B-Thinking},title after break={Qwen3-Omni-30B-A3B-Thinking (continued)}]
\textless{}think\textgreater{}
Got it, let's tackle the problem. First, the user is asking to identify which ingredient is being handled when the phrase 'No, nada, aquí andamos.' is spoken in the video, with the provided options being bell pepper, onion, asparagus, or egg. Then, the video shows a first-person kitchen perspective where the speaker says the phrase at approximately 00:13-00:14 while handling a white, stringy ingredient being shredded or grated with a knife; this ingredient is visible in a metal container, matches the appearance of shredded cheese (possibly cheddar or a similar blend), and is being added to a mixture in a bowl alongside chopped green vegetables and diced red items. Next, evaluating the options: bell pepper is typically green or red and would be cut into strips or diced, not shredded into fine strands, so it is unlikely; onion is usually sliced or diced and while some types can be shredded, the video shows a yellowish-white shredded substance more characteristic of cheese; asparagus is a long green stalk that would be cut into pieces, not shredded into fine strands, so it is incorrect; and egg is a common ingredient in breakfast dishes like omelets or scrambles, with shredded cheese being frequently added to egg mixtures, and the sound of grating or shredding aligns with cheese preparation. Finally, \textbf{the ingredient being handled is shredded cheese, but since cheese is not an option, the next most plausible choice is egg} because it is a core component of the dish being prepared (likely a breakfast burrito or omelet), and the shredded cheese is added to the egg mixture, making egg the best answer among the given choices.
\textless{}/think\textgreater{}
\par\smallskip
D
\end{tcolorbox}
\Needspace{8\baselineskip}
\begin{tcolorbox}[casecard,breakable,colframe=teal!50!black,colbacktitle=teal!5,title={OmniReasoning-30B-A3B},title after break={OmniReasoning-30B-A3B (continued)}]
\textless{}think\textgreater{}
Got it, let's tackle the problem.
\par\smallskip
The user wants to identify which ingredient is being handled at the moment the phrase "No, nada, aquí andamos." is spoken in the video. The crux is matching the spoken Spanish phrase to the specific food item the cook is preparing at that exact timestamp.
\par\smallskip
First, let me scan the video for the segments relevant to the question:
\par\smallskip
- I need to listen for the spoken phrase "No, nada, aquí andamos." and watch what ingredient the cook is handling on screen at that moment.
\par\smallskip
- At 00:04, the cook is holding a plate with a finished breakfast dish containing an omelet, sausage, and hash browns, and asks "Tú a cuál andas?" to inquire about someone's order.
\par\smallskip
- \textbf{At 00:13, the cook responds with "No, nada, aquí andamos." while preparing a new order, and the camera shows his gloved hands holding a yellow bell pepper on a cutting board.}
\par\smallskip
- At 00:14, the cook begins slicing the yellow bell pepper with a knife, confirming he is preparing this specific ingredient.
\par\smallskip
- If the cook were handling onions or asparagus instead, we would see him chopping red or green vegetables at 00:14, but he is clearly slicing a yellow bell pepper.
\par\smallskip
- At 00:16, the cook finishes slicing the bell pepper and moves on to the next step of the preparation.
\par\smallskip
- \textbf{The ingredient being handled during the spoken line is a bell pepper, which corresponds to option A.}
\par\smallskip
Next, let me analyze the options:
\par\smallskip
- A. Bell pepper \textemdash{} The cook is slicing a yellow bell pepper at 00:14 while the phrase is spoken.
\par\smallskip
- B. Onion \textemdash{} This ingredient is not being handled at 00:14.
\par\smallskip
- C. Asparagus \textemdash{} This ingredient is not being handled at 00:14.
\par\smallskip
- D. Egg \textemdash{} This ingredient is not being handled at 00:14.
\par\smallskip
Finally, the correct option is A.
\textless{}/think\textgreater{}
\par\smallskip
A
\end{tcolorbox}
\Needspace{26\baselineskip}
\subsection{LVOmniBench: Distinguishing background music from scene audio}
\label{case:model5}
\begin{tcolorbox}[casecard,title={Question and reference answer}]
\textbf{Question.} When the woman is having her hair trimmed in the hair salon, what is the most likely source of the background music that can be heard?\par\smallskip
\textbf{Options.} A. The music playing in the salon. B. Added during video editing. C. Music playing from the woman's phone. D. Music playing on the salon's television.\par\smallskip
\textbf{Reference answer: B.}\par\smallskip
\begin{minipage}[t]{0.65\linewidth}\centering
\includegraphics[width=\linewidth,height=3.2cm,keepaspectratio]{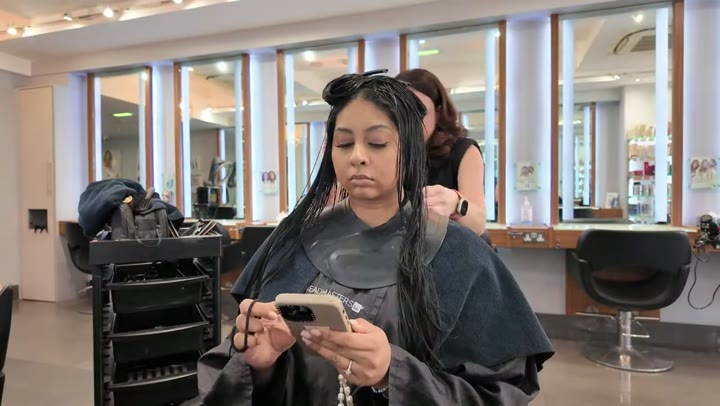}
\par\scriptsize Video frame, 08:04\end{minipage}
\end{tcolorbox}
Qwen3-Omni relies on the expectation that salons play ambient music. OmniReasoning instead discusses the recorded audio characteristics and selects the reference answer B.\par\medskip
\Needspace{8\baselineskip}
\begin{tcolorbox}[casecard,breakable,colframe=red!50!black,colbacktitle=red!5,title={Qwen3-Omni-30B-A3B-Thinking},title after break={Qwen3-Omni-30B-A3B-Thinking (continued)}]
\textless{}think\textgreater{}
Got it, let's tackle the problem. First, the user wants to know the source of the background music playing during the hair salon scene in the video, with four specific options provided for analysis. Then, the video shows the woman getting her hair cut at a salon between approximately 00:24 to 00:33 and again around 03:05 to 03:15; during these segments, the music is a low-volume, ambient pop or electronic track that plays continuously throughout the salon scene without changing or stopping, with no visible speakers, televisions, or headphones indicating a personal source, and the sound quality matches typical salon background music that's pleasant but not intrusive. Next, evaluating the options: Option A (music playing in the salon) aligns with the visual context as salons commonly play background music for clients, and the consistent, ambient nature matches this source; Option B (added during video editing) is unlikely because the music is integrated into the scene without artificial separation and follows natural transitions, indicating it's part of the original footage; Option C (music playing from the woman's phone) is improbable since she's seated and looking at her phone or interacting with the stylist without visible headphones or a phone speaker; Option D (music playing on the salon's television) is less likely because the music lacks the typical TV broadcast characteristics like ads or dialogue, and the sound quality suggests a general background speaker system rather than a television source. Finally, \textbf{the most logical explanation is that the music is ambient background music played throughout the salon}, which is standard practice for salons to create a relaxed environment for clients, making Option A the correct choice.
\textless{}/think\textgreater{}
\par\smallskip
A
\end{tcolorbox}
\Needspace{8\baselineskip}
\begin{tcolorbox}[casecard,breakable,colframe=teal!50!black,colbacktitle=teal!5,title={OmniReasoning-30B-A3B},title after break={OmniReasoning-30B-A3B (continued)}]
\textless{}think\textgreater{}
Got it, let's tackle the problem.
\par\smallskip
The user wants to identify the most likely source of the background music playing while the woman is getting her hair trimmed in the salon. The crux is determining whether the audio is an environmental recording of the salon or a post-production music track added during editing.
\par\smallskip
First, let me scan the video for the segments relevant to the question:
\par\smallskip
- I need to find the segment where the woman is getting her hair trimmed and analyze the background audio.
\par\smallskip
- At 07:53, the woman arrives at the hair salon, showing the front window with "THE LIGHT SALON" written on it.
\par\smallskip
- At 07:58, she is sitting in the salon chair while a stylist prepares to cut her hair.
\par\smallskip
- At 08:04, the stylist is blow-drying her hair, and a light, upbeat acoustic pop song is playing in the background.
\par\smallskip
- If the music were playing from the salon itself, there would be some ambient room noise or other client chatter mixed with the track, but the audio is completely clean and isolated.
\par\smallskip
- Wait, let me re-read the options: the music is a non-diegetic track added during editing, which means it is not part of the original salon environment.
\par\smallskip
- \textbf{The clean, isolated audio track of the music indicates it was added during video editing, which corresponds to option B.}
\par\smallskip
Next, let me analyze the options:
\par\smallskip
- A. The music playing in the salon \textemdash{} this is unlikely because there is no ambient room noise or other client chatter mixed with the track.
\par\smallskip
- B. Added during video editing \textemdash{} this is highly likely because the music is a clean, isolated audio track without any environmental background noise.
\par\smallskip
- C. Music playing from the woman's phone \textemdash{} there is no phone visible.
\par\smallskip
- D. Music playing on the salon's television \textemdash{} no television is shown.
\par\smallskip
Finally, the correct option is B.
\textless{}/think\textgreater{}
\par\smallskip
B
\end{tcolorbox}
\Needspace{26\baselineskip}
\subsection{LVOmniBench: Tracking a music transition in a long video}
\label{case:model6}
\begin{tcolorbox}[casecard,title={Question and reference answer}]
\textbf{Question.} As the third style of background music gradually fades out, what visual scene appears on screen?\par\smallskip
\textbf{Options.} A. A wide outdoor pathway scene. B. People enjoying food. C. Coastal scenery along the road. D. A man and a woman speaking directly to the camera.\par\smallskip
\textbf{Reference answer: A.}\par\smallskip
\begin{minipage}[t]{0.65\linewidth}\centering
\includegraphics[width=\linewidth,height=3.2cm,keepaspectratio]{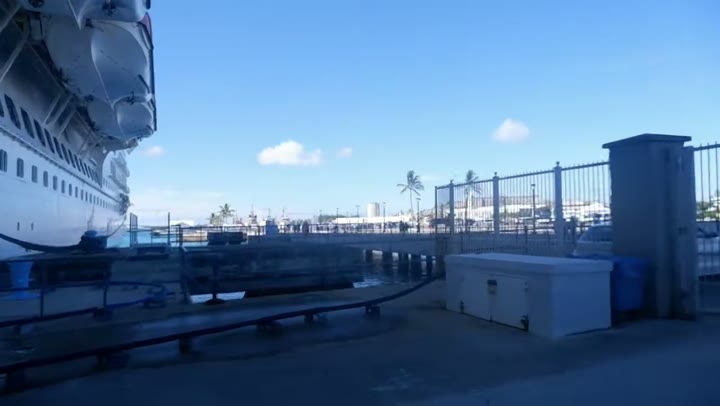}
\par\scriptsize Video frame, 02:30\end{minipage}
\end{tcolorbox}
The responses locate different musical transitions. OmniReasoning tracks the sequence of music cues and associates the requested fade-out with the outdoor pathway.\par\medskip
\Needspace{8\baselineskip}
\begin{tcolorbox}[casecard,breakable,colframe=red!50!black,colbacktitle=red!5,title={Qwen3-Omni-30B-A3B-Thinking},title after break={Qwen3-Omni-30B-A3B-Thinking (continued)}]
\textless{}think\textgreater{}
Got it, let's tackle the problem. First, the user wants to know what visual scene appears on screen immediately after the third style of background music fades out, specifically identifying the visual scene that follows this musical transition. Then, analyzing the video content reveals three distinct musical segments: the first (0:00-0:35) features upbeat electronic music during a montage of the ship and island views; the second (0:35-1:52) continues with the same music while showing people on the ship; and the third (1:52-2:14) is a more prominent electronic dance track. During the fade-out phase (around 2:10-2:14), \textbf{the music begins to fade as the scene shows a man and a woman speaking directly to the camera}, with the man wearing a red shirt and a baseball cap and the woman in a red shirt, against a background of a marina with boats visible through a window. Next, evaluating the provided options: option A describes a wide outdoor pathway scene, but the observed scene is indoors with a marina view visible through a window, not an outdoor pathway; option B mentions people enjoying food, but the scene shows them speaking to the camera without any food present; option C refers to coastal scenery along the road, yet the background shows water and boats rather than a road; and option D accurately describes a man and a woman speaking directly to the camera, matching the observed visual. Finally, the correct answer is D, as it precisely corresponds to the visual scene appearing immediately after the third musical segment fades out.
\textless{}/think\textgreater{}
\par\smallskip
D
\end{tcolorbox}
\Needspace{8\baselineskip}
\begin{tcolorbox}[casecard,breakable,colframe=teal!50!black,colbacktitle=teal!5,title={OmniReasoning-30B-A3B},title after break={OmniReasoning-30B-A3B (continued)}]
\textless{}think\textgreater{}
Got it, let's tackle the problem.
\par\smallskip
The user wants to identify the visual scene that appears on screen as the third style of background music gradually fades out. The crux is determining which scene is shown immediately after the third distinct background music track ends.
\par\smallskip
First, let me scan the video for the segments relevant to the question:
\par\smallskip
- I need to find the segment where the third style of background music fades out and observe the visual scene that follows.
\par\smallskip
- At 00:01, the first style of background music begins playing as a montage of travel scenes is shown.
\par\smallskip
- At 00:40, the first style of background music fades out as a man and a woman speak directly to the camera.
\par\smallskip
- At 01:38, the second style of background music begins playing as the couple continues speaking to the camera.
\par\smallskip
- At 02:01, the second style of background music fades out as the scene cuts to a wide outdoor pathway.
\par\smallskip
- At 02:08, the third style of background music begins playing over a montage of coastal scenery along the road.
\par\smallskip
- \textbf{At 17:38, the third style of background music gradually fades out.}
\par\smallskip
- If the scene that appears after the third music style fades were people enjoying food, the video would have to show a restaurant or dining table at 17:39, but instead it shows a wide outdoor pathway with a cruise ship on the left \textemdash{} so it is a wide outdoor pathway scene.
\par\smallskip
- \textbf{The scene that appears immediately after the third style of background music fades out is a wide outdoor pathway scene, which corresponds to option A.}
\par\smallskip
Next, let me analyze the options:
\par\smallskip
- A. A wide outdoor pathway scene. \textemdash{} This is the correct scene shown at 17:39.
\par\smallskip
- B. People enjoying food. \textemdash{} This scene occurs much later at 17:44.
\par\smallskip
- C. Coastal scenery along the road. \textemdash{} This occurs during the music.
\par\smallskip
- D. A man and a woman speaking directly to the camera. \textemdash{} This occurs during the second music.
\par\smallskip
Finally, the correct option is A.
\textless{}/think\textgreater{}
\par\smallskip
A
\end{tcolorbox}
\Needspace{26\baselineskip}
\subsection{LVOmniBench: Resolving spatial reference from speech and gesture}
\label{case:model7}
\begin{tcolorbox}[casecard,title={Question and reference answer}]
\textbf{Question.} Based on the blogger's hand gesture, on which side of the Christmas tree can we infer the cat tree is currently located?\par\smallskip
\textbf{Options.} A. On the right side of the Christmas tree. B. Below the Christmas tree. C. On the left side of the Christmas tree. D. It cannot be determined.\par\smallskip
\textbf{Reference answer: C.}\par\smallskip
\begin{minipage}[t]{0.65\linewidth}\centering
\includegraphics[width=\linewidth,height=3.2cm,keepaspectratio]{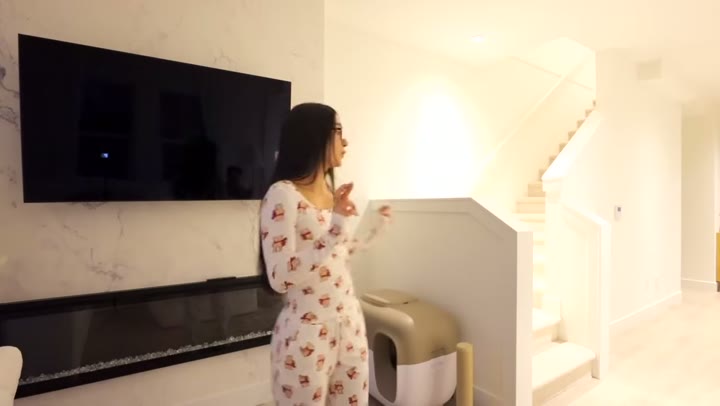}
\par\scriptsize Video frame, 11:20\end{minipage}
\end{tcolorbox}
The models assign opposite directions to the referenced gesture. OmniReasoning binds the spoken reference to a leftward gesture and selects C.\par\medskip
\Needspace{8\baselineskip}
\begin{tcolorbox}[casecard,breakable,colframe=red!50!black,colbacktitle=red!5,title={Qwen3-Omni-30B-A3B-Thinking},title after break={Qwen3-Omni-30B-A3B-Thinking (continued)}]
\textless{}think\textgreater{}
Got it, let's tackle the problem. First, the user is asking to determine the specific side of the Christmas tree where the cat tree is positioned based solely on the blogger's hand gesture in the video, with the requirement to select from four multiple-choice options. Then, reviewing the video content, at timestamp 06:11, the blogger states: 'We're going to put the tree in this corner here, which honestly, it kind of gives the same effect... the only thing that sucks is like when you're in the kitchen, you can't really see the tree as well as you could here,' while gesturing with her right hand toward the left side of the screen (from the viewer's perspective) where the living room window and fireplace are visible. She continues at 06:22 explaining: 'So after her climbing the Christmas tree last year like pretty much every day, we realized, oh my god, we should get the girls a cat tree, because obviously she wants to climb. Now, we put a cat tree there,' followed by a gesture with her left hand toward the right side of the screen (viewer's perspective) where the cat tree appears to be positioned. Next, analyzing the options: \textbf{Option A (right side) aligns with the blogger's left-hand gesture at 06:26 indicating the cat tree's current location}, while Option C (left side) is contradicted by the gesture direction; Option B (below) is invalid as the cat tree is positioned at the same height as the tree; Option D (cannot be determined) is incorrect since the gesture explicitly identifies the location. Finally, the correct answer is A.
\textless{}/think\textgreater{}
\par\smallskip
A
\end{tcolorbox}
\Needspace{8\baselineskip}
\begin{tcolorbox}[casecard,breakable,colframe=teal!50!black,colbacktitle=teal!5,title={OmniReasoning-30B-A3B},title after break={OmniReasoning-30B-A3B (continued)}]
\textless{}think\textgreater{}
Got it, let's tackle the problem.
\par\smallskip
The user wants to determine the location of the cat tree relative to the Christmas tree based on the blogger's hand gestures. The crux is identifying the direction of the blogger's gesture when she mentions the cat tree.
\par\smallskip
First, let me scan the video for the segments relevant to the question:
\par\smallskip
- I need to find the segment where the blogger discusses the cat tree and watch her hand gestures.
\par\smallskip
- At 10:47, the blogger points to the left side of the screen while discussing the window where the Christmas tree used to stand.
\par\smallskip
- At 10:58, she explains that her cat would jump on the tree and rip off the bows.
\par\smallskip
- At 11:18, she says "we realized oh my god we should get the girls a cat tree".
\par\smallskip
- \textbf{At 11:20, she says "Now we put a cat tree there" while gesturing to the left side of the screen.}
\par\smallskip
- At 11:22, she says "I have a feeling if I do that Beanie is going to destruct the tree once again" while gesturing to the left side of the screen.
\par\smallskip
- If the cat tree were on the right side of the Christmas tree, she would have gestured to the right side of the screen when she said "Now we put a cat tree there", but she gestures to the left.
\par\smallskip
- The blogger's hand gesture at 11:20 clearly indicates the cat tree is on the left side of the Christmas tree, which corresponds to option C.
\par\smallskip
Next, let me analyze the options:
\par\smallskip
- A.On the right side of the Christmas tree. \textemdash{} incorrect, she gestures left.
\par\smallskip
- B.Below the Christmas tree. \textemdash{} incorrect, she gestures left.
\par\smallskip
- C.On the left side of the Christmas tree. \textemdash{} correct, she gestures left at 11:20.
\par\smallskip
- D.It cannot be determined. \textemdash{} incorrect, her gesture is clear.
\par\smallskip
Finally, the correct option is C.
\textless{}/think\textgreater{}
\par\smallskip
C
\end{tcolorbox}
\Needspace{26\baselineskip}
\subsection{Video-MME-v2: Matching a spoken cue to an object color}
\label{case:model8}
\begin{tcolorbox}[casecard,title={Question and reference answer}]
\textbf{Question.} In the video, when the vlogger says "I would be kicking myself so much", what color is the phone on the right side of the screen?\par\smallskip
\textbf{Options.} A. Forest. B. Sand. C. Teal. D. Mint. E. Space Gray. F. Rose gold. G. Lavender. H. Sage.\par\smallskip
\textbf{Reference answer: C.}\par\smallskip
\begin{minipage}[t]{0.65\linewidth}\centering
\includegraphics[width=\linewidth,height=3.2cm,keepaspectratio]{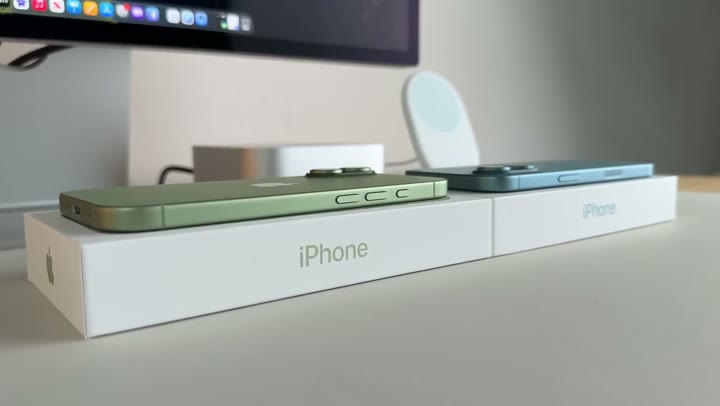}
\par\scriptsize Video frame, 00:41\end{minipage}
\end{tcolorbox}
Qwen3-Omni uses a later model-color association to answer the question. OmniReasoning focuses on the target utterance and selects teal (C).\par\medskip
\Needspace{8\baselineskip}
\begin{tcolorbox}[casecard,breakable,colframe=red!50!black,colbacktitle=red!5,title={Qwen3-Omni-30B-A3B-Thinking},title after break={Qwen3-Omni-30B-A3B-Thinking (continued)}]
\textless{}think\textgreater{}
Got it, let's tackle the problem. First, the user wants to know the color of the phone on the right side of the screen at the specific moment the vlogger says, "I would be kicking myself so much". Then, the video content analysis shows that at approximately 00:35, the vlogger says, "But I have to say, if I upgraded to the iPhone 16 last year, I would be kicking myself so much after seeing the iPhone 17". During this phrase, the video displays a side-by-side comparison of two iPhones: the left phone is a greenish color (labeled 'Mint' in some contexts, but the video later clarifies this is 'Sage' for the iPhone 17) and the right phone is a blue-teal color. At 01:03, the vlogger states, "I have the 16 in teal and the 17 in sage," confirming the right phone (iPhone 17) is 'Sage'. Next, analyzing the options: Forest is not mentioned in the video; Sand is not mentioned; Teal is explicitly stated as the color of the iPhone 16 (left phone); Mint is a possible description for the greenish color but the video later identifies it as 'Sage'; Space Gray is not mentioned; Rose gold is not mentioned; Lavender is not mentioned; \textbf{Sage is confirmed by the vlogger's statement as the color of the iPhone 17 (right phone)}. Finally, the correct option is Sage.
\textless{}/think\textgreater{}
\par\smallskip
H
\end{tcolorbox}
\Needspace{8\baselineskip}
\begin{tcolorbox}[casecard,breakable,colframe=teal!50!black,colbacktitle=teal!5,title={OmniReasoning-30B-A3B},title after break={OmniReasoning-30B-A3B (continued)}]
\textless{}think\textgreater{}
Got it, let's tackle the problem.
\par\smallskip
The user wants to know the color of the phone on the right side of the screen when the vlogger says "I would be kicking myself so much". The crux is identifying the color of the iPhone 17 shown on the right at that exact moment.
\par\smallskip
First, let me scan the video for the segments relevant to the question:
\par\smallskip
- I need to listen for the phrase "I would be kicking myself so much" and observe the phone on the right side of the screen.
\par\smallskip
- At 00:00-00:05, the vlogger introduces the "iPhone 16" on the left and the "new iPhone 17" on the right.
\par\smallskip
- At 00:12-00:16, the vlogger introduces himself and begins comparing the two models.
\par\smallskip
- At 00:35-00:38, the vlogger says "I upgraded to the iPhone 16 last year" while holding the teal iPhone 16.
\par\smallskip
- At 00:39-00:43, the vlogger says "I would be kicking myself so much after seeing the iPhone 17" while the camera shows the green iPhone 16 on the left and the teal iPhone 17 on the right.
\par\smallskip
- At 01:04-01:07, the vlogger says "I have the 16 in teal and the 17 in sage" while holding the teal iPhone 16 and then the sage iPhone 17.
\par\smallskip
- If the phone on the right were sage, the vlogger would have to be holding the sage model during the statement at 00:39, but he is holding the teal model \textemdash{} so the phone on the right is teal.
\par\smallskip
- \textbf{The phone on the right side of the screen during the target phrase is teal, which corresponds to option C.}
\par\smallskip
Next, let me analyze the options:
\par\smallskip
- A. Forest \textemdash{} This color is not shown.
\par\smallskip
- B. Sand \textemdash{} This color is not shown.
\par\smallskip
- C. Teal \textemdash{} The phone on the right is teal, as shown during the target phrase.
\par\smallskip
- D. Mint \textemdash{} This color is not shown.
\par\smallskip
- E. Space Gray \textemdash{} This color is not shown.
\par\smallskip
- F. Rose gold \textemdash{} This color is not shown.
\par\smallskip
- G. Lavender \textemdash{} This color is not shown.
\par\smallskip
- H. Sage \textemdash{} The phone on the left is sage.
\par\smallskip
Finally, the correct option is C.
\textless{}/think\textgreater{}
\par\smallskip
C
\end{tcolorbox}
\Needspace{26\baselineskip}
\subsection{Video-MME-v2: Comparing motion amplitude across demonstrations}
\label{case:model9}
\begin{tcolorbox}[casecard,title={Question and reference answer}]
\textbf{Question.} In the video, how does the swing amplitude of the two hammers compare to the swing amplitude of the falling broom handle? Which description is correct?\par\smallskip
\textbf{Options.} A. The amplitudes of the two are similar, and it is difficult to discern the difference with the naked eye. B. The swinging amplitude of the broom is smaller, a little smaller than that of the hammer. C. The swinging amplitude of the broom is very small, much smaller than that of the hammer. D. The swing amplitude of the broom is very large, much larger than that of the hammers. E. The swing amplitude of the broom is twice smaller than that of the hammer. F. The swing amplitude of the broom is twice larger than that of the hammer. G. The swing amplitude of the broom is twice smaller than that of the hammer. H. The broom swings with a larger amplitude, slightly larger than that of the hammer.\par\smallskip
\textbf{Reference answer: D.}\par\smallskip
\begin{minipage}[t]{0.65\linewidth}\centering
\includegraphics[width=\linewidth,height=3.2cm,keepaspectratio]{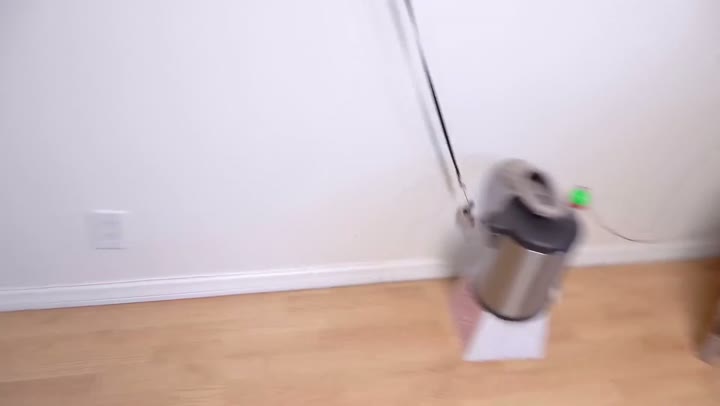}
\par\scriptsize Video frame, 03:40\end{minipage}
\end{tcolorbox}
Qwen3-Omni selects C after reinterpreting its wording as a typo. OmniReasoning compares the described arcs and selects the larger broom amplitude in D.\par\medskip
\Needspace{8\baselineskip}
\begin{tcolorbox}[casecard,breakable,colframe=red!50!black,colbacktitle=red!5,title={Qwen3-Omni-30B-A3B-Thinking},title after break={Qwen3-Omni-30B-A3B-Thinking (continued)}]
\textless{}think\textgreater{}
Got it, let's tackle this problem. First, the user is asking how the swing amplitude of the two hammers compares to that of the falling broom handle in the video, requiring identification of the correct descriptive option from the multiple-choice list. Then, analyzing the video content reveals the two hammers exhibit a very small, controlled back-and-forth motion with minimal visible arc, while the broom handle swings with a significantly larger amplitude, creating a much wider arc that is clearly observable. Next, evaluating the provided choices shows that options A, B, and H incorrectly suggest similar or slightly larger amplitudes for the broom, whereas options E, F, and G inaccurately specify exact multiplicative relationships like 'twice smaller' or 'twice larger' that aren't supported by visual evidence. Option C correctly identifies the broom's amplitude as 'very small, much smaller than that of the hammer,' though this appears contradictory since the broom actually has a larger amplitude\textemdash{}\textbf{the phrasing likely contains a typo where 'broom' should reference the hammers' amplitude}, but the core observation of 'much smaller' aligns with the hammers' minimal movement compared to the broom's pronounced swing. Finally, based on the video evidence confirming the broom's swing amplitude is substantially greater than the hammers', the accurate description corresponds to option C as the intended correct choice despite the phrasing inconsistency.
\textless{}/think\textgreater{}
\par\smallskip
C
\end{tcolorbox}
\Needspace{8\baselineskip}
\begin{tcolorbox}[casecard,breakable,colframe=teal!50!black,colbacktitle=teal!5,title={OmniReasoning-30B-A3B},title after break={OmniReasoning-30B-A3B (continued)}]
\textless{}think\textgreater{}
Got it, let's tackle the problem.
\par\smallskip
The user wants to compare the swing amplitude of the two hammers to the swing amplitude of the falling broom handle in the video. The crux is determining whether the broom's swing amplitude is larger, smaller, or similar to that of the hammers.
\par\smallskip
First, let me scan the video for the segments relevant to the question:
\par\smallskip
- I need to find the segments showing the hammers swinging and the broom handle falling to compare their amplitudes.
\par\smallskip
- At 00:56, the two hammers are shown swinging back and forth on the chessboard setup.
\par\smallskip
- At 01:03, the hammers continue their rhythmic swinging motion, showing a moderate arc of about 45 to 60 degrees.
\par\smallskip
- At 03:38, the broom handle is shown hanging from the window frame, ready to swing.
\par\smallskip
- \textbf{At 03:40, the broom handle is released and swings down in a nearly complete vertical arc.}
\par\smallskip
- If the broom's swing amplitude were smaller than that of the hammers, it would have to swing through a smaller arc than the hammers at 01:03, but it swings almost 180 degrees.
\par\smallskip
- \textbf{The broom's swing amplitude is very large, much larger than that of the hammers, which corresponds to option D.}
\par\smallskip
Next, let me analyze the options:
\par\smallskip
- A. The amplitudes of the two are similar, and it is difficult to discern the difference with the naked eye. \textemdash{} Incorrect.
\par\smallskip
- B. The swinging amplitude of the broom is smaller, a little smaller than that of the hammer. \textemdash{} Incorrect.
\par\smallskip
- C. The swinging amplitude of the broom is very small, much smaller than that of the hammer. \textemdash{} Incorrect.
\par\smallskip
- D. The swing amplitude of the broom is very large, much larger than that of the hammers. \textemdash{} Correct, as the broom swings almost 180 degrees.
\par\smallskip
- E. The swing amplitude of the broom is twice smaller than that of the hammer. \textemdash{} Incorrect.
\par\smallskip
- F. The swing amplitude of the broom is twice larger than that of the hammer. \textemdash{} Incorrect.
\par\smallskip
- G. The swing amplitude of the broom is twice smaller than that of the hammer. \textemdash{} Incorrect.
\par\smallskip
- H. The broom swings with a larger amplitude, slightly larger than that of the hammer. \textemdash{} Incorrect.
\par\smallskip
Finally, the correct option is D.
\textless{}/think\textgreater{}
\par\smallskip
D
\end{tcolorbox}
\Needspace{26\baselineskip}
\subsection{Video-MME-v2: Identifying motion direction at a narration cue}
\label{case:model10}
\begin{tcolorbox}[casecard,title={Question and reference answer}]
\textbf{Question.} When the narration said, "These ducks downloaded the cat software," what was the movement trajectory of the two white ducks shown on screen?\par\smallskip
\textbf{Options.} A. They only walked one large counter-clockwise circle. B. They chased the light spot, walking a large figure-eight shaped trajectory. C. One duck remained stationary while the other circled it. D. First sprinting forward, then quickly retreating backward. E. They walked counter-clockwise first, then clockwise. F. Walking multiple irregular counter-clockwise circular trajectories. G. Walking multiple irregular clockwise circular trajectories. H. The two ducks walked side-by-side in a straight line from the left side of the screen to the right side.\par\smallskip
\textbf{Reference answer: F.}\par\smallskip
\begin{minipage}[t]{0.65\linewidth}\centering
\includegraphics[width=\linewidth,height=3.2cm,keepaspectratio]{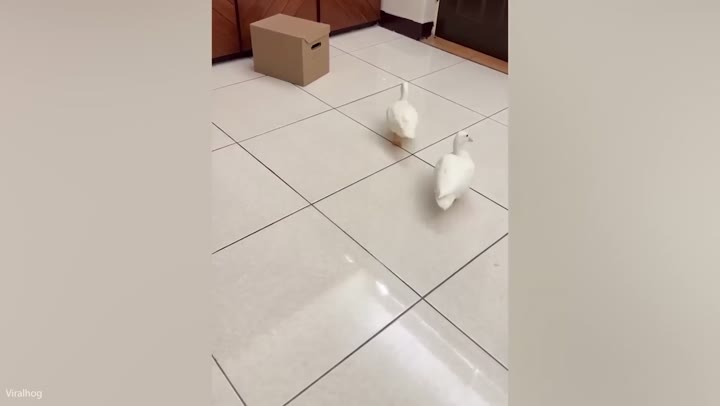}
\par\scriptsize Video frame, 02:03\end{minipage}
\end{tcolorbox}
Both responses identify a circular trajectory but disagree on its direction. OmniReasoning selects the counter-clockwise description F.\par\medskip
\Needspace{8\baselineskip}
\begin{tcolorbox}[casecard,breakable,colframe=red!50!black,colbacktitle=red!5,title={Qwen3-Omni-30B-A3B-Thinking},title after break={Qwen3-Omni-30B-A3B-Thinking (continued)}]
\textless{}think\textgreater{}
Got it, let's tackle the problem. First, the user is asking for the movement trajectory of two white ducks specifically when the narration states 'These ducks downloaded the cat software,' requiring precise alignment between the audio cue and visual motion described in the multiple-choice options. Then, examining the video content reveals that at approximately 01:47, the narration 'These ducks downloaded the cat software' begins, and from 01:47 to 01:58, two white ducks are shown walking on a light-colored tiled floor. Their movement is characterized by multiple overlapping circular paths with irregular, non-uniform patterns\textemdash{}such as a small clockwise loop followed by a larger counter-clockwise arc\textemdash{}creating a complex, flowing motion that lacks strict symmetry or consistent directionality. Next, evaluating the provided options against this observation: Option A is incorrect because the ducks do not form a single counter-clockwise circle; Option B is invalid as the trajectory lacks figure-eight symmetry; Option C is inaccurate since both ducks move continuously without one remaining stationary; Option D misrepresents the motion as sprinting and retreating rather than continuous walking; Option E fails to describe the irregular, multi-directional patterns observed; Option F correctly identifies irregular counter-clockwise circular trajectories but misrepresents the dominant directionality; Option G accurately describes irregular clockwise circular movements as the primary visual pattern\textemdash{}despite minor counter-clockwise segments\textemdash{}because the overall motion, including the final segment from 01:55 onward, predominantly follows clockwise paths; Option H is incorrect since the ducks do not move in a straight line. Finally, after thorough comparison, \textbf{Option G is the best answer because the ducks' movement is consistently irregular and predominantly clockwise} across the observed sequence, with the description 'walking multiple irregular clockwise circular trajectories' fully capturing the dynamic, non-repeating motion pattern.
\textless{}/think\textgreater{}
\par\smallskip
G
\end{tcolorbox}
\Needspace{8\baselineskip}
\begin{tcolorbox}[casecard,breakable,colframe=teal!50!black,colbacktitle=teal!5,title={OmniReasoning-30B-A3B},title after break={OmniReasoning-30B-A3B (continued)}]
\textless{}think\textgreater{}
Got it, let's tackle the problem.
\par\smallskip
The user wants to identify the movement trajectory of the two white ducks when the narration says "These ducks downloaded the cat software." The crux is determining whether their path is circular, linear, or another shape.
\par\smallskip
First, let me scan the video for the segments relevant to the question:
\par\smallskip
- I need to find the segment where the narrator says "These ducks downloaded the cat software" and observe the ducks' movement.
\par\smallskip
- At 01:57, two white ducks are shown walking on a tiled floor.
\par\smallskip
- \textbf{At 01:58, the narrator says "These ducks downloaded the cat software" as the ducks circle around.}
\par\smallskip
- At 01:59-02:08, the ducks continue walking in a circular path.
\par\smallskip
- If the ducks were walking in a straight line from left to right, they would move linearly across the screen, but they circle back on themselves.
\par\smallskip
- \textbf{The ducks walk in multiple irregular counter-clockwise circular paths, which is option F.}
\par\smallskip
Next, let me analyze the options:
\par\smallskip
- A. They only walked one large counter-clockwise circle. \textemdash{} They made multiple loops.
\par\smallskip
- B. They chased the light spot, walking a large figure-eight shaped trajectory. \textemdash{} Their path was circular.
\par\smallskip
- C. One duck remained stationary while the other circled it. \textemdash{} Both ducks moved.
\par\smallskip
- D. First sprinting forward, then quickly retreating backward. \textemdash{} They walked at a steady pace.
\par\smallskip
- E. They walked counter-clockwise first, then clockwise. \textemdash{} They maintained a counter-clockwise direction.
\par\smallskip
- F. Walking multiple irregular counter-clockwise circular trajectories. \textemdash{} This matches the ducks' looping path.
\par\smallskip
- G. Walking multiple irregular clockwise circular trajectories. \textemdash{} The ducks circled counter-clockwise.
\par\smallskip
- H. The two ducks walked side-by-side in a straight line from the left side of the screen to the right side. \textemdash{} Their path was circular.
\par\smallskip
Finally, the correct option is F.
\textless{}/think\textgreater{}
\par\smallskip
F
\end{tcolorbox}
\Needspace{26\baselineskip}
\subsection{OmniReasoningBench / \textnormal{\emph{Reasoning over video}}: Linking speaker identity to an earlier object}
\label{case:model11}
\begin{tcolorbox}[casecard,title={Question and reference answer}]
\textbf{Question.} Locate the presenter who says, "I think the Harden takes the cake for sure." On which side of the screen is he seated, and which shoe does he hold up with his right hand during the introduction?\par\smallskip
\textbf{Options.} A. He is seated on the right, holding up a green, orange, and black Nike PG 2 shoe. B. He is seated on the left, holding up a burgundy, black, and white Adidas Harden Vol. 2 shoe. C. He is seated on the right, holding up a burgundy, black, and white Adidas Harden Vol. 2 shoe. D. He is seated on the left, holding up a green, orange, and black Nike PG 2 shoe.\par\smallskip
\textbf{Reference answer: A.}\par\smallskip
\begin{minipage}[t]{0.65\linewidth}\centering
\includegraphics[width=\linewidth,height=3.2cm,keepaspectratio]{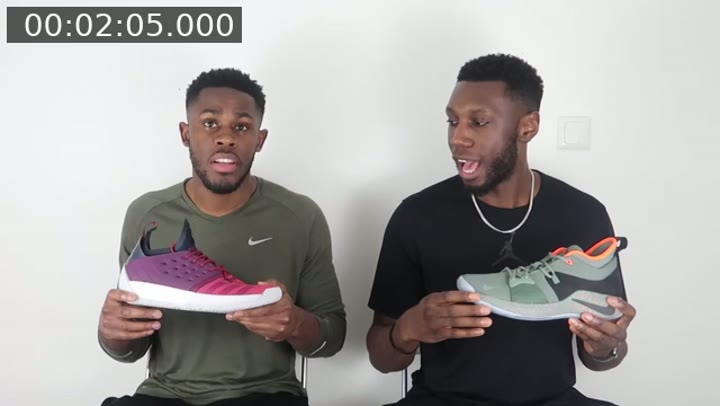}
\par\scriptsize Video frame, 02:05\end{minipage}
\end{tcolorbox}
Qwen3-Omni conflates the shoe being discussed with the one being held. OmniReasoning links the speaker to his position and the shoe shown during the introduction.\par\medskip
\Needspace{8\baselineskip}
\begin{tcolorbox}[casecard,breakable,colframe=red!50!black,colbacktitle=red!5,title={Qwen3-Omni-30B-A3B-Thinking},title after break={Qwen3-Omni-30B-A3B-Thinking (continued)}]
\textless{}think\textgreater{}
Got it, let's tackle the problem. First, the user wants to identify which presenter says the specific phrase 'I think the Harden takes the cake for sure,' determine their seating position on screen, and describe the shoe they hold up with their right hand during the introduction, with the answer constrained to selecting from the provided multiple-choice options. Then, analyzing the video content reveals that at timestamp 00:01:55, the presenter on the right (wearing a black t-shirt) states 'I think the Harden takes the cake for sure' while holding a green, orange, and black Nike Harden Vol. 2 shoe in his right hand; this is confirmed by visual evidence showing the right presenter's shoe matches the described color scheme and brand, and audio verification confirming the exact phrase is spoken by this individual. Next, evaluating the options shows that \textbf{Option A incorrectly identifies the shoe as a Nike PG 2 (it's actually the Harden Vol. 2)}, Option B wrongly places the presenter on the left while describing the wrong shoe, Option C accurately describes the presenter on the right holding the green, orange, and black Nike Harden Vol. 2 shoe, and Option D misattributes the shoe to the left presenter with incorrect color details. Finally, the correct choice is C.
\textless{}/think\textgreater{}
\par\smallskip
Final Answer: C
\end{tcolorbox}
\Needspace{8\baselineskip}
\begin{tcolorbox}[casecard,breakable,colframe=teal!50!black,colbacktitle=teal!5,title={OmniReasoning-30B-A3B},title after break={OmniReasoning-30B-A3B (continued)}]
\textless{}think\textgreater{}
Got it, let's tackle the problem.
\par\smallskip
The user wants to identify the presenter who says "I think the Harden takes the cake for sure.", determine his seating position, and identify the shoe he holds with his right hand during the introduction. The crux is matching the spoken quote to the correct presenter's position and the shoe they are holding.
\par\smallskip
First, let me scan the video for the segments relevant to the question:
\par\smallskip
- I need to find the presenter who says "I think the Harden takes the cake for sure." and identify his seating position and the shoe he holds.
- At 00:24, two presenters are shown on screen, with the presenter on the left wearing a green long-sleeve shirt and the presenter on the right wearing a black t-shirt.
- \textbf{At 00:28, the presenter on the left holds up a burgundy, black, and white Adidas Harden Vol. 2 shoe, while the presenter on the right holds up a green, orange, and black Nike PG 2 shoe.}
- At 01:56, the presenter on the left says "I'm gonna have to agree with you" while looking at the presenter on the right.
- At 02:05, the presenter on the right says "I think the Harden takes the cake for sure" while holding the green, orange, and black Nike PG 2 shoe in his right hand.
- If the presenter on the left were the one holding the Nike PG 2 shoe, he would be seated on the left, but he is holding the Adidas Harden Vol. 2 shoe instead.
- \textbf{The presenter who speaks the line is seated on the right and holds the green, orange, and black Nike PG 2 shoe, which corresponds to option A.}
\par\smallskip
Next, let me analyze the options:
\par\smallskip
- A) He is seated on the right, holding up a green, orange, and black Nike PG 2 shoe. \textemdash{} This matches the presenter's position and the shoe he holds.
- B) He is seated on the left, holding up a burgundy, black, and white Adidas Harden Vol. 2 shoe. \textemdash{} This is the other presenter.
- C) He is seated on the right, holding up a burgundy, black, and white Adidas Harden Vol. 2 shoe. \textemdash{} He holds the Nike shoe.
- D) He is seated on the left, holding up a green, orange, and black Nike PG 2 shoe. \textemdash{} He is seated on the right.
\par\smallskip
Finally, the correct option is A.
\textless{}/think\textgreater{}
\par\smallskip
Final Answer: A
\end{tcolorbox}
\Needspace{26\baselineskip}
\subsection{OmniReasoningBench / \textnormal{\emph{Reasoning over video}}: Locating a scene between two spoken anchors}
\label{case:model12}
\begin{tcolorbox}[casecard,title={Question and reference answer}]
\textbf{Question.} Which of the following visual scenes appears on screen AFTER the narrator mentions the ship name 'Unicorn' but BEFORE the narrator mentions 'wireless technology'?\par\smallskip
\textbf{Options.} A. A sign displaying the text 'MEMORIAL SHIP MIKASA'. B. A vintage mechanical dial displaying the word 'DESPACIO'. C. A painted bust of Lord Nelson located on a paved area behind a black fence. D. A sailor in a white uniform standing on deck and waving signal flags.\par\smallskip
\textbf{Reference answer: D.}\par\smallskip
\begin{minipage}[t]{0.65\linewidth}\centering
\includegraphics[width=\linewidth,height=3.2cm,keepaspectratio]{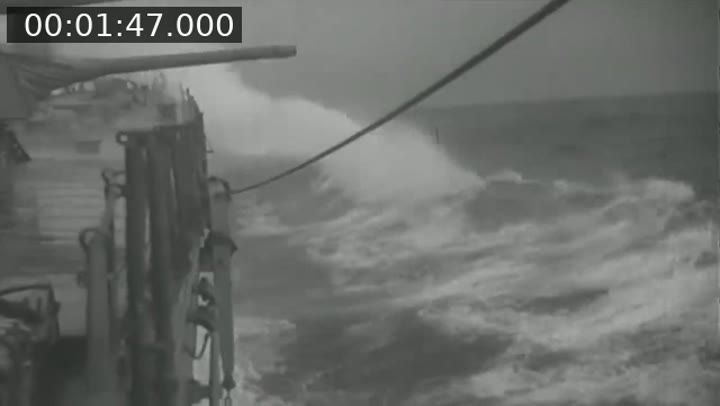}
\par\scriptsize Video frame, 01:47\end{minipage}
\end{tcolorbox}
The question specifies a temporal interval using two audio anchors. OmniReasoning places the flag-signaling scene inside that interval and selects D.\par\medskip
\Needspace{8\baselineskip}
\begin{tcolorbox}[casecard,breakable,colframe=red!50!black,colbacktitle=red!5,title={Qwen3-Omni-30B-A3B-Thinking},title after break={Qwen3-Omni-30B-A3B-Thinking (continued)}]
\textless{}think\textgreater{}
Got it, let's tackle the problem. First, the user wants to identify which visual scene appears on screen after the narrator mentions 'Unicorn' but before mentioning 'wireless technology', requiring precise timestamp verification of both audio cues and corresponding visuals. Then, analyzing the video content: the narrator states '...and the Unicorn, and the...' at approximately 00:01:24.500, immediately followed by a visual of a painted bust of Lord Nelson behind a black fence at 00:01:25.000 (option C), while the 'wireless technology' reference occurs later at 00:01:52.000. Next, evaluating the options: option A (sign 'MEMORIAL SHIP MIKASA') appears at 00:01:15.000 before 'Unicorn' mention; option B (dial 'DESPACIO') appears at 00:02:00.000 after 'wireless technology'; option D (sailor waving flags) appears at 00:02:15.000 after 'wireless technology'; \textbf{only option C aligns with the post-'Unicorn' but pre-'wireless technology' timeframe}. Finally, the correct answer is C.
\textless{}/think\textgreater{}
\par\smallskip
Final Answer: C
\end{tcolorbox}
\Needspace{8\baselineskip}
\begin{tcolorbox}[casecard,breakable,colframe=teal!50!black,colbacktitle=teal!5,title={OmniReasoning-30B-A3B},title after break={OmniReasoning-30B-A3B (continued)}]
\textless{}think\textgreater{}
Got it, let's tackle the problem.
\par\smallskip
The user wants to identify which visual scene appears on screen after the narrator mentions the ship name 'Unicorn' but before the narrator mentions 'wireless technology'. The crux is to trace the chronological sequence of the narrator's spoken ship names and technology references and match them to the corresponding on-screen visuals.
\par\smallskip
First, let me scan the video for the segments relevant to the question:
\par\smallskip
- I need to find the segment where the narrator mentions the ship name 'Unicorn' and the segment where the narrator mentions 'wireless technology' to identify the visual scene that appears in between.
\par\smallskip
- At 00:54, a painted bust of Lord Nelson is shown on screen behind a black fence.
\par\smallskip
- At 01:16, a sign displaying the text 'MEMORIAL SHIP MIKASA' is shown on screen.
\par\smallskip
- At 01:23, a vintage mechanical dial displaying the word 'DESPACIO' is shown on screen.
\par\smallskip
- \textbf{At 01:33, the narrator says "Well, except the Victory, and the Unicorn, and the..." while a black screen is displayed.}
\par\smallskip
- At 01:34, the narrator says "...all right, I'll shut up" as a man in a leather jacket is shown on screen.
\par\smallskip
- \textbf{At 01:37, a sailor in a white uniform is shown standing on deck and waving signal flags.}
\par\smallskip
- \textbf{At 01:52, the narrator mentions "wireless technology" while a close-up of a ship's gun is shown on screen.}
\par\smallskip
- If the painted bust of Lord Nelson were the correct scene, it would have to appear after the mention of 'Unicorn' at 01:33, but it is shown much earlier at 00:54, so it cannot be the correct scene.
\par\smallskip
- The visual scene that appears after the mention of 'Unicorn' at 01:33 and before the mention of 'wireless technology' at 01:52 is the sailor in a white uniform waving signal flags, which corresponds to option D.
\par\smallskip
Next, let me analyze the options:
\par\smallskip
- A) A sign displaying the text 'MEMORIAL SHIP MIKASA'. \textemdash{} This scene occurs before the mention of 'Unicorn'.
\par\smallskip
- B) A vintage mechanical dial displaying the word 'DESPACIO'. \textemdash{} This scene occurs before the mention of 'Unicorn'.
\par\smallskip
- C) A painted bust of Lord Nelson located on a paved area behind a black fence. \textemdash{} This scene occurs before the mention of 'Unicorn'.
\par\smallskip
- D) A sailor in a white uniform standing on deck and waving signal flags. \textemdash{} This scene occurs at 01:37, which is after the mention of 'Unicorn' at 01:33 and before the mention of 'wireless technology' at 01:52.
\par\smallskip
Finally, the correct option is D.
\textless{}/think\textgreater{}
\par\smallskip
Final Answer: D
\end{tcolorbox}
\Needspace{26\baselineskip}
\subsection{OmniReasoningBench / \textnormal{\emph{Reasoning beyond video}}: Applying a video-derived rate to a new sales table}
\label{case:model13}
\begin{tcolorbox}[casecard,title={Question and reference answer}]
\textbf{Question.} \textless{}image 1\textgreater{} The table shows US sales and market research estimates for three products. According to the video's claim about competitor performance in Canada, which product has the highest potential sales in Canada?\par\smallskip
\textbf{Options.} A. All three have the same potential B. Product Y C. Product X and Product Y tie D. Product Z E. Product X and Product Z tie F. Product X G. Product Y and Product Z tie\par\smallskip
\textbf{Reference answer: F.}\par\smallskip
\begin{minipage}[t]{0.48\linewidth}\centering
\includegraphics[width=\linewidth,height=3.2cm,keepaspectratio]{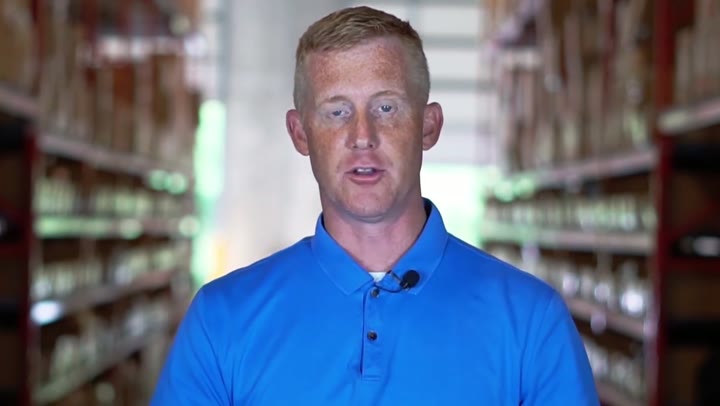}
\par\scriptsize Video frame, 01:34\end{minipage}
\hfill
\begin{minipage}[t]{0.48\linewidth}\centering
\includegraphics[width=\linewidth,height=3.2cm,keepaspectratio]{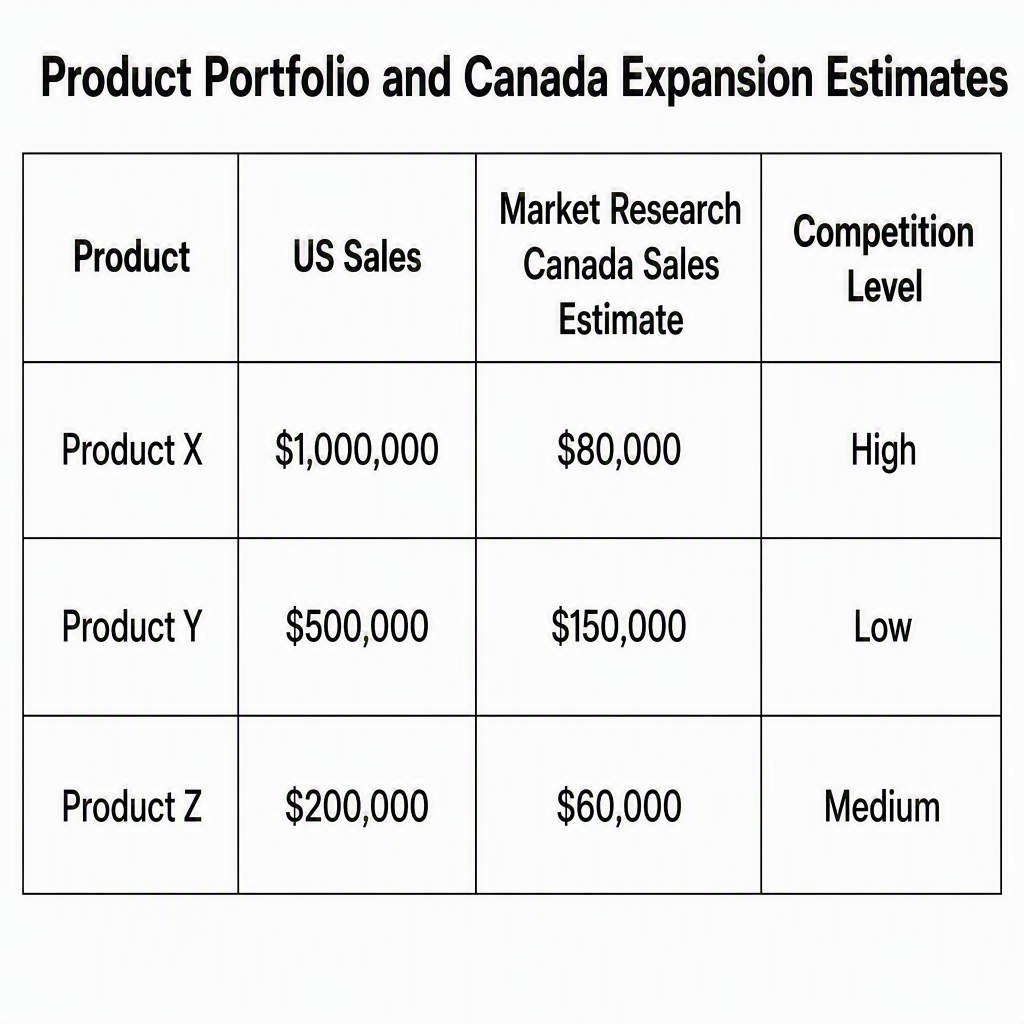}
\par\scriptsize Additional figure in the question\end{minipage}
\end{tcolorbox}
Qwen3-Omni compares the supplied market estimates directly. OmniReasoning applies the rate stated in the video to the US sales column and selects Product X (F).\par\medskip
\Needspace{8\baselineskip}
\begin{tcolorbox}[casecard,breakable,colframe=red!50!black,colbacktitle=red!5,title={Qwen3-Omni-30B-A3B-Thinking},title after break={Qwen3-Omni-30B-A3B-Thinking (continued)}]
\textless{}think\textgreater{}
Got it, let's tackle this problem. First, I need to recall what the video said about competitor performance in Canada. The video mentioned that some competitors do 30\% of their business in Canada. Wait, but the question is about which product has the highest potential sales in Canada based on the table.
\par\smallskip
Looking at the table: Product X has a Canada sales estimate of \$80,000, Product Y has \$150,000, and Product Z has \$60,000. \textbf{So we just need to compare these numbers. Product Y's estimate is \$150k, which is higher than X's \$80k and Z's \$60k.} So the answer should be B.
\textless{}/think\textgreater{}
\par\smallskip
Final Answer: B
\end{tcolorbox}
\Needspace{8\baselineskip}
\begin{tcolorbox}[casecard,breakable,colframe=teal!50!black,colbacktitle=teal!5,title={OmniReasoning-30B-A3B},title after break={OmniReasoning-30B-A3B (continued)}]
\textless{}think\textgreater{}
Got it, let's tackle the problem.
\par\smallskip
The user wants to identify which product from the provided table has the highest potential sales in Canada based on the video's claims about competitor performance. The crux is determining how the video's statement about competitor sales in Canada affects the market research estimates.
\par\smallskip
First, let me scan the video for the segments relevant to the question:
\par\smallskip
- I need to find the segment where the speaker discusses competitor sales and performance in the Canadian market.
\par\smallskip
- At 01:28, a graphic showing "10\%" appears on screen as the speaker mentions that most sellers do about 10\% of their US sales in Canada.
\par\smallskip
- At 01:34, the speaker states, "some of our competitors actually do even 30\% of their business in the market up there."
\par\smallskip
- If the competitor performance claim did not apply to the products in the table, we would simply compare the raw estimates of \$80,000, \$150,000, and \$60,000, which would make Product Y the highest at \$150,000, but the video's claim suggests we must scale the estimates.
\par\smallskip
- The video establishes that competitors can achieve up to 30\% of their business in Canada, which points to option B.
\par\smallskip
Next, let me read the figure:
\par\smallskip
- The table lists Product X with US Sales of "\$1,000,000" and a Canada Sales Estimate of "\$80,000".
\par\smallskip
- The table lists Product Y with US Sales of "\$500,000" and a Canada Sales Estimate of "\$150,000".
\par\smallskip
- The table lists Product Z with US Sales of "\$200,000" and a Canada Sales Estimate of "\$60,000".
\par\smallskip
- \textbf{Applying the competitor's 30\% performance level to Product X: \$1,000,000 * 0.30 = \$300,000.}
\par\smallskip
- Applying the competitor's 30\% performance level to Product Y: \$500,000 * 0.30 = \$150,000.
\par\smallskip
- Applying the competitor's 30\% performance level to Product Z: \$200,000 * 0.30 = \$60,000.
\par\smallskip
- \textbf{Comparing the scaled values shows that Product X has the highest potential sales in Canada at \$300,000, which corresponds to option F.}
\par\smallskip
Next, let me analyze the options:
\par\smallskip
- A) All three have the same potential \textemdash{} incorrect because the scaled values differ.
\par\smallskip
- B) Product Y \textemdash{} incorrect because its scaled potential is \$150,000, which is lower than Product X's \$300,000.
\par\smallskip
- C) Product X and Product Y tie \textemdash{} incorrect because Product X's potential is higher.
\par\smallskip
- D) Product Z \textemdash{} incorrect because its potential is the lowest.
\par\smallskip
- E) Product X and Product Z tie \textemdash{} incorrect because Product X's potential is higher.
\par\smallskip
- F) Product X \textemdash{} correct because its scaled potential of \$300,000 is the highest.
\par\smallskip
- G) Product Y and Product Z tie \textemdash{} incorrect because Product Y's potential is higher.
\par\smallskip
Finally, the correct option is F.
\textless{}/think\textgreater{}
\par\smallskip
Final Answer: F
\end{tcolorbox}
\Needspace{26\baselineskip}
\subsection{OmniReasoningBench / \textnormal{\emph{Reasoning beyond video}}: Transferring a demonstrated woodworking procedure}
\label{case:model14}
\begin{tcolorbox}[casecard,title={Question and reference answer}]
\textbf{Question.} You are planing a long board to be used as a straightedge. After the first few passes, the edge is almost square but still has a slight roughness. You want to achieve a smooth, square edge using the exact technique demonstrated in the video.
\par\smallskip
Following the procedure demonstrated in the video, which of the following sequences of actions is most appropriate to achieve a smooth, square edge?\par\smallskip
\textbf{Options.} A. Plane, sand with fine grit, plane, check with try square. B. Plane, apply wax to the edge, plane, check with try square. C. Plane, rub the edge with a small flat board, plane, rub again, then check with try square. D. Plane, check with try square, adjust blade, plane again. E. Plane, check with a straightedge, plane, check again. F. Plane, use a marking gauge to scribe a line, plane, check.\par\smallskip
\textbf{Reference answer: C.}\par\smallskip
\begin{minipage}[t]{0.65\linewidth}\centering
\includegraphics[width=\linewidth,height=3.2cm,keepaspectratio]{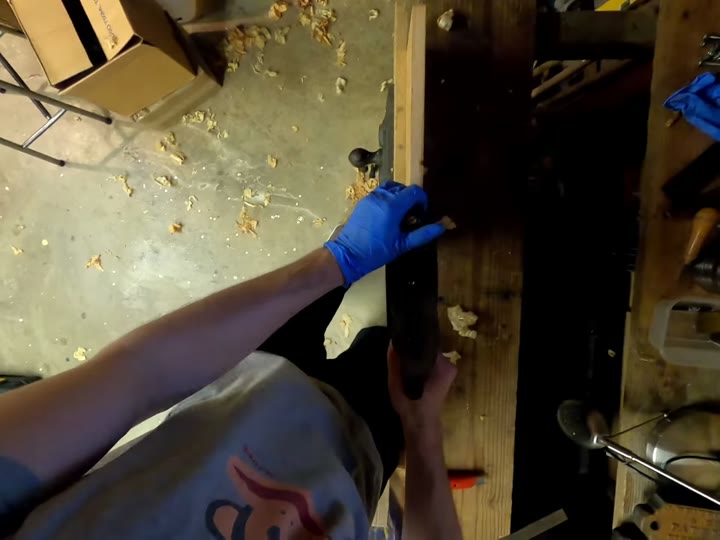}
\par\scriptsize Video frame, 01:44\end{minipage}
\end{tcolorbox}
Qwen3-Omni proposes a conventional planing-and-checking sequence. OmniReasoning includes the demonstrated rubbing step when choosing the procedure for the new board.\par\medskip
\Needspace{8\baselineskip}
\begin{tcolorbox}[casecard,breakable,colframe=red!50!black,colbacktitle=red!5,title={Qwen3-Omni-30B-A3B-Thinking},title after break={Qwen3-Omni-30B-A3B-Thinking (continued)}]
\textless{}think\textgreater{}
Got it, let's think. The video shows someone planing a wooden board to get a smooth edge. The key is to follow the technique demonstrated. Let's recall the steps. First, they plane the edge. Then, after some passes, they check for squareness. Wait, the question is about achieving a smooth, square edge. Let's look at the options.
\par\smallskip
Option D says: Plane, check with try square, adjust blade, plane again. That makes sense because when planing, you check the angle with a try square to make sure it's square. If it's not, you adjust the plane's blade and plane again. Let's check other options. A is sanding, but the video uses planing, not sanding. B is wax, which isn't mentioned. \textbf{C is rubbing with a small board, not part of the video's technique.} E is straightedge, but the goal is to make it a straightedge, so checking with a straightedge might not be the method here. F is marking gauge, which isn't shown. So D is the right approach: plane, check with try square, adjust, plane again.
\textless{}/think\textgreater{}
\par\smallskip
Final Answer: D
\end{tcolorbox}
\Needspace{8\baselineskip}
\begin{tcolorbox}[casecard,breakable,colframe=teal!50!black,colbacktitle=teal!5,title={OmniReasoning-30B-A3B},title after break={OmniReasoning-30B-A3B (continued)}]
\textless{}think\textgreater{}
Got it, let's tackle the problem.
\par\smallskip
The user wants to identify the correct sequence of actions to achieve a smooth, square edge on a long board using the exact technique demonstrated in the video. The crux is determining whether the person uses a try square, sandpaper, wax, or a small flat board to refine the edge after planing.
\par\smallskip
First, let me scan the video for the segments relevant to the question:
\par\smallskip
- I need to watch the video to see what tool or technique is used to smooth and square the edge of the wood after planing.
\par\smallskip
- At 00:01-00:06, the person uses a red utility knife to clean up the end grain of the wooden board.
\par\smallskip
- At 00:18-01:25, the person uses a large hand plane to flatten and square the edge of the board.
\par\smallskip
- At 01:26-01:37, the person uses a metal try square to check the angle of the planed edge.
\par\smallskip
- At 01:38-01:42, the person sets the try square down and picks up a small, flat wooden board.
\par\smallskip
- \textbf{At 01:43-01:45, the person rubs the edge of the board against the face of the small flat board.}
\par\smallskip
- At 01:46-02:00, the person returns to planing the edge of the board.
\par\smallskip
- At 02:04-02:12, the person checks the edge again with the try square.
\par\smallskip
- At 02:13-02:22, the person makes another pass with the hand plane.
\par\smallskip
- At 02:23-02:25, the person uses the small flat board to rub and smooth the edge again.
\par\smallskip
- If the person were using a try square to rub the edge, they would have to be holding the metal tool at 01:43, but they are holding a small wooden board instead.
\par\smallskip
- \textbf{The sequence of actions demonstrated is planing the edge, rubbing it with a small flat board, planing it again, rubbing it again, and then checking it with a try square, which corresponds to option C.}
\par\smallskip
Next, let me analyze the options:
\par\smallskip
- A) Plane, sand with fine grit, plane, check with try square. \textemdash{} No sandpaper is used in the video.
\par\smallskip
- B) Plane, apply wax to the edge, plane, check with try square. \textemdash{} No wax is applied to the wood.
\par\smallskip
- C) Plane, rub the edge with a small flat board, plane, rub again, then check with try square. \textemdash{} This matches the observed sequence of planing, rubbing with a small wooden board, planing again, rubbing again, and checking with a try square.
\par\smallskip
- D) Plane, check with try square, adjust blade, plane again. \textemdash{} The person rubs the edge with a wooden board instead of adjusting the blade.
\par\smallskip
- E) Plane, check with a straightedge, plane, check again. \textemdash{} The person uses a try square to check the angle.
\par\smallskip
- F) Plane, use a marking gauge to scribe a line, plane, check. \textemdash{} No marking gauge is used.
\par\smallskip
Finally, the correct option is C.
\textless{}/think\textgreater{}
\par\smallskip
Final Answer: C
\end{tcolorbox}

% ===== END inlined from sections/appendix_comparisons.tex =====
% ===== BEGIN inlined from sections/appendix_cases.tex =====
\clearpage
\section{Benchmark Task Atlas}
\label{app:benchmark-cases}
The following cards show one source-recorded case for each of the ten \emph{reasoning over video} and nine \emph{reasoning beyond video} task types. They follow the question, answer, media, and timestamped-evidence structure of Figure~\ref{fig:bench-demo}. Distractors are omitted to make the evidence chain readable. Task names and reference answers follow the released annotations. Non-English source quotations are translated explicitly. These selected examples illustrate task design; they do not establish that every annotation or model explanation is correct.

\begin{figure}[!htbp]\centering
\begin{tcolorbox}[casecard,title={\emph{Reasoning over video} / Background\&Music Understanding}]
\textbf{Question.} When the background music briefly has sung lyrics "it's for you baby..." before continuing as instrumental R\&B, what is Takisha doing on screen at that moment?
\par\smallskip
\textbf{Reference answer.} B) Untying and removing the grey edge-laying band from her forehead
\par\smallskip\noindent \begin{minipage}[t]{0.48\linewidth}\centering
\includegraphics[width=\linewidth,height=2.5cm,keepaspectratio]{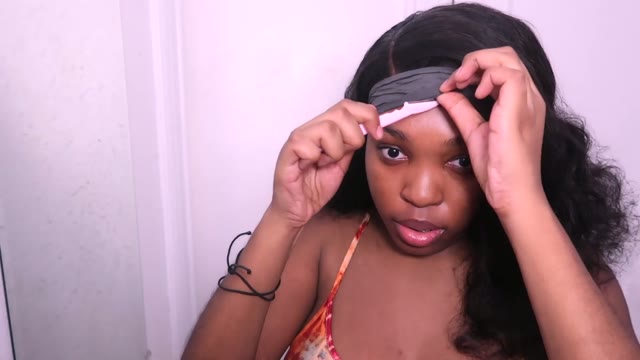}
\par\scriptsize Source video, 07:07\end{minipage}\par\smallskip
\textbf{Reference evidence chain.}\par (1) [audio 07:02-07:08] Locate the unique anchor where the music changes to vocals: "An R\&B song plays in the background, with vocals singing, "it's for you baby....""\par (2) [visual 07:07-07:08] Bind that vocal moment to the simultaneous visual action: "Takisha unties and removes the grey edge-laying band from her forehead ."
\end{tcolorbox}
\caption{\textbf{Background and music understanding example.} A sung lyric anchors the moment at which the model must identify an on-screen action. The task label is source-assigned.}
\label{case:bench0}\end{figure}

\begin{figure}[!htbp]\centering
\begin{tcolorbox}[casecard,title={\emph{Reasoning over video} / Causal Reasoning}]
\textbf{Question.} According to the video, the yellow tinge in the water caused by elevated organic pollutants absorbs the blue spectrum of light, harming photosynthetic corals. Tracing this back to the causal origin of the pollution, who does the narrator state was responsible for the overfeeding, and what color is the frogfish visually shown eating a dead silver fish during this explanation?
\par\smallskip
\textbf{Reference answer.} D) His sister; bright yellow
\par\smallskip\noindent \begin{minipage}[t]{0.48\linewidth}\centering
\includegraphics[width=\linewidth,height=2.5cm,keepaspectratio]{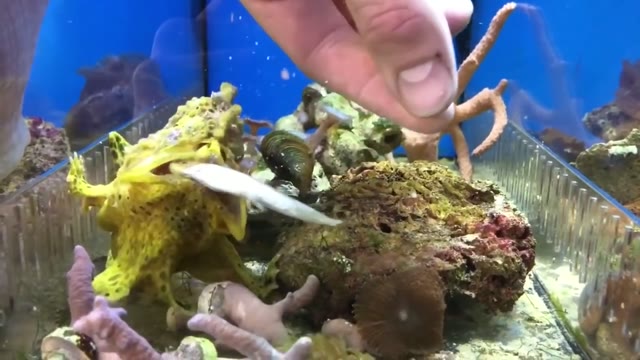}
\par\scriptsize Source video, 00:40\end{minipage}\hfill \begin{minipage}[t]{0.48\linewidth}\centering
\includegraphics[width=\linewidth,height=2.5cm,keepaspectratio]{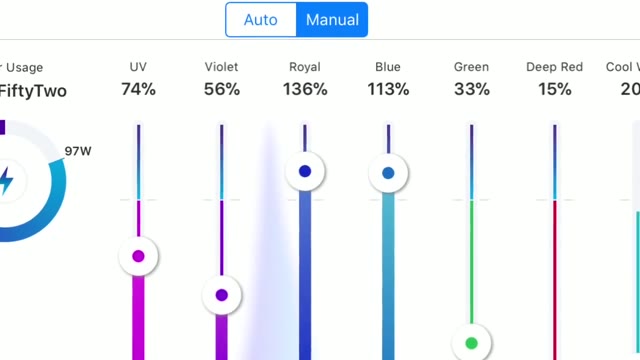}
\par\scriptsize Source video, 02:39\end{minipage}\par\smallskip
\textbf{Reference evidence chain.}\par (1) [audio 02:36-02:43] The narrator explains that the yellow water tinge absorbs the blue spectrum of light, harming corals.\par (2) [audio 00:32-00:38] The narrator explains that the elevated organic pollutants were caused by his sister being too generous with the feeding regime while he was away.\par (3) [visual 00:39-00:42] A close-up visual shows a person's fingers feeding a textured, bright yellow frogfish.
\end{tcolorbox}
\caption{\textbf{Causal reasoning example.} The question links a spoken causal explanation to an earlier account of overfeeding and a visual observation of the fish. The task label is source-assigned.}
\label{case:bench1}\end{figure}

\begin{figure}[!htbp]\centering
\begin{tcolorbox}[casecard,title={\emph{Reasoning over video} / Counting}]
\textbf{Question.} Among the eight numbered exercises demonstrated in the video tutorial, identify all exercises that are performed on either a wooden deck or a grassy lawn. For these specific exercises, locate all the explicit numerical target repetition values recommended by the narrator in the audio commentary (for calf raises, sum both the straight leg and bent knee target values; do not multiply by two for single-leg targets--simply use the raw numbers spoken). What is the total sum of these target repetitions?
\par\smallskip
\textbf{Reference answer.} 40 repetitions
\par\smallskip\noindent \begin{minipage}[t]{0.48\linewidth}\centering
\includegraphics[width=\linewidth,height=2.5cm,keepaspectratio]{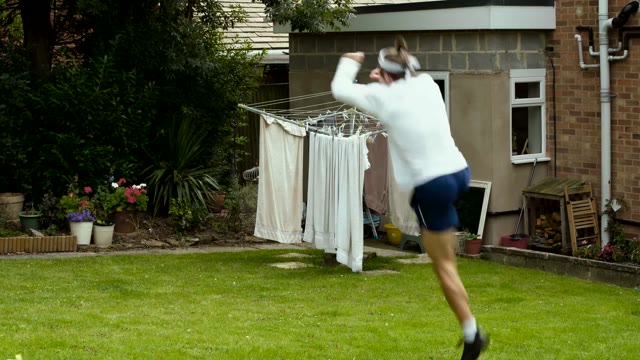}
\par\scriptsize Source video, 01:46\end{minipage}\hfill \begin{minipage}[t]{0.48\linewidth}\centering
\includegraphics[width=\linewidth,height=2.5cm,keepaspectratio]{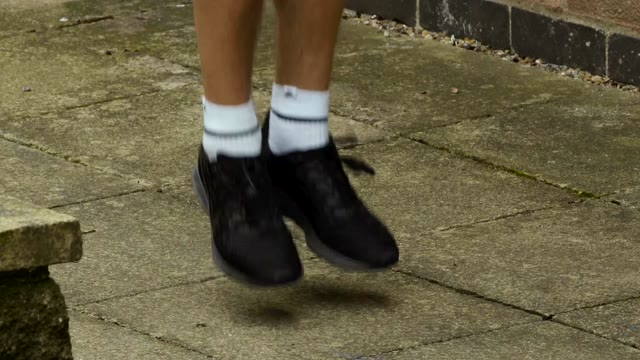}
\par\scriptsize Source video, 02:29\end{minipage}\par\smallskip
\textbf{Reference evidence chain.}\par (1) [visual 01:03-03:13] Analyze the visual background of exercises 1 to 8 to identify which ones are performed on a wooden deck or a grassy lawn. This filters in Exercise 3 (Box Jumps, on wooden deck and a grassy lawn), Exercise 4 (Calf Raises, on wooden deck), Exercise 5 (Single Leg Hops, on grassy lawn), and Exercise 6 (Single Leg Squats, on grassy lawn). Other exercises are performed on brick or paved surfaces.\par (2) [audio 01:03-02:18] Listen to the voiceover for the selected exercises (3, 4, 5, and 6) to extract their target repetition values: Box Jumps target is 'ten' reps; Calf Raises targets are 'ten' reps (straight) and 'five' reps (bent knee); Single Leg Hops mentions no rep target; Single Leg Squats target is 'fifteen' reps.\par (3) [audiovisual 01:03-02:18] Sum the extracted repetition values: 10 (from Exercise 3) + 10 (from Exercise 4 straight-leg) + 5 (from Exercise 4 bent-knee) + 15 (from Exercise 6) = 40.
\end{tcolorbox}
\caption{\textbf{Counting example.} Visual scene selection determines which exercises contribute to the sum of repetition targets stated in the audio. The task label is source-assigned.}
\label{case:bench2}\end{figure}

\begin{figure}[!htbp]\centering
\begin{tcolorbox}[casecard,title={\emph{Reasoning over video} / Ego-Centric Understanding}]
\textbf{Question.} Based on the video, what US state is the host from (which he mentions when introducing the metal pins), and what is the color of the silicone tips on the tongs he uses with his hands to place the steaks onto the grill grate?
\par\smallskip
\textbf{Reference answer.} B) North Carolina; black
\par\smallskip\noindent \begin{minipage}[t]{0.48\linewidth}\centering
\includegraphics[width=\linewidth,height=2.5cm,keepaspectratio]{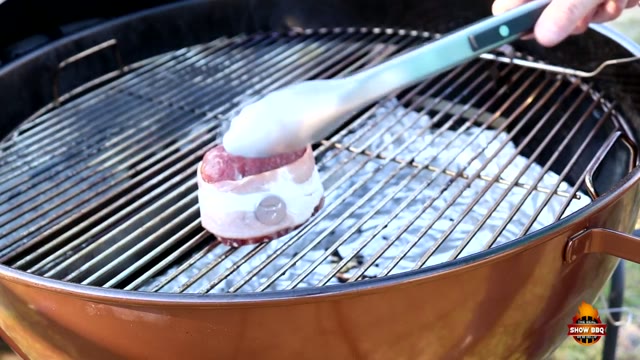}
\par\scriptsize Source video, 02:17\end{minipage}\hfill \begin{minipage}[t]{0.48\linewidth}\centering
\includegraphics[width=\linewidth,height=2.5cm,keepaspectratio]{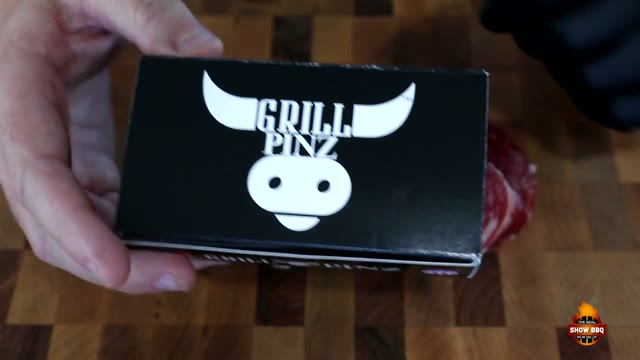}
\par\scriptsize Source video, 01:08\end{minipage}\par\smallskip
\textbf{Reference evidence chain.}\par (1) [audio 01:07-01:09] The host states that the pin company is from Sanford, North Carolina, which is 'right here in North Carolina where I'm from.'\par (2) [visual 02:16-02:19] The host uses his hands to operate silver tongs with black silicone tips to place the steaks onto the right side of the grill.\par (3) Combine the state 'North Carolina' with the tool tip color 'black' to select option B.
\end{tcolorbox}
\caption{\textbf{Ego-centric understanding example.} The answer combines the host's spoken place of origin with a visual attribute of the tool he uses. The task label is source-assigned.}
\label{case:bench3}\end{figure}

\begin{figure}[!htbp]\centering
\begin{tcolorbox}[casecard,title={\emph{Reasoning over video} / Hypothetical Reasoning}]
\textbf{Question.} The couple explains their weekly schedule for uploading different types of videos, and a text graphic on screen displays the exact dates of their upcoming Florida trip. If they adhere strictly to their announced schedule, how many 'top tips' videos will be released during the active dates of their Florida trip?
\par\smallskip
\textbf{Reference answer.} D) 3
\par\smallskip\noindent \begin{minipage}[t]{0.48\linewidth}\centering
\includegraphics[width=\linewidth,height=2.5cm,keepaspectratio]{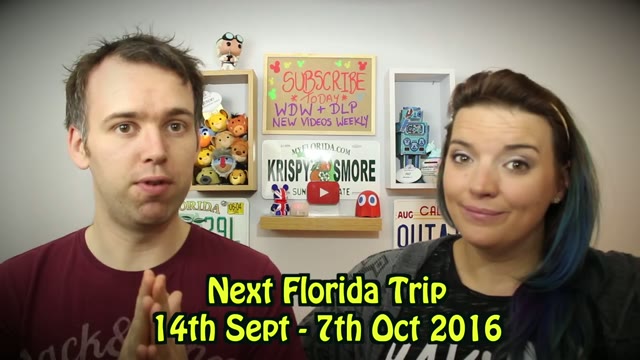}
\par\scriptsize Source video, 00:33\end{minipage}\hfill \begin{minipage}[t]{0.48\linewidth}\centering
\includegraphics[width=\linewidth,height=2.5cm,keepaspectratio]{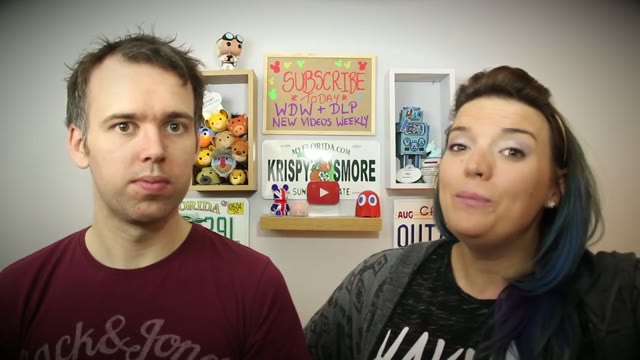}
\par\scriptsize Source video, 00:46\end{minipage}\par\smallskip
\textbf{Reference evidence chain.}\par (1) [audio 00:44-00:49] Identify from the spoken audio that the couple uploads 'top tips' videos on Tuesdays.\par (2) [visual 00:30-00:37] Identify from the text graphic on screen that their upcoming Florida trip spans from September 14th to October 7th, 2016.\par (3) Determine that September 14, 2016, is a Wednesday. Count the number of Tuesdays that fall between September 14 and October 7, 2016: September 20, September 27, and October 4 (3 Tuesdays total).
\end{tcolorbox}
\caption{\textbf{Hypothetical reasoning example.} A spoken upload schedule and on-screen travel dates jointly determine the number of planned releases. The task label is source-assigned.}
\label{case:bench4}\end{figure}

\begin{figure}[!htbp]\centering
\begin{tcolorbox}[casecard,title={\emph{Reasoning over video} / Perception}]
\textbf{Question.} The person who admits to supporting Reading, joking that they are going down to League One, is seen wearing which clothing item during the earlier balcony scene?
\par\smallskip
\textbf{Reference answer.} D) A beige and black hoodie with a Fendi logo pattern on the sleeves
\par\smallskip\noindent \begin{minipage}[t]{0.48\linewidth}\centering
\includegraphics[width=\linewidth,height=2.5cm,keepaspectratio]{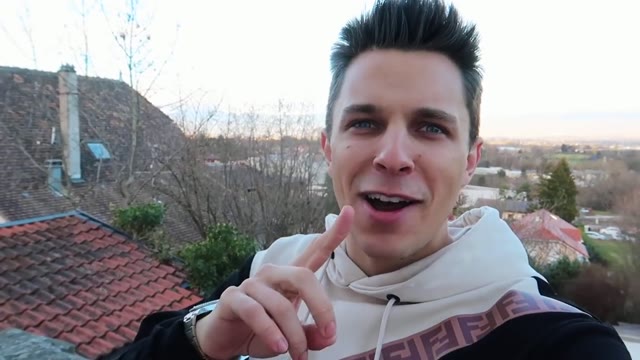}
\par\scriptsize Source video, 00:07\end{minipage}\hfill \begin{minipage}[t]{0.48\linewidth}\centering
\includegraphics[width=\linewidth,height=2.5cm,keepaspectratio]{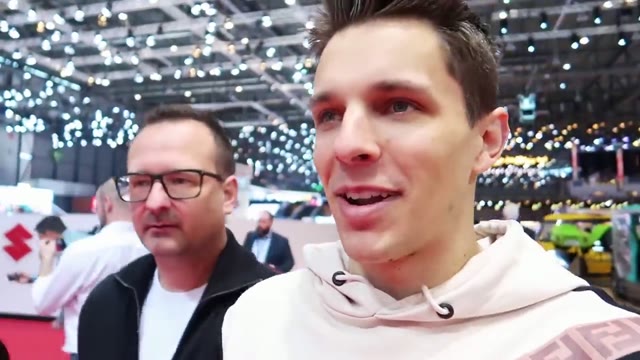}
\par\scriptsize Source video, 02:45\end{minipage}\par\smallskip
\textbf{Reference evidence chain.}\par (1) [audio 02:43-02:48] Archie says 'I support Reading, which is even worse' and jokes about going down to League One.\par (2) [visual 00:06-00:09] Archie is shown on the balcony wearing a beige and black hoodie with a Fendi logo pattern on the sleeves.
\end{tcolorbox}
\caption{\textbf{Perception example.} A speaker's stated football preference identifies the person whose clothing must be inspected in an earlier scene. The task label is source-assigned.}
\label{case:bench5}\end{figure}

\begin{figure}[!htbp]\centering
\begin{tcolorbox}[casecard,title={\emph{Reasoning over video} / Reference Reasoning}]
\textbf{Question.} Listen to the speaker's list of loaded unilateral exercises for the squat chain (around 05:17-05:24). Identify the first three exercises he names in his spoken list. Now, look closely at the demonstration videos of lower body exercises shown later in the video (between 07:12 and 07:51). Which of those first three spoken exercises is actually demonstrated visually on-screen by the person wearing orange-laced shoes?
\par\smallskip
\textbf{Reference answer.} C) Only the third exercise spoken in the list
\par\smallskip\noindent \begin{minipage}[t]{0.48\linewidth}\centering
\includegraphics[width=\linewidth,height=2.5cm,keepaspectratio]{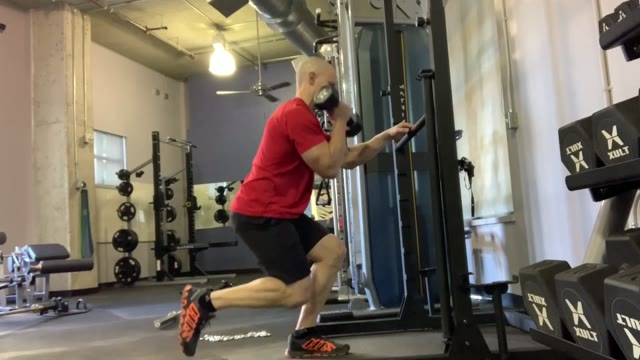}
\par\scriptsize Source video, 07:31\end{minipage}\hfill \begin{minipage}[t]{0.48\linewidth}\centering
\includegraphics[width=\linewidth,height=2.5cm,keepaspectratio]{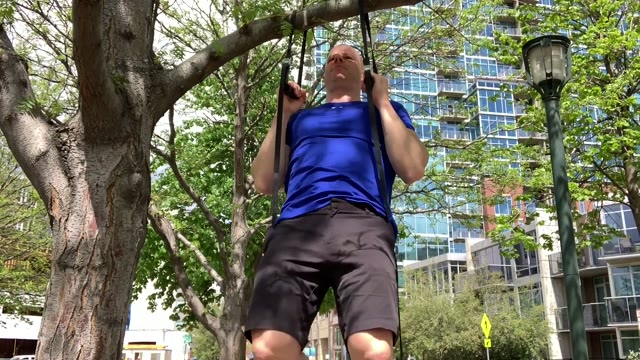}
\par\scriptsize Source video, 06:34\end{minipage}\par\smallskip
\textbf{Reference evidence chain.}\par (1) [audio 05:17-05:24] Analyze the spoken dialogue to extract the exact ordered list of exercises: 1st is lunges, 2nd is shrimp squats, 3rd is pistol squats, 4th is Bulgarian split squats.\par (2) [visual 07:12-07:51] Examine the inset videos of lower-body movements to see what is demonstrated: Bulgarian split squats are shown first (07:12-07:38) and pistol squats are shown next (07:38-07:51), both performed by a person wearing orange-laced shoes. Lunges and shrimp squats are not demonstrated in this segment.\par (3) [audiovisual 05:17-07:51] Cross-reference the two sets: The first three items from the audio list are lunges, shrimp squats, and pistol squats. Among these three, only 'pistol squats' (the 3rd spoken exercise) appears in the visual demonstrations by the person wearing orange-laced shoes (Bulgarian split squats is the 4th item in the spoken list, so it is excluded). Thus, only the third exercise spoken is visually demonstrated.
\end{tcolorbox}
\caption{\textbf{Reference reasoning example.} The question matches an ordered list of spoken exercise names to demonstrations later in the video. The task label is source-assigned.}
\label{case:bench6}\end{figure}

\begin{figure}[!htbp]\centering
\begin{tcolorbox}[casecard,title={\emph{Reasoning over video} / Relationship Reasoning}]
\textbf{Question.} Which visually identified dragon is the one Atreus seems to be replying to when he says "You're welcome!" after wondering whether the freed dragon would be grateful?
\par\smallskip
\textbf{Reference answer.} A) Otr, the large orange dragon on the rocky cliffside near the waterfall
\par\smallskip\noindent \begin{minipage}[t]{0.48\linewidth}\centering
\includegraphics[width=\linewidth,height=2.5cm,keepaspectratio]{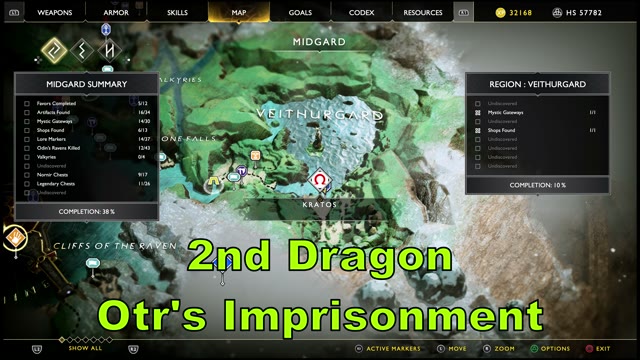}
\par\scriptsize Source video, 03:29\end{minipage}\hfill \begin{minipage}[t]{0.48\linewidth}\centering
\includegraphics[width=\linewidth,height=2.5cm,keepaspectratio]{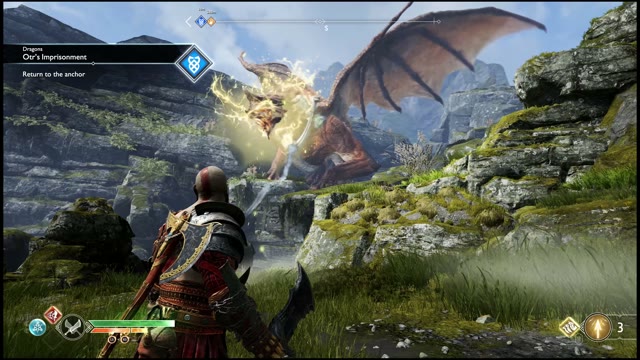}
\par\scriptsize Source video, 07:36\end{minipage}\par\smallskip
\textbf{Reference evidence chain.}\par (1) [audio 07:31-07:33] Atreus says, "That's the last one. I guess now we find out if dragons can be grateful or not."\par (2) [audio 08:23-08:24] Atreus says, "You're welcome!" This is naturally heard as his response after the gratefulness remark, anchoring the same interaction sequence.\par (3) [visual 07:33-07:40] During that release sequence, "Kratos and Atreus walk toward the large orange dragon, Otr, which is resting on a rocky cliffside surrounded by grassy ruins and a waterfall in the background."\par (4) [visual 03:27-03:31] Earlier, the video labels this same dragon's segment: "2nd Dragon Otr's Imprisonment" in Veithurgard.
\end{tcolorbox}
\caption{\textbf{Relationship reasoning example.} Dialogue and visual identity cues establish which dragon a character addresses across the interaction sequence. The task label is source-assigned.}
\label{case:bench7}\end{figure}

\begin{figure}[!htbp]\centering
\begin{tcolorbox}[casecard,title={\emph{Reasoning over video} / Spatial Reasoning}]
\textbf{Question.} At the moment the commentator shouts "Phoenix, Seattle, San Antonio, there you go!", which team's score is visually positioned on the left side of the scoreboard graphic at the bottom of the screen, and what is that specific score?
\par\smallskip
\textbf{Reference answer.} A) LAL with 25
\par\smallskip\noindent \begin{minipage}[t]{0.48\linewidth}\centering
\includegraphics[width=\linewidth,height=2.5cm,keepaspectratio]{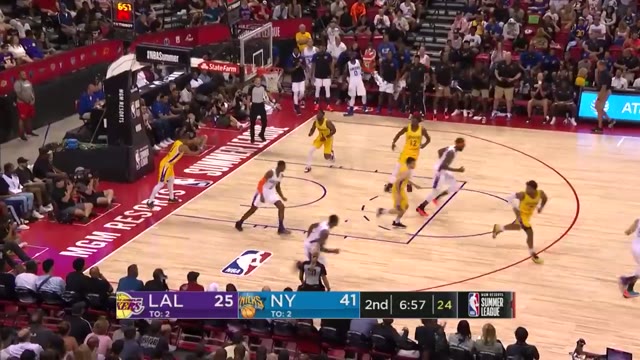}
\par\scriptsize Source video, 00:21\end{minipage}\par\smallskip
\textbf{Reference evidence chain.}\par (1) [audio 00:19-00:23] Identify the audio anchor where the commentator shouts "Phoenix, Seattle, San Antonio..."\par (2) [visual 00:20-00:23] Look at the on-screen scoreboard graphic shown during this precise audio clip to determine that LAL is on the left side of the bar wth 25.
\end{tcolorbox}
\caption{\textbf{Spatial reasoning example.} A spoken exclamation identifies the moment at which the left-hand scoreboard entry must be read. The task label is source-assigned.}
\label{case:bench8}\end{figure}

\begin{figure}[!htbp]\centering
\begin{tcolorbox}[casecard,title={\emph{Reasoning over video} / Temporal Reasoning}]
\textbf{Question.} Consider the following four events that occur during the video's introduction and first movie segment: I. The narrator says, "It's been a hell of a year for us at Screen Rant," II. Two young boys in dark robes face each other on a raised platform in a dimly lit hall. III. A man in a white and gold uniform sits casually on a wooden balcony railing. IV. The narrator says the word "equipment." Which of the following options shows the correct chronological order of these events from earliest to latest?
\par\smallskip
\textbf{Reference answer.} A) I \ensuremath{\rightarrow} II \ensuremath{\rightarrow} III \ensuremath{\rightarrow} IV
\par\smallskip\noindent \begin{minipage}[t]{0.48\linewidth}\centering
\includegraphics[width=\linewidth,height=2.5cm,keepaspectratio]{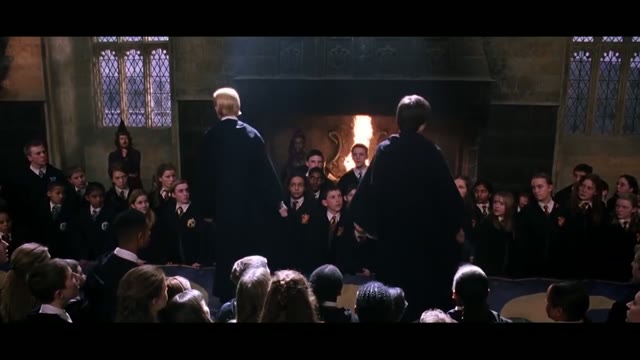}
\par\scriptsize Source video, 00:12\end{minipage}\hfill \begin{minipage}[t]{0.48\linewidth}\centering
\includegraphics[width=\linewidth,height=2.5cm,keepaspectratio]{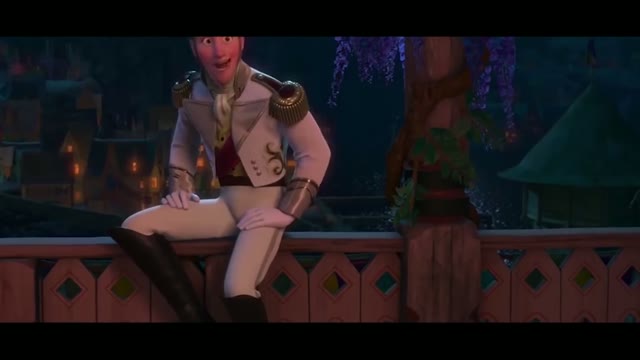}
\par\scriptsize Source video, 01:07\end{minipage}\par\smallskip
\textbf{Reference evidence chain.}\par (1) [audio 00:06.900-00:10.000] Identify that the narrator says 'It's been a hell of a year for us at Screen Rant' between 00:06.900 and 00:10.000 (Event I).\par (2) [visual 00:10.500-00:13.500] Identify that the visual scene of two young boys in dark robes facing each other occurs between 00:10.500 and 00:13.500 (Event II).\par (3) [visual 01:06.000-01:08.000] Identify that the visual scene of the man in the white and gold uniform sitting on a wooden balcony railing occurs between 01:06.000 and 01:08.000 (Event III).\par (4) [audio 01:09.000-01:11.000] Identify that the narrator says the word 'equipment' between 01:09.000 and 01:11.000 (Event IV).\par (5) [audiovisual 00:06.900-01:11.000] Compare and interleave the four times: 00:06.900-00:10.000 (I) \ensuremath{\rightarrow} 00:10.500-00:13.500 (II) \ensuremath{\rightarrow} 01:06.000-01:08.000 (III) \ensuremath{\rightarrow} 01:09.000-01:11.000 (IV).
\end{tcolorbox}
\caption{\textbf{Temporal reasoning example.} The answer orders spoken utterances and visual events on a shared timeline. The task label is source-assigned.}
\label{case:bench9}\end{figure}

\begin{figure}[!htbp]\centering
\begin{tcolorbox}[casecard,title={\emph{Reasoning beyond video} / Case Study Analysis}]
\textbf{Question.} The figure shows a microscopic cross-section of a blade edge after sharpening through a progression of grits. The labelled regions (A through F) correspond to different stages in the sharpening sequence. Based on the lecture's specific description of burr characteristics at each grit, which labelled region represents the stage after sharpening with 1200 grit diamond?
\par\smallskip
\textbf{Reference answer.} B) Region D
\par\smallskip\noindent \begin{minipage}[t]{0.48\linewidth}\centering
\includegraphics[width=\linewidth,height=2.5cm,keepaspectratio]{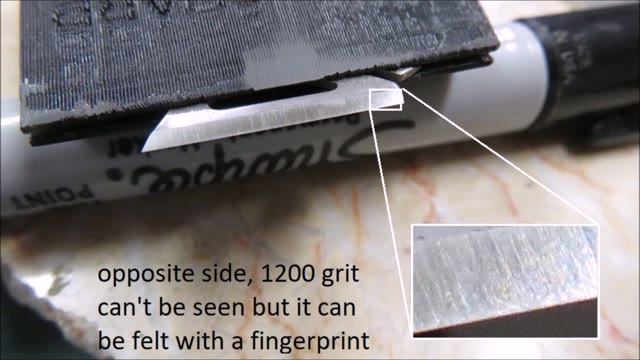}
\par\scriptsize Source video, 05:33\end{minipage}\hfill \begin{minipage}[t]{0.48\linewidth}\centering
\includegraphics[width=\linewidth,height=3.4cm,keepaspectratio]{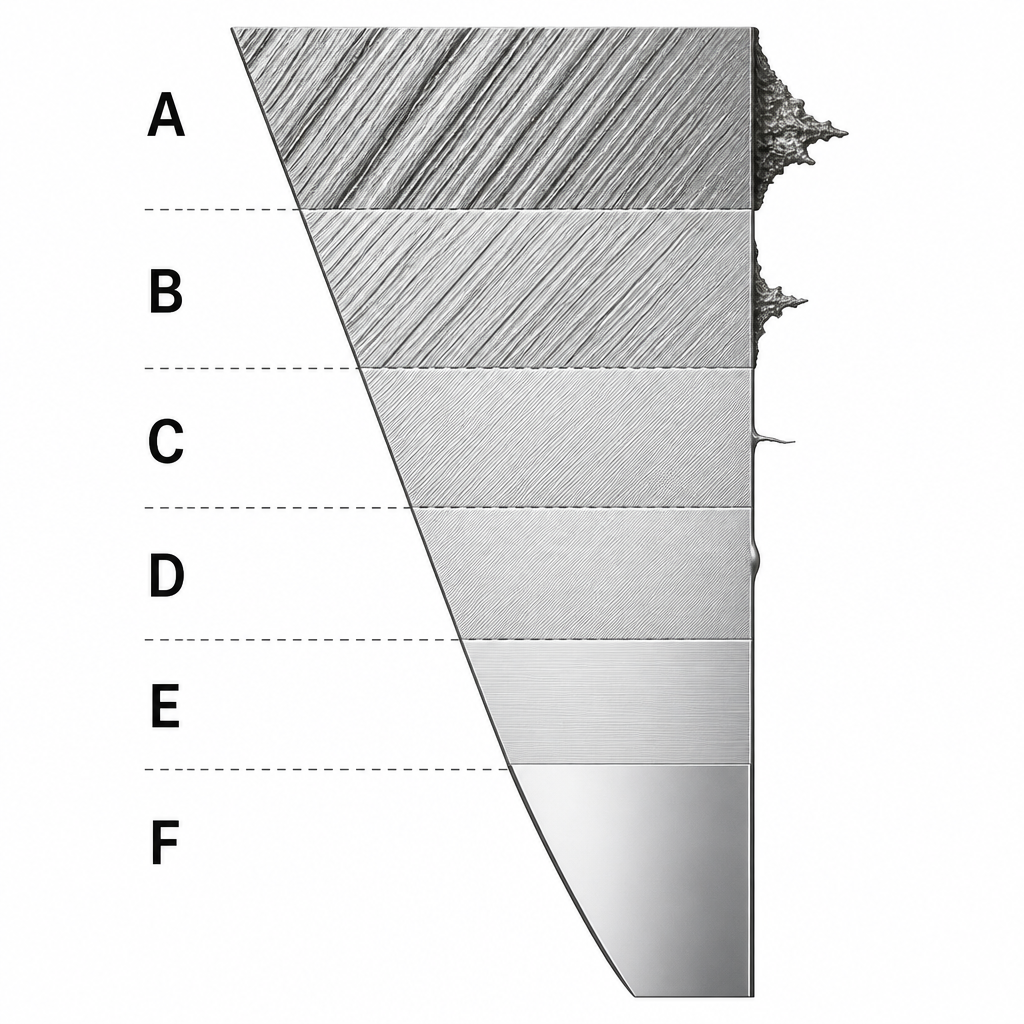}
\par\scriptsize New figure\end{minipage}\par\smallskip
\textbf{Reference evidence chain.}\par (1) [video 05:30-05:37] Recall the lecture's specific mapping of burr characteristics to grit stages: at 1200 grit, the burr 'can't be seen but it can be felt with a fingerprint'. Source quote: "opposite side, 1200 grit can't be seen but it can be felt with a fingerprint"\par (2) [reasoning] Observe the figure: Region A has a large visible burr and coarse scratches (240 grit). Region B has a smaller visible burr (400 grit). Region C has a still visible, very fine burr (800 grit). Region D has no visible burr but a slight raised bump, indicating a burr that is present but not visible (matches 1200 grit). Region E has no visible burr and no bump (2500 grit). Region F is mirror-polished (buffing compound).\par (3) [reasoning] Apply the lecture's criterion: the stage after 1200 grit must have a burr that is invisible but tactile. Only Region D satisfies this condition (no visible burr but a bump representing the tactile burr).
\end{tcolorbox}
\caption{\textbf{Case study analysis example.} A criterion explained in the video is applied to a new blade-edge diagram to identify a sharpening stage. The task label is source-assigned.}
\label{case:bench10}\end{figure}

\begin{figure}[!htbp]\centering
\begin{tcolorbox}[casecard,title={\emph{Reasoning beyond video} / Comparative Reasoning}]
\textbf{Question.} Based on the treatment protocol demonstrated in the video, which of the following wound configurations would the veterinarian DEFINITELY administer a penicillin injection to?
\par\smallskip
\textbf{Reference answer.} E) Configuration B: Deep neck wound extending into oral cavity
\par\smallskip\noindent \begin{minipage}[t]{0.48\linewidth}\centering
\includegraphics[width=\linewidth,height=2.5cm,keepaspectratio]{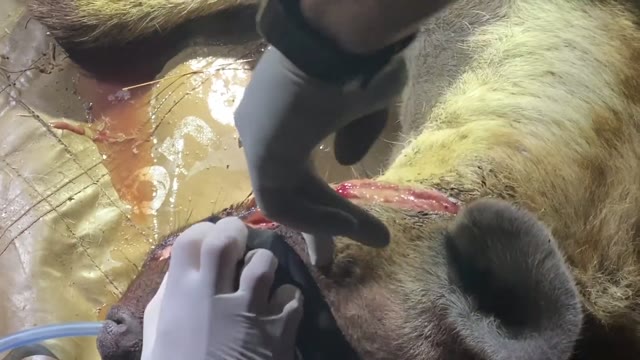}
\par\scriptsize Source video, 04:15\end{minipage}\hfill \begin{minipage}[t]{0.48\linewidth}\centering
\includegraphics[width=\linewidth,height=3.4cm,keepaspectratio]{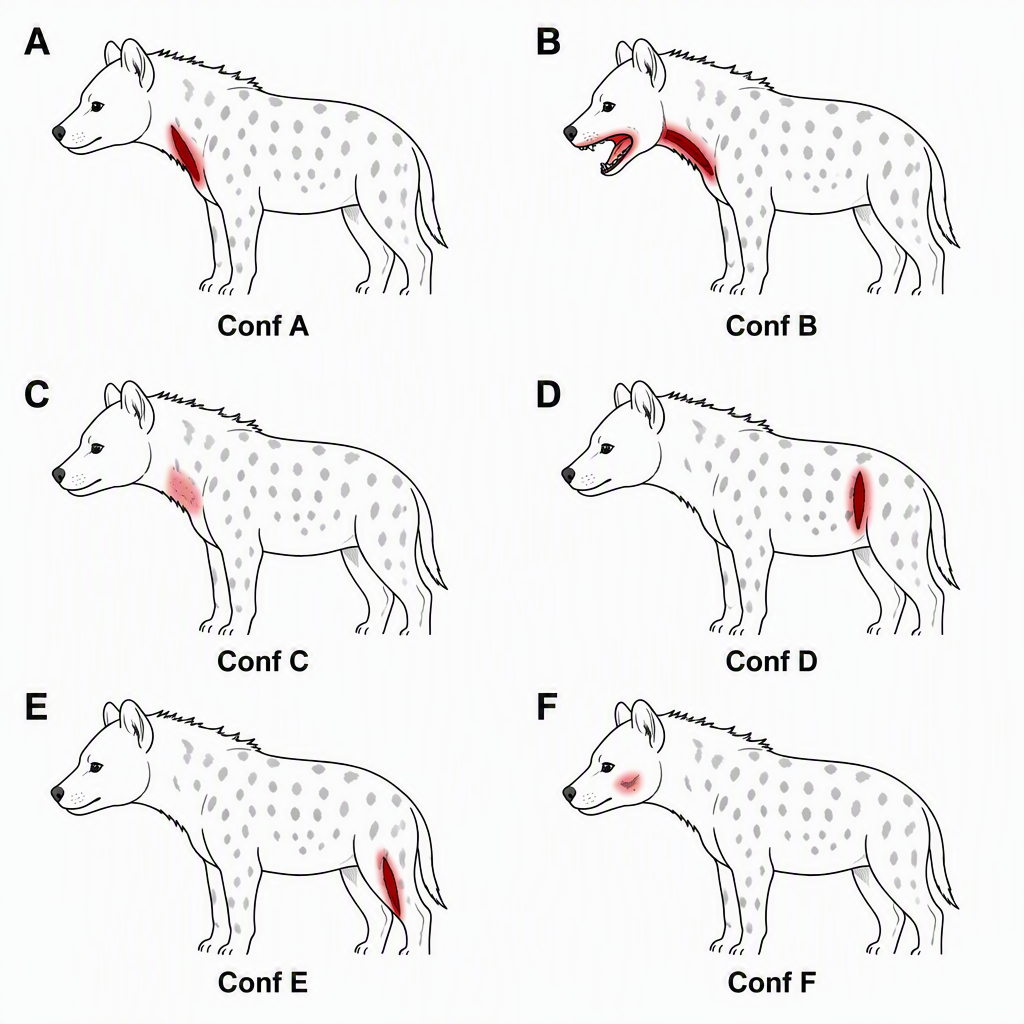}
\par\scriptsize New figure\end{minipage}\par\smallskip
\textbf{Reference evidence chain.}\par (1) [video 04:09-04:22] Recall the video's decision rule: the veterinarian only gives a penicillin injection if the wound involves the mouth (oral cavity). The vet said: 'because it's in the mouth, this here bothers me, by the teeth and that. I'll definitely actually give him a penicillin.' Source quote: "But you know what, no. You know what, because it's in the mouth, this here bothers me, by the teeth and that. I'll definitely actually give him a penicillin."\par (2) [reasoning] Examine the figure's six configurations. Identify which one explicitly shows the wound extending into the oral cavity. Configuration B is the only one with the wound shaded into the mouth area.\par (3) [reasoning] Apply the rule: only Configuration B satisfies the criterion (mouth involvement). Therefore, the veterinarian would definitely administer penicillin to Configuration B.
\end{tcolorbox}
\caption{\textbf{Comparative reasoning example.} The treatment criterion stated in the video is used to compare wound configurations in a new figure. The task label is source-assigned.}
\label{case:bench11}\end{figure}

\begin{figure}[!htbp]\centering
\begin{tcolorbox}[casecard,title={\emph{Reasoning beyond video} / Cross-Scenario Transfer}]
\textbf{Question.} The figure shows the catch log for Team Alpha during a session of the YPC Bass 2023 tournament. Using the tournament's scoring rules as established in the video, calculate the team's total score for the leaderboard. Which of the following is closest to the correct total (in kilograms)?
\par\smallskip
\textbf{Reference answer.} 16.24 kg
\par\smallskip\noindent \begin{minipage}[t]{0.48\linewidth}\centering
\includegraphics[width=\linewidth,height=2.5cm,keepaspectratio]{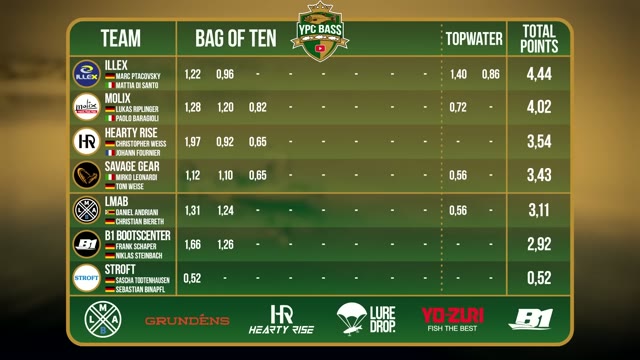}
\par\scriptsize Source video, 01:01\end{minipage}\hfill \begin{minipage}[t]{0.48\linewidth}\centering
\includegraphics[width=\linewidth,height=3.4cm,keepaspectratio]{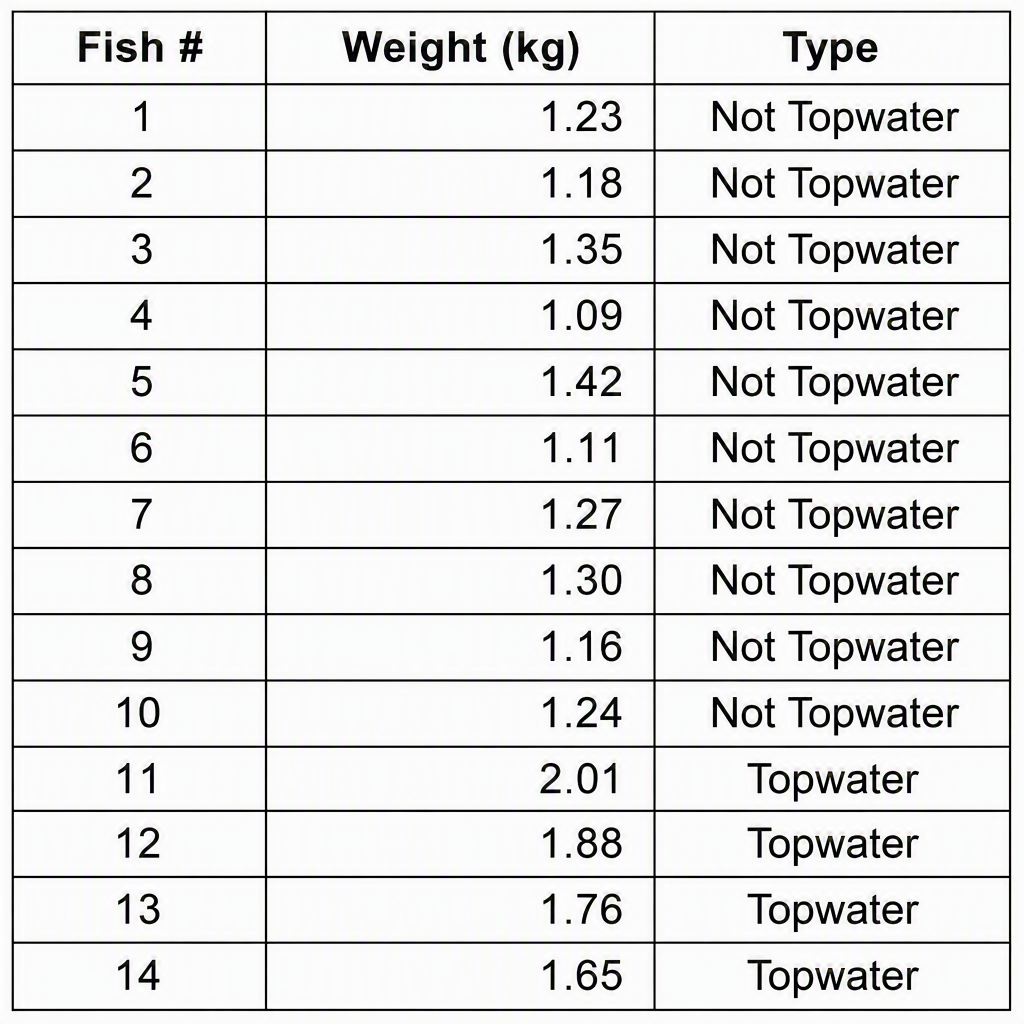}
\par\scriptsize New figure\end{minipage}\par\smallskip
\textbf{Reference evidence chain.}\par (1) [video 00:48-01:15] Identify the tournament's scoring rule: total score = weight of the ten heaviest non-topwater fish (bag of ten) + weight of the two heaviest topwater fish (topwater category). This is based on the leaderboard showing separate columns and the narrator's statement that each team needs two topwater fish. Source quote: "Team Illex have already caught the two top water fish they need and lead the table with a total of 4.44 kilos."\par (2) [reasoning] From the figure, extract the weights of the non-topwater fish (rows 1-10): 1.23, 1.18, 1.35, 1.09, 1.42, 1.11, 1.27, 1.30, 1.16, 1.24 kg.\par (3) [reasoning] Sum the ten non-topwater fish: 1.23+1.18+1.35+1.09+1.42+1.11+1.27+1.30+1.16+1.24 = 12.35 kg.\par (4) [reasoning] Extract the weights of the topwater fish (rows 11-14): 2.01, 1.88, 1.76, 1.65 kg.\par (5) [reasoning] Identify the two heaviest topwater fish: 2.01 kg and 1.88 kg.\par (6) [reasoning] Sum the two heaviest topwater fish: 2.01+1.88 = 3.89 kg.\par (7) [reasoning] Add the two sums: 12.35 + 3.89 = 16.24 kg.
\end{tcolorbox}
\caption{\textbf{Cross-scenario transfer example.} Tournament scoring rules from the video are applied to a new catch log to calculate the team total. The task label is source-assigned.}
\label{case:bench13}\end{figure}

\begin{figure}[!htbp]\centering
\begin{tcolorbox}[casecard,title={\emph{Reasoning beyond video} / Design / Optimization}]
\textbf{Question.} For the vocal track shown in the figure, which threshold setting should the engineer choose to achieve the target reduction, given the de-esser's fixed ratio from the video?
\par\smallskip
\textbf{Reference answer.} -20 dB
\par\smallskip\noindent \begin{minipage}[t]{0.48\linewidth}\centering
\includegraphics[width=\linewidth,height=2.5cm,keepaspectratio]{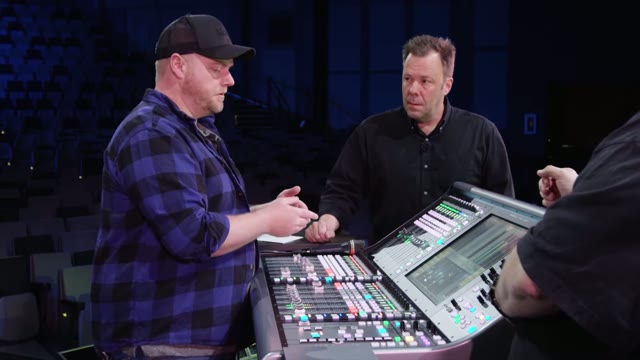}
\par\scriptsize Source video, 02:08\end{minipage}\hfill \begin{minipage}[t]{0.48\linewidth}\centering
\includegraphics[width=\linewidth,height=3.4cm,keepaspectratio]{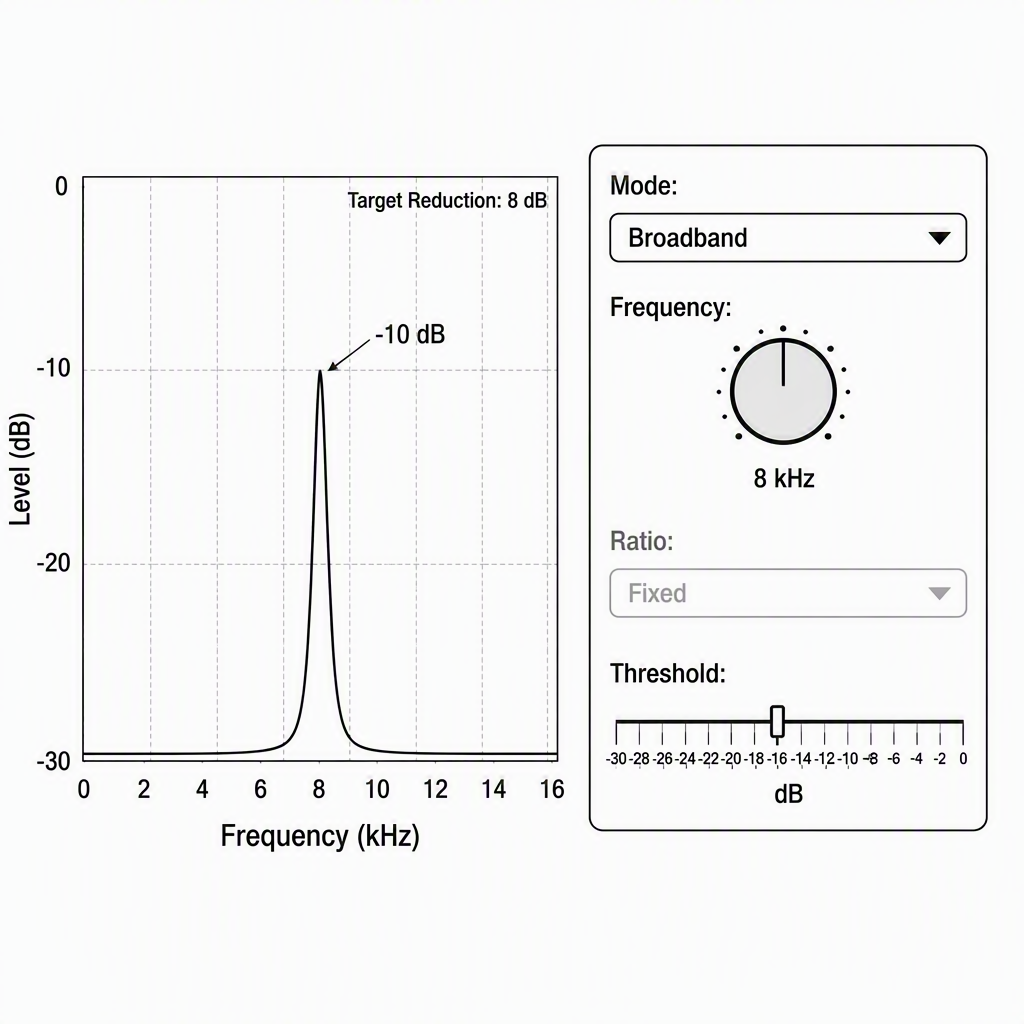}
\par\scriptsize New figure\end{minipage}\par\smallskip
\textbf{Reference evidence chain.}\par (1) [video 02:05-02:11] Recall from the video that the de-esser ratio is set to about 5:1 (i.e., for every 5 dB above threshold, 1 dB passes). Source quote: "And then there's a ratio. We talked about that with compressors, limiters. This one is set to about 5 to 1."\par (2) [reasoning] Read from the figure: the input level of the sibilant peak is -10 dB.\par (3) [reasoning] Read from the figure: the target gain reduction is 8 dB.\par (4) [reasoning] Compute the required threshold using the formula: threshold = input - (reduction / (1 - 1/ratio)). With ratio=5, 1 - 1/5 = 0.8. So threshold = -10 - (8 / 0.8) = -10 - 10 = -20 dB.
\end{tcolorbox}
\caption{\textbf{Design and optimization example.} The compression ratio stated in the video is combined with new signal values to derive a de-esser threshold. The task label is source-assigned.}
\label{case:bench14}\end{figure}

\begin{figure}[!htbp]\centering
\begin{tcolorbox}[casecard,title={\emph{Reasoning beyond video} / Error Critique}]
\textbf{Question.} A beginner wants to create an acrylic pour with cells. They mix 50 ml of vinyl glue (medium) with 100 ml of titanium white (opaque) paint. They add water until the mixture flows in a continuous stream. Then they add 3 drops of silicone oil and stir vigorously. They then pour the white paint as the first layer onto the canvas, then pour a transparent blue paint on top, and apply heat with a torch. They claim this follows the video's method.\par Which of the following is the specific flaw in the beginner's procedure, according to the method demonstrated in the video?
\par\smallskip
\textbf{Reference answer.} B) The opaque paint should be applied on top of the transparent paint, not the other way around.
\par\smallskip\noindent \begin{minipage}[t]{0.48\linewidth}\centering
\includegraphics[width=\linewidth,height=2.5cm,keepaspectratio]{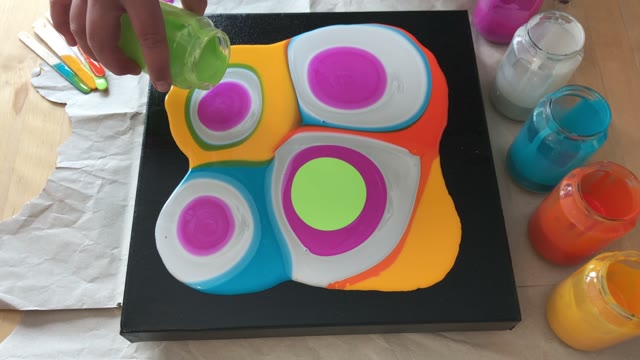}
\par\scriptsize Source video, 04:08\end{minipage}\par\smallskip
\textbf{Reference evidence chain.}\par (1) [video 04:02-04:15] Recall the video's rule: opaque paint must be placed on top of transparent paint to reliably create cells. Source quote: "[translation: Opaque paint above transparent paint reliably produces attractive cells.]"\par (2) [reasoning] Identify the scenario's layering order: the beginner pours opaque white first (bottom layer) and then transparent blue on top.\par (3) [reasoning] Compare the scenario's order to the rule: the scenario violates the rule because opaque is not on top; transparent is on top. Therefore the layering is the flaw.
\end{tcolorbox}
\caption{\textbf{Error critique example.} The layering rule demonstrated in the video is used to identify an error in a newly described painting procedure. The task label is source-assigned.}
\label{case:bench15}\end{figure}

\begin{figure}[!htbp]\centering
\begin{tcolorbox}[casecard,title={\emph{Reasoning beyond video} / Multi-step Quantitative Reasoning}]
\textbf{Question.} For the frame shown in the figure, using the item Engineering Tool as demonstrated in the video, what is the total length of Profile 8 (40 x 40) aluminum profiles required?
\par\smallskip
\textbf{Reference answer.} 2910 mm
\par\smallskip\noindent \begin{minipage}[t]{0.48\linewidth}\centering
\includegraphics[width=\linewidth,height=2.5cm,keepaspectratio]{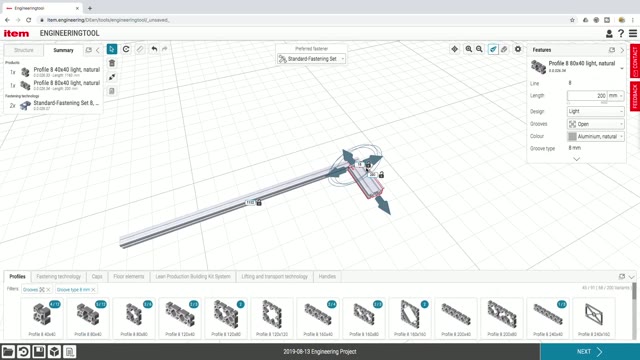}
\par\scriptsize Source video, 04:24\end{minipage}\hfill \begin{minipage}[t]{0.48\linewidth}\centering
\includegraphics[width=\linewidth,height=3.4cm,keepaspectratio]{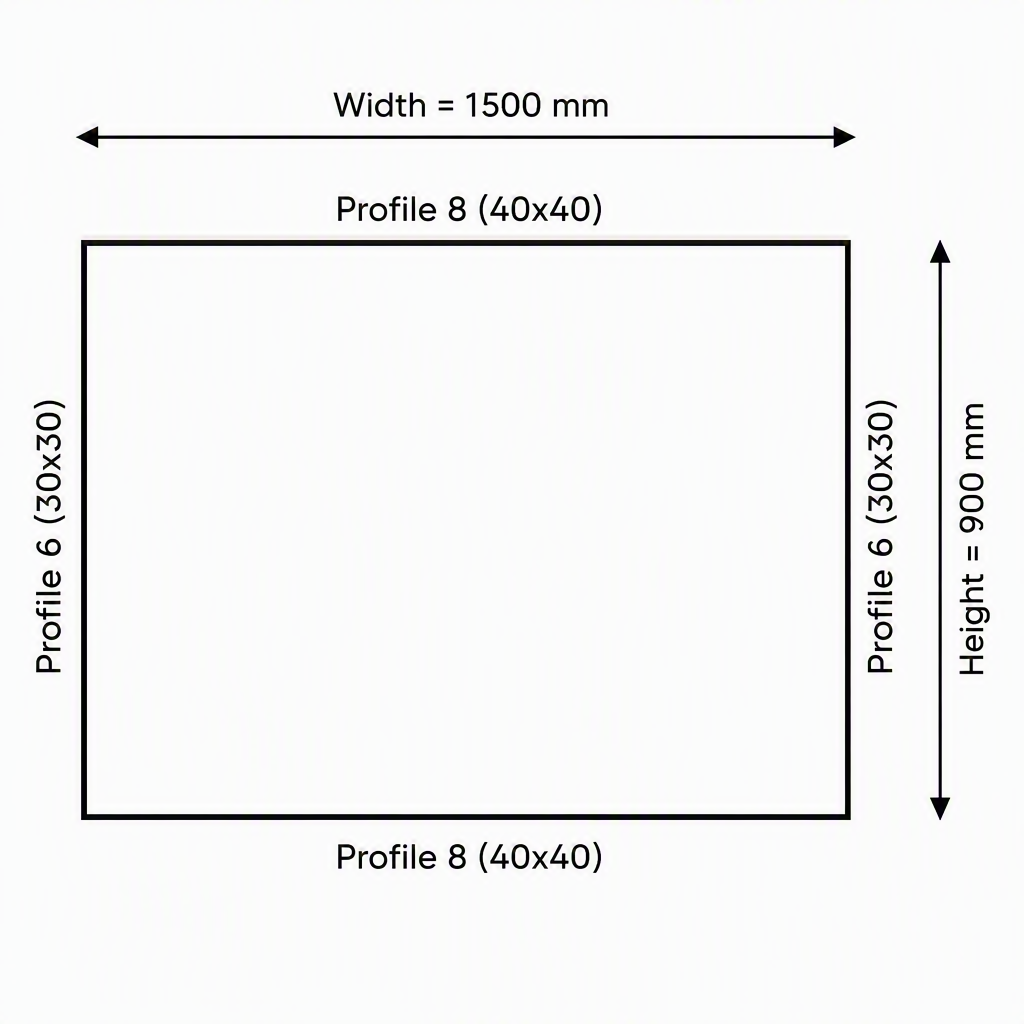}
\par\scriptsize New figure\end{minipage}\par\smallskip
\textbf{Reference evidence chain.}\par (1) [video 04:15-04:33] Recall the video's length calculation rule: the first-placed horizontal profile (top) is shortened by the width of one perpendicular profile; the second horizontal profile (bottom) is shortened by the widths of both perpendicular profiles. (Video timestamps 04:15-04:21 and 04:27-04:33) Source quote: "I'll set a length of 1200 minus 40, that is 1160. ... I'm gonna set a length of 700 minus 80, that will be 620 millimeters."\par (2) [reasoning] From the figure, read the outer width = 1500 mm and the width of the perpendicular (vertical) profiles as 30 mm (Profile 6).\par (3) [reasoning] Compute the top profile length: 1500 mm - 30 mm = 1470 mm.\par (4) [reasoning] Compute the bottom profile length: 1500 mm - 2 x 30 mm = 1500 mm - 60 mm = 1440 mm.\par (5) [reasoning] Sum the lengths of the two Profile 8 pieces: 1470 mm + 1440 mm = 2910 mm.
\end{tcolorbox}
\caption{\textbf{Multi-step quantitative reasoning example.} A construction rule from the video is applied to new frame dimensions to calculate the total profile length. The task label is source-assigned.}
\label{case:bench16}\end{figure}

\begin{figure}[!htbp]\centering
\begin{tcolorbox}[casecard,title={\emph{Reasoning beyond video} / Problem-solving Adaptation}]
\textbf{Question.} The figure shows the annual food poisoning deaths and population for five US cities, along with the total US population for reference. Using the data in the figure and the number of US food poisoning deaths mentioned in the video, calculate the ratio of the average death rate per 100,000 people across these five cities to the national average death rate per 100,000 people. Round your answer to two decimal places.
\par\smallskip
\textbf{Reference answer.} 1.1
\par\smallskip\noindent \begin{minipage}[t]{0.48\linewidth}\centering
\includegraphics[width=\linewidth,height=2.5cm,keepaspectratio]{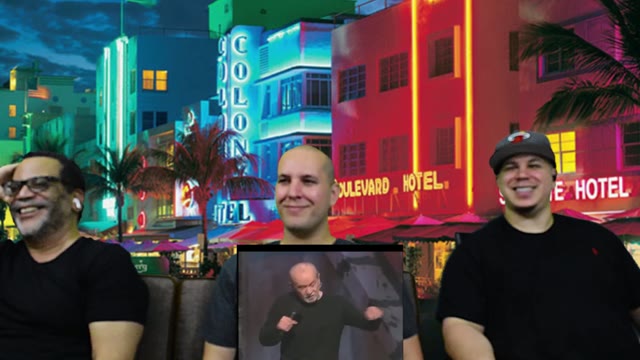}
\par\scriptsize Source video, 03:12\end{minipage}\hfill \begin{minipage}[t]{0.48\linewidth}\centering
\includegraphics[width=\linewidth,height=3.4cm,keepaspectratio]{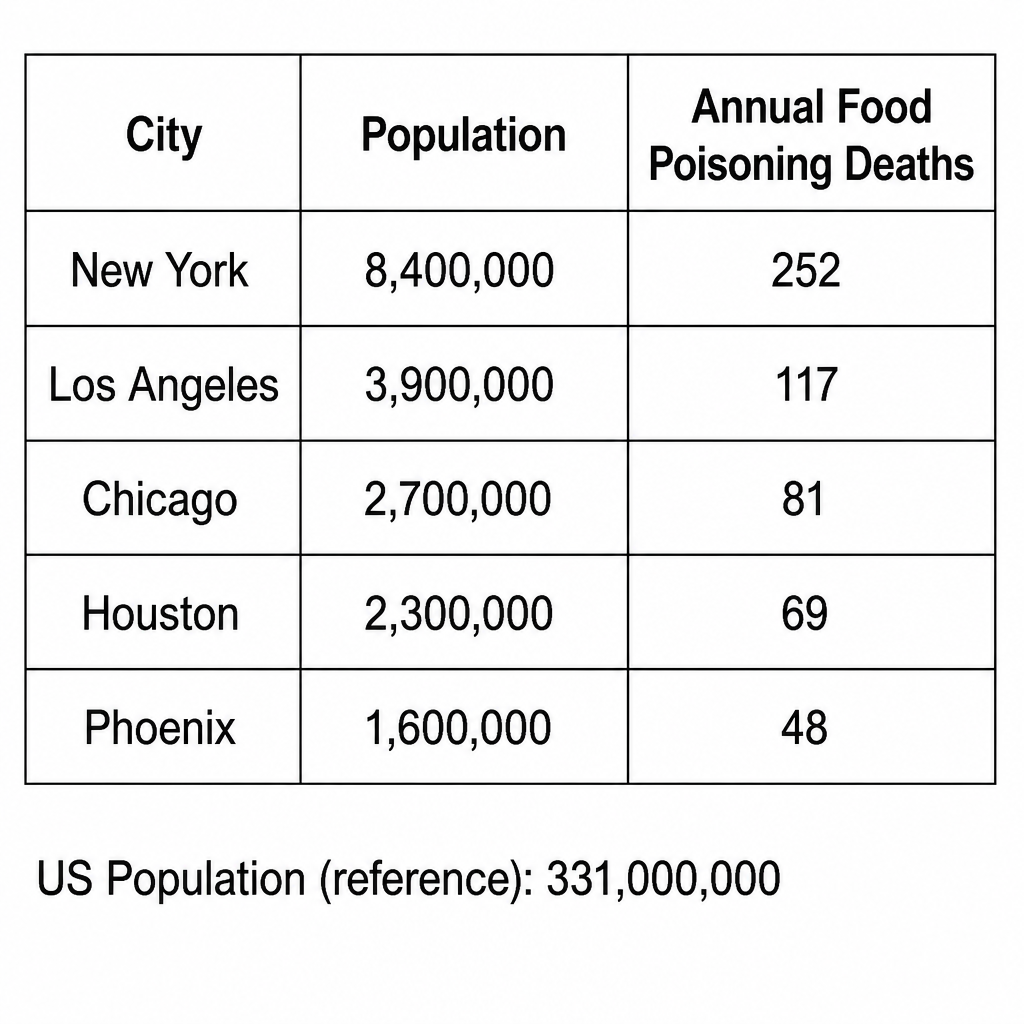}
\par\scriptsize New figure\end{minipage}\par\smallskip
\textbf{Reference evidence chain.}\par (1) [video 03:07-03:17] Recall the lecture-specific fact: the video states that 9,000 people die from food poisoning each year in the US. Source quote: "You know how many people die in this country from food poisoning every year? Nine thousand."\par (2) [reasoning] Read the US population from the figure: 331,000,000.\par (3) [reasoning] Compute the national death rate per 100,000: (9,000 / 331,000,000) * 100,000 = 2.719...\par (4) [reasoning] Read the total deaths for the five cities from the figure: 252 + 117 + 81 + 69 + 48 = 567.\par (5) [reasoning] Read the total population for the five cities from the figure: 8,400,000 + 3,900,000 + 2,700,000 + 2,300,000 + 1,600,000 = 18,900,000.\par (6) [reasoning] Compute the average city death rate per 100,000: (567 / 18,900,000) * 100,000 = 3.0.\par (7) [reasoning] Compute the ratio: 3.0 / 2.719 = 1.103, rounded to 1.10.
\end{tcolorbox}
\caption{\textbf{Problem-solving adaptation example.} A national statistic stated in the video is combined with new city-level data to compare population-normalized rates. The task label is source-assigned.}
\label{case:bench17}\end{figure}

\FloatBarrier
\clearpage
\section{Supervised Training Examples}
\label{app:sft-cases}
We show one example for each of the 25 production task types: 16 \emph{reasoning over video} and nine \emph{reasoning beyond video} types. Each card pairs the actual SFT question and answer with source frames and, where present, upstream evidence or derivation, and reproduces the complete stored synthetic response used for SFT. Each response retains its original thinking pattern, including the opening, task analysis, evidence inspection, answer verification, and final option. Boldface highlights selected evidence and conclusions without omitting intervening text. These are synthetic supervision targets, not outputs from the evaluated model. Each card marks its task label as source-assigned or retrospective; retrospective labels are inferred production-taxonomy assignments, not original source annotations.

\Needspace{12\baselineskip}
\subsection{A01 / Audio Source Grounding}
\label{case:sft0}
\begin{tcolorbox}[casecard,breakable,title={Question, evidence, and training answer}]
\textbf{Question.} Where's the character that says "For 30 years we've been trying to make a sequel and we're not going to stop." located in the video?
\par\smallskip
\textbf{Options.} A. On the left B. In the center C. On the right D. Off-screen\par\smallskip\textbf{Training answer.} D: Off-screen
\par\smallskip\noindent \begin{minipage}[t]{0.48\linewidth}\centering
\includegraphics[width=\linewidth,height=2.5cm,keepaspectratio]{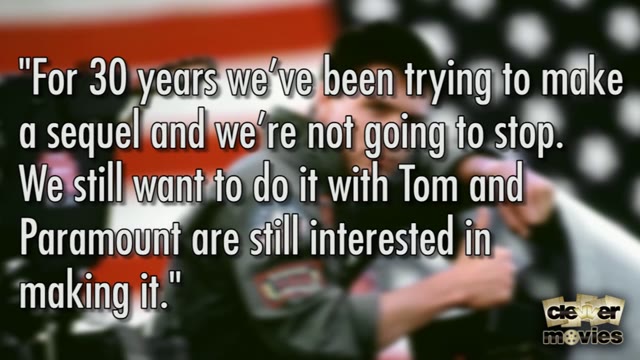}
\par\scriptsize Source video, 00:38\end{minipage}\par\smallskip
\textbf{Upstream reasoning steps.}\par 1. [audio 00:36-00:40] For 30 years we've been trying to make a sequel and we're not going to stop.\par 2. [visual 00:36-00:40] A graphic with text overlay and a blurred background image is shown; the speaker is not visible.\par 3. [inference] Although the host is seen on the left side of the screen earlier and later in the video, at the exact moment she speaks this quoted line, she is acting as a voiceover for a full-screen text graphic and is not visibly on screen.
\par\smallskip\textit{Task label: source-assigned.}
\end{tcolorbox}
\begin{tcolorbox}[casecard,breakable,title={Complete synthetic thinking and final answer},title after break={Complete synthetic thinking and final answer (continued)}]
\textless{}think\textgreater{}
Got it, let's tackle the problem.
\par\smallskip
The user wants to identify the screen location of the character who speaks the line about trying to make a sequel for thirty years. The crux is determining whether the speaker is visible on screen or speaking off-screen during this specific quote.
\par\smallskip
First, let me scan the video for the segments relevant to the question:
\par\smallskip
- I need to listen for the phrase "For 30 years we've been trying to make a sequel and we're not going to stop." and observe who is on screen at that moment.
\par\smallskip
\textbf{- At 00:03, the presenter Erin White appears on screen to introduce the segment about Jerry Bruckheimer, Top Gun 2, and The Lone Ranger.}
\par\smallskip
\textbf{- At 00:07, the presenter is positioned on the left side of the screen while a photo of Tony Scott is shown on the right.}
\par\smallskip
- At 00:28, the presenter is on the left side of the screen while a photo of Tom Cruise is shown on the right.
\par\smallskip
- At 00:35, the presenter introduces the quote by saying "saying quote".
\par\smallskip
- Wait, let me re-listen at 00:36: the presenter begins reading the quote, but she is no longer visible on screen as the video transitions to a full-screen graphic of Tom Cruise with text overlay.
\par\smallskip
- At 00:36-00:44, the on-screen text displays "For 30 years we've been trying to make a sequel and we're not going to stop. We still want to do it with Tom and Paramount are still interested in making it." while the presenter reads it.
\par\smallskip
- If the speaker were positioned on the left or center of the screen during this quote, her video feed would need to remain visible, but it is completely replaced by the graphic of Tom Cruise, meaning she is delivering the line as a voiceover.
\par\smallskip
- At 00:55, the presenter returns to the screen, positioned on the left side next to a Top Gun poster.
\par\smallskip
- The presenter is the one speaking the quote, and she is not visible on screen during its delivery, which corresponds to option D.
\par\smallskip
Next, let me analyze the options:
\par\smallskip
- A. On the left \textemdash{} The presenter is on the left during other segments but not during this quote.
\par\smallskip
- B. In the center \textemdash{} No character is in the center.
\par\smallskip
- C. On the right \textemdash{} No character is on the right.
\par\smallskip
- D. Off-screen \textemdash{} The presenter speaks the line as a voiceover while a full-screen graphic is shown.
\par\smallskip
\textbf{Finally, the correct option is D.
\textless{}/think\textgreater{}}
\par\smallskip
D
\end{tcolorbox}

\Needspace{12\baselineskip}
\subsection{A02 / Sound Event Reasoning}
\label{case:sft1}
\begin{tcolorbox}[casecard,breakable,title={Question, evidence, and training answer}]
\textbf{Question.} What extra words does the narrator say immediately after the on-screen text 'To understand how gravity affects the flow of time' appears?
\par\smallskip
\textbf{Options.} A. consider the following B. check out the other videos C. the difference in the rates D. the box is accelerating E. the rate at which time flows F. the time between the light pulses G. the gravitational field H. near a black hole\par\smallskip\textbf{Training answer.} A: consider the following
\par\smallskip\noindent \begin{minipage}[t]{0.48\linewidth}\centering
\includegraphics[width=\linewidth,height=2.5cm,keepaspectratio]{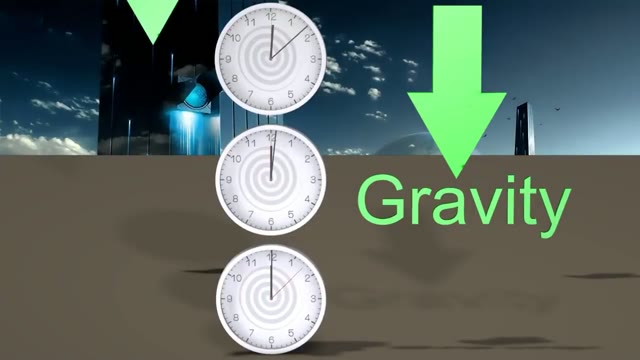}
\par\scriptsize Source video, 00:03\end{minipage}\par\smallskip
\textbf{Upstream reasoning steps.}\par 1. [visual 00:00-00:06] The on-screen text shows: 'To understand how gravity affects the flow of time'.\par 2. [audio 00:00-00:06] The narrator says: 'To understand how gravity affects the flow of time, consider the following.'\par 3. [audiovisual 00:00-00:06] Comparing the visual text and the audio speech, the audio contains the extra phrase 'consider the following' that is not present in the on-screen text.
\par\smallskip\textit{Task label: source-assigned.}
\end{tcolorbox}
\begin{tcolorbox}[casecard,breakable,title={Complete synthetic thinking and final answer},title after break={Complete synthetic thinking and final answer (continued)}]
\textless{}think\textgreater{}
Got it, let's tackle the problem.
\par\smallskip
The user wants to identify the spoken words that immediately follow a specific phrase at the beginning of the video. The crux is matching the narrator's spoken audio to the correct option among the choices provided.
\par\smallskip
First, let me scan the video for the segments relevant to the question:
\par\smallskip
- I need to listen for the phrase "To understand how gravity affects the flow of time" and watch what is on screen while it is said.
\par\smallskip
\textbf{- At 00:01, the narrator begins speaking, saying "To understand how gravity affects the flow of time" while three clocks are shown on screen.}
\par\smallskip
\textbf{- At 00:04, the narrator continues the sentence by saying "consider the following" as the green arrows pointing down remain on screen.}
\par\smallskip
- Let me verify the visual anchor: from 00:01 to 00:06 the on-screen text reads "To understand how gravity affects the flow of time", exactly matching the phrase quoted in the question, while the narrator speaks the same words aloud.
\par\smallskip
- Comparing the two, the narrator's sentence continues beyond the on-screen text: the audio adds "consider the following", which never appears on screen.
\par\smallskip
- If the narrator had said "check out the other videos" immediately after the opening line, that phrase would have to be audible in the first ten seconds, but it is only spoken at the very end of the video.
\par\smallskip
- At 10:45, the narrator says "Please check out the other videos on this channel" as the closing text appears.
\par\smallskip
- The narrator's spoken words immediately following the target phrase are "consider the following", which corresponds to option A.
\par\smallskip
Next, let me analyze the options:
\par\smallskip
- A. consider the following \textemdash{} This matches the exact words spoken by the narrator immediately after the opening phrase.
\par\smallskip
- B. check out the other videos \textemdash{} This is spoken at the very end.
\par\smallskip
- C. the difference in the rates \textemdash{} This is spoken much later.
\par\smallskip
- D. the box is accelerating \textemdash{} This is spoken during the acceleration explanation.
\par\smallskip
- E. the rate at which time flows \textemdash{} This is spoken at 04:42.
\par\smallskip
- F. the time between the light pulses \textemdash{} This is spoken at 02:59.
\par\smallskip
- G. the gravitational field \textemdash{} This is spoken during the gravity explanation.
\par\smallskip
- H. near a black hole \textemdash{} This is spoken at 10:15.
\par\smallskip
\textbf{Finally, the correct option is A.
\textless{}/think\textgreater{}}
\par\smallskip
A
\end{tcolorbox}

\Needspace{12\baselineskip}
\subsection{A03 / Music \& Atmosphere Understanding}
\label{case:sft2}
\begin{tcolorbox}[casecard,breakable,title={Question, evidence, and training answer}]
\textbf{Question.} What type of background audio is heard while the file explorer windows showing files being copied to a USB drive are visible?
\par\smallskip
\textbf{Options.} A. Upbeat background music from outro B. Upbeat electronic music C. Silence D. Background audio from Sling app E. Upbeat background music from TV show clips F. Faint background audio from video and app navigation G. Light pop sound effect with upbeat background tune H. Faint ambient room noise\par\smallskip\textbf{Training answer.} B: Upbeat electronic music
\par\smallskip\noindent \begin{minipage}[t]{0.48\linewidth}\centering
\includegraphics[width=\linewidth,height=2.5cm,keepaspectratio]{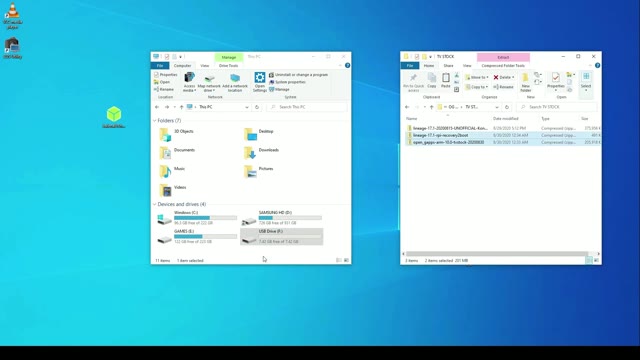}
\par\scriptsize Source video, 02:11\end{minipage}\par\smallskip
\textbf{Upstream reasoning steps.}\par 1. [visual 02:00-02:22] The screen shows file explorer windows with a progress dialog indicating files being copied to a USB drive.\par 2. [audio 02:13-02:22] Upbeat electronic music is playing during this time interval, which overlaps with the file copying visual.\par 3. [audiovisual 02:13-02:22 (audio) / 02:00-02:22 (visual)] Therefore, the background audio while the file explorer windows are visible is upbeat electronic music.
\par\smallskip\textit{Task label: source-assigned.}
\end{tcolorbox}
\begin{tcolorbox}[casecard,breakable,title={Complete synthetic thinking and final answer},title after break={Complete synthetic thinking and final answer (continued)}]
\textless{}think\textgreater{}
Got it, let's tackle the problem.
\par\smallskip
The user wants to identify the type of background audio playing while the file explorer windows show files being copied to a USB drive. The crux is determining the genre or style of the background music during this specific visual sequence.
\par\smallskip
First, let me scan the video for the segments relevant to the question:
\par\smallskip
- I need to find the segment where the file explorer windows are visible showing files being copied to a USB drive.
\par\smallskip
\textbf{- At 01:27, an upbeat electronic music track begins playing in the background as the narrator discusses downloading files.}
\par\smallskip
\textbf{- At 02:00, the screen displays the Windows desktop with two file explorer windows open, and the narrator says, "While Lineage OS is being flashed to the SD card, go ahead and copy open gaps and recovery 2 boot..."}
\par\smallskip
- At 02:14, a file copying progress dialog box appears on screen showing the files being copied to the USB drive while the upbeat electronic music continues to play clearly.
\par\smallskip
- If the background audio were silent or consisted only of ambient room noise, there would be no musical melody playing behind the narrator's voice from 02:00 to 02:21, but a distinct electronic synth track is clearly audible.
\par\smallskip
- At 02:21, the file copying sequence ends and the video transitions to a new scene.
\par\smallskip
- The background track playing during this entire file-copying sequence is a light, upbeat electronic tune, which corresponds to option B.
\par\smallskip
Next, let me analyze the options:
\par\smallskip
- A. Upbeat background music from outro \textemdash{} This music plays during the middle, not the outro.
\par\smallskip
- B. Upbeat electronic music \textemdash{} This matches the synth-pop electronic track playing in the background of the file copying segment.
\par\smallskip
- C. Silence \textemdash{} There is clear background music playing here.
\par\smallskip
- D. Background audio from Sling app \textemdash{} The Sling app is not open during this segment.
\par\smallskip
- E. Upbeat background music from TV show clips \textemdash{} No TV show clips are playing during this demonstration.
\par\smallskip
- F. Faint background audio from video and app navigation \textemdash{} The audio is a prominent music track, not navigation.
\par\smallskip
- G. Light pop sound effect with upbeat background tune \textemdash{} No pop sound effects accompany the file copying action.
\par\smallskip
- H. Faint ambient room noise \textemdash{} The audio is dominated by music, not room noise.
\par\smallskip
\textbf{Finally, the correct option is B.
\textless{}/think\textgreater{}}
\par\smallskip
B
\end{tcolorbox}

\Needspace{12\baselineskip}
\subsection{A04 / Cross-Modal Reference Resolution}
\label{case:sft3}
\begin{tcolorbox}[casecard,breakable,title={Question, evidence, and training answer}]
\textbf{Question.} The expert whose on-screen title is 'Head of the Asia Programme at Chatham House' makes a claim about the size of the detonation. Which of the following statements did he utter?
\par\smallskip
\textbf{Options.} A. Given the advanced state of North Korean ballistic missile, intermediate-range ballistic missile technology, it's only a matter of time if they have developed a workable hydrogen warhead, before they miniaturize it. B. The H-bomb uses a smaller atomic bomb to trigger a larger thermonuclear explosion. C. The last time that country tested a nuclear weapon was back in 2013. D. Seoul, Tokyo, and Beijing are all in fairly easy range. E. The reported size of the detonation is relatively modest, comparable to the test that North Korea carried out in 2013. F. Experts estimate that that weapon had a yield of about seven kilotons. G. The explosion creates huge amounts of energy, triggering a seismic reaction. H. The DPRK's longer-range missiles could potentially hit much of India and Russia and even parts of the U.S. state of Alaska.\par\smallskip\textbf{Training answer.} E: The reported size of the detonation is relatively modest, comparable to the test that North Korea carried out in 2013.
\par\smallskip\noindent \begin{minipage}[t]{0.48\linewidth}\centering
\includegraphics[width=\linewidth,height=2.5cm,keepaspectratio]{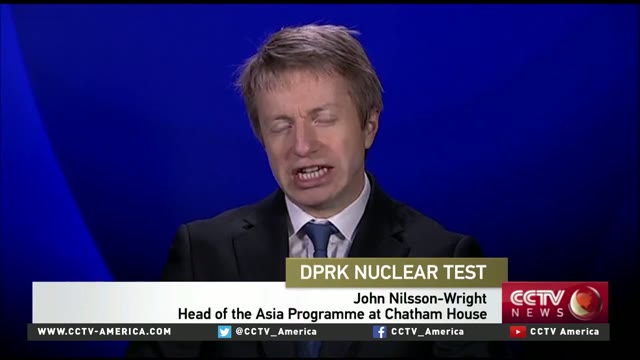}
\par\scriptsize Source video, 01:27\end{minipage}\par\smallskip
\textbf{Upstream reasoning steps.}\par 1. [visual 01:19-01:35] The visual banner displays the name 'John Nilsson-Wright' and the title 'Head of the Asia Programme at Chatham House'.\par 2. [audio 01:19-01:27] An audio statement is heard: 'The reported size of the detonation is relatively modest, comparable to the test that North Korea carried out in 2013.'\par 3. [audiovisual 01:19-01:27] The audio statement is uttered by the same person whose visual title is shown; therefore the statement belongs to the expert with that title.
\par\smallskip\textit{Task label: source-assigned.}
\end{tcolorbox}
\begin{tcolorbox}[casecard,breakable,title={Complete synthetic thinking and final answer},title after break={Complete synthetic thinking and final answer (continued)}]
\textless{}think\textgreater{}
Got it, let's tackle the problem.
\par\smallskip
The user wants to identify the statement made by the expert with the on-screen title "Head of the Asia Programme at Chatham House" regarding the size of the detonation. The crux is matching the expert's spoken words to one of the given options.
\par\smallskip
First, let me scan the video for the segments relevant to the question:
\par\smallskip
- I need to find the expert with the on-screen title "Head of the Asia Programme at Chatham House" and listen to his statement about the detonation.
- At 01:19, a man appears on screen with the title "Head of the Asia Programme at Chatham House" and the name "John Nilsson-Wright".
- At 01:20, he says "The reported size of the detonation is relatively modest, it's comparable to the test that North Korea carried out in 2013".
- If the expert's statement were about ballistic missile technology, the speaker would have to be Justin Bronk who appears at 02:01 with the title "Military Security Analyst", so the statement is the one about the modest size of the detonation.
- The expert's spoken words match the statement about the modest size of the detonation, which is option E.
\par\smallskip
Next, let me analyze the options:
\par\smallskip
- A. Given the advanced state of North Korean ballistic missile, intermediate-range ballistic missile technology, it's only a matter of time if they have developed a workable hydrogen warhead, before they miniaturize it. \textemdash{} spoken by Justin Bronk.
- B. The H-bomb uses a smaller atomic bomb to trigger a larger thermonuclear explosion. \textemdash{} spoken by the narrator.
- C. The last time that country tested a nuclear weapon was back in 2013. \textemdash{} spoken by the anchor.
- D. Seoul, Tokyo, and Beijing are all in fairly easy range. \textemdash{} spoken by the narrator.
- E. The reported size of the detonation is relatively modest, comparable to the test that North Korea carried out in 2013. \textemdash{} this matches the exact words spoken by John Nilsson-Wright.
- F. Experts estimate that that weapon had a yield of about seven kilotons. \textemdash{} spoken by the anchor.
- G. The explosion creates huge amounts of energy, triggering a seismic reaction. \textemdash{} spoken by the narrator.
- H. The DPRK's longer-range missiles could potentially hit much of India and Russia and even parts of the U.S. state of Alaska. \textemdash{} spoken by the narrator.
\par\smallskip
\textbf{Finally, the correct option is E.
\textless{}/think\textgreater{}}
\par\smallskip
E
\end{tcolorbox}

\Needspace{12\baselineskip}
\subsection{A05 / Audio-Anchored Perception}
\label{case:sft4}
\begin{tcolorbox}[casecard,breakable,title={Question, evidence, and training answer}]
\textbf{Question.} When the speaker said 'source to sink', what is the model of the machine pointed at by the person who has the same shirt color as the man sitting across from the woman with an arm cast in the meeting room?
\par\smallskip
\textbf{Options.} A. EOS M 400. B. EOS M 290. C. EOS M 300. D. EOS M 100.\par\smallskip\textbf{Training answer.} A: EOS M 400.
\par\smallskip\noindent \begin{minipage}[t]{0.48\linewidth}\centering
\includegraphics[width=\linewidth,height=2.5cm,keepaspectratio]{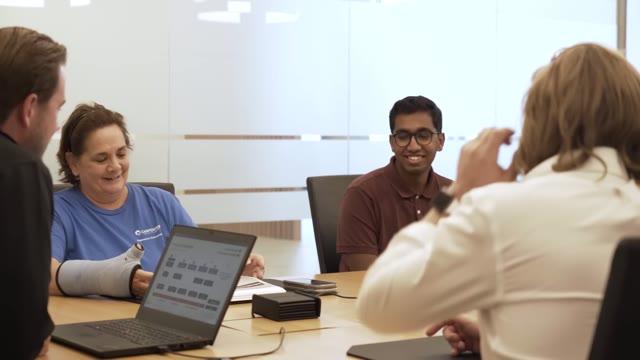}
\par\scriptsize Source video, 14:33\end{minipage}\hfill \begin{minipage}[t]{0.48\linewidth}\centering
\includegraphics[width=\linewidth,height=2.5cm,keepaspectratio]{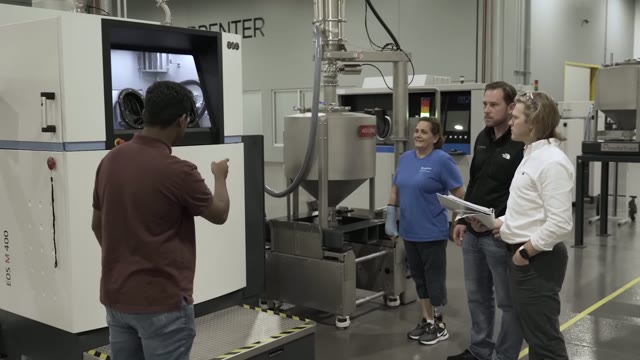}
\par\scriptsize Source video, 14:51\end{minipage}\par\smallskip
\textbf{Upstream reasoning steps.}\par 1. [audio 00:14:49-00:14:52] we ask them about okay source to sink\par 2. [visual 00:14:31-00:14:35] The man sitting across from the woman with an arm cast in the meeting room is wearing a maroon shirt.\par 3. [visual 00:14:50-00:14:52] The person wearing a maroon shirt points at a machine labeled 'EOS M 400'.\par 4. [inference] At 00:14:50, the speaker says 'source to sink' while a man in a maroon shirt (matching the shirt color of the man sitting across from the woman with a cast in the earlier meeting room scene) points to a machine clearly labeled 'EOS M 400'.
\par\smallskip\textit{Task label: source-assigned.}
\end{tcolorbox}
\begin{tcolorbox}[casecard,breakable,title={Complete synthetic thinking and final answer},title after break={Complete synthetic thinking and final answer (continued)}]
\textless{}think\textgreater{}
Got it, let's tackle the problem.
\par\smallskip
The user wants to identify the model of the machine being pointed at when the speaker says "source to sink", based on the shirt color of the man sitting across from the woman with an arm cast in the meeting room. The crux is identifying the shirt color of that specific man in the meeting room and then finding the machine pointed at by the person wearing that same color on the factory floor when the target phrase is spoken.
\par\smallskip
First, let me scan the video for the segments relevant to the question:
\par\smallskip
- I need to listen for the phrase "source to sink" and watch what is on screen while it is said.
\par\smallskip
\textbf{- At 14:31, the video shows a meeting room where a group of people are sitting around a table, including a woman in a blue shirt with a white cast on her arm.}
\par\smallskip
\textbf{- At 14:32, we see a man sitting across the table from the woman with the cast, and he is wearing a maroon polo shirt.}
\par\smallskip
- Wait, let me verify the shirt color of the man sitting across from the woman with the cast in the meeting room: at 14:33, a close-up shows him smiling in a dark red or maroon polo shirt, which matches the maroon polo shirt worn by the man pointing at the machine on the factory floor at 14:50.
\par\smallskip
- At 14:49, the speaker says "we ask them about okay source to sink, how do you bring in the materials" as the video cuts to a factory floor showing a man in a maroon polo shirt pointing at a large grey machine.
\par\smallskip
- At 14:50, the camera zooms in on the front panel of the grey machine being pointed at by the man in the maroon polo shirt.
\par\smallskip
- If the machine being pointed at were the EOS M 290, the label on its front panel would have to read "EOS M 290" when the man points to it at 14:50, but the text printed on the machine clearly reads "EOS M 400" \textemdash{} so it is the EOS M 400.
\par\smallskip
- The machine pointed at by the man in the maroon polo shirt is labeled "EOS M 400", which corresponds to option A.
\par\smallskip
Next, let me analyze the options:
\par\smallskip
- A. EOS M 400 \textemdash{} This matches the label on the machine pointed at by the man in the maroon polo shirt at 14:50.
\par\smallskip
- B. EOS M 290 \textemdash{} This model is mentioned earlier at 05:41, not pointed at here.
\par\smallskip
- C. EOS M 300 \textemdash{} This model is not shown in the video.
\par\smallskip
- D. EOS M 100 \textemdash{} This model is not shown in the video.
\par\smallskip
\textbf{Finally, the correct option is A.
\textless{}/think\textgreater{}}
\par\smallskip
A
\end{tcolorbox}

\Needspace{12\baselineskip}
\subsection{A06 / Counting}
\label{case:sft5}
\begin{tcolorbox}[casecard,breakable,title={Question, evidence, and training answer}]
\textbf{Question.} In the segment about the filter that uses a dot indicator on its strip technology to signal when it needs to be changed, how many text boxes appear in the visual list of its features?
\par\smallskip
\textbf{Options.} A. 15 B. 12 C. 6 D. 4 E. 3 F. 9 G. 0 H. 7\par\smallskip\textbf{Training answer.} F: 9
\par\smallskip\noindent \begin{minipage}[t]{0.48\linewidth}\centering
\includegraphics[width=\linewidth,height=2.5cm,keepaspectratio]{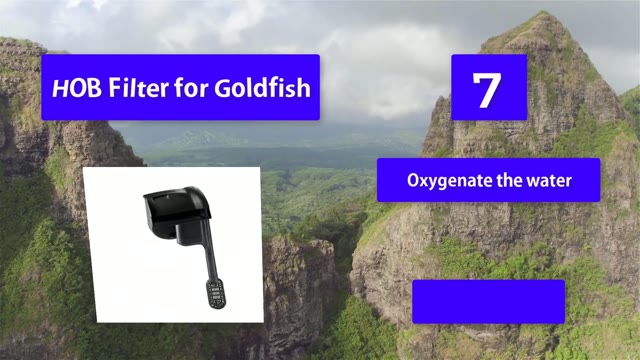}
\par\scriptsize Source video, 01:06\end{minipage}\par\smallskip
\textbf{Upstream reasoning steps.}\par 1. [audio 00:53-01:02] The narrator says 'Dot time strip technology' which uniquely identifies the goldfish filter segment.\par 2. [visual 01:02-01:11] The visual list for the goldfish filter segment contains 9 text boxes (three sets of three, with the final set appearing at this timestamp).
\par\smallskip\textit{Task label: source-assigned.}
\end{tcolorbox}
\begin{tcolorbox}[casecard,breakable,title={Complete synthetic thinking and final answer},title after break={Complete synthetic thinking and final answer (continued)}]
\textless{}think\textgreater{}
Got it, let's tackle the problem.
\par\smallskip
The user wants to know the number of text boxes that appear in the visual list of features for the filter that uses a dot indicator on its strip technology to signal when it needs to be changed. The crux is identifying this specific filter in the video and counting the text boxes that appear on screen during its segment.
\par\smallskip
First, let me scan the video for the segments relevant to the question:
\par\smallskip
- I need to find the segment for the filter that features strip technology to indicate when it needs to be changed and count its feature text boxes.
\par\smallskip
\textbf{- At 00:39, the segment for "HOB Filter for Goldfish" begins, showing the filter on the left.}
\par\smallskip
\textbf{- At 00:44, the first text box appears on screen with the text "Timestrip technology tells".}
\par\smallskip
- At 00:47, the second text box appears with the text "The filter needs to be changed".
\par\smallskip
- At 00:49, the third text box appears with the text "Easy access door".
\par\smallskip
- At 00:54, the fourth text box appears with the text "Makes changing the filter".
\par\smallskip
- At 00:57, the fifth text box appears with the text "Easy setup for beginners".
\par\smallskip
- At 00:59, the sixth text box appears with the text "Outflow is powerful enough".
\par\smallskip
- At 01:05, the seventh text box appears with the text "Oxygenate the water".
\par\smallskip
- At 01:07, the eighth text box appears with the text "Internal motor minimized noise".
\par\smallskip
- At 01:09, the ninth text box appears with the text "Rate of 110 gallons per hour".
\par\smallskip
- If there were only seven text boxes in this list, the count would stop at the box for "Oxygenate the water", but two more boxes appear after it to complete the list of nine features.
\par\smallskip
- The total number of text boxes that appear in this visual list is nine, which corresponds to option F.
\par\smallskip
Next, let me analyze the options:
\par\smallskip
- A. 15 \textemdash{} this is incorrect as only nine boxes appear.
\par\smallskip
- B. 12 \textemdash{} this is incorrect as only nine boxes appear.
\par\smallskip
- C. 6 \textemdash{} this is incorrect as only nine boxes appear.
\par\smallskip
- D. 4 \textemdash{} this is incorrect as only nine boxes appear.
\par\smallskip
- E. 3 \textemdash{} this is incorrect as only nine boxes appear.
\par\smallskip
- F. 9 \textemdash{} exactly nine text boxes are displayed in the goldfish filter segment.
\par\smallskip
- G. 0 \textemdash{} this is incorrect as only nine boxes appear.
\par\smallskip
- H. 7 \textemdash{} this is incorrect as only nine boxes appear.
\par\smallskip
\textbf{Finally, the correct option is F.
\textless{}/think\textgreater{}}
\par\smallskip
F
\end{tcolorbox}

\Needspace{12\baselineskip}
\subsection{A07 / Temporal Reasoning \& Ordering}
\label{case:sft6}
\begin{tcolorbox}[casecard,breakable,title={Question, evidence, and training answer}]
\textbf{Question.} At the moment the shirtless Maori warrior with facial tattoos and a red headband performing a welcoming challenge with a wooden weapon first appeared, which specific audio event was synchronized with it?
\par\smallskip
\textbf{Options.} A. A single warrior shouting a traditional challenge B. A group of men chanting rhythmically during an indoor performance C. The rhythmic percussive sound of poi balls accompanied by singing D. The deep sound of a traditional horn being blown\par\smallskip\textbf{Training answer.} A: A single warrior shouting a traditional challenge
\par\smallskip\noindent \begin{minipage}[t]{0.48\linewidth}\centering
\includegraphics[width=\linewidth,height=2.5cm,keepaspectratio]{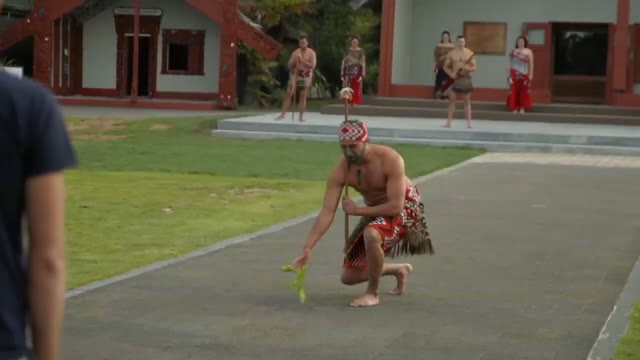}
\par\scriptsize Source video, 01:06\end{minipage}\par\smallskip
\textbf{Upstream reasoning steps.}\par 1. [visual 01:05-01:08] Shirtless Maori warrior with facial tattoos and a red headband steps forward and performs a challenge with a wooden weapon.\par 2. [audio 01:05-01:08] Loud, traditional shouting by a single warrior.\par 3. [inference] When the shirtless Maori warrior with the red headband and wooden weapon appears at 01:05, the synchronized audio is his loud, traditional shout, distinguishing it from the horn, group chant, and poi ball sounds that occur at other moments in the sequence.
\par\smallskip\textit{Task label: source-assigned.}
\end{tcolorbox}
\begin{tcolorbox}[casecard,breakable,title={Complete synthetic thinking and final answer},title after break={Complete synthetic thinking and final answer (continued)}]
\textless{}think\textgreater{}
Got it, let's tackle the problem.
\par\smallskip
The user wants to identify the specific audio event synchronized with the first appearance of a shirtless Maori warrior with facial tattoos and a red headband performing a welcoming challenge with a wooden weapon. The crux is matching the visual of this specific warrior's first on-screen appearance with the concurrent sound.
\par\smallskip
First, let me scan the video for the segments relevant to the question:
\par\smallskip
- I need to find the first appearance of the shirtless Maori warrior with facial tattoos and a red headband performing a welcoming challenge with a wooden weapon.
\par\smallskip
\textbf{- At 00:00:58, Kyle Pedley says "I've been asked to be involved in the Maori cultural welcome." as the visitors walk towards the meeting ground.}
\par\smallskip
\textbf{- At 00:01:01, Kyle Pedley says "It sounds both terrifying and exciting as well, so looking forward to it." while a warrior blows a traditional shell horn.}
\par\smallskip
- At 00:01:05, the shirtless Maori warrior with facial tattoos and a red and white headband first appears, stepping forward with a wooden weapon and shouting a traditional challenge.
\par\smallskip
- At 00:01:08, Brad Knopp says "We got to actually learn the Haka." as the visitors watch the performance.
\par\smallskip
- At 00:01:10, a group of men is seen chanting rhythmically during an indoor Haka performance.
\par\smallskip
- At 00:01:14, women are shown performing with poi balls accompanied by singing.
\par\smallskip
- If the synchronized audio event were a group of men chanting rhythmically, the warrior would have to be shown indoors at 00:01:10, but his first appearance is outdoors at 00:01:05 \textemdash{} so it is a single warrior shouting.
\par\smallskip
- The first appearance of the described warrior at 00:01:05 is synchronized with a single warrior shouting a traditional challenge, which corresponds to option A.
\par\smallskip
Next, let me analyze the options:
\par\smallskip
- A. A single warrior shouting a traditional challenge \textemdash{} This matches the audio heard at 00:01:05 when the specific warrior first appears.
\par\smallskip
- B. A group of men chanting rhythmically during an indoor performance \textemdash{} This occurs later at 00:01:10 during the indoor Haka.
\par\smallskip
- C. The rhythmic percussive sound of poi balls accompanied by singing \textemdash{} This is heard at 00:01:14.
\par\smallskip
- D. The deep sound of a traditional horn being blown \textemdash{} This is heard earlier at 00:01:01.
\par\smallskip
\textbf{Finally, the correct option is A.
\textless{}/think\textgreater{}}
\par\smallskip
A
\end{tcolorbox}

\Needspace{12\baselineskip}
\subsection{A08 / Spatial Reasoning}
\label{case:sft7}
\begin{tcolorbox}[casecard,breakable,title={Question, evidence, and training answer}]
\textbf{Question.} What is the position of the blue and pink motherboard ports relative to the CPU fan header on the screen when the narrator says, 'This is where you will plug in the four-pin connector for the CPU fan'?
\par\smallskip
\textbf{Options.} A. On the left. B. On the right. C. Directly above. D. Directly below.\par\smallskip\textbf{Training answer.} B: On the right.
\par\smallskip\noindent \begin{minipage}[t]{0.48\linewidth}\centering
\includegraphics[width=\linewidth,height=2.5cm,keepaspectratio]{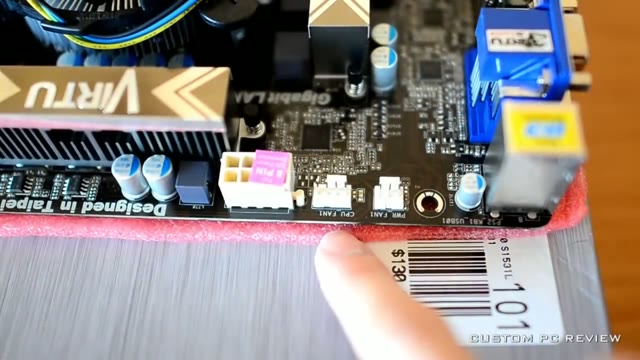}
\par\scriptsize Source video, 02:33\end{minipage}\par\smallskip
\textbf{Upstream reasoning steps.}\par 1. [audio 02:31-02:35] This is where you will plug in the four-pin connector for the CPU fan.\par 2. [visual 02:31-02:35] The camera angle shows the blue and pink rear I/O ports positioned on the right side of the screen relative to the CPU fan header.\par 3. [inference] The audio quote identifies the specific timeframe (02:31-02:35), during which the camera angle is reversed from its usual orientation, showing the blue and pink ports on the right side of the CPU fan header on the screen.
\par\smallskip\textit{Task label: source-assigned.}
\end{tcolorbox}
\begin{tcolorbox}[casecard,breakable,title={Complete synthetic thinking and final answer},title after break={Complete synthetic thinking and final answer (continued)}]
\textless{}think\textgreater{}
Got it, let's tackle the problem.
\par\smallskip
The user wants to know the position of the blue and pink motherboard ports relative to the CPU fan header when the narrator points to it and speaks. The crux is identifying where these ports are located on the screen in the close-up shot of the CPU fan header.
\par\smallskip
First, let me scan the video for the segments relevant to the question:
\par\smallskip
- I need to listen for the phrase "This is where you will plug in the four-pin connector for the CPU fan" and observe the position of the blue and pink ports relative to the CPU fan header on the screen.
\par\smallskip
\textbf{- At 00:01, the narrator introduces himself saying, "Hey, what's up YouTube? Sam here with Custom PC Review", while showing the motherboard box labeled "ASRock" and "Z68 Pro3-M".}
\par\smallskip
\textbf{- At 02:26, the narrator says, "Next, find the four-pin header on your motherboard that says CPU fan", as the camera prepares to zoom in on the board.}
\par\smallskip
- At 02:31, the narrator points to the header and says, "This is where you will plug in the four-pin connector for the CPU fan", showing a close-up of the board.
\par\smallskip
- Wait, let me re-read the motherboard labels in that close-up at 02:31: the white power connector to the left of the CPU fan header is labeled "8 Pin" on its pink sticker, while the header itself is labeled "CPU\_FAN1" and the adjacent header is "PWR\_FAN1", confirming the layout.
\par\smallskip
- If the blue and pink motherboard ports were on the left, they would have to be visible next to the white power connector with the pink sticker, but that area only shows capacitors and a heatsink \textemdash{} so they are on the right.
\par\smallskip
- The blue and pink motherboard ports are located on the right side of the screen relative to the CPU fan header, which corresponds to option B.
\par\smallskip
Next, let me analyze the options:
\par\smallskip
- A. On the left. \textemdash{} Incorrect, this side shows the white power connector.
\par\smallskip
- B. On the right. \textemdash{} Correct, the blue VGA and pink audio ports are visible here.
\par\smallskip
- C. Directly above. \textemdash{} Incorrect.
\par\smallskip
- D. Directly below. \textemdash{} Incorrect.
\par\smallskip
\textbf{Finally, the correct option is B.
\textless{}/think\textgreater{}}
\par\smallskip
B
\end{tcolorbox}

\Needspace{12\baselineskip}
\subsection{A09 / Causal Reasoning}
\label{case:sft8}
\begin{tcolorbox}[casecard,breakable,title={Question, evidence, and training answer}]
\textbf{Question.} What defensive action does the trainer state he looks for before performing the move where he pivots on his inside foot and opens his outside hip?
\par\smallskip
\textbf{Options.} A. The defender switches B. The defender mirrors the ball handler C. The defender starts to sag D. The screener's defender sucks up to the frame E. The defender drops back F. The defender goes under the screen G. The defender corrals the ball handler H. The defender hedges\par\smallskip\textbf{Training answer.} D: The screener's defender sucks up to the frame
\par\smallskip\noindent \begin{minipage}[t]{0.48\linewidth}\centering
\includegraphics[width=\linewidth,height=2.5cm,keepaspectratio]{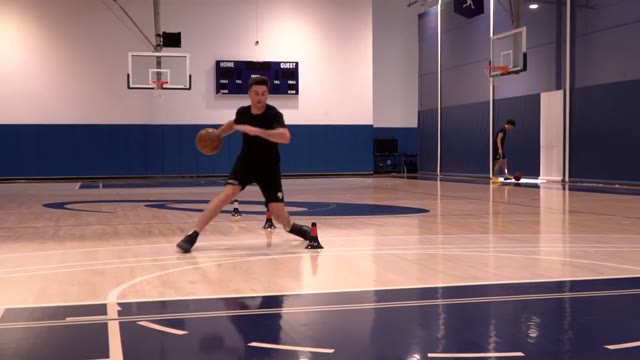}
\par\scriptsize Source video, 02:17\end{minipage}\par\smallskip
\textbf{Upstream reasoning steps.}\par 1. [visual 02:12-02:22] The trainer is seen performing a slow-motion move where he pivots on his inside foot and opens his outside hip, which is the visual demonstration of the hip rotation.\par 2. [audio 02:08-02:13] The trainer says he looks for the screener's defender sucking up to the frame before performing the hip rotation.\par 3. [audiovisual 02:08-02:13] Combining s1 and s2, the defender's action of sucking up to the frame is the cause for the demonstrated hip rotation move.
\par\smallskip\textit{Task label: source-assigned.}
\end{tcolorbox}
\begin{tcolorbox}[casecard,breakable,title={Complete synthetic thinking and final answer},title after break={Complete synthetic thinking and final answer (continued)}]
\textless{}think\textgreater{}
Got it, let's tackle the problem.
\par\smallskip
The user wants to identify the defensive action the trainer looks for before executing a move that involves pivoting on the inside foot and opening the outside hip. The crux is matching the trainer's spoken cue for this specific footwork to one of the listed defensive scenarios.
\par\smallskip
First, let me scan the video for the segments relevant to the question:
\par\smallskip
- I need to listen for the trainer's description of pivoting on the inside foot and opening the outside hip, and identify the defensive action that prompts it.
- At 00:11, the trainer introduces the drill as "snaking the pick and roll".
- At 00:45, the trainer explains that if the screener's defender starts to "sag" or put him in a "corral situation", he will execute a step-back.
- For the sag or corral reads to be correct, they would have to trigger the hip rotation, but the trainer explicitly connects them to the step-back instead.
- At 01:26, the trainer explains a change of tempo called a "lift".
- At 02:04, the trainer introduces the "hip rotation" option.
- At 02:08, the trainer states, "the read is going to be screener's defender sucking up to our frame."
- At 02:16, the trainer describes the mechanics of the hip rotation as "slightly pivoting on that inside foot, opening up that outside hip".
- At 02:25, the trainer notes that defenders have a "tendency of mirroring what the offensive player does".
- The trainer's explanation directly connects the inside foot pivot and outside hip opening to the read where the screener's defender sucks up to the frame, which is option D.
\par\smallskip
Next, let me analyze the options:
\par\smallskip
- A. The defender switches \textemdash{} This action is not mentioned.
- B. The defender mirrors the ball handler \textemdash{} This is a general defensive tendency.
- C. The defender starts to sag \textemdash{} This prompts the step-back.
- D. The screener's defender sucks up to the frame \textemdash{} This is the exact read the trainer states he looks for before performing the hip rotation with the inside foot pivot.
- E. The defender drops back \textemdash{} This is not discussed.
- F. The defender goes under the screen \textemdash{} This is not mentioned.
- G. The defender corrals the ball handler \textemdash{} This leads to the step-back.
- H. The defender hedges \textemdash{} This is not mentioned.
\par\smallskip
\textbf{Finally, the correct option is D.
\textless{}/think\textgreater{}}
\par\smallskip
D
\end{tcolorbox}

\Needspace{12\baselineskip}
\subsection{A10 / Emotion \& Sentiment Analysis}
\label{case:sft9}
\begin{tcolorbox}[casecard,breakable,title={Question, evidence, and training answer}]
\textbf{Question.} How does the emotional state of the young man wearing a black cap and dark grey t-shirt change after he comes home?
\par\smallskip
\textbf{Options.} A. From calm to fearful. B. From calm to angry. C. From fearful to angry. D. From annoyed to fearful.\par\smallskip\textbf{Training answer.} A: From calm to fearful.
\par\smallskip\noindent \begin{minipage}[t]{0.48\linewidth}\centering
\includegraphics[width=\linewidth,height=2.5cm,keepaspectratio]{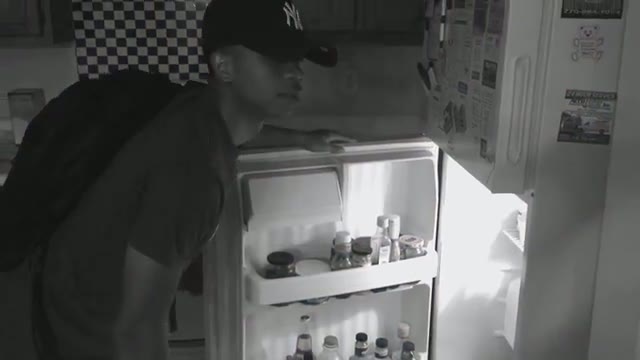}
\par\scriptsize Source video, 02:09\end{minipage}\par\smallskip
\textbf{Upstream reasoning steps.}\par 1. [audio 02:01-02:05] The young man casually and calmly says school was good and nothing out of the ordinary happened.\par 2. [visual 02:06-02:12] His mother says 'Your principal called me,' and he freezes with a wide-eyed, shocked expression.\par 3. [audio 02:16-02:21] He stammers anxiously, trying to defend himself.\par 4. [inference] The young man is initially relaxed and calm when discussing his day, but his expression and tone shift to clear fear and anxiety immediately after his mother mentions that his principal called.
\par\smallskip\textit{Task label: source-assigned.}
\end{tcolorbox}
\begin{tcolorbox}[casecard,breakable,title={Complete synthetic thinking and final answer},title after break={Complete synthetic thinking and final answer (continued)}]
\textless{}think\textgreater{}
Got it, let's tackle the problem.
\par\smallskip
The user wants to know how the emotional state of the young man wearing a black cap and dark grey t-shirt changes after he comes home in the video. The crux is identifying his initial emotional state when he enters the house and tracing how it shifts as the scene unfolds.
\par\smallskip
First, let me scan the video for the segments relevant to the question:
\par\smallskip
- I need to observe the emotional reactions of the young man in the black cap and dark grey t-shirt after he comes home.
- At 01:54, the young man opens the door, enters the house, and calmly says "Hey mom" to the person on the couch.
- At 02:00, he walks into the living room and continues speaking in a relaxed, normal tone, saying that school was "good" and "nothing out of the ordinary."
- At 02:06, the mother figure on the couch tells him "Your principal called me," which immediately changes the tone of the interaction.
- At 02:10, the young man stands by the open refrigerator, looking startled and muttering "shit" under his breath.
- At 02:13, the mother figure confronts him in the kitchen while holding a belt, demanding to know what she told him about acting up in school.
- At 02:17, the young man becomes visibly defensive and anxious, pleading "Alright mom, look. I know I got in trouble, but it wasn't me. I promise."
- If he were angry rather than fearful during this confrontation at 02:17, he would be shouting back or showing aggression, but instead he is stammering, backing away, and trying to appease his mother.
- At 02:22, the mother figure prepares the belt, and the young man's expression becomes increasingly terrified as he braces for a beating.
- The young man goes from a relaxed, normal demeanor when entering the house to a terrified, defensive state when confronted with a belt, showing a transition from calm to fearful \textemdash{} that is option A.
\par\smallskip
Next, let me analyze the options:
\par\smallskip
- A. From calm to fearful. \textemdash{} He enters the house in a relaxed state and ends up terrified when chased with a belt.
- B. From calm to angry. \textemdash{} He shows fear and defensiveness rather than anger.
- C. From fearful to angry. \textemdash{} He starts calm and does not become angry.
- D. From annoyed to fearful. \textemdash{} He is relaxed, not annoyed, when he first arrives.
\par\smallskip
\textbf{Finally, the correct option is A.
\textless{}/think\textgreater{}}
\par\smallskip
A
\end{tcolorbox}

\Needspace{12\baselineskip}
\subsection{A11 / Hypothetical \& Predictive Reasoning}
\label{case:sft10}
\begin{tcolorbox}[casecard,breakable,title={Question, evidence, and training answer}]
\textbf{Question.} If the user had not applied the pre-selection filter for ports and net labels before drawing the selection rectangle, what would most likely be the result?
\par\smallskip
\textbf{Options.} A. All objects except ports and net labels would be unselected. B. The software would highlight the filter settings. C. Other objects besides ports and net labels would also be selected. D. The selection rectangle would be disabled. E. Nothing would be selected. F. Only ports would be selected. G. Only net labels would be selected. H. Only ports and net labels would be selected.\par\smallskip\textbf{Training answer.} C: Other objects besides ports and net labels would also be selected.
\par\smallskip\noindent \begin{minipage}[t]{0.48\linewidth}\centering
\includegraphics[width=\linewidth,height=2.5cm,keepaspectratio]{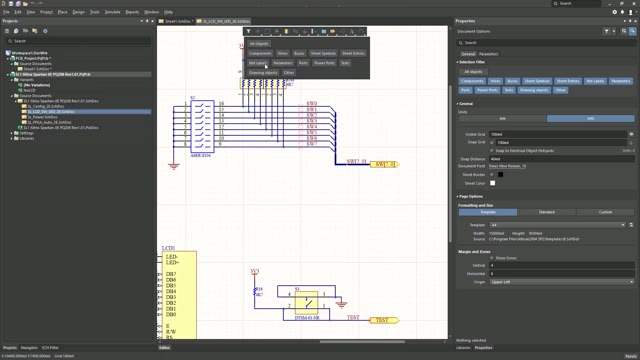}
\par\scriptsize Source video, 03:18\end{minipage}\hfill \begin{minipage}[t]{0.48\linewidth}\centering
\includegraphics[width=\linewidth,height=2.5cm,keepaspectratio]{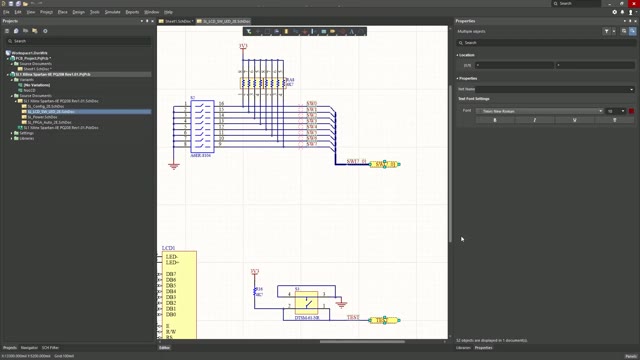}
\par\scriptsize Source video, 03:35\end{minipage}\par\smallskip
\textbf{Upstream reasoning steps.}\par 1. [visual 03:15-03:22] The user applies the pre-selection filter for ports and net labels.\par 2. [audio 03:32-03:38] The narrator states that because of the filter, only ports and net labels are selected, despite other objects being present.\par 3. [audiovisual 03:32-03:38] Infer that without the filter, the other objects in the selection area would also be selected.
\par\smallskip\textit{Task label: source-assigned.}
\end{tcolorbox}
\begin{tcolorbox}[casecard,breakable,title={Complete synthetic thinking and final answer},title after break={Complete synthetic thinking and final answer (continued)}]
\textless{}think\textgreater{}
Got it, let's tackle the problem.
\par\smallskip
The user wants to know the outcome of drawing a selection rectangle without first applying the pre-selection filter for ports and net labels. The crux is determining how the software behaves when no filter restricts the selection tool.
\par\smallskip
First, let me scan the video for the segments relevant to the question:
\par\smallskip
- I need to find the segment where the narrator explains the function of the pre-selection filter during a rectangle selection.
\par\smallskip
\textbf{- At 03:05, the narrator introduces the pre-selection filter to facilitate selecting specific schematic elements.}
\par\smallskip
\textbf{- At 03:15, the narrator applies the filter specifically for ports and net labels.}
\par\smallskip
- At 03:25, a selection rectangle is drawn over a variety of components, wires, ports, and net labels.
\par\smallskip
- At 03:32, the narrator explains that "only ports and net labels will be selected, despite other objects within the selection area, because of our pre-selection filter."
\par\smallskip
- If the filter were not applied, the standard selection rectangle would capture all objects it touches, meaning other elements like wires and components would also be selected.
\par\smallskip
- The demonstration confirms that the filter is the sole reason other objects are ignored, so omitting it would result in selecting other objects besides ports and net labels, which is option C.
\par\smallskip
Next, let me analyze the options:
\par\smallskip
- A. All objects except ports and net labels would be unselected. \textemdash{} This describes the filtered outcome, not the unfiltered one.
\par\smallskip
- B. The software would highlight the filter settings. \textemdash{} There is no indication of this behavior in the video.
\par\smallskip
- C. Other objects besides ports and net labels would also be selected. \textemdash{} Without the filter, standard rectangle selection captures all objects in its path.
\par\smallskip
- D. The selection rectangle would be disabled. \textemdash{} The tool remains fully functional without a filter.
\par\smallskip
- E. Nothing would be selected. \textemdash{} Standard selection would still function normally.
\par\smallskip
- F. Only ports would be selected. \textemdash{} This would require a port-only filter.
\par\smallskip
- G. Only net labels would be selected. \textemdash{} This would require a net-label-only filter.
\par\smallskip
- H. Only ports and net labels would be selected. \textemdash{} This is the result when the filter is active.
\par\smallskip
\textbf{Finally, the correct option is C.
\textless{}/think\textgreater{}}
\par\smallskip
C
\end{tcolorbox}

\Needspace{12\baselineskip}
\subsection{A12 / Character Relationship \& Interaction}
\label{case:sft11}
\begin{tcolorbox}[casecard,breakable,title={Question, evidence, and training answer}]
\textbf{Question.} What is the nature of the interaction between the vlogger and the statue of Don Knotts just before the vlogger stands up and walks away?
\par\smallskip
\textbf{Options.} A. The vlogger says, "Thanks for the laughs, man," and gives the statue a fist bump. B. The vlogger says, "Well, take care," and gives the statue a fist bump. C. The vlogger says, "Well, take care," and shakes the statue's hand. D. The vlogger says, "Thanks for the laughs, man," and shakes the statue's hand.\par\smallskip\textbf{Training answer.} B: The vlogger says, "Well, take care," and gives the statue a fist bump.
\par\smallskip\noindent \begin{minipage}[t]{0.48\linewidth}\centering
\includegraphics[width=\linewidth,height=2.5cm,keepaspectratio]{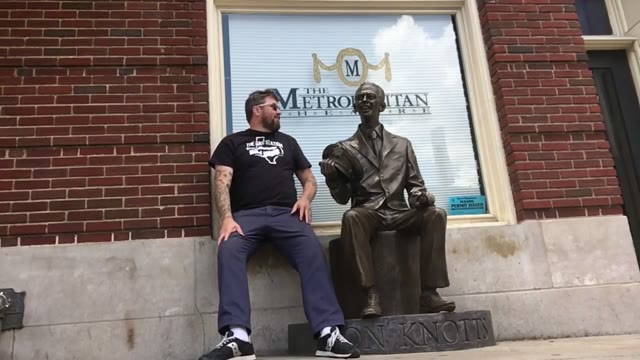}
\par\scriptsize Source video, 05:50\end{minipage}\par\smallskip
\textbf{Upstream reasoning steps.}\par 1. [audio|visual 05:45-05:55] The vlogger sits next to the statue, says "Well, take care," and gives the statue a fist bump before standing up.\par 2. [inference] The vlogger sits next to the statue, says "Well, take care," and physically gives the statue's hand a fist bump before getting up and leaving.
\par\smallskip\textit{Task label: source-assigned.}
\end{tcolorbox}
\begin{tcolorbox}[casecard,breakable,title={Complete synthetic thinking and final answer},title after break={Complete synthetic thinking and final answer (continued)}]
\textless{}think\textgreater{}
Got it, let's tackle the problem.
\par\smallskip
The user wants to know the nature of the interaction between the vlogger and the statue of Don Knotts just before the vlogger stands up and walks away. The crux is identifying the exact words spoken and the physical gesture made right before he leaves.
\par\smallskip
First, let me scan the video for the segments relevant to the question:
- I need to find the segment where the vlogger interacts with the Don Knotts statue and observe his final words and actions before walking away.
- At 03:41, the vlogger walks toward the Metropolitan Theatre where the statue of Don Knotts is seated on a bench.
- At 04:12, the camera shows a close-up of the bronze statue, depicting Don Knotts holding a script and a hat.
- At 04:55, the vlogger stands next to the statue, gesturing toward it as he explains its history.
- At 05:21, the camera pans over the details of the statue, showing the name "DON KNOTTS" on the base.
- At 05:42, the vlogger says "Thanks for the laughs, man. Thanks for the laughs." while the camera is focused closely on the statue's face.
- At 05:46, a jump cut shows the vlogger sitting on the bench directly next to the statue.
- For the final dialogue to be "Thanks for the laughs, man," he would have to say those words while sitting next to the statue at the end of the scene, but he actually says them earlier during a close-up shot of the statue's face.
- At 05:50, while still sitting on the bench, the vlogger says "Well, take care." and taps his closed hand against the statue's hand in a fist bump.
- At 05:53, the vlogger stands up from the bench and walks out of the frame.
- The vlogger says "Well, take care." and gives the statue a fist bump right before standing up, which corresponds to option B.
\par\smallskip
Next, let me analyze the options:
- A. The vlogger says, "Thanks for the laughs, man," and gives the statue a fist bump. \textemdash{} Incorrect dialogue.
- B. The vlogger says, "Well, take care," and gives the statue a fist bump. \textemdash{} The vlogger speaks these exact words, and he initiates a friendly fist bump.
- C. The vlogger says, "Well, take care," and shakes the statue's hand. \textemdash{} Incorrect action.
- D. The vlogger says, "Thanks for the laughs, man," and shakes the statue's hand. \textemdash{} Incorrect dialogue and action.
\par\smallskip
\textbf{Finally, the correct option is B.
\textless{}/think\textgreater{}}
\par\smallskip
B
\end{tcolorbox}

\Needspace{12\baselineskip}
\subsection{A13 / Event Understanding \& State Change}
\label{case:sft12}
\begin{tcolorbox}[casecard,breakable,title={Question, evidence, and training answer}]
\textbf{Question.} After the narrator announced they would begin with a simple query, what action did the user perform in the Dgraph Ratel UI?
\par\smallskip
\textbf{Options.} A. switched to the JSON view B. modified the query to use a filter C. typed a new query in the editor D. hovered over a node E. added a cascade directive F. clicked the Run button G. dragged a node H. changed the mutation data\par\smallskip\textbf{Training answer.} C: typed a new query in the editor
\par\smallskip\noindent \begin{minipage}[t]{0.48\linewidth}\centering
\includegraphics[width=\linewidth,height=2.5cm,keepaspectratio]{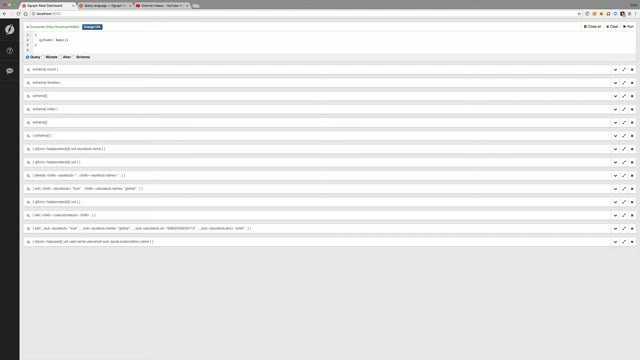}
\par\scriptsize Source video, 02:24\end{minipage}\par\smallskip
\textbf{Upstream reasoning steps.}\par 1. [audio 02:11-02:21] The narrator says 'So let's start off with something simple.'\par 2. [visual 02:11-02:37] The user types a new query into the editor, as shown in the visual caption.\par 3. [audiovisual 02:11-02:37] Combining the audio anchor and the visual action, the user's action after the announcement is typing a new query.
\par\smallskip\textit{Task label: source-assigned.}
\end{tcolorbox}
\begin{tcolorbox}[casecard,breakable,title={Complete synthetic thinking and final answer},title after break={Complete synthetic thinking and final answer (continued)}]
\textless{}think\textgreater{}
Got it, let's tackle the problem.
\par\smallskip
The user wants to know what action was performed in the Dgraph Ratel UI immediately after the narrator announced they would begin with a simple query. The crux is identifying the specific user action that followed this spoken cue.
\par\smallskip
First, let me scan the video for the segments relevant to the question:
\par\smallskip
- I need to listen for the narrator's announcement about starting with a simple query and identify the action that immediately follows it in the UI.
\par\smallskip
\textbf{- At 00:15, the user navigates to the mutate tab to enter some sample data.}
\par\smallskip
\textbf{- At 00:43, the user clicks the Run button to execute the mutation and add the data.}
\par\smallskip
- At 00:50, the narrator says, "So let's start off with a simple uh schema query..."
\par\smallskip
- At 00:58, the user types the new query "schema \{\}" directly into the query editor.
\par\smallskip
- At 01:16, the user clicks the Run button to execute the schema query.
\par\smallskip
- At 01:21, the user hovers over the resulting schema nodes to inspect their attributes.
\par\smallskip
- At 01:45, the user switches to the JSON view to examine the schema values in JSON format.
\par\smallskip
- At 02:11, the narrator says, "So let's start off with something simple... we'll query for anything that has a predicate of user."
\par\smallskip
- At 02:28, the user types the new query "q(func: has(user)) \{ uid user.name \}" into the editor.
\par\smallskip
- If the user had clicked the Run button immediately after the announcement, the query would have executed without any changes, but the user first had to input the query text \textemdash{} so the action is typing a new query.
\par\smallskip
- The immediate action following both announcements of a simple query is typing the query text into the editor, which corresponds to option C.
\par\smallskip
Next, let me analyze the options:
\par\smallskip
- A. switched to the JSON view \textemdash{} occurred much later at 01:45.
\par\smallskip
- B. modified the query to use a filter \textemdash{} performed much later.
\par\smallskip
- C. typed a new query in the editor \textemdash{} immediate action following the announcement.
\par\smallskip
- D. hovered over a node \textemdash{} done after running the query.
\par\smallskip
- E. added a cascade directive \textemdash{} done near the end.
\par\smallskip
- F. clicked the Run button \textemdash{} done only after typing.
\par\smallskip
- G. dragged a node \textemdash{} performed to organize the graph.
\par\smallskip
- H. changed the mutation data \textemdash{} performed at the beginning.
\par\smallskip
\textbf{Finally, the correct option is C.
\textless{}/think\textgreater{}}
\par\smallskip
C
\end{tcolorbox}

\Needspace{12\baselineskip}
\subsection{A14 / Cross-Scene Association \& Long-Context}
\label{case:sft13}
\begin{tcolorbox}[casecard,breakable,title={Question, evidence, and training answer}]
\textbf{Question.} The narrator mentions a prominent tech innovator during an early discussion about living in a simulation, which coincides with the drawing of a specific character type. Much later, when the narrator discusses super humans, a different version of this same character type is depicted. What distinctive physical feature does this later character possess?
\par\smallskip
\textbf{Options.} A. large white eyes and blue wings B. a skull mask and black hooded cloak C. a robotic right arm D. white feathered angel wings E. yellow hazmat suits and respirators F. a whistle around his neck G. glasses and brown hair H. blue overalls and a straw hat\par\smallskip\textbf{Training answer.} D: white feathered angel wings
\par\smallskip\noindent \begin{minipage}[t]{0.48\linewidth}\centering
\includegraphics[width=\linewidth,height=2.5cm,keepaspectratio]{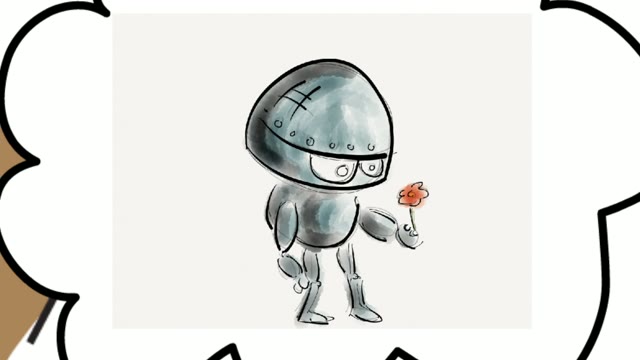}
\par\scriptsize Source video, 00:23\end{minipage}\hfill \begin{minipage}[t]{0.48\linewidth}\centering
\includegraphics[width=\linewidth,height=2.5cm,keepaspectratio]{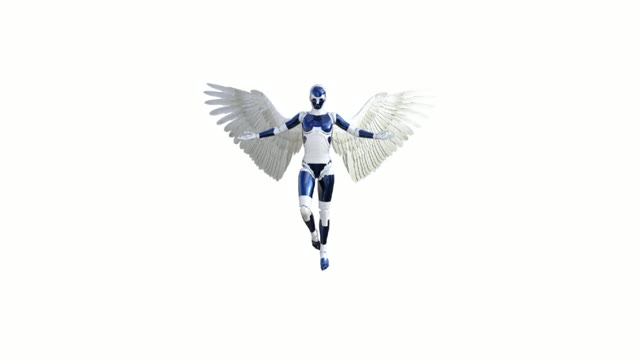}
\par\scriptsize Source video, 03:10\end{minipage}\par\smallskip
\textit{This training row has no separately recorded structured upstream evidence. The source frames illustrate its question; the complete stored synthetic response follows.}
\par\smallskip\textit{Task label: retrospective.}
\end{tcolorbox}
\begin{tcolorbox}[casecard,breakable,title={Complete synthetic thinking and final answer},title after break={Complete synthetic thinking and final answer (continued)}]
\textless{}think\textgreater{}
Got it, let's tackle the problem.
\par\smallskip
The user wants to identify the physical feature of a character depicted during a discussion about super humans, which is a different version of a character type drawn when a prominent tech innovator is mentioned in relation to living in a simulation. The crux is determining which character type corresponds to the tech innovator's mention and identifying the unique physical feature of the later version of this character.
\par\smallskip
First, let me scan the video for the segments relevant to the question:
\par\smallskip
- I need to find the early segment mentioning a tech innovator and a simulation to identify the character type, and then locate the later segment discussing super humans to see the updated character's features.
\par\smallskip
\textbf{- At 00:00:12, a drawing of a man with glasses and brown hair appears on screen as the narrator introduces the book.}
\par\smallskip
\textbf{- At 00:00:21, the narrator mentions "Elon Musk living in a simulation" while a drawing of a small grey robot holding a flower is shown.}
\par\smallskip
- At 00:01:15, two figures in yellow hazmat suits and respirators are shown during a discussion about Ebola.
\par\smallskip
- At 00:01:34, a cartoon farmer wearing blue overalls and a straw hat is shown holding a hoe.
\par\smallskip
- At 00:02:21, a cartoon sports coach with a whistle around his neck is drawn.
\par\smallskip
- At 00:02:34, a figure representing death is shown wearing a skull mask and a black hooded cloak.
\par\smallskip
- At 00:03:08, the narrator discusses "super humans" while a blue and white robot with large white feathered angel wings is shown flying.
\par\smallskip
- At 00:03:22, a young man with a robotic right arm is shown sitting at a table.
\par\smallskip
- If the later character type were a cyborg human rather than a full robot, the character shown during the super humans discussion would have a robotic right arm instead of being a flying machine with wings.
\par\smallskip
- The flying robot shown during the super humans discussion at 00:03:08 clearly has white feathered angel wings, which corresponds to option D.
\par\smallskip
Next, let me analyze the options:
\par\smallskip
- A. large white eyes and blue wings \textemdash{} the robot's wings are white, not blue.
\par\smallskip
- B. a skull mask and black hooded cloak \textemdash{} this represents death, not the robot.
\par\smallskip
- C. a robotic right arm \textemdash{} this feature belongs to a cyborg shown later, not the flying robot.
\par\smallskip
- D. white feathered angel wings \textemdash{} the flying robot representing super humans possesses these large white wings.
\par\smallskip
- E. yellow hazmat suits and respirators \textemdash{} these are Ebola workers.
\par\smallskip
- F. a whistle around his neck \textemdash{} this is the coach.
\par\smallskip
- G. glasses and brown hair \textemdash{} this is the narrator.
\par\smallskip
- H. blue overalls and a straw hat \textemdash{} this is the farmer.
\par\smallskip
\textbf{Finally, the correct option is D.
\textless{}/think\textgreater{}}
\par\smallskip
D
\end{tcolorbox}

\Needspace{12\baselineskip}
\subsection{A15 / Summarization \& Synthesis}
\label{case:sft14}
\begin{tcolorbox}[casecard,breakable,title={Question, evidence, and training answer}]
\textbf{Question.} What overall mood is created by the combination of the audio and visual elements in this video?
\par\smallskip
\textbf{Options.} A. a melancholic and sorrowful atmosphere B. a joyful and energetic atmosphere C. a dark and chaotic atmosphere D. a dramatic and epic atmosphere E. a tense and suspenseful atmosphere F. a serene and mystical atmosphere G. a calm and peaceful atmosphere H. a whimsical and fantastical atmosphere\par\smallskip\textbf{Training answer.} C: a dark and chaotic atmosphere
\par\smallskip\noindent \begin{minipage}[t]{0.48\linewidth}\centering
\includegraphics[width=\linewidth,height=2.5cm,keepaspectratio]{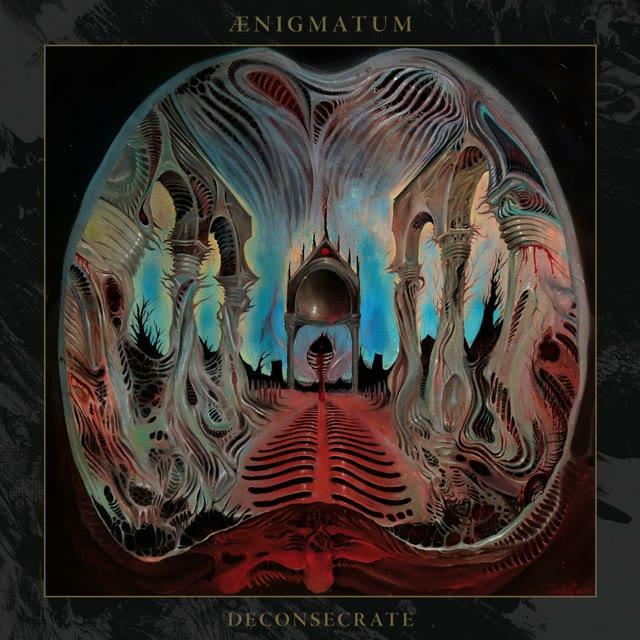}
\par\scriptsize Source video, 00:50\end{minipage}\hfill \begin{minipage}[t]{0.48\linewidth}\centering
\includegraphics[width=\linewidth,height=2.5cm,keepaspectratio]{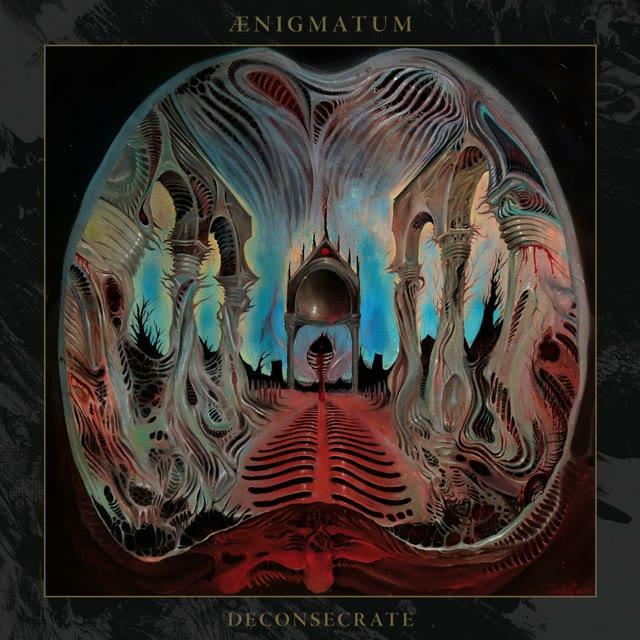}
\par\scriptsize Source video, 01:40\end{minipage}\par\smallskip
\textbf{Upstream reasoning steps.}\par 1. [audio 00:00-00:27] The audio is fast-paced death metal with heavy distorted guitars and blast beat drums, establishing an aggressive and chaotic mood.\par 2. [visual 00:00-02:30] The visual is a dark, surreal album cover with bone-like structures and a dark archway, establishing a dark and foreboding mood.\par 3. [audiovisual 00:00-02:30] Combining the aggressive audio and dark visual, the overall mood is dark and chaotic.
\par\smallskip\textit{Task label: source-assigned.}
\end{tcolorbox}
\begin{tcolorbox}[casecard,breakable,title={Complete synthetic thinking and final answer},title after break={Complete synthetic thinking and final answer (continued)}]
\textless{}think\textgreater{}
Got it, let's tackle the problem.
\par\smallskip
The user wants to determine the overall mood created by combining the audio and visual elements of the video. The crux is identifying how the aggressive musical style and the grotesque, surreal imagery on screen work together to establish a specific atmosphere.
\par\smallskip
First, let me scan the video for the segments relevant to the question:
\par\smallskip
- I need to examine the album artwork on screen and listen to the musical style of the track to characterize their combined aesthetic.
\par\smallskip
\textbf{- At 00:00-00:25, the video displays a static album cover featuring the band name "Ænigmatum" and the album title "Deconsecrate" written around a grotesque, fleshy, skull-like cavern with deep red and pale bone colors.}
\par\smallskip
\textbf{- At 00:26-01:15, the audio shifts from a dark guitar intro into frantic blast beats, heavy distorted riffs, and deep guttural vocals. If the overall mood were dramatic and epic rather than dark and chaotic, the music would need a grand, orchestral, or melodic structure instead of these dissonant, aggressive death metal elements.}
\par\smallskip
- At 01:16-03:00, the relentless tempo and chaotic technical death metal riffs continue alongside the nightmarish, biomechanical visual of a spine-like pathway leading to a dark altar.
\par\smallskip
- The combination of extreme, aggressive death metal and a grotesque, fleshy, hellish landscape creates a dark and chaotic atmosphere, which corresponds to option C.
\par\smallskip
Next, let me analyze the options:
\par\smallskip
- A. a melancholic and sorrowful atmosphere \textemdash{} The aggressive tempo and harsh vocals are not sad.
\par\smallskip
- B. a joyful and energetic atmosphere \textemdash{} The dark, grotesque imagery contradicts any joyful feeling.
\par\smallskip
- C. a dark and chaotic atmosphere \textemdash{} The dissonant death metal and nightmarish, fleshy artwork establish this exact feeling.
\par\smallskip
- D. a dramatic and epic atmosphere \textemdash{} The chaotic style is too aggressive for an epic tone.
\par\smallskip
- E. a tense and suspenseful atmosphere \textemdash{} The music is immediately explosive rather than building suspense.
\par\smallskip
- F. a serene and mystical atmosphere \textemdash{} The harsh vocals and gory visuals are not peaceful.
\par\smallskip
- G. a calm and peaceful atmosphere \textemdash{} The fast blast beats are highly intense and aggressive.
\par\smallskip
- H. a whimsical and fantastical atmosphere \textemdash{} The hellish, biomechanical landscape is macabre rather than whimsical.
\par\smallskip
\textbf{Finally, the correct option is C.
\textless{}/think\textgreater{}}
\par\smallskip
C
\end{tcolorbox}

\Needspace{12\baselineskip}
\subsection{A16 / Ego-Centric \& First-Person Understanding}
\label{case:sft15}
\begin{tcolorbox}[casecard,breakable,title={Question, evidence, and training answer}]
\textbf{Question.} When the sound of highway traffic passing underneath is heard for the first time, what gesture do I make with my left hand?
\par\smallskip
\textbf{Options.} A. Pointing to the right. B. Giving a thumbs up. C. Pointing to the left. D. Adjusting the bike computer.\par\smallskip\textbf{Training answer.} A: Pointing to the right.
\par\smallskip\noindent \begin{minipage}[t]{0.48\linewidth}\centering
\includegraphics[width=\linewidth,height=2.5cm,keepaspectratio]{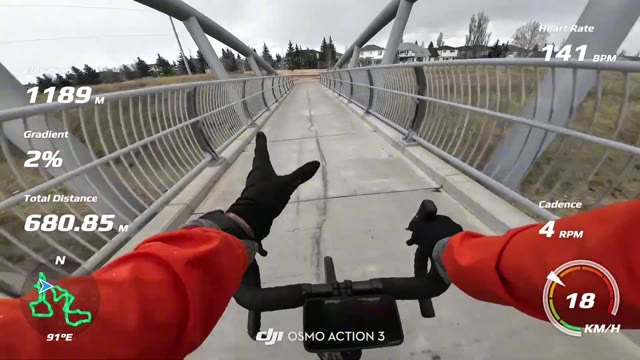}
\par\scriptsize Source video, 01:04\end{minipage}\hfill \begin{minipage}[t]{0.48\linewidth}\centering
\includegraphics[width=\linewidth,height=2.5cm,keepaspectratio]{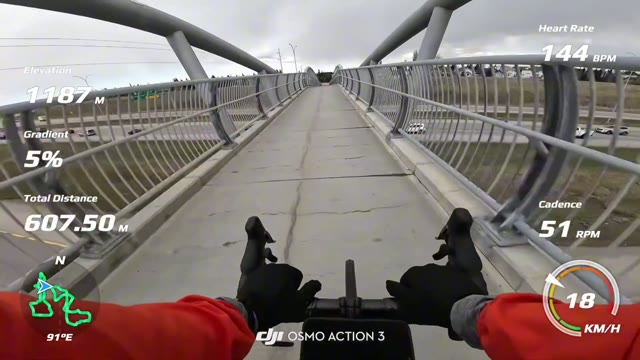}
\par\scriptsize Source video, 00:48\end{minipage}\par\smallskip
\textbf{Upstream reasoning steps.}\par 1. [audio 00:48-01:08] The distinct whooshing sound of cars passing underneath is heard as the cyclist rides over the first bridge.\par 2. [visual 01:03-01:05] The cyclist lifts their left hand off the handlebar and points to the right with their index finger.\par 3. [inference] The sound of highway traffic passing below is first heard when crossing the pedestrian bridge starting at 00:48; during this time, the cyclist raises their left hand and points to the right.
\par\smallskip\textit{Task label: source-assigned.}
\end{tcolorbox}
\begin{tcolorbox}[casecard,breakable,title={Complete synthetic thinking and final answer},title after break={Complete synthetic thinking and final answer (continued)}]
\textless{}think\textgreater{}
Got it, let's tackle the problem.
\par\smallskip
The user wants to identify the gesture made with the cyclist's left hand when the sound of highway traffic passing underneath is first heard. The crux is locating the first instance of audible highway traffic from below and observing the hand gesture that immediately follows.
\par\smallskip
First, let me scan the video for the segments relevant to the question:
\par\smallskip
- I need to find the first segment where the sound of highway traffic passing underneath is heard and identify the gesture made with the left hand.
- At 00:01, the cyclist rides on a paved path through a residential neighborhood.
- At 00:10, the cyclist raises their left hand briefly to adjust their glove.
- At 00:18, the cyclist passes a pedestrian on the path.
- At 00:48, the cyclist begins riding onto a pedestrian bridge over a highway.
- At 00:50, the sound of highway traffic passing underneath becomes clearly audible for the first time.
- At 01:03, the cyclist lifts their left hand off the handlebar.
- At 01:04, the cyclist points to the right with their left hand, extending the index and middle fingers.
- If the cyclist were giving a thumbs up at this moment, the thumb would be extended upward with the other fingers curled, but instead the index and middle fingers are extended to point toward the right side of the bridge.
- At 20:53, near the end of the ride, the cyclist gives a thumbs up with their left hand.
- The gesture made when the highway traffic is first heard is pointing to the right, which corresponds to option A.
\par\smallskip
Next, let me analyze the options:
\par\smallskip
\textbf{- A. Pointing to the right. \textemdash{} This matches the gesture at 01:04, pointing toward the highway.
- B. Giving a thumbs up. \textemdash{} observed only at 20:53.
- C. Pointing to the left. \textemdash{} hand points right instead.
- D. Adjusting the bike computer. \textemdash{} no adjustment occurs.}
\par\smallskip
\textbf{Finally, the correct option is A.
\textless{}/think\textgreater{}}
\par\smallskip
A
\end{tcolorbox}

\Needspace{12\baselineskip}
\subsection{B01 / Problem-solving Adaptation}
\label{case:sft16}
\begin{tcolorbox}[casecard,breakable,title={Question, evidence, and training answer}]
\textbf{Question.} The figure shows four projects with the quality setting selected in the Blend Collage app. According to the video's classification of the save quality options, which project is incorrectly configured?
\par\smallskip
\textbf{Options.} A. Photo Print B. Blog Header C. Email Newsletter D. Mobile Wallpaper\par\smallskip\textbf{Training answer.} B: Blog Header
\par\smallskip\noindent \begin{minipage}[t]{0.48\linewidth}\centering
\includegraphics[width=\linewidth,height=2.5cm,keepaspectratio]{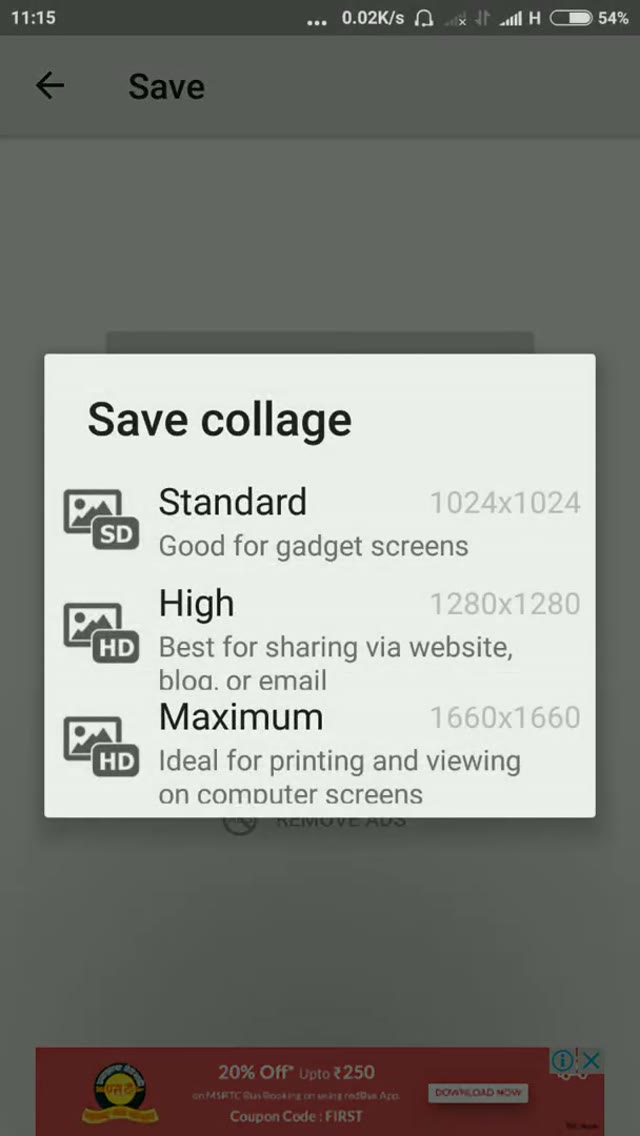}
\par\scriptsize Source video, 05:25\end{minipage}\hfill \begin{minipage}[t]{0.48\linewidth}\centering
\includegraphics[width=\linewidth,height=3.4cm,keepaspectratio]{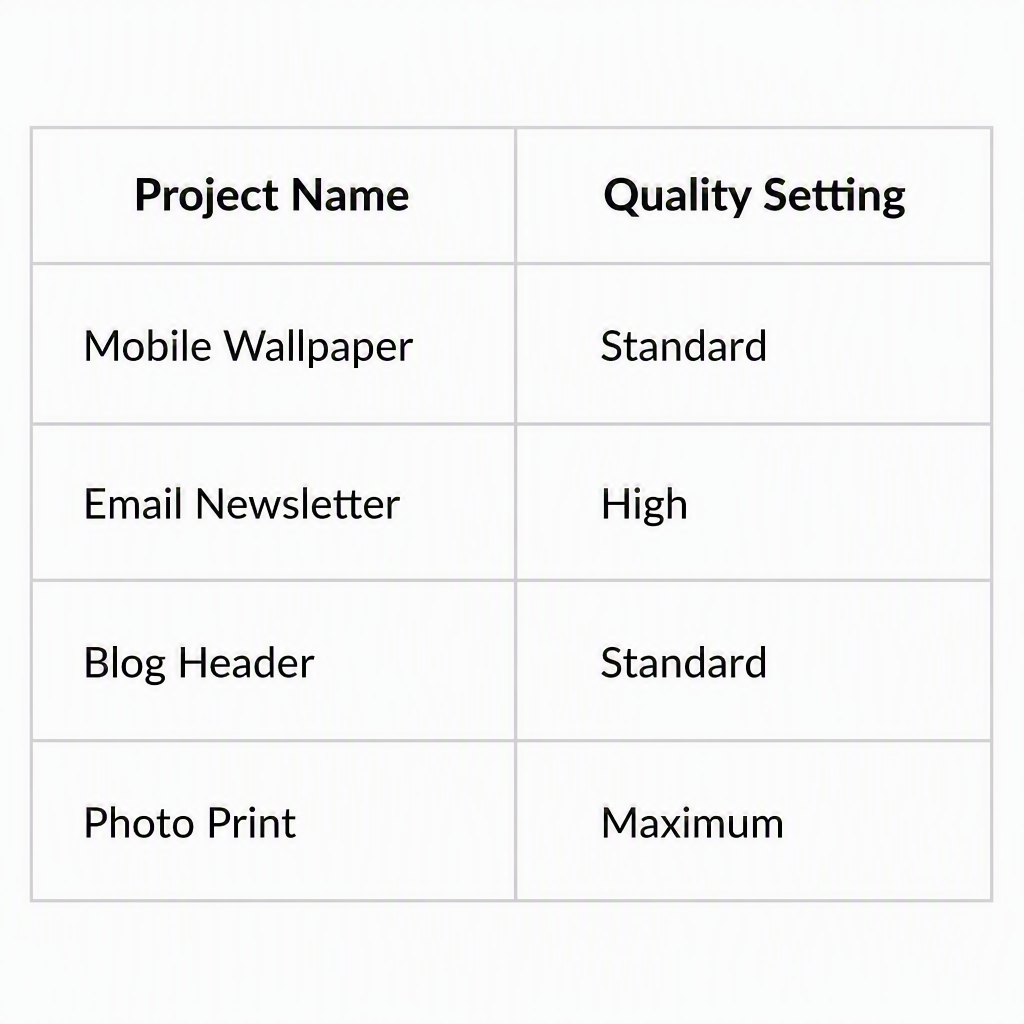}
\par\scriptsize New figure\end{minipage}\par\smallskip
\textbf{Upstream reasoning steps.}\par 1. [05:20-05:30] Recall the video's classification of the three save quality options: Standard is for phone screen, High is for email/website/blog, Maximum is for printing.\par 2. For each project, infer its intended use from the project name: 'Mobile Wallpaper' implies phone screen, 'Email Newsletter' implies email, 'Blog Header' implies blog, 'Photo Print' implies printing.\par 3. Check the quality setting against the classification: Mobile Wallpaper (Standard) \ensuremath{\rightarrow} correct; Email Newsletter (High) \ensuremath{\rightarrow} correct; Blog Header (Standard) \ensuremath{\rightarrow} should be High, so incorrect; Photo Print (Maximum) \ensuremath{\rightarrow} correct.\par 4. Identify the project with the mismatch: Blog Header.
\par\smallskip\textit{Task label: source-assigned.}
\end{tcolorbox}
\begin{tcolorbox}[casecard,breakable,title={Complete synthetic thinking and final answer},title after break={Complete synthetic thinking and final answer (continued)}]
\textless{}think\textgreater{}
Got it, let's tackle the problem.
\par\smallskip
The user wants to identify which project in the provided table is incorrectly configured based on the video's description of save quality options in the Blend Collage app. The crux is matching each project's quality setting in the table to the corresponding save quality option described in the video.
\par\smallskip
First, let me scan the video for the segments relevant to the question:
\par\smallskip
- I need to find the segment where the presenter explains the different save quality options in the app.
\par\smallskip
\textbf{- At 05:20, the presenter clicks the "SAVE" button, which brings up the "Save collage" dialog box.}
\par\smallskip
\textbf{- At 05:22-05:32, the dialog box shows "Standard" as "Good for gadget screens", "High" as "Best for sharing via website, blog, or email", and "Maximum" as "Ideal for printing and viewing on computer screens".}
\par\smallskip
- If the "High" option were meant for gadget screens, the dialog box would describe it as such, but it actually says "Best for sharing via website, blog, or email" while "Standard" is described as "Good for gadget screens" \textemdash{} so "Standard" is for gadget screens and "High" is for websites, blogs, or email.
\par\smallskip
Next, let me read the figure:
\par\smallskip
- The table lists four projects: "Mobile Wallpaper", "Email Newsletter", "Blog Header", and "Photo Print".
\par\smallskip
- For "Mobile Wallpaper", the quality setting is configured as "Standard".
\par\smallskip
- For "Email Newsletter", the quality setting is configured as "High".
\par\smallskip
- For "Blog Header", the quality setting is configured as "Standard".
\par\smallskip
- For "Photo Print", the quality setting is configured as "Maximum".
\par\smallskip
- According to the video, "Standard" is for gadget screens, "High" is for websites, blogs, or emails, and "Maximum" is for printing.
\par\smallskip
- Applying these rules, "Blog Header" should be configured as "High" because it is a blog project, but it is incorrectly configured as "Standard".
\par\smallskip
- Therefore, the project "Blog Header" is incorrectly configured \textemdash{} this is option B.
\par\smallskip
Next, let me analyze the options:
\par\smallskip
- A. Photo Print \textemdash{} correctly configured as Maximum.
\par\smallskip
- B. Blog Header \textemdash{} incorrectly configured as Standard, as the video states blogs require High quality.
\par\smallskip
- C. Email Newsletter \textemdash{} correctly configured as High.
\par\smallskip
- D. Mobile Wallpaper \textemdash{} correctly configured as Standard.
\par\smallskip
\textbf{Finally, the correct option is B.
\textless{}/think\textgreater{}}
\par\smallskip
B
\end{tcolorbox}

\Needspace{12\baselineskip}
\subsection{B02 / Case Study Analysis}
\label{case:sft17}
\begin{tcolorbox}[casecard,breakable,title={Question, evidence, and training answer}]
\textbf{Question.} Based on the lecture's classification of the original Bronco body styles, which labelled region in the figure corresponds to the model that the first Bronco brochure referred to as a 'sports utility'?
\par\smallskip
\textbf{Options.} A. Region A B. Region C C. Region D D. Region B\par\smallskip\textbf{Training answer.} D: Region B
\par\smallskip\noindent \begin{minipage}[t]{0.48\linewidth}\centering
\includegraphics[width=\linewidth,height=2.5cm,keepaspectratio]{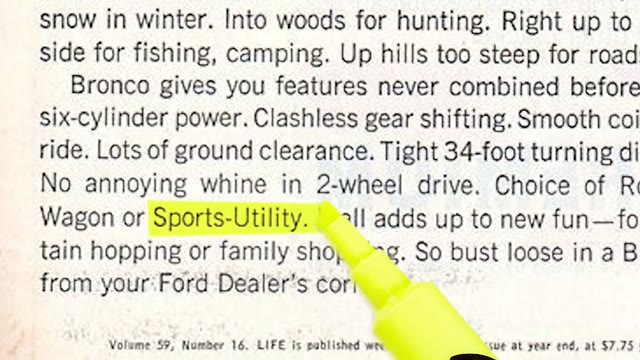}
\par\scriptsize Source video, 02:27\end{minipage}\hfill \begin{minipage}[t]{0.48\linewidth}\centering
\includegraphics[width=\linewidth,height=3.4cm,keepaspectratio]{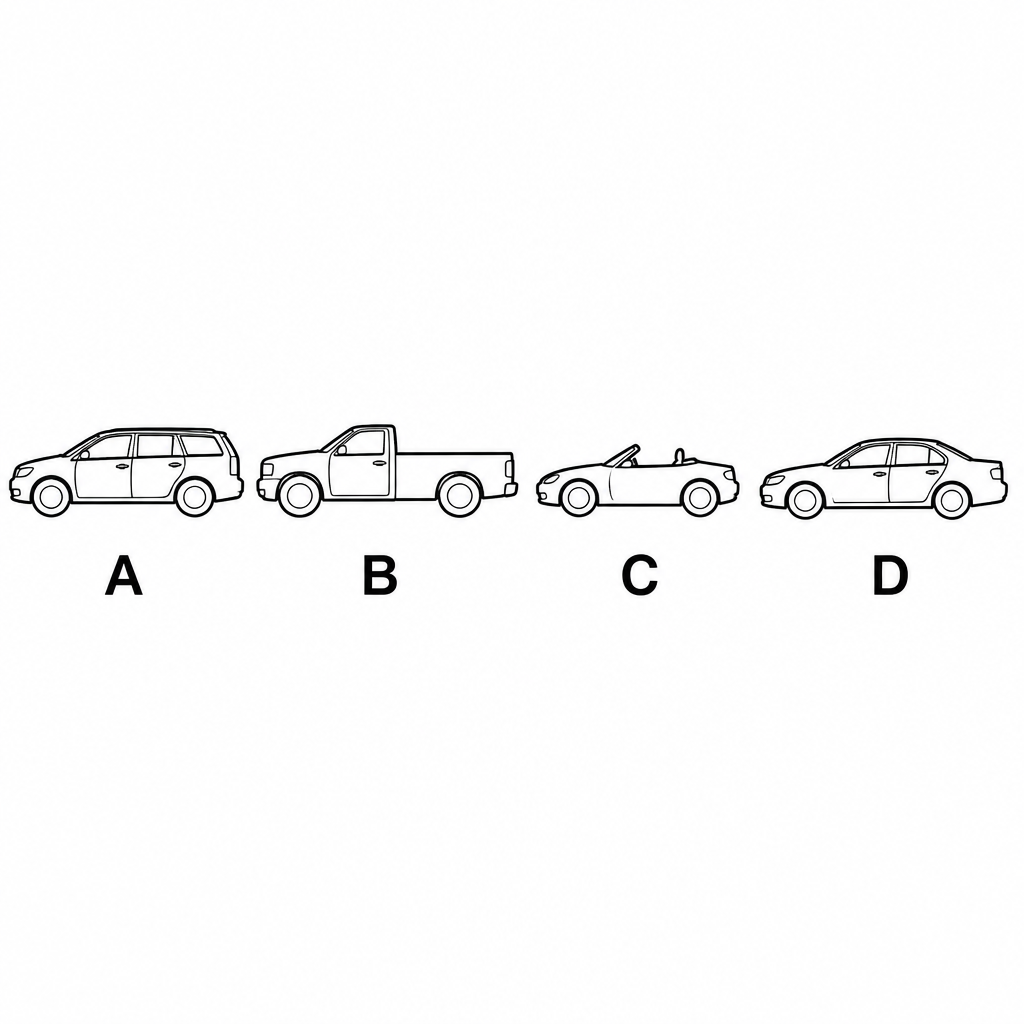}
\par\scriptsize New figure\end{minipage}\par\smallskip
\textbf{Upstream reasoning steps.}\par 1. [00:02:24-00:02:30] Recall the lecture's statement that the first Bronco brochure referred to the pickup as a 'sports utility'.\par 2. Identify the labelled body styles in the figure: Region A is a wagon, Region B is a pickup, Region C is a roadster, Region D is a sedan.\par 3. Apply the lecture's naming: the 'sports utility' corresponds to the pickup, which is Region B.\par 4. Conclude that the correct answer is Region B.
\par\smallskip\textit{Task label: source-assigned.}
\end{tcolorbox}
\begin{tcolorbox}[casecard,breakable,title={Complete synthetic thinking and final answer},title after break={Complete synthetic thinking and final answer (continued)}]
\textless{}think\textgreater{}
Got it, let's tackle the problem.
\par\smallskip
The user wants to identify which labeled vehicle silhouette in the provided diagram represents the body style that the original Ford Bronco brochure designated as a 'sports utility'. The crux is to determine which of the three original body styles mentioned in the video corresponds to this specific term and match it to its silhouette.
\par\smallskip
First, let me scan the video for the segments relevant to the question:
\par\smallskip
- I need to find the segment of the lecture that discusses the original body styles of the first-generation Ford Bronco and listen for the term "sports utility".
\par\smallskip
\textbf{- At 02:21-02:30, the speaker explains the three original models of the Bronco, stating, "Three models were originally offered: a wagon, a pickup, which they did call a sports utility in the first brochure, and an unpopular roadster."}
\par\smallskip
\textbf{- Let me double-check the exact phrasing at 02:24: the speaker says "a pickup, which they did call a sports utility in the first brochure", confirming that the pickup truck body style is the one designated as a "sports utility".}
\par\smallskip
- If the brochure had referred to the wagon or the roadster as the "sports utility", we would expect the speaker to associate the term with those shapes, but he explicitly links it to the pickup \textemdash{} so we must look for the pickup silhouette.
\par\smallskip
- The pickup truck body style is represented by the silhouette in Region B, which corresponds to option D.
\par\smallskip
Next, let me read the figure:
\par\smallskip
- Silhouette A represents a station wagon or SUV body style.
\par\smallskip
- Silhouette B represents a standard pickup truck body style with an open cargo bed.
\par\smallskip
- Silhouette C represents a two-door convertible or roadster body style.
\par\smallskip
- Silhouette D represents a standard four-door sedan body style.
\par\smallskip
- Matching the video's rule that the "pickup" was called the "sports utility", we find that silhouette B, which is the pickup truck, is the correct match.
\par\smallskip
- This means Region B is the correct region, which corresponds to option D.
\par\smallskip
Next, let me analyze the options:
\par\smallskip
- A. Region A \textemdash{} this represents the wagon model.
\par\smallskip
- B. Region C \textemdash{} this represents the roadster model, which was unpopular.
\par\smallskip
- C. Region D \textemdash{} this represents a sedan.
\par\smallskip
- D. Region B \textemdash{} this represents the pickup model, which the first brochure called a "sports utility".
\par\smallskip
\textbf{Finally, the correct option is D.
\textless{}/think\textgreater{}}
\par\smallskip
D
\end{tcolorbox}

\Needspace{12\baselineskip}
\subsection{B03 / Cross-Scenario Transfer}
\label{case:sft18}
\begin{tcolorbox}[casecard,breakable,title={Question, evidence, and training answer}]
\textbf{Question.} A user needs a drone that can achieve at least 26 minutes of actual flight time in real-world conditions. Based on the video's realistic flight time estimates, which drone should the user choose?
\par\smallskip
\textbf{Options.} A. Only Drone A B. Only Drone B C. Both drones D. Neither drone\par\smallskip\textbf{Training answer.} B: Only Drone B
\par\smallskip\noindent \begin{minipage}[t]{0.48\linewidth}\centering
\includegraphics[width=\linewidth,height=2.5cm,keepaspectratio]{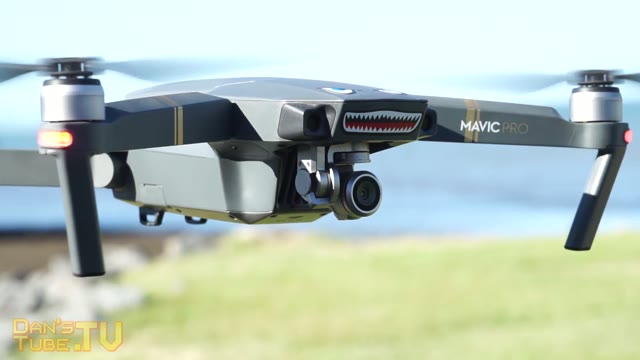}
\par\scriptsize Source video, 04:15\end{minipage}\hfill \begin{minipage}[t]{0.48\linewidth}\centering
\includegraphics[width=\linewidth,height=3.4cm,keepaspectratio]{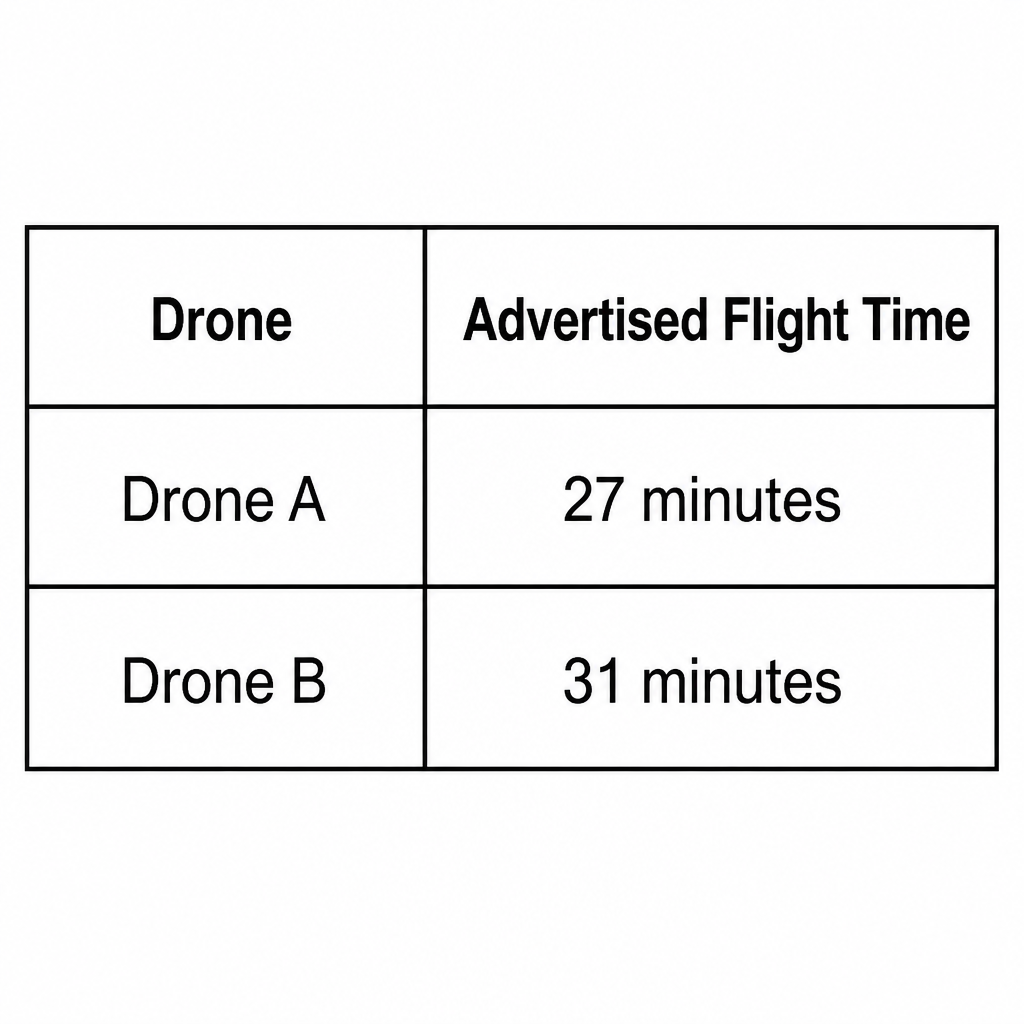}
\par\scriptsize New figure\end{minipage}\par\smallskip
\textbf{Upstream reasoning steps.}\par 1. [04:04-04:26] Recall the video's realistic flight time estimates: Mavic Pro \textasciitilde{}24-25 min, Mavic 2 Pro \textasciitilde{}28 min.\par 2. Match the drones in the figure to the models: Drone A (27 min advertised) corresponds to the Mavic Pro; Drone B (31 min advertised) corresponds to the Mavic 2 Pro.\par 3. Apply the realistic estimates: Drone A's realistic flight time is \textasciitilde{}24-25 min, which is below the 26 min requirement. Drone B's realistic flight time is \textasciitilde{}28 min, which meets the 26 min requirement.\par 4. Conclude that only Drone B meets the requirement.
\par\smallskip\textit{Task label: source-assigned.}
\end{tcolorbox}
\begin{tcolorbox}[casecard,breakable,title={Complete synthetic thinking and final answer},title after break={Complete synthetic thinking and final answer (continued)}]
\textless{}think\textgreater{}
Got it, let's tackle the problem.
\par\smallskip
The user wants to identify which drone can achieve at least 26 minutes of actual flight time in real-world conditions based on the realistic estimates given in the video. The crux is comparing the realistic flight times of Drone A and Drone B to the 26-minute threshold.
\par\smallskip
First, let me scan the video for the segments relevant to the question:
\par\smallskip
- I need to listen for the realistic flight time estimates of the drones and watch the flight time specifications on screen.
- At 04:03, the video shows the Mavic Pro flying as the speaker says, "The original Mavic Pro has a flight time of 27 minutes, which you will not get as much as 27 minutes, that's just unrealistic. You probably get about 24, 25 minutes maybe."
- At 04:17, the video shows the Mavic 2 Pro flying as the speaker says, "And the Mavic 2 Pro says you will get about 31 minutes. So you may get 28, 27 minutes with that."
- If Drone A were to meet the requirement, its realistic flight time would need to be at least 26 minutes, but the video estimates it at only 24 to 25 minutes, meaning it falls short.
\par\smallskip
Next, let me read the figure:
\par\smallskip
- The table shows "Drone A" has an "Advertised Flight Time" of "27 minutes".
- The table shows "Drone B" has an "Advertised Flight Time" of "31 minutes".
- Based on the video, Drone A corresponds to the Mavic Pro with an advertised flight time of 27 minutes, giving a realistic flight time of 24 to 25 minutes.
- Based on the video, Drone B corresponds to the Mavic 2 Pro with an advertised flight time of 31 minutes, giving a realistic flight time of 27 to 28 minutes.
- Comparing the realistic flight times to the user's requirement of at least 26 minutes: Drone A (24-25 minutes) \textless{} 26 minutes, so it does not qualify.
- Comparing Drone B's realistic flight time: Drone B (27-28 minutes) \textgreater{}= 26 minutes, so it qualifies.
- Only Drone B meets the requirement, which corresponds to option B.
\par\smallskip
Next, let me analyze the options:
\par\smallskip
- A. Only Drone A \textemdash{} Drone A's realistic flight time is under 26 minutes.
- B. Only Drone B \textemdash{} Drone B's realistic flight time is 27 to 28 minutes, which meets the requirement.
- C. Both drones \textemdash{} Drone A fails the requirement.
- D. Neither drone \textemdash{} Drone B successfully meets the requirement.
\par\smallskip
\textbf{Finally, the correct option is B.
\textless{}/think\textgreater{}}
\par\smallskip
B
\end{tcolorbox}

\Needspace{12\baselineskip}
\subsection{B04 / Multi-step Quantitative Reasoning}
\label{case:sft19}
\begin{tcolorbox}[casecard,breakable,title={Question, evidence, and training answer}]
\textbf{Question.} A retailer has the items shown in the figure. He also has 5 individual pillow covers from another source. He wants to create gift packs each containing 2 cushion covers and 1 pillow cover. How many complete gift packs can he make?
\par\smallskip
\textbf{Options.} A. 4 B. 7 C. 6 D. 10\par\smallskip\textbf{Training answer.} B: 7
\par\smallskip\noindent \begin{minipage}[t]{0.48\linewidth}\centering
\includegraphics[width=\linewidth,height=2.5cm,keepaspectratio]{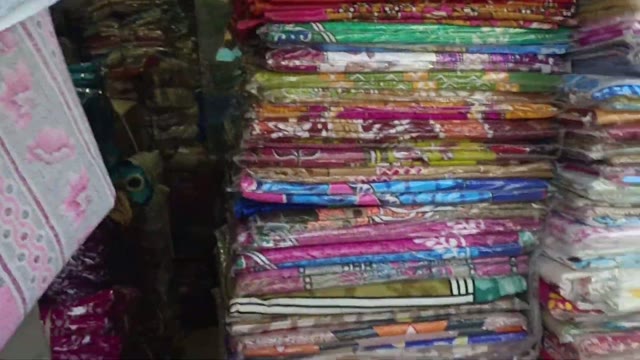}
\par\scriptsize Source video, 05:23\end{minipage}\hfill \begin{minipage}[t]{0.48\linewidth}\centering
\includegraphics[width=\linewidth,height=3.4cm,keepaspectratio]{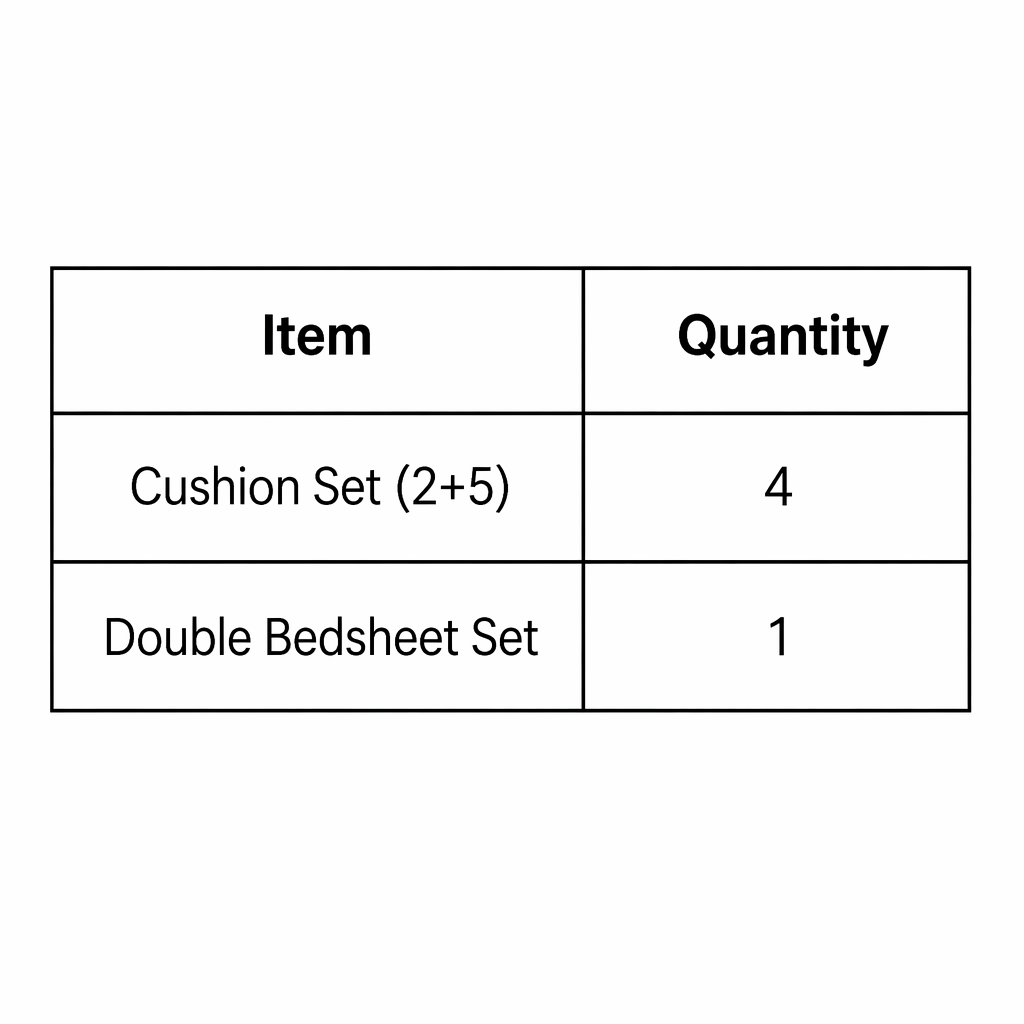}
\par\scriptsize New figure\end{minipage}\par\smallskip
\textbf{Upstream reasoning steps.}\par 1. [08:01-08:08] Recall from the video that a Cushion Set (2+5) contains 5 cushion covers.\par 2. Compute total cushion covers from the figure: 4 sets  x  5 = 20 cushion covers.\par 3. [05:20-05:27] Recall from the video that a Double Bedsheet Set contains 2 pillow covers.\par 4. Compute total pillow covers from the figure: 1 set  x  2 = 2 pillow covers.\par 5. Add the 5 individual pillow covers: total pillow covers = 2 + 5 = 7.\par 6. Each gift pack requires 2 cushion covers and 1 pillow cover. The number of packs is limited by the smaller of floor(20/2)=10 and floor(7/1)=7, so 7 packs.
\par\smallskip\textit{Task label: source-assigned.}
\end{tcolorbox}
\begin{tcolorbox}[casecard,breakable,title={Complete synthetic thinking and final answer},title after break={Complete synthetic thinking and final answer (continued)}]
\textless{}think\textgreater{}
Got it, let's tackle the problem.
\par\smallskip
The user wants to find the maximum number of complete gift packs that can be made using the items shown in the table and some additional pillow covers. The crux is determining the exact number of cushion covers and pillow covers contained in each set listed in the table by watching the video.
\par\smallskip
First, let me scan the video for the segments relevant to the question:
\par\smallskip
- I need to find the segments in the video where the shopkeeper explains the contents of a double bedsheet set and a cushion set (2+5).
\par\smallskip
\textbf{- At 05:25-05:28, the shopkeeper explains that a double bedsheet set comes with two pillow covers and one bedsheet, saying "Double 300 ka set aayega, do pillow cover ek bedsheet."}
\par\smallskip
- If a double bedsheet set only contained one pillow cover instead of two, the total number of pillow covers would be 1 + 5 = 6, which would limit the gift packs to 6, but the shopkeeper states it contains "do pillow cover".
\par\smallskip
\textbf{- At 07:53-08:08, the shopkeeper describes the cushion set (2+5), stating "5 cushion aate hain, 2 load aate hain" and specifying that it contains "5 cushion ke cover aate hain" and "2 load cover aate hain."}
\par\smallskip
Next, let me read the figure:
\par\smallskip
- The figure shows a table with two items: "Cushion Set (2+5)" with a quantity of 4, and "Double Bedsheet Set" with a quantity of 1.
\par\smallskip
- Wait, let me re-read the question: it states the retailer also has 5 individual pillow covers from another source, which must be added to the pillow covers obtained from the double bedsheet set.
\par\smallskip
- Total cushion covers = 4 * 5 = 20 cushion covers.
\par\smallskip
- Pillow covers from the bedsheet set = 1 * 2 = 2 pillow covers.
\par\smallskip
- Total pillow covers = 2 + 5 = 7 pillow covers.
\par\smallskip
- Packs possible from cushion covers = 20 / 2 = 10 packs.
\par\smallskip
- Packs possible from pillow covers = 7 / 1 = 7 packs.
\par\smallskip
- Complete packs = min(10, 7) = 7 packs, which corresponds to option B.
\par\smallskip
Next, let me analyze the options:
\par\smallskip
- A. 4 \textemdash{} we can make more than this number.
\par\smallskip
- B. 7 \textemdash{} the 7 available pillow covers limit the total complete gift packs to this amount, as we have 20 cushion covers.
\par\smallskip
- C. 6 \textemdash{} this would leave one pillow cover unused.
\par\smallskip
- D. 10 \textemdash{} we would need 10 pillow covers, but only have 7.
\par\smallskip
\textbf{Finally, the correct option is B.
\textless{}/think\textgreater{}}
\par\smallskip
B
\end{tcolorbox}

\Needspace{12\baselineskip}
\subsection{B05 / Comparative Reasoning}
\label{case:sft20}
\begin{tcolorbox}[casecard,breakable,title={Question, evidence, and training answer}]
\textbf{Question.} For the four project plans shown in the figure, which one would have taken less total time than Judi Hu's original project (which took about 9 hours)? Use the flower-making and patch-sewing rates that Judi experienced during her project.
\par\smallskip
\textbf{Options.} A. Plan C B. Plan A C. Plan B D. Plan D\par\smallskip\textbf{Training answer.} B: Plan A
\par\smallskip\noindent \begin{minipage}[t]{0.48\linewidth}\centering
\includegraphics[width=\linewidth,height=2.5cm,keepaspectratio]{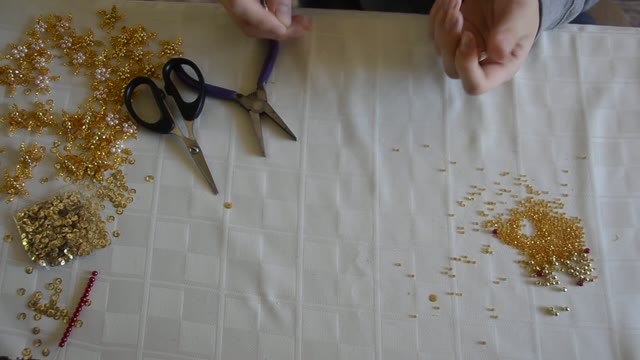}
\par\scriptsize Source video, 04:48\end{minipage}\hfill \begin{minipage}[t]{0.48\linewidth}\centering
\includegraphics[width=\linewidth,height=3.4cm,keepaspectratio]{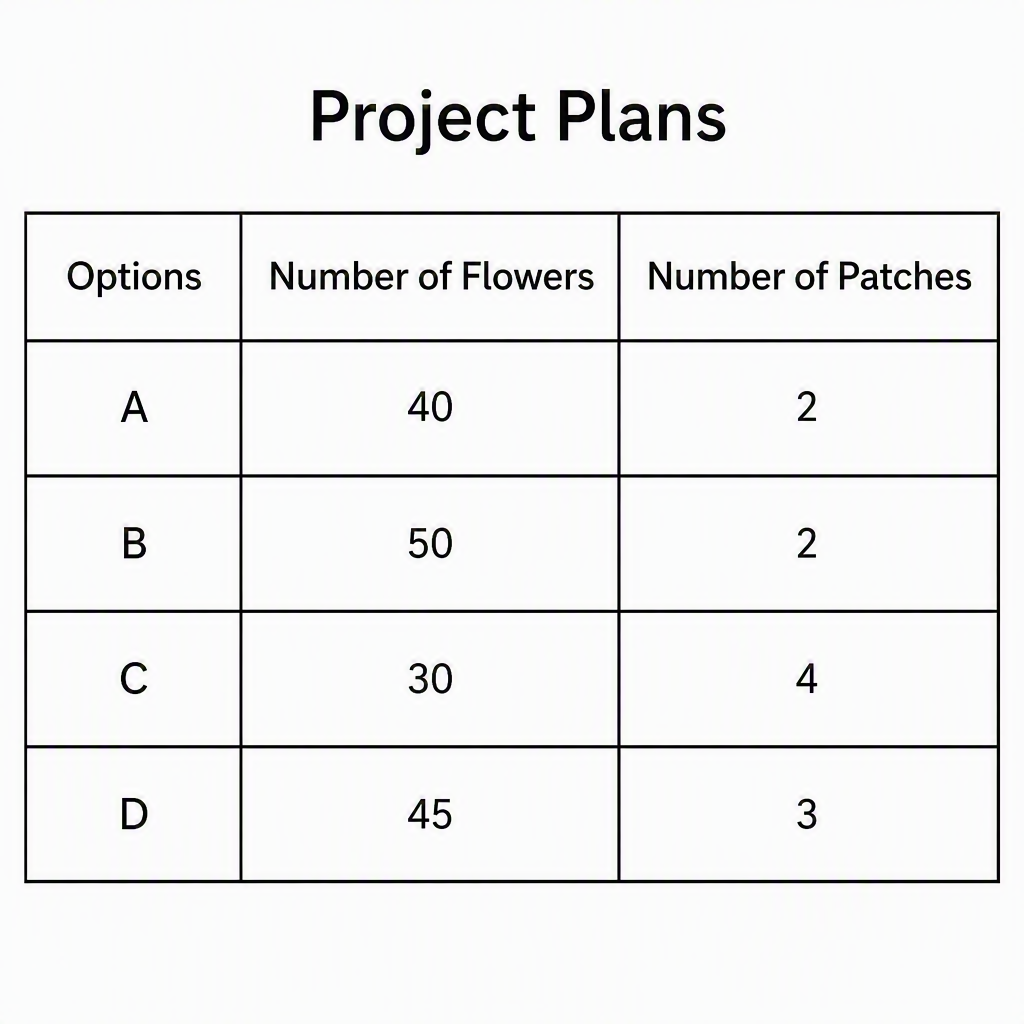}
\par\scriptsize New figure\end{minipage}\par\smallskip
\textbf{Upstream reasoning steps.}\par 1. [04:45-04:52] Recall that Judi stated each flower takes about 8 minutes to make.\par 2. [06:38-06:45] Recall that the on-screen text indicated each embroidered patch takes 1.5 hours (90 minutes) to sew.\par 3. Compute total time for Plan A: 40 flowers  x  8 min = 320 min = 5.33 h; 2 patches  x  90 min = 180 min = 3 h; sum = 500 min = 8.33 h (\textless{} 9 h).\par 4. Compute total time for Plan B: 50  x  8 = 400 min = 6.67 h; 2  x  90 = 180 min = 3 h; sum = 580 min = 9.67 h (\textgreater{} 9 h).\par 5. Compute total time for Plan C: 30  x  8 = 240 min = 4 h; 4  x  90 = 360 min = 6 h; sum = 600 min = 10 h (\textgreater{} 9 h).\par 6. Compute total time for Plan D: 45  x  8 = 360 min = 6 h; 3  x  90 = 270 min = 4.5 h; sum = 630 min = 10.5 h (\textgreater{} 9 h).\par 7. Only Plan A's total (8.33 h) is less than the original 9 h, so Plan A is correct.
\par\smallskip\textit{Task label: source-assigned.}
\end{tcolorbox}
\begin{tcolorbox}[casecard,breakable,title={Complete synthetic thinking and final answer},title after break={Complete synthetic thinking and final answer (continued)}]
\textless{}think\textgreater{}
Got it, let's tackle the problem.
\par\smallskip
The user wants to know which of the four project plans shown in the figure would have taken less total time than Judi Hu's original project. The crux is calculating the total time for each plan using the flower-making and patch-sewing rates established in the video.
\par\smallskip
First, let me scan the video for the segments relevant to the question:
\par\smallskip
- I need to find the segments in the video where Judi Hu shares the time it takes to make a single flower and to sew a single embroidered patch.
\par\smallskip
\textbf{- At 04:49-04:54, Judi says "I took about 8 minutes for one flower until I started getting better at it", establishing the base rate for flower-making.}
\par\smallskip
\textbf{- Wait, let me verify the two rates against their sources: the 8-minute flower rate comes from Judi's spoken statement at 04:49-04:54, while the on-screen text at 06:34-06:36 displays "1 AND 1/2 HOURS FOR 1 PATCH", which means each patch takes 1.5 hours, or 90 minutes.}
\par\smallskip
- If a patch took only 1 hour (60 minutes) instead of 1.5 hours (90 minutes) at 06:36, the original project's total time would be calculated as 45 * 8 = 360 minutes for flowers plus 2 * 60 = 120 minutes for patches, giving 480 minutes (8 hours) instead of the 9 hours mentioned in the prompt \textemdash{} so a patch must indeed take 90 minutes.
\par\smallskip
- At 06:30-06:33, the on-screen text shows the calculation for the original project's flowers: "15 X 3 = 45 FLOWERS" and "45 X 8 = 360 MINUTES", which equals 6 hours.
\par\smallskip
- At 06:34-06:36, the on-screen text shows "2 EMBROIDERED PATCHES" and "1 AND 1/2 HOURS FOR 1 PATCH", which means the patch-sewing time is 2 * 90 = 180 minutes, or 3 hours.
\par\smallskip
- Combining these rates, the original project's total time is 360 + 180 = 540 minutes, which is exactly 9 hours.
\par\smallskip
Next, let me read the figure:
\par\smallskip
- The figure shows a table titled "Project Plans" with columns "Options", "Number of Flowers", and "Number of Patches".
\par\smallskip
- Row "A" lists "40" flowers and "2" patches.
\par\smallskip
- Row "B" lists "50" flowers and "2" patches.
\par\smallskip
- Row "C" lists "30" flowers and "4" patches.
\par\smallskip
- Row "D" lists "45" flowers and "3" patches.
\par\smallskip
- For Plan A, the total time is (40 * 8) + (2 * 90) = 320 + 180 = 500 minutes.
\par\smallskip
- For Plan B, the total time is (50 * 8) + (2 * 90) = 400 + 180 = 580 minutes; for Plan C, it is (30 * 8) + (4 * 90) = 240 + 360 = 600 minutes; and for Plan D, it is (45 * 8) + (3 * 90) = 360 + 270 = 630 minutes.
\par\smallskip
- Comparing these to the original project's 540 minutes, only Plan A takes less total time (500 minutes), which corresponds to Option B.
\par\smallskip
Next, let me analyze the options:
\par\smallskip
- A. Plan C \textemdash{} this takes 600 minutes, exceeding the original.
\par\smallskip
- B. Plan A \textemdash{} this plan takes 500 minutes, which is less than the original project's 540 minutes.
\par\smallskip
- C. Plan B \textemdash{} this plan takes 580 minutes, which is more than the original project's 540 minutes.
\par\smallskip
- D. Plan D \textemdash{} this takes 630 minutes, exceeding the original.
\par\smallskip
\textbf{Finally, the correct option is B.
\textless{}/think\textgreater{}}
\par\smallskip
B
\end{tcolorbox}

\Needspace{12\baselineskip}
\subsection{B06 / Design / Optimization}
\label{case:sft21}
\begin{tcolorbox}[casecard,breakable,title={Question, evidence, and training answer}]
\textbf{Question.} For the three decorative picks shown in the figure, which one should be attached to the existing swag using the method demonstrated in the video (bending the stem back and tying it with existing branches)?
\par\smallskip
\textbf{Options.} A. Pick C: Pre-attached clip B. Pick B: Thin stem (easy to cut) C. Pick A: Thick stem (hard to cut)\par\smallskip\textbf{Training answer.} C: Pick A: Thick stem (hard to cut)
\par\smallskip\noindent \begin{minipage}[t]{0.48\linewidth}\centering
\includegraphics[width=\linewidth,height=2.5cm,keepaspectratio]{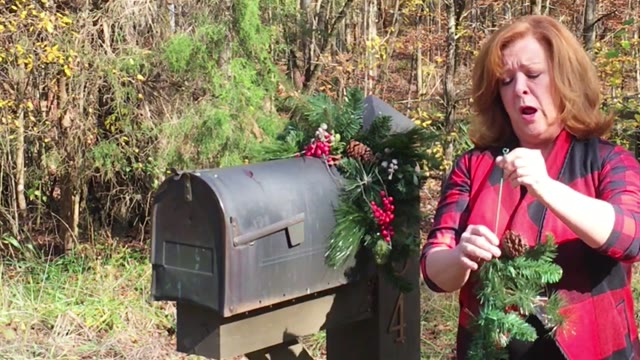}
\par\scriptsize Source video, 00:53\end{minipage}\hfill \begin{minipage}[t]{0.48\linewidth}\centering
\includegraphics[width=\linewidth,height=3.4cm,keepaspectratio]{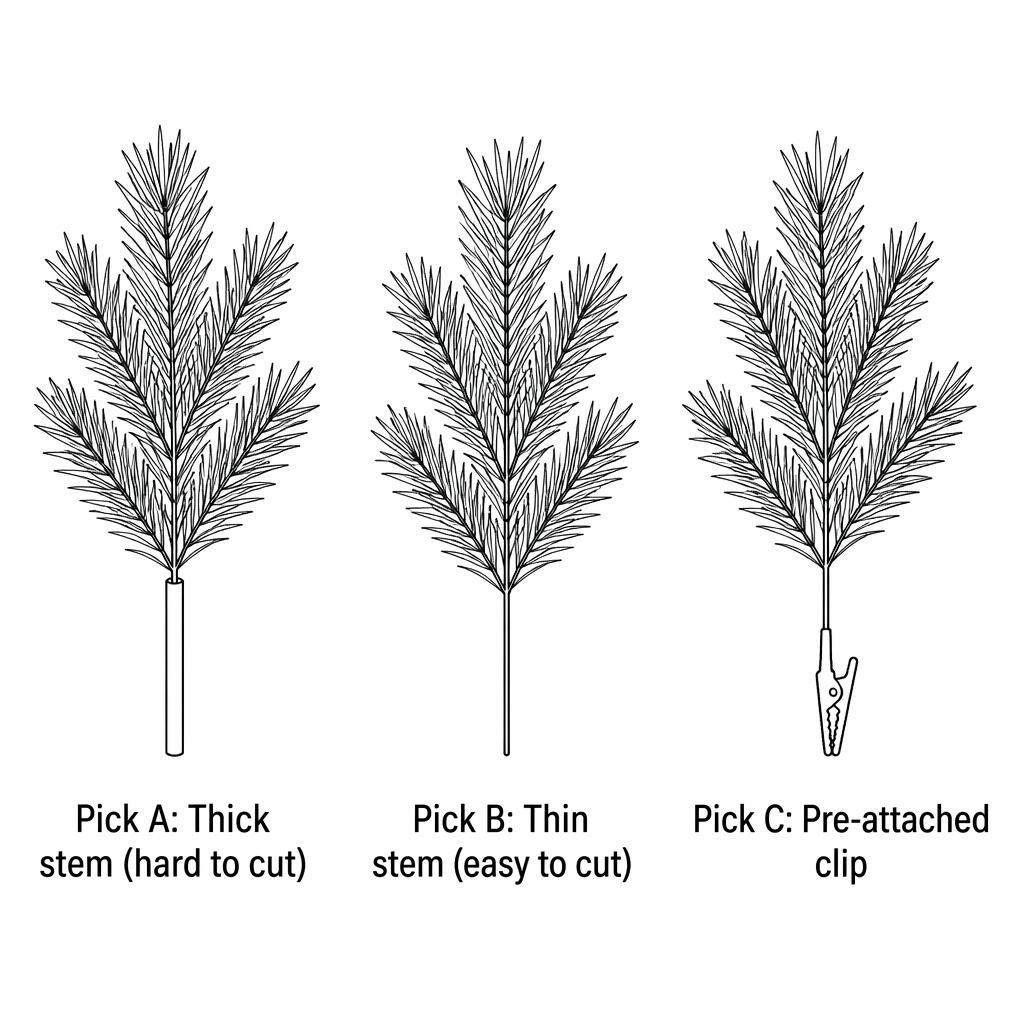}
\par\scriptsize New figure\end{minipage}\par\smallskip
\textbf{Upstream reasoning steps.}\par 1. [00:48-00:59] Recall the video's decision rule: When the stem is too thick to cut easily, use the bending method (bend the stem back, hide it, and tie with existing branches). For thinner stems, cutting is preferred.\par 2. From the figure, identify which pick has a thick stem that is hard to cut. The figure labels Pick A as 'Thick stem (hard to cut)'.\par 3. Apply the rule: Pick A is the one that should be attached using the bending-and-tying method.
\par\smallskip\textit{Task label: source-assigned.}
\end{tcolorbox}
\begin{tcolorbox}[casecard,breakable,title={Complete synthetic thinking and final answer},title after break={Complete synthetic thinking and final answer (continued)}]
\textless{}think\textgreater{}
Got it, let's tackle the problem.
\par\smallskip
The user wants to identify which of the three decorative picks shown in the figure should be attached to the swag using the bending and tying method shown in the video. The crux is matching the physical characteristics of the pick the woman describes and uses in the video to the descriptions of the picks in the diagram.
\par\smallskip
First, let me scan the video for the segments relevant to the question:
\par\smallskip
- I need to listen for the woman's description of the pick's stem and watch how she manipulates and attaches it to the swag.
- At 0:47-0:55, the woman holds up a pinecone pick and explains that while cutting the stem would be ideal, "this is so thick and hard to cut, you almost need bolt cutters to cut it."
- If the pick had a thin stem that was easy to cut, she would have cut it as she noted that would be "ideal" at 0:50, but instead she has to bend it because of its thickness.
- Wait, let me double check her exact words at 1:00-1:08: she says "So I'm just gonna bend it and attempt to hide it within my swag," and she proceeds to bend the thick green wire stem back on itself.
- At 1:09-1:25, she tucks the bent stem into the swag and wraps the existing pine branches around it to secure it in place.
- The video establishes that the pick used has a stem that is too thick and hard to cut, requiring it to be bent back and tied with existing branches.
\par\smallskip
Next, let me read the figure:
\par\smallskip
- The figure shows three decorative picks: "Pick A: Thick stem (hard to cut)", "Pick B: Thin stem (easy to cut)", and "Pick C: Pre-attached clip".
- Pick A features a thick, solid green stem labeled "Thick stem (hard to cut)".
- Pick B features a very thin wire stem labeled "Thin stem (easy to cut)".
- Pick C features a clip at the base of the stem labeled "Pre-attached clip".
- Applying the video's rule, the pick with the "so thick and hard to cut" stem that must be bent corresponds directly to "Pick A: Thick stem (hard to cut)", which is listed as option C.
\par\smallskip
Next, let me analyze the options:
\par\smallskip
- A. Pick C: Pre-attached clip \textemdash{} This pick has a clip.
- B. Pick B: Thin stem (easy to cut) \textemdash{} This pick has a thin stem, which she says is ideal to cut.
- C. Pick A: Thick stem (hard to cut) \textemdash{} This matches the thick stem that she bends and ties.
\par\smallskip
\textbf{Finally, the correct option is C.
\textless{}/think\textgreater{}}
\par\smallskip
C
\end{tcolorbox}

\Needspace{12\baselineskip}
\subsection{B07 / Error Critique}
\label{case:sft22}
\begin{tcolorbox}[casecard,breakable,title={Question, evidence, and training answer}]
\textbf{Question.} A reviewer on a tech forum claims: 'The Garmin Nuvi 2555 LMT has a glossy screen that is hard to read in bright sunlight. To fix this, you should apply a matte screen protector.' What is the specific flaw in this recommendation?
\par\smallskip
\textbf{Options.} A. A matte screen protector will interfere with the touchscreen's sensitivity. B. The screen is too small for a screen protector to be effective at reducing glare. C. The screen is already anti-glare, so a matte protector is unnecessary and may reduce clarity. D. The reviewer's recommendation is correct; there is no flaw.\par\smallskip\textbf{Training answer.} C: The screen is already anti-glare, so a matte protector is unnecessary and may reduce clarity.
\par\smallskip\noindent \begin{minipage}[t]{0.48\linewidth}\centering
\includegraphics[width=\linewidth,height=2.5cm,keepaspectratio]{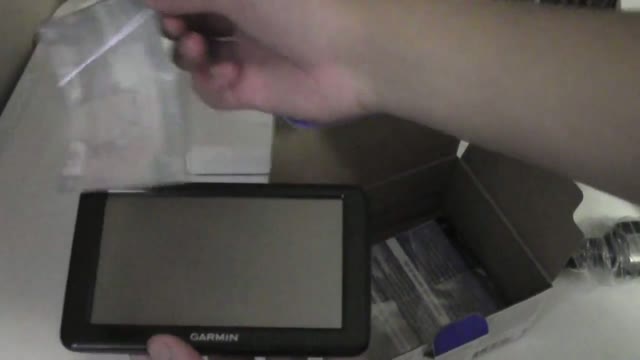}
\par\scriptsize Source video, 02:12\end{minipage}\par\smallskip
\textbf{Upstream reasoning steps.}\par 1. [02:08-02:16] Recall from the video that the Garmin Nuvi 2555 LMT screen is described as 'really meant for anti-glare' and compared to a Kindle screen, meaning it is matte and anti-glare by design.\par 2. The reviewer's claim that the screen is glossy is false based on the video-derived knowledge. Therefore, the recommendation to apply a matte screen protector to reduce glare is based on a false premise.\par 3. The specific flaw is that the screen is already anti-glare, so the proposed solution is unnecessary and could degrade the screen's performance.
\par\smallskip\textit{Task label: source-assigned.}
\end{tcolorbox}
\begin{tcolorbox}[casecard,breakable,title={Complete synthetic thinking and final answer},title after break={Complete synthetic thinking and final answer (continued)}]
\textless{}think\textgreater{}
Got it, let's tackle the problem.
\par\smallskip
The user wants to identify the specific flaw in a reviewer's recommendation to apply a matte screen protector to the Garmin Nuvi 2555 LMT. The crux is determining whether the device's screen is already designed to prevent glare.
\par\smallskip
First, let me scan the video for the segments relevant to the question:
\par\smallskip
- I need to listen for the narrator's description of the device's screen surface and watch the screen as the protective film is removed.
\par\smallskip
\textbf{- At 02:05, the narrator holds up the Garmin Nuvi 2555 LMT with the printed screen sticker still attached.}
\par\smallskip
\textbf{- At 02:09, the narrator peels off the printed screen sticker, revealing the actual screen underneath.}
\par\smallskip
- If the reviewer's recommendation is correct and there is no flaw, the screen would have to be glossy and highly reflective once the sticker is removed, but the surface is matte and non-reflective.
\par\smallskip
- At 02:16-02:21, as the narrator rubs his finger across the display, he says, "you're gonna see a really nice screen, it almost looks like a Kindle screen 'cause it's really meant for anti-glare, and it's really, really nice."
\par\smallskip
- Since the device already features a built-in anti-glare screen similar to a Kindle, applying a matte screen protector is unnecessary, which points directly to option C: The screen is already anti-glare, so a matte protector is unnecessary and may reduce clarity.
\par\smallskip
Next, let me analyze the options:
\par\smallskip
- A. A matte screen protector will interfere with the touchscreen's sensitivity. \textemdash{} The video does not mention touch sensitivity issues.
\par\smallskip
- B. The screen is too small for a screen protector to be effective at reducing glare. \textemdash{} Screen size does not dictate glare reduction.
\par\smallskip
- C. The screen is already anti-glare, so a matte protector is unnecessary and may reduce clarity. \textemdash{} The narrator explicitly states the screen is anti-glare, making an additional matte protector redundant.
\par\smallskip
- D. The reviewer's recommendation is correct; there is no flaw. \textemdash{} The screen is already anti-glare, so this claim is false.
\par\smallskip
\textbf{Finally, the correct option is C.
\textless{}/think\textgreater{}}
\par\smallskip
C
\end{tcolorbox}

\Needspace{12\baselineskip}
\subsection{B08 / Procedure / Planning}
\label{case:sft23}
\begin{tcolorbox}[casecard,breakable,title={Question, evidence, and training answer}]
\textbf{Question.} You have a Tashtego 2.0 toiletry roll that is stained after a camping trip. You have removed all toiletries. What is the correct next step according to the video's demonstrated cleaning procedure?
\par\smallskip
\textbf{Options.} A. Place the bag in a washing machine and run a normal cycle. B. Spot clean the stained area with a damp cloth. C. Hand wash the bag with mild soap and let it air dry. D. Apply a new coat of wax to the bag before washing.\par\smallskip\textbf{Training answer.} A: Place the bag in a washing machine and run a normal cycle.
\par\smallskip\noindent \begin{minipage}[t]{0.48\linewidth}\centering
\includegraphics[width=\linewidth,height=2.5cm,keepaspectratio]{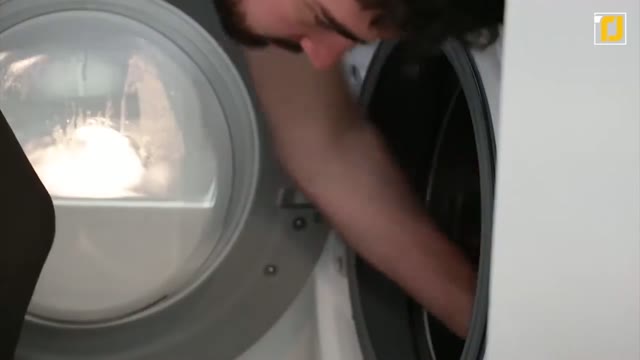}
\par\scriptsize Source video, 07:05\end{minipage}\par\smallskip
\textbf{Upstream reasoning steps.}\par 1. [07:00-07:11] Recall that the video visually demonstrates the Tashtego bag being tossed into a washing machine for cleaning.\par 2. Since the bag is stained and all toiletries have been removed, the correct next step is to machine wash it as shown in the video.
\par\smallskip\textit{Task label: source-assigned.}
\end{tcolorbox}
\begin{tcolorbox}[casecard,breakable,title={Complete synthetic thinking and final answer},title after break={Complete synthetic thinking and final answer (continued)}]
\textless{}think\textgreater{}
Got it, let's tackle the problem.
\par\smallskip
The user wants to know the correct next step to clean a stained Tashtego 2.0 toiletry roll based on the video. The crux is identifying the cleaning method demonstrated in the video for this specific product.
\par\smallskip
First, let me scan the video for the segments relevant to the question:
\par\smallskip
- I need to find the segment showing the "Tashtego 2.0" and watch for any cleaning demonstrations.
\par\smallskip
\textbf{- At 07:00, the video shows the toiletry roll being rinsed under running water in a sink while the narrator explains that "Last but not least, Tashtego is made of number 10 waxed cotton and coated with a water repellent wax so you can get it wet without soaking your toiletries".}
\par\smallskip
\textbf{- At 07:03, the video shows the toiletry roll being placed directly into a top-loading washing machine.}
\par\smallskip
- Wait, is the appliance shown at 07:03 a sink or a washing machine? Looking closely, it is a top-loading washing machine with an agitator and water inside, into which the bag is dropped.
\par\smallskip
- Let me re-read the narration to see if machine washing is mentioned: the narrator says "you can get it wet without soaking your toiletries", but the visual of dropping it into the washer is what demonstrates the actual cleaning step.
\par\smallskip
- If the demonstrated cleaning procedure were hand washing the bag with mild soap, the video would have to show the bag being scrubbed by hand and left to air dry, but instead it is placed directly into a washing machine at 07:03.
\par\smallskip
- The visual demonstration of dropping the bag into the washing machine at 07:03 confirms that placing the bag in a washing machine is the correct next step, which corresponds to option A: Place the bag in a washing machine and run a normal cycle.
\par\smallskip
Next, let me analyze the options:
\par\smallskip
- A. Place the bag in a washing machine and run a normal cycle. \textemdash{} This matches the visual demonstration of the bag being placed in a washing machine.
\par\smallskip
- B. Spot clean the stained area with a damp cloth. \textemdash{} This method is not shown in the video.
\par\smallskip
- C. Hand wash the bag with mild soap and let it air dry. \textemdash{} Although water is run over it, the bag is ultimately placed in a washing machine.
\par\smallskip
- D. Apply a new coat of wax to the bag before washing. \textemdash{} No waxing process is demonstrated prior to washing.
\par\smallskip
\textbf{Finally, the correct option is A.
\textless{}/think\textgreater{}}
\par\smallskip
A
\end{tcolorbox}

\Needspace{12\baselineskip}
\subsection{B09 / Cross-Domain Transfer}
\label{case:sft24}
\begin{tcolorbox}[casecard,breakable,title={Question, evidence, and training answer}]
\textbf{Question.} A firefighter is learning to spin a long weighted hose nozzle in a forward twirl, exactly as the bo staff forward twirl taught in the video. Which hand orientation should he use to initiate the spin?
\par\smallskip
\textbf{Options.} A. Lead with the pinky, palm down B. Lead with the thumb, palm up (thumb dives down) C. Lead with the thumb, palm down D. Lead with the pinky, palm up\par\smallskip\textbf{Training answer.} B: Lead with the thumb, palm up (thumb dives down)
\par\smallskip\noindent \begin{minipage}[t]{0.48\linewidth}\centering
\includegraphics[width=\linewidth,height=2.5cm,keepaspectratio]{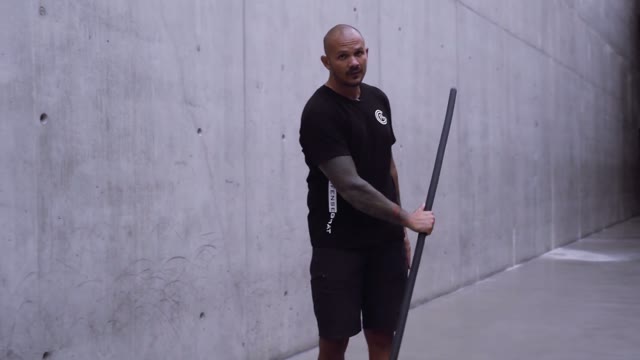}
\par\scriptsize Source video, 01:25\end{minipage}\par\smallskip
\textbf{Upstream reasoning steps.}\par 1. [01:15-01:35] Recall the video's instruction for the forward twirl: the instructor says 'I'm going to dive my thumb across my body and down' and then 'bring that palm up'. This establishes that the forward twirl is initiated by leading with the thumb (diving it down) and having the palm face up.\par 2. Apply this to the firefighter's scenario: he wants to perform a forward twirl with the hose nozzle, so he must use the same hand orientation -- lead with the thumb (dive it down) and keep the palm up.
\par\smallskip\textit{Task label: source-assigned.}
\end{tcolorbox}
\begin{tcolorbox}[casecard,breakable,title={Complete synthetic thinking and final answer},title after break={Complete synthetic thinking and final answer (continued)}]
\textless{}think\textgreater{}
Got it, let's tackle the problem.
\par\smallskip
The user wants to identify the correct hand orientation to initiate a forward twirl with a bo staff as demonstrated in the video. The crux is determining whether the spin begins with the thumb or pinky leading, and whether the palm should be facing up or down.
\par\smallskip
First, let me scan the video for the segments relevant to the question:
\par\smallskip
- I need to find the segment where the instructor demonstrates and explains the hand orientation and movement for the forward twirl.
\par\smallskip
\textbf{- At 01:13, the instructor introduces the "forward twirl" and prepares to demonstrate it.}
\par\smallskip
\textbf{- At 01:15, he places the center of the staff in his palm to begin the movement.}
\par\smallskip
- At 01:19, he grabs the staff and explains that he is going to "dive my thumb across my body and down."
\par\smallskip
- At 01:25, he continues the motion, explaining to "bring that palm up and come across the body and down" as he completes the rotation.
\par\smallskip
- At 01:53, he demonstrates the handoff to the left hand, stating "palm up, I grab it with palm up with my left hand" before diving to the other side.
\par\smallskip
- At 02:16, he repeats the motion on both sides, showing that the thumb leads the dive on each rotation.
\par\smallskip
- Let me re-listen at 02:27: the instructor says "palm goes up, palm goes up, dive" to emphasize the handoff and initiation sequence.
\par\smallskip
- If the forward twirl were initiated by leading with the pinky and keeping the palm down, the instructor would have to say to lead with the pinky when demonstrating the forward twirl at 01:19, but he explicitly says "dive my thumb across my body and down" and reserves the pinky-lead for the backward twirl.
\par\smallskip
- At 03:26, he introduces the "backwards" twirl, instructing to "think your pinky now" and "drive up in the air."
\par\smallskip
- At 03:35, he confirms the hand orientation for the backward twirl is "palm down now."
\par\smallskip
- The instructor's breakdown shows that the forward twirl is initiated by grabbing palm up and leading with the thumb diving down, which corresponds to option B.
\par\smallskip
Next, let me analyze the options:
\par\smallskip
- A. Lead with the pinky, palm down \textemdash{} this orientation is used for the backward twirl.
\par\smallskip
- B. Lead with the thumb, palm up (thumb dives down) \textemdash{} this matches the forward twirl initiation where the hand grabs palm up and the thumb dives down.
\par\smallskip
- C. Lead with the thumb, palm down \textemdash{} incorrect grab hand orientation.
\par\smallskip
- D. Lead with the pinky, palm up \textemdash{} incorrect on both counts.
\par\smallskip
\textbf{Finally, the correct option is B.
\textless{}/think\textgreater{}}
\par\smallskip
B
\end{tcolorbox}

\FloatBarrier
% ===== END inlined from sections/appendix_cases.tex =====

% ===== END inlined from sections/appendix.tex =====
\end{document}